\algnewcommand\algorithmicinput{\textbf{INPUT:}}
\algnewcommand\INPUT{\item[\algorithmicinput]}
\algnewcommand\algorithmicoutput{\textbf{OUTPUT:}}
\algnewcommand\OUTPUT{\item[\algorithmicoutput]}
\newcommand\itbf{\bfseries\itshape}
 \newcommand{\p}{\mathbb P}
\newcommand{\I}{{\mathcal I }}
  \newcommand{\J}{{  \mathcal J } }
   \newcommand{\An}{{  \mathcal A_n } }
   \newcommand{\lt}{{  \mathcal L_2 } }
\crefname{theorem}{Theorem}{Theorems}
\Crefname{theorem}{Theorem}{Theorems}
\crefname{lemma}{Lemma}{Lemmas}
\Crefname{lemma}{Lemma}{Lemmas}
\crefname{proposition}{Proposition}{Propositions}
\Crefname{proposition}{Proposition}{Propositions}
\crefname{corollary}{Corollary}{Corollaries}
\Crefname{corollary}{Corollary}{Corollaries}
\crefname{definition}{Definition}{Definitions}
\Crefname{definition}{Definition}{Definitions}
\crefname{assumption}{Assumption}{Assumptions}
\Crefname{assumption}{Assumption}{Assumptions}
\crefname{remark}{Remark}{Remarks}
\Crefname{remark}{Remark}{Remarks}
\newenvironment{keywords}
{\bgroup\leftskip 20pt\rightskip 20pt \small\noindent{\bf Keywords:} }%
{\par\egroup\vskip 0.25ex}
\renewcommand\expandafter\subsection\expandafter
  \newcommand\@fb@secFB{\FloatBarrier
    \gdef\@fb@afterHHook{\@fb@topbarrier \gdef\@fb@afterHHook{}}}%
  \g@addto@macro\@afterheading{\@fb@afterHHook}%
  \gdef\@fb@afterHHook{}%
\date{\vspace{-5ex}}
\title{Dense ReLU Neural Networks for Temporal-spatial Model}
\author{ Carlos Misael Madrid Padilla$^1$ \and Zhi Zhang$^2$ \and Xiaokai Luo$^3$\and Daren Wang$^3$  \and Oscar Hernan Madrid Padilla$^2$}
\date{%
    $^1$Department of Statistics and Data Science, Washington University in St. Louis.\\%
    $^2$Department of Statistics and Data Science, University of California, Los Angeles\\
    $^3$Department of Statistics, University of Notre Dame\\%
    \today
}
\begin{document}
\maketitle

\begin{abstract}
In this paper, we focus on fully connected deep neural networks utilizing the Rectified Linear Unit (ReLU) activation function for nonparametric estimation. We derive non-asymptotic bounds that lead to convergence rates, addressing both temporal and spatial dependence in the observed measurements. By accounting for dependencies across time and space, our models better reflect the complexities of real-world data, enhancing both predictive performance and theoretical robustness. We also tackle the curse of dimensionality by modeling the data on a manifold, exploring the intrinsic dimensionality of high-dimensional data. 
We broaden existing theoretical findings of temporal-spatial analysis by applying them to neural networks in more general contexts and demonstrate that our proof techniques are effective for models with short-range dependence. 
Our empirical simulations across various synthetic response functions underscore the superior performance of our method, outperforming established approaches in the existing literature. These findings provide valuable insights into the strong capabilities of dense neural networks (Dense NN) for temporal-spatial modeling across a broad range of function classes.
\end{abstract}

\begin{keywords}
Nonparametric statistics, deep learning, manifold data.
\end{keywords}



\section{Introduction}
\label{sec:intro}


Neural networks have become a cornerstone in nonparametric statistical methodologies, with applications across a wide range of fields, including vision and image classification \citep{krizhevsky2017imagenet}, facial recognition \citep{taigman2014deepface}, time series forecasting \citep{graves2012long}, generative models \citep{goodfellow2014generative}, and advanced architectures such as transformers \citep{vaswani2017attention} and GPT models \citep{radford2018improving}.

The rise of high-dimensional, complex data has made the curse of dimensionality a key challenge in nonparametric regression \citep{donoh2000high}, though deep neural networks have shown promise in addressing it \citep{bauer2019deep}. Meanwhile, datasets with temporal and spatial dependencies are increasingly studied in fields like environmental science, epidemiology, and economics \citep{wikle2019spatio, lawson2013statistical, wu2021varying}.

Despite the demand from applications, most of the statistical theory literature on deep learning assumes that the data are independent (e.g. \cite{mccaffrey1994convergence,kohler2005adaptive,hamers2006nonasymptotic,schmidt2020nonparametric,bauer2019deep,padilla2022quantile}), with the exception of  \cite{ma2022theoretical} who considered time series data.

In this paper, we concentrate on fully connected deep neural networks utilizing the ReLU activation function due to its applicability \citep{nair2010rectified,glorot2010understanding,krizhevsky2012imagenet}. We derive non-asymptotic bounds for nonparametric estimation with deep neural networks. Distinct from previous studies, our approach accounts for the dependence of measurements. By integrating these dependencies, our models more faithfully mirror the underlying data structure, thereby improving both predictive performance and theoretical robustness.


\subsection{Temporal-Spatial Model}


This paper examines a nonparametric regression model that captures temporal and spatial dependencies. Our model assumes that we are given data $\left\{\left(y_{ij}, x_{ij}\right)\right\}_{i=1, j=1}^{n,m_i}$ generated as
\beq 
\label{true_model}
 y_{ij} = f^*(x_{ij}) + \gamma_i(x_{ij}) + \epsilon_{ij} .
 \beq  
Here, the index $i$ represents time, and $1\leq j \leq  n_i$ indicates the different measurements sampled at time $i$. The observations 
$\left\{x_{ij}\right\}_{i=1,j=1}^{n,m_i} \subseteq [0,1]^d$ represent the random locations where the (noisy) temporal-spatial data $\left\{y_{ij}\right\}_{i=1,j=1}^{n,m_i} \subseteq \mathbb{R}$ are observed. 

The function $f^*:[0,1]^d \rightarrow \mathbb{R}$ denotes the deterministic function of interest, while the stochastic processes $\left\{\gamma_i:[0,1]^d \rightarrow \mathbb{R}\right\}_{i=1}^n$ represent the functional spatial noise. Finally, $\left\{\epsilon_{ij}\right\}_{i=1,j=1}^{n,m_i} \subseteq \mathbb{R}$ account for the measurement error.

Without smoothness assumptions on the regression function $f^*$, deriving convergence rates for nonparametric regression is infeasible \citep{cover1968rates}. For a $(p,C)$-smooth $f^*$,
\cite{stone1982optimal} established the minimax rate of $n^{-2p/(2p+d)}$, , highlighting the curse of dimensionality: higher dimensions $d$ dramatically slow convergence. Structural assumptions like additivity, as shown by \cite{stone1985additive, stone1994use}, can overcome this limitation.

This paper extends the hierarchical interaction model of \cite{kohler2019rate}, encompassing additive and single index models, to include temporal and spatial dependencies, as well as data on manifolds, demonstrating the adaptability of  dense neural network (DNN) estimators to these complex structures.

\subsection{Summary of Results }

In this paper, we focus on a neural network estimator that minimizes a weighted version of the empirical $\ell_2$ loss within the network class $\mathcal{F}(L, r)$, as defined in \eqref{eq:space of neural network}, where the network consists of $L$ hidden layers and $r$ neurons per layer. The estimator is given by:
\begin{align}
\label{eqn_estimator}
\widehat{f} 
= \underset{f \in \mathcal{F}(L, r)}{\arg \min} \frac{1}{n} \sum_{i=1}^n \frac{1}{m_i} \sum_{j=1}^{m_i} \left( y_{ij} - f(x_{ij}) \right)^2.
\end{align}
The presence of $n_i $ measurements for each $i$ is reflected in the loss used in (\ref{eqn_estimator}). Notably, the work by \cite{ma2022theoretical} addresses only temporal data, without accounting for spatial noise, which is a novel consideration in our approach.

\paragraph{Beyond Independence, Temporal-Spatial Result.}
Our framework assumes that the sequences $\{ (x_{i1},\ldots,x_{im_i}) \}_{i=1}^n $,  $\{ (\epsilon_{i1},\ldots,\epsilon_{im_i}) \}_{i=1}^n$, and $\{\gamma_i\}_{i=1}^n \subset \mathcal{L}_2([0,1])$ consist of identically distributed elements that are $\beta$-mixing with coefficients exhibiting exponential decay, as specified in Assumption \ref{assumption: assumption in the temporal-spatial model}. Thus, allowing for weak temporal dependence.



Under the above assumptions, we demonstrate that the estimator $\widehat{f}$ defined in (\ref{eqn_estimator}) satisfies
\begin{align} 
\label{eq:functional rate wanted in the temporal-spatial model} 
\| \widehat f_{\mathcal{A}_{nm}} - f^*\|_\lt ^2 \lesssim  \left(\sigma_\epsilon^2+\sigma_\gamma^2 + 1\right)\max_{(p, K) \in \mathcal P}\left( \frac{1}{nm}\right)^{\frac{2p}{2p+K}}\log^{7.3}(nm) +  \frac{\left(\sigma_\epsilon^2+\sigma_\gamma^2\right)\log(nm)}{n} + \frac{\sigma_\gamma^2}{n}, 
\end{align} 
with probability approaching one for estimating functions in the hierarchical composition function class $\mathcal{H}(l,\mathcal{P})$ under smoothness and continuity constraints, as specified  in \Cref{assumption: assumption in the temporal-spatial model} ({\bf i}).
Here the $p$ represents the smoothness, and $K \leq d$ denotes the order constraints of functions in $\mathcal{H}(l, \mathcal{P})$. Detailed definitions of these parameters are provided in Definition~\ref{def:Generalized Hierarchical Interaction Model}.
 This result assumes  $m_i \asymp m$ for all $i$, where $\widehat f_{\mathcal{A}_{nm}}$ is a truncated version of $\widehat f$ such that  $\widehat f = \sign(\widehat f)\cdot \min\{|\widehat f|, \mathcal{A}_{nm}\}$, and $\mathcal{A}_{nm}>0$ is a truncation threshold.

Our research highlights the adaptability of our proof techniques, extending to various models with short-range dependence in temporal-spatial data. Specifically, we introduce concentration inequalities that remain robust in the presence of spatial noise, such as the coupling between empirical and $\ell_2$
  norms (see \Cref{lemma:rate in the temporal-spatial model}), with detailed proofs provided in  Supplementary Material \ref{beta mixing proof temporal-spatial model}. These results significantly broaden the scope of prior work by \cite{kohler2019rate} and \cite{ma2022theoretical}, which assume data without spatial noise and independence from measurement errors. 

\paragraph{Overcoming the Curse of Dimensionality with Deep Neural Networks and Manifold Learning.} 
This paper also explores the estimation problem for nonparametric temporal-spatial data when inputs lie on a Lipschitz continuous manifold with dimension $d^* \leq d$. Let $\mathcal{M}$ be a $d^*$-dimensional Lipschitz manifold. Then, with the notation  $\widehat f_{\mathcal{A}_{nm}}$ from before, we show that, in the $\lt$ loss,  this estimators attains the rate $(nm)^{-2p/(2p+d^*)} + n^{-1}$, ignoring logarithmic factors. This notable result, when there is no spatial dependence, $\sigma_\gamma = 0$, and no temporal dependence, recovers the rate (up to a logarithmic factor) from Theorem 1 of \cite{kohler2023estimation}.

\subsection{Notation}


The set of positive integers is denoted by $\mathbf{Z}^{+}$, and the set of natural numbers is denoted by $\mathbf{N} = \mathbf{Z}^{+} \cup \{0\}$. For two positive sequences $\{a_n\}_{n\in \mathbf{Z}^{+} }$ and $\{b_n\}_{n\in \mathbf{Z}^{+} }$, we write $a_n = O(b_n)$ or $a_n\lesssim b_n$ if $a_n\le Cb_n$ with some constant $C > 0$ that does not depend on $n$, and $a_n = \Theta(b_n)$ or $a_n\asymp b_n$ if $a_n = O(b_n)$ and $b_n = O(a_n)$. Given a $d$-dimensional multi-index $\mathbf{j}=\left(j^{(1)}, \ldots, j^{(d)}\right)^T \in \mathbf{N}^d$,  we write $\|\mathbf{j}\|_1=j^{(1)}+\cdots+j^{(d)}, \mathbf{j} !=j^{(1)} ! \cdots j^{(d)}$ !, $\mathbf{x}^{\mathbf{j}}=x_{1}^{j^{(1)}} \cdots x_{d}^{j^{(d)}}$ and $\partial^{\mathbf{j}}= (\partial^{j^{(1)}}\cdots \partial^{j^{(d)}})/(\partial x_{1}^{j^{(1)}}\cdots \partial x_{d}^{j^{(d)}}).$

For a $d$-dimensional vector $\mathbf{x} \in \mathbb{R}^d$, the Euclidean and the supremum norms of $\mathbf{x}$ are denoted by $\|\mathbf{x}\|$ and $\|\mathbf{x}\|_{\infty}$, respectively.  The $\ell_\infty$ norm of a function $f: \mathbb{R}^d \rightarrow \mathbb{R}$ is defined by
$ 
\|f\|_{\infty}=\sup _{\mathbf{x} \in \mathbb{R}^d}|f(\mathbf{x})|.
$ 
We denote the supremum norm of $f$ restricted to a subset $\mathcal A\subset \mathbb{R}^d$ by $\|f\|_{\infty, \mathcal A}$. Given a function $f: \mathcal{X} \rightarrow \mathbb{R}$ and a probability distribution $\mathbb{P}$ over $\mathcal{X}$, the usual $\mathcal L_2(\mathbb{P})$-norm is given by
$
\|f\|_{\mathcal L_2(\mathbb{P})}:=\bigl( \int_\mathcal{X} f^2(x) \mathbb{P}(d x)\bigr)^{1/2}=\mathbb{E}\left[f^2(X)\right]^{1/2},
$
where $X$ is a random variable with distribution $\mathbb{P}$. The space of functions such that $\|f\|_{\mathcal L_2(\mathbb{P})} < \infty$ is denoted by $\mathcal L_2(\mathbb{P})$. Similarly, the $\mathcal L_2(\mathbb{P})$-inner product is given by
$
\langle f, g\rangle_{\mathcal L_2(\mathbb{P})}:=\int_\mathcal{X} f(x)g(x) \mathbb{P}(d x)=\mathbb{E}\left[f(X)g(X)\right].
$ Given a collection of samples $\left\{x_{ij}\right\}_{i=1,j=1}^{n,m_i}$ that are identically distributed according to a probability distribution $\mathbb{P}$, define the empirical probability distribution
as $
\mathbb{P}_{nm}(x):=\frac{1}{n} \sum_{i=1}^n \frac{1}{m_i} \sum_{j=1}^{m_i}\delta_{x_{ij}}(x).
$
Then the empirical $\mathcal L_2$-norm is given by
$
\|f\|_{\mathcal L_2\left(\mathbb{P}_{nm}\right)}:=\bigl(\frac{1}{n} \sum_{i=1}^n \frac{1}{m_i}\sum_{j=1}^{m_i} f^2\left(x_{ij}\right)\bigr)^{1/2}=\bigl(\int_{\mathcal{X}} f^2(x) \mathbb{P}_{nm}(d x)\bigr)^{1/2}.
$ 
Furthermore, $
\langle f, g\rangle_{\mathcal L_2\left(\mathbb{P}_{nm}\right)}:=\frac{1}{n} \sum_{i=1}^n \frac{1}{m_i} \sum_{j=1}^{m_i} f\left(x_{ij}\right)g\left(x_{ij}\right)=\int_{\mathcal{X}} f(x)g(x) \mathbb{P}_{nm}(d x),
$ defines the empirical \(\mathcal{L}_2\)-inner product.
When there is no ambiguity, $\|f\|_{nm}$, $\|f\|_2$, $\langle f, g\rangle_{nm}$ and $\langle f, g\rangle _2$ are written as convenient shorthands for $\|f\|_{\mathcal L_2(\mathbb{P}_{nm})}$, $\|f\|_{\mathcal L_2(\mathbb{P})}$, $\langle f, g\rangle_{\mathcal L_2(\mathbb{P}_{nm})}$ and $\langle f, g\rangle_{\mathcal L_2(\mathbb{P})}$, respectively. Additionally, if the probability distribution is supported on a subset $\mathcal{A}$, we write $\|f\|_{nm}$ as $\|f\|_{nm, \mathcal{A}}$ to highlight the dependence on $\mathcal{A}$.
Furthermore, for random variable $x$, if $\mathbb E(\exp(tx)) \leq \exp(\frac{t^2}{2\varpi^2})$ for any $t>0$, we say $x$ is a sub-Gaussian with parameter $\varpi >0$. For a vector $a \in \mathbb{R}^d$, we denote by $a^{(j)}$, for $j\in \{1,\ldots,d\}$, its $j$ coordinate.


\subsection{Outline}

The paper is organized as follows: Section~\ref{sec_nn} provides background on nonparametric regression and neural network approximation. Section~\ref{sec_main} explores the convergence rates of neural network estimators under different dependence structures. Section~\ref{sec_manifold} extends these rates to input data on lower-dimensional manifolds. Section~\ref{sec_exp} presents numerical experiments on simulated and real-world data. Section~\ref{sec_conclusion} concludes with future research directions, with full proofs in the Supplementary Material.

\section{Overview of Fully Connected Neural Networks }
\label{sec_nn}

In this section, we provide a comprehensive overview of the architecture of fully connected neural networks. The architecture of a neural network, denoted by $(L, \mathbf{k})$, is characterized by two primary components: the number of hidden layers $L$, which is a positive integer, and the width vector $\mathbf{k} = (k_1, \ldots, k_L) \in \mathbf{N}^L$, which specifies the number of neurons in each of the $L$ hidden layers. 

A multilayer feedforward neural network with architecture $(L, \mathbf{k})$, employing the ReLU activation function $\rho(x)=\max(0,x)$ for any $x \in \mathbb{R}$, can be mathematically represented as a real-valued function $f: \mathbb{R}^d \rightarrow \mathbb{R}$ defined as
\begin{align}
f(\mathbf{x})=\sum_{i=1}^{k_L} c_{1, i}^{(L)} f_i^{(L)}(\mathbf{x})+c_{1,0}^{(L)}  \label{eq:form of approximation function 0}
\end{align}
for some weights $c_{1,0}^{(L)}, \ldots, c_{1, k_L}^{(L)} \in \mathbb{R}$ and for $f_i^{(L)}$ 's recursively defined by
\begin{align}
f_i^{(s)}(\mathbf{x})=\rho\left(\sum_{j=1}^{k_{s-1}} c_{i, j}^{(s-1)} f_j^{(s-1)}(\mathbf{x})+c_{i, 0}^{(s-1)}\right) \label{eq:form of approximation function L}
\end{align} for some $c_{i,0}^{(s-1)}, \dots, c_{i, k_{s-1}}^{(s-1)} \in \mathbb{R}$,
$s \in \{2, \dots, L\}$,
and $f_i^{(1)}(\bold{x}) = \rho \left(\sum_{j=1}^d c_{i,j}^{(0)} x^{(j)} + c_{i,0}^{(0)} \right)$
for some $c_{i,0}^{(0)}, \dots, c_{i,d}^{(0)} \in \mathbb{R}$.

For simplification, we assume that all hidden layers possess an identical number of neurons. Consequently, we define the space $\mathcal{F}(L, r)$, as per \cite{kohler2021rate}, to represent the set of neural networks with $L$ hidden layers and $r$ neurons per layer:
\begin{definition}
\label{def:fully Dense NN space}
The space of neural networks with $L$ hidden layers and $r$ neurons per layer is defined by
\begin{align}
\mathcal{F}(L, r)=\left\{f: f \text { is of the form \eqref{eq:form of approximation function 0} and \eqref{eq:form of approximation function L} with } k_1=k_2=\ldots=k_L=r\right\} \label{eq:space of neural network}.
\end{align}
\end{definition}

Here, the neurons in the neural network can be viewed as a computational unit. The input of any neuron $i$ in the $s-$th layer is associated with the output of all $k_{s-1}$ units $j$ in the $(s-1)-$th layer with weights $c_{i,j}^{(s-1)}$ through the ReLU function $\rho(\cdot)$. Such a kind of recursive construction of a fully connected neural network can be represented as an acyclic graph.

One key feature of our neural network architecture is that no network sparsity assumption is needed. The network class $\mathcal{F}(L, r)$ represents a fully connected feedforward neural network, where each neuron is connected to every neuron in the previous layer.


To avoid the so--called curse of dimensionality, we consider $f^*$ be in the space of hierarchical composition models $\mathcal{H}(l,\mathcal{P})$ for some $l \in \mathbf{Z}^{+}$ and $\mathcal{P} \subseteq[1, \infty) \times \mathbf{Z}^{+} $, which is described next.



\begin{definition}\label{definition: p,c smoothness}
Let $p=q+s$ for some $q \in \mathbf{N}=\mathbf{Z}^{+}\cup\left\{0\right\}$ and $0<s \leq 1$. A function $g: \mathbb{R}^d \rightarrow \mathbb{R}$ is called $(p, C)$-smooth, if for every $\boldsymbol{\alpha}=\left(\alpha_1, \ldots, \alpha_d\right) \in \mathbf{N}^d$, where $d \in \mathbf{Z}^{+}$, with $\|\boldsymbol{\alpha}\|_1=q$, the partial derivative $\partial^q g /\left(\partial z_1^{\alpha_1} \ldots \partial z_d^{\alpha_d}\right)$ exists and satisfies
$$
\left|\frac{\partial^q g}{\partial z_1^{\alpha_1} \ldots \partial z_d^{\alpha_d}}\left(\mathbf{z}\right)-\frac{\partial^q g}{\partial z_1^{\alpha_1} \ldots \partial z_d^{\alpha_d}}\left(\mathbf{w}\right)\right| \leq C\|\mathbf{z}-\mathbf{w}\|^s
$$
for all $\mathbf{z}, \mathbf{w} \in \mathbb{R}^d$, where $\|\cdot\|$ denotes the Euclidean norm.	 
\end{definition}



\begin{definition} [Space of Hierarchical Composition Models, \cite{kohler2021rate}]\label{def:Generalized Hierarchical Interaction Model}
Let $l \in \mathbf{Z}^{+}$ and the order and smoothness constraint $\mathcal{P} \subseteq[1, \infty) \times \mathbf{Z}^{+}$, the space of hierarchical composition models is defined recursively. For $l=1$, 
$$
\begin{aligned}
 \mathcal{H}(1, \mathcal{P}):=
 &\left\{h: \mathbb{R}^d \rightarrow \mathbb{R}: h(\mathbf{z})=u\left(z^{(\pi(1))}, \ldots, z^{(\pi(K))}\right),\right. \text { where }\,\,\,u: \mathbb{R}^K \rightarrow \mathbb{R} \text { is } \\
& \,\,\,\,\, (p, C) \text{-smooth for some }(p, K) \in \mathcal{P} \,\,\text{ and }\,\, \pi:\{1, \ldots, K\} \rightarrow\{1, \ldots, d\}\} .
\end{aligned}
$$
For $l>1$, we recursively define 
$$
\begin{aligned}
\mathcal{H}(l,\mathcal{P}):=
& \left\{h: \mathbb{R}^d \rightarrow \mathbb{R}: h(\mathbf{z})=u\left(f_1(\mathbf{z}), \ldots, f_K(\mathbf{z})\right),\right. \text { where }  \,\,u: \mathbb{R}^K \rightarrow \mathbb{R} \text { is }\\
&\,\,\,\,\, (p, C) \text {-smooth for some }(p, K) \in \mathcal{P} \,\,\text { and } \left.f_i \in \mathcal{H}(l-1, \mathcal{P})\right\}.
\end{aligned}
$$	
\end{definition}

Here, the function class $\mathcal{H}(l,\mathcal{P})$ describes the relationships between the input and output of the network, where $(p, K) \in \mathcal{P}$ describes the smoothness and order constraint of the hierarchical composition model. Notably, additive models, single index models, and projection pursuit  can be viewed as special cases of the hierarchical composition model.

\section{ReLU Dense NN for  Temporal-spatial Models}
\label{sec_main}

\subsection{Main Result}




In this section, we begin by offering essential background on mixing coefficients, see \cite{doukhan2012mixing} for a comprehensive review.

Let $X$ be a random variable in $\mathbb{R}^d$. We denote by $\sigma(X)$ the $\sigma$-algebra generated by the random variable $X$. Moreover for a set of random variables $\left(X_t, t \in I \subseteq \mathbf Z^+\right)\subset \mathbb{R}^d$, we define $\sigma\left(X_t, t \in I\right)$, as the $\sigma$-algebra generated by the random variables $X_t, t \in I$. Now, a sequence of random variables $\left(X_t, t \in \mathbf{Z}^+\right)\subset \mathbb{R}^d$ is said to be $\beta$-mixing if
$$
\beta_s=\sup _{t \in \mathbf{Z}^+} \beta\left(\sigma\left(X_l, l \leq t\right), \sigma\left(X_l, l \geq t+s\right)\right) \underset{s \rightarrow+\infty}{\longrightarrow} 0,
$$
where the $\beta$ coefficients are defined as,
$$ \beta(\sigma\left(X_l, l \leq t\right), \sigma\left(X_l, l \geq t+s\right))=\mathbb{E}\sup _{C \in \sigma\left(X_l, l \geq t+s\right)}|\mathbb{P}(C)-\mathbb{P}(C \mid \sigma\left(X_l, l \leq t\right))|.$$


If the data are independent, then the coefficients are all zero.

We now present the key assumptions needed for our main results.

\begin{assumption}\label{assumption: assumption in the temporal-spatial model}
For $1\le i \le n $ and $1\le j \le m_i$, the observed  data  are generated based on the following model:
\begin{align}
y_{ij} = f^*(x_{ij} )+\gamma _i(x_{ij}) +\epsilon_{ij} .
\end{align}
{\bf a.} [Support of $x$] The design points  $\{ x_{ij}  \}_{i= 1,j = 1}^{n,m_i}$ are supported on $[0,1]^d$.  \\
{\bf b.} [$\beta$-mixing time series] Let 
 $ \sigma _x(i) =\sigma( \{x_{ij}\}_{j=1}^{m_i})   $,
 $ \sigma _\gamma (i) =\sigma( \gamma_i)   $,  $ \sigma _\epsilon(i) =\sigma( \{\epsilon_{ij}\}_{j=1}^{m_i})   $. Suppose that $\{ \sigma_x(i)\}_{i=0}^n$, 
 $\{ \sigma_\gamma (i)\}_{i=0}^n$, $\{ \sigma_\epsilon(i)\}_{i=0}^n$ are $\beta$-mixing with $\beta$ coefficients $ \beta(s)$.  See Appendix \ref{beta mixing proof temporal-spatial model} for the $\beta$-mixing definition. The $\beta$ coefficients are assumed to have exponential decay,   $\beta(s) \lesssim e^{-C_{\beta}s}$ for some constant $C_\beta > 0$. 
\\
{\bf c.} [Independence between $x$, $\gamma$, and $\epsilon$] 
For $(i, j) \in \{1,...,n\} \times \{1,...,m_i\}$, the $\sigma(x_{ij})$, 
$ \sigma_\gamma  =\sigma(  \{\gamma _{i }\}_{i=1 }^{n })$ and $ \sigma_\epsilon^{(ij)} =\sigma(  \{\epsilon_{i'j'}\}_{i'=1,j'=1}^{n,m_i}\backslash{\epsilon_{ij}})$ are mutually independent. 
In addition, for each fix $i\in \{1,...,n\}$ we assume that 
$\{x_{ij}\}_{j=1}^{m_i} $  are independent and identically distributed.
\\
{\bf d.} [sub-Gaussian property of $\epsilon$] For a fixed $i\in \{1,\ldots,n\}$, let $j \in \{1, \ldots, m_i \}$, we assume $\mathbb{E}(\epsilon_{ij} | x_{ij})= 0  $ and for all $t\in \mathbb R$, there exists a constant $\sigma_\epsilon>0$, such that for all $i$ and $j$, almost surely, $ \mathbb{E}\{ \exp ( t \epsilon_{ij})  | x_{ij}\} \le  \exp\bigl(\frac{\sigma_\epsilon^2t ^2}{2}\bigr).$ 
\\
{\bf e.} [Marginal spatial distribution] 
For a fixed $i \in \{1,\ldots,n\}$,  let $\sigma_\gamma^2=\sup_{x \in \left[0,1\right]^d} \E \gamma_i(x)^2$. Suppose that  $\E(\gamma_i(x))=0$ for all $x$ and $ \sup_{x \in [0,1]^d} \E (\exp \{ c \gamma_i^2(x)\})<\infty$ for a constant $c>0$.  \\
{\bf f.} [Moment condition] 
There exist  $r, q>1$  such that $\frac{1}{r}+\frac{1}{q}=1$,  and 
for all $i \in \mathbf{Z}^{+}$,
$\sup_{i\in \mathbf{Z}^{+} } \,\mathbb{E}\left(\|\gamma_i\|_{\mathcal{L}_2}^{2q}\right)<\infty$ (See \Cref{lemma:bound on the expected norm of gamma}.)   
\\ 
{\bf g.} [Same order condition of $m_i$]   
There exist positive constants $c$ and $C$  such that for all $i \in \{1,\ldots,n\}$ we have that 
that $cm \leq m_i \leq C m$, where $m$ is the harmonic mean of $m_i$ defined as
$
m=\left(\frac{1}{n} \sum_{i=1}^n \frac{1}{m_i}\right)^{-1}. \label{eq: harmonic mean of m}	
$
\\
{\bf h.}  [$\ell_\infty$-bound of regression function]
Suppose that 
$\left\|f^*\right\|_\infty \leq \frac{\mathcal{A}_{nm}}{4},$ where $\mathcal{A}_{nm} $ is some parameter depending on $n$ and $m$, with $m$ as before. We allow the parameter $\mathcal{A}_{nm} $ grow to infinity as $n$ or $m$ increase.
\\
{\bf i.} 
Let $f^*$ be in the class  $\mathcal{H}(l,\mathcal{P})$ as in \Cref{def:Generalized Hierarchical Interaction Model}. In addition, assume that each function $g$ in the definition of $f^*$ can have different smoothness $p_g =  q_g +s_g$, for $q_g \in \mathbf{N}$, $s_g \in (0,1]$,  and of potentially different input dimension $K_g $, so that $(p_g, K_g ) \in \mathcal{P}$. Let $K_{\max}$ be the largest input dimension and $p_{\max}$ the largest smoothness of any of the functions $g$. Suppose that for each $g$ all the partial derivatives of order less than or equal to $q_g$ are uniformly bounded by constant $C_{\mathrm{Smooth} }$, and each function $g$ is Lipschitz continuous
with Lipschitz constant $C_{\mathrm{Lip} } \geq1 $. Also, assume that $\max\{p_{\max},K_{\max} \} =O(1)$.
Let 
$\phi_{nm} =  \max_{(p, K) \in \mathcal P } \left(nm\right)^{-\frac{2p}{2p +K } },$ 
 where $(p,K) \in \mathcal P$, with smoothness constraint $p$ and order constraint $K$.
\end{assumption}

With the previous assumption in  hand, we are now ready to state the main result of this section. 


\begin{theorem}  \label{thm:main thm in the temporal-spatial model} Suppose \Cref{assumption: assumption in the temporal-spatial model} holds and let $\widehat{f}$ be the estimator in $\mathcal F(L,r)$ defined in (\ref{eqn_estimator}), 
with its number of layers $L$ and number of neurons per layer $r$ set as either one of the two cases:  
\begin{align}
    \text{\textbf{Case 1.} a wide network with}   \quad
    & L  \asymp \log(nm) \quad \text{and} \quad r \asymp \max_{(p, K) \in \mathcal{P}} (nm)^{\frac{K}{2(2p+K)}}, \notag \\
    \text{\textbf{Case 2.} a deep network with} \quad
    & L  \asymp \log(nm) \cdot \max_{(p, K) \in \mathcal{P}} (nm)^{\frac{K}{2(2p+K)}} \quad \text{and} \quad r = O(1).    \label{network_arch_proof}
\end{align}
Let $\mathcal{A}_{nm} \asymp \max \left\{\sigma_\epsilon,\sigma_\gamma\right\}\cdot \log^{1/2}(nm)$.
Then with probability approaching  one, it holds that 
\begin{align} 
\label{rate_in_theorem_temporal_spatial_theorem} 
\| \widehat f_{\mathcal{A}_{nm}} - f^*\|_\lt ^2 \lesssim  \left(\sigma_\epsilon^2+\sigma_\gamma^2 +1\right)\max_{(p, K) \in \mathcal P}\left( \frac{1}{nm}\right)^{\frac{2p}{2p+K}}\log^{7.3}(nm) + \frac{\left(\sigma_\epsilon^2+\sigma_\gamma^2\right)\log(nm)}{n}.
\end{align}




\end{theorem}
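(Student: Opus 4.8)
The plan is to run the standard three-part analysis for a least-squares deep ReLU estimator — approximation error, stochastic (estimation) error, and passage from the empirical to the population $\lt$ norm — but with every stochastic piece re-derived so as to tolerate the $\beta$-mixing in time and the functional spatial noise $\gamma_i$. First I would record the approximation step: by the approximation theory for hierarchical composition models on $[0,1]^d$ underlying \Cref{def:Generalized Hierarchical Interaction Model} (together with \Cref{assumption: assumption in the temporal-spatial model}(i)), the architecture in \eqref{network_arch_proof} contains a network $f_{\mathrm{net}}$ with $\|f_{\mathrm{net}}-f^*\|_{\infty,[0,1]^d}^2 \lesssim \phi_{nm}\,\mathrm{polylog}(nm)$ and $\|f_{\mathrm{net}}\|_\infty \lesssim \mathcal A_{nm}$. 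Writing $\langle\xi,h\rangle_{nm}:=\frac1n\sum_{i=1}^n\frac1{m_i}\sum_{j=1}^{m_i}\xi_{ij}h(x_{ij})$, the basic inequality obtained from optimality of $\widehat f$ in \eqref{eqn_estimator}, after expanding the squares with $y_{ij}=f^*(x_{ij})+\gamma_i(x_{ij})+\epsilon_{ij}$ and using that truncation at level $\mathcal A_{nm}\ge\|f^*\|_\infty$ is pointwise non-expansive toward $f^*$, reads
$$\|\widehat f_{\mathcal A_{nm}}-f^*\|_{nm}^2 \;\lesssim\; \|f_{\mathrm{net}}-f^*\|_{nm}^2 \;+\; \big|\langle\epsilon,\,\widehat f_{\mathcal A_{nm}}-f_{\mathrm{net}}\rangle_{nm}\big| \;+\; \big|\langle\gamma,\,\widehat f_{\mathcal A_{nm}}-f_{\mathrm{net}}\rangle_{nm}\big|,$$
valid up to the event $\{\max_{i,j}|\epsilon_{ij}|\vee\|\gamma_i\|_\infty>\mathcal A_{nm}\}$, which by \Cref{assumption: assumption in the temporal-spatial model}(d),(e) and the choice $\mathcal A_{nm}\asymp\max\{\sigma_\epsilon,\sigma_\gamma\}\log^{1/2}(nm)$ has probability $(nm)^{-c}$ for an arbitrarily large $c$ and contributes only a negligible additive term.

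The core is the uniform control of the two stochastic inner products over $g:=\widehat f_{\mathcal A_{nm}}-f_{\mathrm{net}}$, which ranges over (truncated differences drawn from) $\mathcal F(L,r)$. For the measurement-error term I would: (i) apply a Berbee-type coupling lemma for $\beta$-mixing sequences to replace the time-indexed data by independent blocks of length $\asymp\log(nm)/C_\beta$, the exponential decay of $\beta(s)$ making the coupling error $o(1)$; (ii) on the independent blocks use symmetrization together with the Ledoux--Talagrand contraction inequality and an $\ell_\infty$-covering-number bound for $\mathcal F(L,r)$ of metric-entropy order $\lesssim Lr^2\log(nm)$ (valid thanks to $\|f^*\|_\infty,\|f_{\mathrm{net}}\|_\infty,|g|\lesssim\mathcal A_{nm}$); and (iii) run a peeling/slicing argument over dyadic values of $\|g\|_{nm}$ so that, after Young's inequality, this term is absorbed as $\tfrac14\|g\|_{nm}^2$ plus a remainder of order $\sigma_\epsilon^2\phi_{nm}\,\mathrm{polylog}(nm)+\sigma_\epsilon^2\log(nm)/n$, the last summand being the residual cost of the blocking/mixing correction.

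For the spatial-noise term I would exploit the key independence $x_{ij}\perp\gamma$ from \Cref{assumption: assumption in the temporal-spatial model}(c) and split
$$\langle\gamma,g\rangle_{nm}=\underbrace{\frac1n\sum_{i=1}^n\Big(\frac1{m_i}\sum_{j=1}^{m_i}\gamma_i(x_{ij})g(x_{ij})-\langle\gamma_i,g\rangle_2\Big)}_{A}\;+\;\underbrace{\frac1n\sum_{i=1}^n\langle\gamma_i,g\rangle_2}_{B}.$$
Conditionally on $\{\gamma_i\}$, term $A$ is a centered empirical process whose "noise" $\gamma_i(x_{ij})$ has exponential moments (\Cref{assumption: assumption in the temporal-spatial model}(e)) and second moment $\le\sigma_\gamma^2$, so the same coupling/contraction/peeling machinery bounds it by $\tfrac14\|g\|_{nm}^2+\sigma_\gamma^2\phi_{nm}\,\mathrm{polylog}(nm)$. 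Term $B$ is an average over the $\beta$-mixing, mean-zero-in-$\gamma$ variables $\langle\gamma_i,g\rangle_2$, with $|\langle\gamma_i,g\rangle_2|\le\|\gamma_i\|_{\mathcal L_2}\|g\|_2$; using a covering argument in $g$, \Cref{assumption: assumption in the temporal-spatial model}(f), \Cref{lemma:bound on the expected norm of gamma}, and blocking in $i$, it concentrates at rate $\sigma_\gamma\|g\|_2\sqrt{\log(nm)/n}$, hence by Young yields $\tfrac14\|g\|_2^2+C\sigma_\gamma^2\log(nm)/n$ — precisely the spatial $1/n$ contribution.

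Finally I would transfer the resulting bound on $\|\widehat f_{\mathcal A_{nm}}-f^*\|_{nm}^2$ to the population norm $\|\widehat f_{\mathcal A_{nm}}-f^*\|_{\lt}^2$ via the empirical-vs-$\lt$ norm coupling of \Cref{lemma:rate in the temporal-spatial model}, which holds uniformly over the bounded network class up to the same additive order (its proof being itself in Appendix~\ref{beta mixing proof temporal-spatial model}), then move the self-bounding $\tfrac14\|g\|^2$ terms to the left-hand side and collect the pieces, tracking the logarithmic factors — approximation $\mathrm{polylog}$, covering-number $\log$, and blocking $\log$ — which combine to the stated $\log^{7.3}(nm)$. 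The step I expect to be the main obstacle is term $A$ in conjunction with the \emph{simultaneous} blocking of $\sigma_x(i),\sigma_\gamma(i),\sigma_\epsilon(i)$: the empirical-process peeling for $A$ must be carried out conditionally on $\gamma$, yet the very sequence being conditioned on is itself mixing and drives term $B$, so the coupling has to be installed once, at the level of the joint time series, before any conditioning — getting this order of operations right, and making sure the exponential-moment control of $\gamma_i(x)$ survives the contraction step so that the covering/peeling constants stay $\mathrm{polylog}$, is the delicate part.
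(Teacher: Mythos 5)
Your proposal follows essentially the same route as the paper: the same basic-inequality decomposition into an approximation term and the two stochastic cross terms, conditioning on the event that $|\epsilon_{ij}|$ and $\|\gamma_i\|_\infty$ are below $\mathcal A_{nm}/4$, Berbee-type coupling into $S\asymp\log n$ blocks to handle the $\beta$-mixing, the split of the spatial term into the centered empirical part $A$ and the average $B=\frac1n\sum_i\langle\gamma_i,g\rangle_{\mathcal L_2}$, and the empirical-to-$\mathcal L_2$ norm coupling of \Cref{lemma:rate in the temporal-spatial model} at the end. One caution on term $B$: the paper does not (and you should not) obtain uniformity in $g$ by a covering/union bound over the network class, since the relevant entropy is of order $nm\,\phi_{nm}\log^{3.3}(nm)$ and a union bound at that scale would inflate this contribution to order $\sigma_\gamma^2\, m\,\phi_{nm}$ up to logarithms, which exceeds the claimed rate when $m$ is polynomially large; instead the uniformity is free from Cauchy--Schwarz, $|B|\le\bigl\|\frac1n\sum_i\gamma_i\bigr\|_{\mathcal L_2}\,\|g\|_{\mathcal L_2}$, combined with the moment bound $\E\bigl\|\frac1n\sum_i\gamma_i\bigr\|_{\mathcal L_2}^2\lesssim\sigma_\gamma^2/n$ of \Cref{lemma:bound on the expected norm of gamma}, whose proof absorbs the temporal dependence through a coupling-based covariance inequality — which is in effect what you are pointing at by citing that lemma. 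With that step done as in the paper, the remainder of your plan (coupling installed once at the level of the joint series before any conditioning on $\gamma$, then conditional symmetrization/contraction and peeling for the $\epsilon$- and $A$-terms, absorption of the quadratic terms, and $S\asymp\log n$ with exponential mixing) reproduces the paper's argument and yields the stated $\log^{7.3}(nm)$ and $\log(nm)/n$ terms.
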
 
  
\begin{remark}\label{remark:example convergence rate in the temporal-spatial model}
In the case where each hierarchical composition model at level $i\ (i \in\{1, \ldots, l\})$ has smoothness $p_i$ and dimension $K_i$, with $K_l \leq K_{l-1} \leq \cdots \leq K_1 \leq d$, and assuming the absence of spatial noise, our rate is $
\left(\sigma_\epsilon^2+1\right)\max_{(p, K) \in \mathcal P}\left( \frac{1}{nm}\right)^{\frac{2p}{2p+K}}\log^{7.3}(nm).
$ Up to a logarithmic factor, this rate aligns with the $\lt$ lower bound presented in Theorem 3 in \cite{schmidt2020nonparametric}, for this specific case.

\end{remark}

In the presence of spatial noise, the rate for the same special case becomes $ (nm)^{\frac{-2p}{2p+K}} + n^{-1}$, ignoring logarithmic factors. A natural question arises: is this rate optimal for this special case? A partial answer is provided in the following lemma, which extends Theorem 3 in \cite{schmidt2020nonparametric} to the temporal-spatial setting.



\begin{lemma}\label{Mlb-M}
Suppose 
 \Cref{assumption: assumption in the temporal-spatial model} holds. 
 Recall that  $\mathcal{H}(l,\mathcal{P})$ denotes the hierarchical composition
model for the function class. 
 Consider the case 
 $$
\widetilde{\mathcal{P}}=\left\{\left(p_1, K_1\right),\left(p_2, K_2\right), \ldots,\left(p_l, K_l\right)\right\},
$$
where $K_l \leq K_{l-1} \leq \cdots \leq K_1 \leq d$.
 Let  $\widetilde{f}$ be any estimator based on the observations $\{(x_{ij},y_{ij})\}_{i=1,j=1}^{n,m_i}$. Then,  it holds that
\[
    \underset{ n \rightarrow \infty}{\lim\sup}\,\underset{f^* \in  \mathcal{H}(l,\widetilde{\mathcal{P}})     }{ \sup     }\,\mathbb{P}\left(  \| \widetilde{f}-f^* \|_\lt^2 \,\geq \,  C_{\text{opt}}\left(    \frac{1}{n }  \,+\, \max_{(p, K) \in \mathcal P}\left( \frac{1}{nm}\right)^{\frac{2p}{2p+K}} \right) \right)\,>0\,
   \]
   where $C_{\text{opt}} > 0$ is constant.
\end{lemma}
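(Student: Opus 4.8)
The plan is to prove the bound by splitting it into two separate minimax lower bounds — one of order $\phi_{nm}:=\max_{(p,K)\in\widetilde{\mathcal P}}(nm)^{-2p/(2p+K)}$ coming from the nonparametric nature of $f^*$, and one of order $n^{-1}$ coming from the functional spatial noise $\gamma_i$ — and then using $\max\{n^{-1},\phi_{nm}\}\asymp n^{-1}+\phi_{nm}$. Concretely, for each fixed $n$ (with the associated $m=m(n)$) set $a_n=\max\{n^{-1},\phi_{nm}\}$; depending on whether $\phi_{nm}\ge n^{-1}$ or not, I will exhibit a finite subfamily of $\mathcal H(l,\widetilde{\mathcal P})$ and constants $c,\delta>0$ \emph{independent of $n$} such that $\sup_{f^*}\mathbb P(\|\widetilde f-f^*\|_\lt^2\ge c\,a_n)\ge\delta$. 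Since $n^{-1}+\phi_{nm}\le 2a_n$, choosing $C_{\mathrm{opt}}=c/2$ gives $\sup_{f^*}\mathbb P(\|\widetilde f-f^*\|_\lt^2\ge C_{\mathrm{opt}}(n^{-1}+\phi_{nm}))\ge\delta$ for every $n$, and taking $\limsup_{n\to\infty}$ yields the claim. Both sub-bounds come from standard reductions: Le Cam's two-point method for the $n^{-1}$ term and the Varshamov--Gilbert/Fano machinery for the $\phi_{nm}$ term. Throughout, I will use standard $\beta$-mixing coupling (as in \cite{doukhan2012mixing}) to replace, at the cost of an $o(1)$ additive total-variation error, the true joint law over the $n$ time blocks by the product of the per-block laws, so that the information-theoretic computations reduce to the independent-block case.

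\textbf{The nonparametric term.} For the $\phi_{nm}$ bound I adapt Theorem 3 of \cite{schmidt2020nonparametric}. Fix $(p_k,K_k)\in\widetilde{\mathcal P}$; by taking all but one level of the composition to be coordinate projections/identity maps, estimating $f^*\in\mathcal H(l,\widetilde{\mathcal P})$ is at least as hard as estimating a single $(p_k,C)$-smooth function of $K_k$ variables. On that sub-problem use the classical bump family: partition $[0,1]^{K_k}$ into $\asymp h_k^{-K_k}$ cubes of side $h_k$, set $f_\theta=\sum_\ell \theta_\ell\, h_k^{p_k}\,\psi\big((\cdot-x_\ell)/h_k\big)$ for $\theta\in\{0,1\}^{M}$, $M\asymp h_k^{-K_k}$, and a fixed smooth bump $\psi$; Varshamov--Gilbert produces a subfamily of size $e^{cM}$ that is $\asymp h_k^{p_k}$-separated in $\|\cdot\|_\lt$. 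The crucial step is the information bound $\mathrm{KL}(\mathbb P_{f_\theta}\,\|\,\mathbb P_{f_{\theta'}})\lesssim \sigma_\epsilon^{-2}\sum_{i,j}(f_\theta-f_{\theta'})^2(x_{ij})\asymp (nm)\,h_k^{2p_k}$: writing the block-$i$ response vector as a mean shift plus noise with covariance $\sigma_\epsilon^2 I+\Sigma_i$, where $\Sigma_i\succeq 0$ is the Gram matrix of $\gamma_i$ over the block, the relevant quadratic form is dominated by the $\sigma_\epsilon^2 I$ part alone, so the within-block spatial correlation only helps. Choosing $h_k\asymp(nm)^{-1/(2p_k+K_k)}$ balances $\log(\text{subfamily size})\asymp h_k^{-K_k}$ against this KL bound, and Fano's inequality gives a lower bound of order $h_k^{2p_k}=(nm)^{-2p_k/(2p_k+K_k)}$; maximizing over $k$ gives $\phi_{nm}$.

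\textbf{The spatial term.} For the $n^{-1}$ bound I use a two-point argument isolating the \emph{level} of $f^*$. Take $f_0\equiv 0$ and $f_1\equiv c\,n^{-1/2}$; both lie in $\mathcal H(l,\widetilde{\mathcal P})$ (constants are arbitrarily smooth) and satisfy $\|f^*\|_\infty\le\mathcal A_{nm}/4$, with $\|f_1-f_0\|_\lt^2=c^2/n$. The two data laws differ only by a global shift of all responses by $c\,n^{-1/2}$; the point is that within time block $i$ the errors $\{\gamma_i(x_{ij})+\epsilon_{ij}\}_j$ are \emph{positively correlated} (they share the realization $\gamma_i$), so no amount of spatial averaging removes the $\gamma_i$-component, and block $i$ carries in effect only a single observation of the level against a noise of variance $\Theta(1)$ in $(\sigma_\gamma,\sigma_\epsilon)$. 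Hence over $n$ blocks the total variation between $\mathbb P_{f_0}$ and $\mathbb P_{f_1}$ stays below $1-\delta$ for $c$ small enough, and Le Cam's inequality gives $\inf_{\widetilde f}\sup_{f^*\in\{f_0,f_1\}}\mathbb P(\|\widetilde f-f^*\|_\lt^2\ge c^2/(4n))\ge\delta$.

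\textbf{Main obstacle.} The crux is the information-theoretic control of the non-Gaussian, within-block-correlated error $\gamma_i(x_{ij})+\epsilon_{ij}$: establishing $\mathrm{KL}(\mathbb P_{f}\,\|\,\mathbb P_{f'})\lesssim\sigma_\epsilon^{-2}\sum_{i,j}(f-f')^2(x_{ij})$ for the nonparametric term, and the analogous two-point total-variation estimate for the spatial term. Both are immediate when the errors are Gaussian, via the covariance-domination $(\sigma_\epsilon^2 I+\Sigma_i)^{-1}\preceq\sigma_\epsilon^{-2}I$ above; for merely sub-Gaussian errors one needs a quantitative smoothness/Fisher-information-type estimate on the per-block error density together with a sub-multiplicativity argument (e.g.\ through the $\chi^2$-divergence of products) to pass from one block to the product over blocks, followed by the $\beta$-mixing coupling to pass to the true law. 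A secondary subtlety is that the $n^{-1}$ piece genuinely requires the spatial noise to be nondegenerate (e.g.\ $\mathrm{Var}(\int\gamma_i\,d\mathbb P)\gtrsim 1$, or $\sigma_\gamma\gtrsim 1$); when the spatial average of $\gamma_i$ degenerates the level is identified at rate $(nm)^{-1/2}$, which is dominated by $\phi_{nm}$, so the stated bound still holds but its $n^{-1}$ contribution is then absorbed into $\phi_{nm}$.
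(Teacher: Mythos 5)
Your overall architecture---split the bound into a $(nm)^{-2p/(2p+K)}$ part driven by the measurement error and an $n^{-1}$ part driven by the shared spatial noise, reduce the temporal dependence to independence via the coupling of Lemma \ref{lemma:coupling in the temporal-spatial model}, and combine through $\max\{n^{-1},\phi_{nm}\}\asymp n^{-1}+\phi_{nm}$---is exactly the one the paper uses (the paper invokes the Schmidt--Hieber lower bound on the decoupled blocks for the first part, via Lemma \ref{lb_epsilon}, and a Le Cam reduction to mean estimation for the second). Two steps of your plan, however, do not go through as stated. First, the coupling claim: you cannot replace the joint law of all $n$ consecutive time blocks by the product of the per-block laws at an $o(1)$ total-variation cost, since that would require $n\beta(1)=o(1)$ and adjacent blocks have no separation; the coupling only decouples blocks separated by a gap $S$. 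You must interleave, as the paper does with the even/odd index sets $\mathcal J_{e,s}$ and $S\asymp\log n$ (more precisely $S=\frac{1}{C_\beta}\log^{(2p+K)/(2p)}(n)$), apply the independent-data lower bound to each set of $\asymp n/S$ decoupled blocks, and note that the resulting $S^{2p/(2p+K)}$ (respectively $S/n$) inflation only strengthens the bound since $S\ge 1$. Second, the ``main obstacle'' you flag---a KL or Fisher-information bound for general sub-Gaussian, within-block-correlated errors---is left unresolved in your proposal, but it is also unnecessary: the lemma is a minimax lower bound over the model class, so it suffices to exhibit hard instances, and the paper simply takes Gaussian noise with one source switched off at a time ($\gamma_i\equiv 0$ for the nonparametric term, $\epsilon_{ij}\equiv 0$ with a nondegenerate Gaussian level within each block for the $n^{-1}$ term), for which your covariance-domination and two-point computations are immediate. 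By insisting on proving the information bounds for the given sub-Gaussian law, you set yourself a harder problem than the statement requires and do not close it.

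Relatedly, your closing remark that when the spatial average of $\gamma_i$ degenerates ``the stated bound still holds'' with the $n^{-1}$ contribution ``absorbed into $\phi_{nm}$'' is not correct in the regime $n^{-1}\gg\phi_{nm}$: with degenerate spatial noise an estimator can attain error of order $\phi_{nm}$ up to logarithms, so the event $\{\|\widetilde f-f^*\|_\lt^2\ge C_{\mathrm{opt}}(n^{-1}+\phi_{nm})\}$ can have vanishing probability. The correct reading, and the one implicit in the paper's proof, is that the lower bound is a statement over the model class, with the hard instance for the $n^{-1}$ term carrying nondegenerate (Gaussian) spatial noise.
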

Therefore,  under the hierarchical composition model specified in Lemma \ref{Mlb-M} 
and assuming the absence of spatial noise, the Theorem 3 in \cite{schmidt2020nonparametric} shows that rate $
\left(\sigma_\epsilon^2+1\right)\max_{(p, K) \in \mathcal P}\left( \frac{1}{nm}\right)^{\frac{2p}{2p+K}}\log^{7.3}(nm)
$  is minimax optimal, except for logarithmic factors, see also Remark 2 of \cite{kohler2021rate}. 
 Under the same aforementioned hierarchical composition model, for the temporal-spatial setting, our lower bound   $ \max_{(p,K)\in \mathcal{P}}(nm)^{\frac{-2p}{2p+K}} + n^{-1}$ matches the upper bound rate.   


Finally, it is worth considering the implications of relaxing \Cref{assumption: assumption in the temporal-spatial model} \textbf{a}.
When we relax \Cref{assumption: assumption in the temporal-spatial model} \textbf{a} to polynomial decay, meaning $\beta(s) \lesssim s^{-\alpha}$ for some $\alpha > 0$.  
We can achieve the rate
\begin{align*}  
\frac{\sigma_\epsilon^2 + \sigma_\gamma^2 +1}{n^\alpha }\log^{6.3}(nm) + \frac{\sigma_\gamma^2}{n},\end{align*} provided that $ m\ge \max_{(p,K)\in \mathcal{P}}n^{\alpha  ( \frac{2p+K}{2p} )}.$
 See \cref{mixing_rate_coefficient} for details.



\subsection{Functional Regression with Independent Observations}

In the case that there is no temporal dependence, our model in \eqref{true_model} reduces to the usual functional regression model. This includes as a particular case the linear functional regression setting studied in  \cite{cardot1999functional, cardot2003testing, cai2006prediction,hall2007methodology,cai2012minimax,  petersen2016functional}.   




We now give the estimation error of estimator under functional regression model with no temporal dependence. 


\begin{corollary}  \label{thm:main thm in the functional model} 

Suppose \Cref{assumption: assumption in the temporal-spatial model} holds except ({\bf b}).
Consider the model  $  y_{ij} = f^*(x_{ij} )+\gamma _i(x_{ij}) +\epsilon_{ij},$   where $\{ x_{ij}\}_{i=1, j=1}^{n,m_i}$ are i.i.d. $d$-dimensional random variables supported on $[0,1]^d$, $ \{ \epsilon_{ij}\}_{i=1, j=1}^{n,m_i}$ are i.i.d sub-Gaussian random variables with sub-Gaussian parameter $\sigma_\epsilon$, $ \{  \gamma_i :[0,1]^d \to \mathbb{R} \}_{i=1}^n$ is a collection of i.i.d. separable stochastic processes.
Then with $L$, $r$ and $\mathcal{A}_{nm}$ as in \Cref{thm:main thm in the temporal-spatial model}, 
\begin{align}
\label{bound_functional_regression}
\| \widehat f_{\mathcal{A}_{nm}} - f^*\|_\lt ^2 \lesssim   \left(\sigma_\epsilon^2+\sigma_\gamma^2 +1\right)\max_{(p, K) \in \mathcal P}\left( \frac{1}{nm}\right)^{\frac{2p}{2p+K}}\log^{6.3}(nm)    +  \frac{\sigma_\gamma^2}{n}, 
\end{align} holds with probability approaching to one. 
\end{corollary}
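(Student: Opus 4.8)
The strategy is to specialize the proof of \Cref{thm:main thm in the temporal-spatial model} to the i.i.d.\ setting. In that proof the temporal $\beta$-mixing hypothesis (Assumption~\ref{assumption: assumption in the temporal-spatial model}~(\textbf{b})) enters only through a block construction that replaces the dependent sequences $\{\sigma_x(i)\}$, $\{\sigma_\gamma(i)\}$, $\{\sigma_\epsilon(i)\}$ by essentially independent blocks, at the cost of one logarithmic factor and an additive remainder of order $(\sigma_\epsilon^2+\sigma_\gamma^2)\log(nm)/n$. Under the present hypotheses the ``blocks'' are genuinely independent (each may be taken to be a single index $i$), so that step is vacuous; this is exactly why the exponent on the logarithm drops from $7.3$ to $6.3$ and why the temporal remainder disappears, while the spatial remainder $\sigma_\gamma^2/n$ persists because a single random field $\gamma_i$ still couples all $m_i$ measurements recorded at time $i$.

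Concretely, I would run the following steps, all inherited from the proof of \Cref{thm:main thm in the temporal-spatial model} with the blocking omitted. (i) Write the basic inequality coming from the fact that $\widehat f$ minimizes the weighted empirical $\ell_2$ loss over $\mathcal F(L,r)$, comparing $\widehat f$ with a network $\widetilde f \in \mathcal F(L,r)$ supplied by the approximation theory for the hierarchical composition class $\mathcal H(l,\mathcal P)$ (this is what fixes the widths and depths in the two cases of \eqref{network_arch_proof}), with $\|\widetilde f - f^*\|_\infty^2 \lesssim \phi_{nm}\,\mathrm{polylog}(nm)$. (ii) Substituting $y_{ij} = f^*(x_{ij}) + \gamma_i(x_{ij}) + \epsilon_{ij}$ and using the truncation $\widehat f_{\mathcal A_{nm}}$ together with $\|f^*\|_\infty \le \mathcal A_{nm}/4$, reduce the control of $\|\widehat f_{\mathcal A_{nm}} - f^*\|_{nm}^2$ to the approximation error plus two empirical processes indexed by $g$ ranging over truncated differences of networks with $f^*$: the measurement process $\tfrac1n\sum_i \tfrac1{m_i}\sum_j \epsilon_{ij}\, g(x_{ij})$ and the spatial process $\tfrac1n\sum_i \tfrac1{m_i}\sum_j \gamma_i(x_{ij})\, g(x_{ij})$. (iii) Bound the measurement process by the sub-Gaussian tail (Assumption~(\textbf{d})) and a chaining/covering-number estimate for the truncated class $\mathcal F(L,r)$, using standard bounds on the pseudo-dimension of ReLU networks (polynomial in $L$ and $r$); this yields a contribution of order $(\sigma_\epsilon^2+1)\phi_{nm}\,\mathrm{polylog}(nm)$. (iv) Bound the spatial process using independence across $i$, symmetrization, Ledoux--Talagrand contraction, the exponential integrability of $\gamma_i$ (Assumptions~(\textbf{e}),~(\textbf{f}), via \Cref{lemma:bound on the expected norm of gamma}), and a peeling argument over dyadic shells $\|g\|_2 \asymp 2^k\delta$; this produces simultaneously a term of the same order as the leading rate and the residual $\sigma_\gamma^2/n$. (v) Pass from $\|\cdot\|_{nm}$ to $\|\cdot\|_\lt$ via the coupling of \Cref{lemma:rate in the temporal-spatial model}, which in the independent case reduces to a uniform law of large numbers over the truncated network class, and finally solve the resulting self-bounding quadratic inequality for $\|\widehat f_{\mathcal A_{nm}} - f^*\|_\lt^2$ to obtain \eqref{bound_functional_regression}.

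I expect the spatial process in step (iv) to be the crux. The obstruction is that the summands $\gamma_i(x_{ij})\,g(x_{ij})$ with a fixed $i$ all share the same random function $\gamma_i$, so the $nm$ terms are far from independent and, unlike the measurement term, the inner average $\tfrac1{m_i}\sum_j \gamma_i(x_{ij})$ does not concentrate to zero as $m_i\to\infty$ but rather to $\int \gamma_i\,d\mathbb{P}$, a mean-zero variable of variance $\Theta(\sigma_\gamma^2)$. One must therefore carry out the localization and peeling at the level of the $n$ i.i.d.\ time points, control the localized suprema through the sub-Gaussianity of $\gamma_i$, and track how the truncation level $\mathcal A_{nm}\asymp \max\{\sigma_\epsilon,\sigma_\gamma\}\log^{1/2}(nm)$ enters these estimates; it is exactly this analysis that forces the irreducible $\sigma_\gamma^2/n$ term, which cannot be absorbed into $\phi_{nm}\,\mathrm{polylog}(nm)$.
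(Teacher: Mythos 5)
Your proposal is correct and follows essentially the same route as the paper's own argument: the paper proves this corollary by rerunning Steps 1--4 of the proof of \Cref{thm:main thm in the temporal-spatial model} with the blocking/coupling step omitted, so that the $\beta(S)$ remainders and the extra $\log$ from $S\asymp\log(n)$ disappear while the $\sigma_\gamma^2/n$ term survives through $\bigl\|\tfrac{1}{n}\sum_i\gamma_i\bigr\|_{\lt}$ (\Cref{lemma:bound on the expected norm of gamma}), exactly as you describe. Your identification of the spatial process as the source of the irreducible $\sigma_\gamma^2/n$ term and of the vanishing temporal remainder matches the paper's treatment.
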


Thus, compared to the original  temporal-spatial model upper bound in Theorem \ref{thm:main thm in the temporal-spatial model}, the rate for functional regression with independent observations does not include the term  
which is due to the temporal dependence. 


\subsection{Model with Temporal Dependence without Spatial Dependence}
 When $\gamma_i(\cdot) = 0$, $m_i = 1$, with temporal dependence, the model in \eqref{true_model} becomes a time series model without spatial dependence.   
This modeling framework is similar to \cite{kohler2021rate} where the independence assumption is relaxed. Compared to the full model in \eqref{true_model}, no spatial noise term is considered, thus, it is natural that $m_i =1$. 
Under a $\beta$-mixing condition of temporal dependent observations, we study the risk of the estimator defined in  \eqref{eqn_estimator}. 
Compared to an earlier analysis in \cite{ma2022theoretical} for studying a similar problem, we use different conditions for analyzing the dependence, so our proofs are different to theirs.



Because the next corollary is for $m_i = 1$ for all $i$, we will denote $x_i$ instead of $x_{i1}$, $y_i$ instead of $y_{i1}$ and $\epsilon_i$ instead of $\epsilon_{i1}$.
\begin{corollary}\label{thm:main thm in the temporal model}
Suppose that \cref{assumption: assumption in the temporal-spatial model} holds with $m_i =1$ and $\gamma_i =0$ for all $i \in \{1,\ldots,n\}$. 
Then the model can be written as  $y_{i } = f^*(x_{i } ) +\epsilon_{i }. $ 
Moreover, with the $L$ and $r$ chosen as \cref{thm:main thm in the temporal-spatial model},  
and $ \mathcal{A}_n \asymp \sigma_\epsilon \cdot \log^{1/2}(n)$, the estimator \eqref{eqn_estimator}
satisfies 
\begin{align*}
\left\|\widehat{f}_{\mathcal{A}_n}-f^*\right\|_{\mathcal{L}_2}^2 
\lesssim &  (\sigma^2_\epsilon+1) \log^{7.3}(n) \max_{(p,K) \in \mathcal{P}}\left( \frac{1}{n}\right)^{\frac{2p}{2p+K}},
\end{align*} with probability approaching to one.
\end{corollary}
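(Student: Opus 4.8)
The plan is to derive \Cref{thm:main thm in the temporal model} as the specialization of \Cref{thm:main thm in the temporal-spatial model} to the sub-model $y_i = f^*(x_i) + \epsilon_i$. First I would record the consequences of setting $m_i = 1$ and $\gamma_i \equiv 0$ for all $i$ in \Cref{assumption: assumption in the temporal-spatial model}: since $\E[\gamma_i(x)^2] = 0$ we get $\sigma_\gamma^2 = 0$ and parts (\textbf{e}), (\textbf{f}) hold trivially; the harmonic mean becomes $m = \big(n^{-1}\sum_{i=1}^n 1\big)^{-1} = 1$, so $nm = n$, part (\textbf{g}) holds with $c = C = 1$, and the threshold $\mathcal{A}_{nm} \asymp \max\{\sigma_\epsilon,\sigma_\gamma\}\log^{1/2}(nm)$ of \Cref{thm:main thm in the temporal-spatial model} reduces to $\mathcal{A}_n \asymp \sigma_\epsilon\log^{1/2}(n)$, matching the choice in the corollary; likewise the architectures in Cases~1 and~2 become exactly those stated. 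The remaining hypotheses of \Cref{assumption: assumption in the temporal-spatial model} --- the $\beta$-mixing and exponential-decay conditions (\textbf{b})--(\textbf{c}), the conditional sub-Gaussianity (\textbf{d}), and the hierarchical-composition smoothness (\textbf{i}) --- are precisely what the corollary assumes, with the error sequence now the only dependent process and with one observation per time index.

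Given this, I would invoke \Cref{thm:main thm in the temporal-spatial model} directly: substituting $\sigma_\gamma = 0$ and $m = 1$ into \eqref{rate_in_theorem_temporal_spatial_theorem} gives, with probability tending to one,
\[
\|\widehat f_{\mathcal{A}_n} - f^*\|_{\mathcal{L}_2}^2 \;\lesssim\; (\sigma_\epsilon^2 + 1)\max_{(p,K)\in\mathcal{P}}\Big(\frac{1}{n}\Big)^{\frac{2p}{2p+K}}\log^{7.3}(n) \;+\; \frac{\sigma_\epsilon^2\log(n)}{n}.
\]
It then only remains to absorb the residual term $\sigma_\epsilon^2\log(n)/n$ into the leading term. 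Because $\mathcal{P}\subseteq[1,\infty)\times\mathbf{Z}^{+}$, every $(p,K)\in\mathcal{P}$ has $p\ge 1$ and $K\ge 1$, hence $0<\frac{2p}{2p+K}<1$ and $(1/n)^{2p/(2p+K)}\ge 1/n$; therefore $\sigma_\epsilon^2\log(n)/n \le \sigma_\epsilon^2\log^{7.3}(n)\max_{(p,K)\in\mathcal{P}}(1/n)^{2p/(2p+K)}$, which is of the same order as (indeed no larger than) the first term. This produces the claimed bound.

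The only step requiring real attention is checking that the proof of \Cref{thm:main thm in the temporal-spatial model} does not tacitly use $m\to\infty$ or $m\ge 2$, so that it remains valid at $m=1$; in particular one should re-read the block/coupling construction that turns the $\beta$-mixing sequences into independent blocks, the localization and peeling steps, and the logarithmic accounting that yields the exponent $7.3$. Since $\gamma_i$ is absent here, all the parts of that argument devoted to the functional noise --- symmetrization, the Ledoux--Talagrand contraction, peeling over the empirical norm of $\gamma_i$, and the moment bounds on $\|\gamma_i\|_{\mathcal{L}_2}$ --- simply disappear, leaving a strictly shorter argument; equivalently, one may re-run that simplified argument directly for $y_i = f^*(x_i)+\epsilon_i$. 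I expect this verification, which is bookkeeping rather than a new estimate, to be the main obstacle.
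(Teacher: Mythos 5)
Your proposal is correct. It differs only mildly in execution from the paper's own argument: you treat the corollary as a literal specialization of the statement of \Cref{thm:main thm in the temporal-spatial model}, checking that \Cref{assumption: assumption in the temporal-spatial model} holds trivially with $\gamma_i\equiv 0$ and $m_i=1$ (so $\sigma_\gamma=0$, $m=1$, $\mathcal{A}_{nm}$ reduces to $\mathcal{A}_n\asymp\sigma_\epsilon\log^{1/2}(n)$), and then absorbing the residual $\sigma_\epsilon^2\log(n)/n$ into the leading term via $2p/(2p+K)<1$ and $\log(n)\le\log^{7.3}(n)$ — a small but necessary step that you handle correctly. The paper instead re-runs the simplified proof skeleton for $y_i=f^*(x_i)+\epsilon_i$ (block/coupling construction with $S\asymp\log n$, the $\epsilon$-deviation bounds and norm coupling specialized to $m_i=1$), so that all spatial-noise machinery and the associated $\sigma_\gamma^2/n$ and $\mathcal{A}_{nm}^2/n$ contributions never appear, and the rate $(\sigma_\epsilon^2+1)\,n^{-2p/(2p+K)}\log^{7.3}(n)$ comes out directly without an absorption step. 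Your route is shorter and cleaner provided one verifies, as you flag, that nothing in the proof of \Cref{thm:main thm in the temporal-spatial model} secretly needs $m\ge 2$ or $m\to\infty$ (it does not: the entropy bounds, harmonic-mean condition with $c=C=1$, and the coupling lemmas all remain valid at $m=1$); the paper's re-derivation buys an explicit confirmation of this and of the fact that the spatial-noise steps are vacuous, at the cost of repetition.
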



Compared to the rate in \cref{thm:main thm in the temporal-spatial model}, the upper bound in Corollary \ref{thm:main thm in the temporal model}  does not contain $\sigma_\gamma$ which is due to the spatial dependence.  

\section{Theoretical /Results for Manifold Data}
\label{sec_manifold}

Many real-world datasets exhibit low-dimensional structures, such as images as 3D object projections or speech data governed by grammatical rules. Visual, acoustic, and textual data often reflect these structures due to regularities, symmetries, or redundancy, making them well-suited for modeling as samples near low-dimensional manifolds \cite{Tenenbaum2000,Roweis2000}.

Unlike classical nonparametric estimators, deep neural network  estimators excel at adapting to such low intrinsic-dimensional structures, circumventing the curse of dimensionality and achieving faster convergence rates. Studies, including \cite{chen2022nonparametric}, \cite{cloninger2021deep}, \cite{jiao2023deep}, and \cite{kohler2023estimation}, have demonstrated their adaptivity to (nearly) manifold inputs.



\begin{definition}[Definition 2 of \cite{kohler2023estimation}, $d^*$-dimensional Lipschitz-manifold]
Let $\mathcal{M} \subseteq \mathbb{R}^d$ be compact set and let $d^* \in\{1, \ldots, d\}$. Then $\mathcal{M}$ is a $d^*$-dimensional Lipschitz-manifold if 
$$
\mathcal{M}=\bigcup_{l=1}^R \mathcal{M} \cap U_l=\bigcup_{l=1}^R \psi_l\left((0,1)^{d^*}\right),
$$
where $U_1, \ldots, U_R$ is an open covering of $\mathcal{M}$, 
where $\psi_i:[0,1]^{d^*} \rightarrow \mathbb{R}^d(i \in\{1, \ldots, R\})$ are bi-Lipschitz functions, 
such that $ \psi_l\left((0,1)^{d^*}\right)=\mathcal{M} \cap U_l. $  Here,  bi-Lipschitz  means that there exist  constants $0<C_{\psi, 1} \leq C_{\psi, 2}<\infty$ such that
\begin{align}\label{eq:bi-Sipschitz property}
C_{\psi, 1} \cdot\left\|\mathbf{a}_1-\mathbf{a}_2\right\| \leq\left\|\psi_l\left(\mathbf{a}_1\right)-\psi_l\left(\mathbf{a}_2\right)\right\| \leq C_{\psi, 2} \cdot\left\|\mathbf{a}_1-\mathbf{a}_2\right\|
\end{align}
for any $\mathbf{a}_1, \mathbf{a}_2 \in[0,1]^{d^*}$ and any $l \in\{1, \ldots, R\}$. 

\end{definition}

We now present the main assumption for this section.

\begin{assumption}\label{assumption: assumption in the temporal-spatial model on manifold}
For $1\le i \le n $ and $1\le j \le m_i$, let $y_{ij} = f^*(x_{ij} )+\gamma _i(x_{ij}) +\epsilon_{ij}, $
 where  $x_{ij}$ is supported on some $d^*$-dimensional Lipschitz-manifold $\mathcal{M} \subseteq[-1, 1]^d$, and $d^* \in\{1, \ldots, d\}$. We also assume that the rest of the conditions from \Cref{assumption: assumption in the temporal-spatial model} hold except for ({\bf i}) which is replaced with  $f^*$ being
  $(p,C)$-smooth, where $p= q +s$ with $q$ and $s$ as in   Definition \ref{definition: p,c smoothness}. Suppose that for  $f^*$ all its partial derivatives of order less than or equal to $q$ are uniformly bounded.
\end{assumption}

With Assumption \ref{assumption: assumption in the temporal-spatial model on manifold} in hand, we are now in position to state the main result of this section.

\begin{theorem}  \label{thm:main thm in the temporal-spatial model on manifold} 
Suppose that  \cref{assumption: assumption in the temporal-spatial model on manifold} holds and consider the estimator $\widehat f$ defined (\ref{eqn_estimator}) 
with its number of layers $L$ and number of neurons per layer $r$ set as either a wide network with $L  \asymp \log(nm) $ and $r  \asymp  (nm)^{\frac{d^*}{2(2p+d^*)}}$, or a deep network with
$ L \asymp \log(nm) \cdot  (nm)^{\frac{d^*}{2(2p+d^*)}} $ and  r = O(1).
Let $\mathcal{A}_{nm} \asymp \max \left\{\sigma_\epsilon,\sigma_\gamma\right\}\cdot \log^{1/2}(nm)$. 
Then with probability approaching to one, it holds that 
\begin{align} \label{eq:functional rate wanted in the temporal-spatial model on manifold} \left\| \widehat f_{\mathcal{A}_{nm}} - f^* \right\|_\lt ^2 \lesssim  \left(\sigma_\epsilon^2+\sigma_\gamma^2+1\right)\left( \frac{\log(nm)}{nm}\right)^{\frac{2p}{2p+d^*}}\log^{6.3}(nm) + \left(\sigma_\epsilon^2+\sigma_\gamma^2\right)\frac{\log(nm)}{n}.
\end{align}
\end{theorem}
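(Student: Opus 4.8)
The plan is to follow the proof of \Cref{thm:main thm in the temporal-spatial model} almost verbatim, replacing the hierarchical‑composition approximation bound by the manifold approximation result of \cite{kohler2023estimation} and tracking the intrinsic dimension $d^*$ in place of the order constraint $K$ throughout. The argument splits into an approximation step and a stochastic (empirical‑process) step, which are then balanced by the stated choice of $L$ and $r$.

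For the approximation step I would invoke Theorem~1 of \cite{kohler2023estimation}: since $f^*$ is $(p,C)$‑smooth with all partial derivatives of order $\le q$ uniformly bounded and $x_{ij}$ is supported on the $d^*$‑dimensional Lipschitz manifold $\mathcal M$, there is $f_{\mathrm{net}}\in\mathcal F(L,r)$, with $L$ and $r$ as in the theorem, such that $\|f_{\mathrm{net}}-f^*\|_{\infty,\mathcal M}\lesssim (nm)^{-p/(2p+d^*)}$ up to logarithmic factors, and $f_{\mathrm{net}}$ can be taken uniformly bounded on $\mathcal M$ (which suffices, since both $\mathbb{P}$ and $\mathbb{P}_{nm}$ are supported on $\mathcal M$). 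The essential feature is that the width and depth of this network, hence the metric entropy of $\mathcal F(L,r)$ used below, depend on $d^*$, not on the ambient $d$.

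For the stochastic step, starting from the basic inequality implied by the empirical‑risk optimality of $\widehat f$,
\[
\|\widehat f - f^*\|_{nm}^2 \;\le\; \|f_{\mathrm{net}} - f^*\|_{nm}^2 \;+\; \frac{2}{n}\sum_{i=1}^n \frac{1}{m_i}\sum_{j=1}^{m_i}\bigl(\epsilon_{ij} + \gamma_i(x_{ij})\bigr)\,\bigl(\widehat f(x_{ij}) - f_{\mathrm{net}}(x_{ij})\bigr),
\]
I would control the cross term uniformly over $g=\widehat f-f_{\mathrm{net}}$ in a shell of the truncated network class, treating the $\epsilon$‑part and the $\gamma$‑part separately. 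The $\epsilon$‑part is handled, as in the proof of \Cref{thm:main thm in the temporal-spatial model}, by symmetrization, the Ledoux--Talagrand contraction, a Dudley‑type chaining bound using covering numbers of $\mathcal F(L,r)$, and a peeling/localization argument to upgrade the crude $\sqrt{\text{entropy}}$ rate to the fast rate; using the conditional sub‑Gaussianity of $\epsilon_{ij}$ (\Cref{assumption: assumption in the temporal-spatial model on manifold}) this yields the $(\sigma_\epsilon^2+1)(\log(nm)/nm)^{2p/(2p+d^*)}$ contribution. For the $\gamma$‑part the key point is that $\gamma_i$ is common to all $m_i$ observations at time $i$, so conditioning on $\{\gamma_i\}_{i=1}^n$ the relevant averaging is over $n$ terms rather than $nm$; combined with the exponential‑moment bound and the moment condition on $\|\gamma_i\|_{\mathcal L_2}$ this produces the $\sigma_\gamma^2\log(nm)/n$ term (plus a lower‑order $\sigma_\gamma^2(\log(nm)/nm)^{2p/(2p+d^*)}$ term). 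Temporal dependence is removed exactly as before: partition $\{1,\dots,n\}$ into alternating blocks of length $\asymp\log(nm)$, apply Berbee/Eberlein coupling so that the odd and even blocks may be replaced by mutually independent copies at total‑variation cost $\lesssim n\,\beta(\log(nm))=o(1)$ by the exponential decay of $\beta(\cdot)$ in \Cref{assumption: assumption in the temporal-spatial model}\,({\bf b}), and run the preceding concentration estimates on the independent surrogate. Finally, \Cref{lemma:rate in the temporal-spatial model} couples $\|\cdot\|_{nm}$ with $\|\cdot\|_\lt$ over the network class so the empirical‑norm bound transfers to $\|\cdot\|_\lt$, the truncation at $\mathcal A_{nm}\asymp\max\{\sigma_\epsilon,\sigma_\gamma\}\log^{1/2}(nm)$ is absorbed using $\|f^*\|_\infty\le\mathcal A_{nm}/4$ and the boundedness of $f_{\mathrm{net}}$ on $\mathcal M$, and balancing the approximation error $(nm)^{-2p/(2p+d^*)}$ against the stochastic/complexity term with the stated $L,r$ gives \eqref{eq:functional rate wanted in the temporal-spatial model on manifold}.

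The main obstacle is not a new probabilistic estimate but the interface between the manifold approximation construction and the fully connected class $\mathcal F(L,r)$: one must check that the network of \cite{kohler2023estimation}, which composes a chart/partition‑of‑unity preprocessing sub‑network with a smooth‑function approximation sub‑network, is genuinely realizable with the advertised $L$ and $r$, and that its $d^*$‑dependence propagates cleanly into the covering‑number bound so that the polylogarithmic bookkeeping yields exactly the displayed power of $\log(nm)$. Once this is in place, the mixing and spatial‑noise arguments are identical to those in the proof of \Cref{thm:main thm in the temporal-spatial model}.
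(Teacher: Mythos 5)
Your proposal follows essentially the same route as the paper: replace the hierarchical-composition approximation by the manifold approximation of \cite{kohler2023estimation} (the paper invokes Theorem~2 there, in \cref{lemma_approximation_manifold}), and then rerun the temporal-spatial argument of \cref{thm:main thm in the temporal-spatial model} — truncation event, basic inequality, empirical/$\mathcal L_2$ norm coupling via \cref{lemma:rate in the temporal-spatial model}, chaining-plus-peeling deviation bounds for $\epsilon$, conditioning on $\{\gamma_i\}$ for the $\sigma_\gamma^2/n$ term, and the $\beta$-mixing blocking/coupling with blocks of length $\asymp\log n$ — with $\phi_{nm,\mathcal M}=(nm)^{-2p/(2p+d^*)}$ in place of $\phi_{nm}$. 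The ``interface'' obstacle you flag is resolved in the paper exactly as you suggest: the covering-number bound for $\mathcal F_{\mathcal A_{nm}}(L,r)$ depends only on the architecture ($L^2r^2\log(Lr^2)$, see \cref{lemma:threshold space entropy manifold}), so the $d^*$-dependence enters solely through the choice of $L,r$, and the bookkeeping goes through unchanged.
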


When no spatial noise is present, $\sigma_\gamma = 0$,
we recover the upper bound obtained in Theorem 1 in \cite{kohler2023estimation}, up to a logarithmic factor.

\section{Numerical Experiments}
\label{sec_exp}
\subsection{Simulated data}

To conduct our simulations, we consider various generative models, referred to as scenarios. We describe these scenarios in the following subsections. The performance of each method is evaluated by the MSE obtained when using the trueregression function. We then report the average MSE over 100 Monte Carlo simulations.

For each setting below, we consider dimension $d\in\{2,5,7,10\}.$ For each fix dimension,
we vary \( n \) in \( \{500, 1000, 2000\} \). Moreover, for each fix value \( n \) we then vary a parameter \( m_{mult} \) in \( \{ 1, 2\} \), to obtain different sets of values for \( \{m_i\}_{i=1}^{n} \). Specifically, we consider: 1). \( m_1 = \ldots = m_{n/4} = 16m_{mult} \); 2). \( m_{n/4+1} = \ldots = m_{n/2} = 24m_{mult} \); 3). \( m_{n/2+1} = \ldots = m_{3n/4} = 20m_{mult} \); 4). \( m_{3n/4+1} = \ldots = m_n = 10m_{mult} \).

The spatial noise is generated as \(\gamma_i(x) = 0.5\gamma_{i-1}(x) + \sum_{t=1}^{25} t^{-1}b_{i,t}h_t(x)\), where
\[
\Bigl\{h_t(x) = \prod_{j=1}^d\frac{1}{\sqrt{2}\pi} \sin(t\pi x^{(j)})\Bigr\}_{t=1}^{25},
\]
are basis functions and $b_{i,t}\sim G_i$ for some distributions $G_i$ which we will specify later. 

The measurement error is generated as follows: For each \(i \in [n]\), the vector \(\tilde \epsilon_i \in \mathbb{R}^M\) is generated recursively as, \(\tilde \epsilon_i = 0.3 \tilde \epsilon_{i-1} + \xi_i\), where \(\{\xi_{i}\}_{i=1}^{n}\) are independent errors with $\xi_i \sim \mathscr{F}_i$, for some distributions $\mathscr{F}_i$ which we will specify later. To form the measurement errors, for each \(i \in [n]\), we take the first \(m_i\) entries of the vector \(\tilde \epsilon_i\) and denote the resulting vector as \(\epsilon_i\), since the $m_i$ vary across $i$.

We observe the noisy temporal-spatial data \(\{y_{ij}\}_{i=1,j=1}^{n,m_i}\) at design points \(\{x_{ij}\}_{j=1}^{m_1}\)  sampled in \([0,1]^d\) as follows. 

First, we generate \(\{x_{1j}\}_{j=1}^{m_1} \sim \text{Unif}([0,1]^d)\). Then, for any \(1 < i \leq n\),
\[
x_{ij} = 
\begin{cases}
x_{i-1j}, & \text{with probability } \phi = 0.1,\\
x_{ij} \sim \text{Unif}([0,1]^d), & \text{otherwise},
\end{cases}
\]
for all \(j = 1, \ldots, m_i\).

We investigate the subsequent scenarios.  




{$\bullet$ {\bf{Scenario 1} }} In this scenario, we use a step function. The function $f^*:[0,1]^d \rightarrow \mathbb{R}$ is defined as
$$
f^*(q)= \begin{cases}2 & \text { if }\left\|q-q_1\right\|_2<\min \left\{\left\|q-q_2\right\|_2,\left\|q-q_3\right\|_2,\left\|q-q_4\right\|_2\right\}, \\ 1 & \text { if }\left\|q-q_2\right\|_2<\min, \left\{\left\|q-q_1\right\|_2,\left\|q-q_3\right\|_2,\left\|q-q_4\right\|_2\right\}, \\ 0 & \text { if }\left\|q-q_3\right\|_2<\min \left\{\left\|q-q_1\right\|_2,\left\|q-q_2\right\|_2,\left\|q-q_4\right\|_2\right\}, \\ -1 & \text { otherwise, }\end{cases}
$$
where $q_1=\left(\frac{1}{4} \mathbf{1}_{\lfloor d / 2\rfloor}^T, \frac{1}{2} \mathbf{1}_{d-\lfloor d / 2\rfloor}^T\right)^T, q_2=\left(\frac{1}{2} \mathbf{1}_{\lfloor d / 2\rfloor}^T, \frac{1}{4} \mathbf{1}_{d-\lfloor d / 2\rfloor}^T\right)^T, q_3=\left(\frac{3}{4} \mathbf{1}_{\lfloor d / 2\rfloor}^T, \frac{1}{2} \mathbf{1}_{d-\lfloor d / 2\rfloor}^T\right)^T$ and $q_4=\left(\frac{1}{2} \mathbf{1}_{\lfloor d / 2\rfloor}^T, \frac{3}{4} \mathbf{1}_{d-\lfloor d / 2\rfloor}^T\right)^T$.

{$\bullet$ {\bf{Scenario 2} }} In this scenario, we apply a simple sine function to the sum of inputs to examine smooth, periodic variation across the input space.
  The function $f^*:[0,1]^d \rightarrow \mathbb{R}$ is defined as $f^*(q) = \sin(\mathbf{1}_d^Tq).$

{$\bullet$ {\bf{Scenario 3} }} This scenario uses a two-level hierarchical function to test the performance on layered non-linearities.
  We set 
\[
\begin{array}{lll}
f^*(q)  & = &  g_2\circ g_1(q), \,\,\,\, \forall q\in \mathbb{ R}^d,\\
g_1(q)  & = & \left( \sqrt{q_1} + \sum_{i=1}^{d-1} q_i \cdot q_{i+1}, \cos\left(2\pi \sum_{i=1}^d q_i\right) \right)^\top
  ,\,\,\,\,\, \forall  q\in \mathbb{ R}^d,  \\  
g_2(q)  &=  & \sqrt{q_1 +   q_2^2 }   +   q_1^2 \cdot q_2,    ,\,\,\,\,\, \forall  q\in \mathbb{ R}^2.  \\   
\end{array}
\]





{$\bullet$ {\bf{Scenario 4} }} 
This scenario employs a three-level hierarchical function with distinct transformations at each level to test adaptability to deeper compositions with layered structures with varying smoothness.
 Specifically, the function $f^* \,:\,[0,1]^{d} \rightarrow \mathbb{R}$ is defined as $f^*(q) = g_3  \circ g_2 \circ g_1(q)$, where
\[
\begin{array}{lll}
g_1(q) & =& (   \sqrt{q_1^2 +    \sum_{j=2}^{d} q_j  },  (\sum_{j=1}^{d} q_j )^3 )^{\top},  \,\, q \in [0,1]^{d},\\
g_2(q)   &  = &  (\vert q_1\vert,q_2 \cdot q_1  )^{\top} ,  \,\, q \in [0,1]^2,\\
g_3(q) & =&  q_1 + \sqrt{q_1+q_2} ,  \,\, q \in [0,1]^2.\\
\end{array}
\]

{$\bullet$ {\bf{Scenario 5} }} In this scenario, we use a highly non-linear function with interaction terms:
\begin{align*}
f^*(q) &=  \begin{cases} \sin(2\pi q_1 q_2) ,& \text{if } d = 2
\\ 
\sin(2\pi q_1 q_2) + \cos(2\pi q_3), & 
\text{if } d = 3
\\
\sin(2\pi q_1 q_2) + \cos(2\pi q_3 q_4) + \ldots + \sin(2\pi q_{d-1} q_d), & \text{if } d \text{ is even } d\geq 4, \\
\sin(2\pi q_1 q_2) + \cos(2\pi q_3 q_4) + \ldots + \sin(2\pi q_d), & \text{if } d \text{ is odd } d>4. \quad q \in [0,1]^d.
\end{cases} 
\end{align*}








{$\bullet$ {\bf{Scenario 6} }} 
In this scenario, we use a function with a hierarchical structure that combines quadratic and sinusoidal transformations across separate partitions of the input space.
Here, we use a function with a hierarchical structure:
$$
f^*(q) = \Bigl(\sum_{i=1}^{\lfloor d/2 \rfloor} q_i^2\Bigr) \cdot \sin\Bigl(\sum_{i=\lfloor d/2\rfloor + 1}^d q_i\Bigr), \quad q \in [0,1]^d. 
$$



We compare our estimator with four competitors, 
    \textbf{ $K$-NN-FL} (\cite{padilla2018adaptive}). We construct  the $K$-NN   graphs  using standard  \verb=Matlab= functions such 
as \verb=knnsearch= and \verb=bsxfun=. 
We solve the fused lasso  with the parametric max-flow  algorithm from \cite{chambolle2009total}, for which  software is available from the authors' website, \url{http://www.cmap.polytechnique.fr/~antonin/software/}. We let  the number of neighbors in $K$-NN-FL be  $K=5, 10, 15$, and the  parameter $\lambda$ to be adjusted with $5$-fold cross-validation. 
    \textbf{Additive Model with Trend Filtering} (\cite{sadhanala2019additive}), a multivariate estimator with additive modes. This is implemented by using the backfitting algorithm to fit each component of the univariate trend filtering estimator. For the univariate trend filtering solver,
we used the {\tt trendfilter} function  from the \texttt{R}  package {\tt glmgen}, which is an
implementation of the fast ADMM algorithm given in \cite{ramdas2016fast}. The order of trend filtering (the order of weak derivatives of total variation difference) $k$ is chosen from $0,1,2$,  and the regularization parameter $\lambda$ is picked to fit by cross-validation. 
    \textbf{Generalized Additive Models} (\cite{wood2017generalized}), implemented in the \texttt{Python} package \texttt{pyGAM}. The feature functions are built using penalized B splines. The number of splines, the spline order, the type of basis functions, the strength of smoothing penalty $\lambda$ are all chosen by cross-validation. \textbf{ Kernel ridge regression} (\cite{hofmann2008kernel}), implemented in the \texttt{Python}  package \texttt{sklearn.kernel\_ridge}, with the regularization strength parameter $\alpha$ and type of kernels treated as tuning parameters, the best parameters are selected with $5$-fold cross-validation.

Figures \ref{fig_box_1} to \ref{fig_box_5} present the best-performing methods for {\bf Scenarios 1} through {\bf 6} in dimensions $d = 7$ and $d=10$, respectively. Detailed performance values for each method, an the other values of $d$, along with visualizations for $d=2$,  are available  in Appendix~\ref{table_result_box_plots}.
The evaluation metric we used is the relative error, which is defined as $\text{Relative Error} = \|\hat f - f^* \|_{nm}^2/\|f^*\|_{nm}^2.$


\begin{figure}[H]
    \captionsetup[subfigure]
    {aboveskip=-1pt, belowskip=-1pt, font=footnotesize}
    \centering

    \begin{minipage}[c]{\textwidth}
        \centering
        \includegraphics[width=0.85\linewidth]{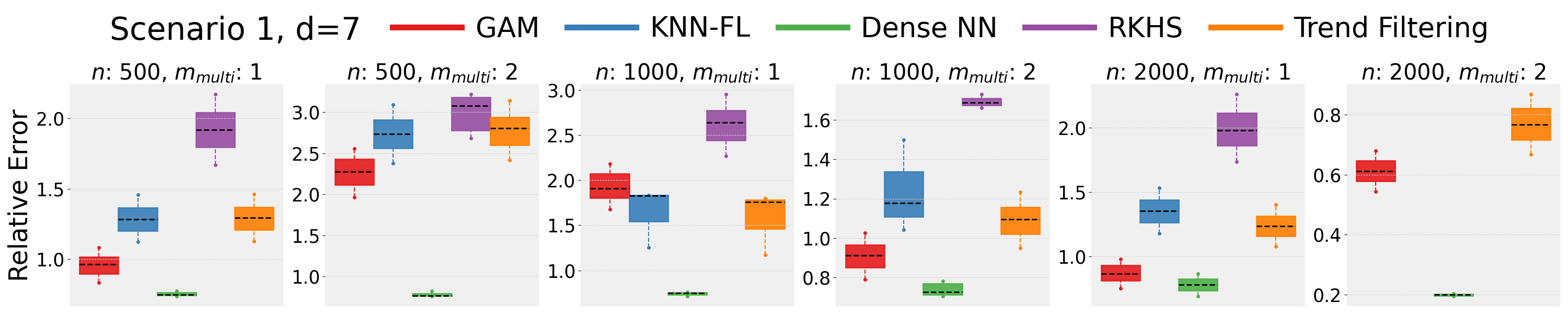}
    \end{minipage}\vspace{-2pt}

    \begin{minipage}[c]{\textwidth}
        \centering
        \includegraphics[width=0.85\linewidth]{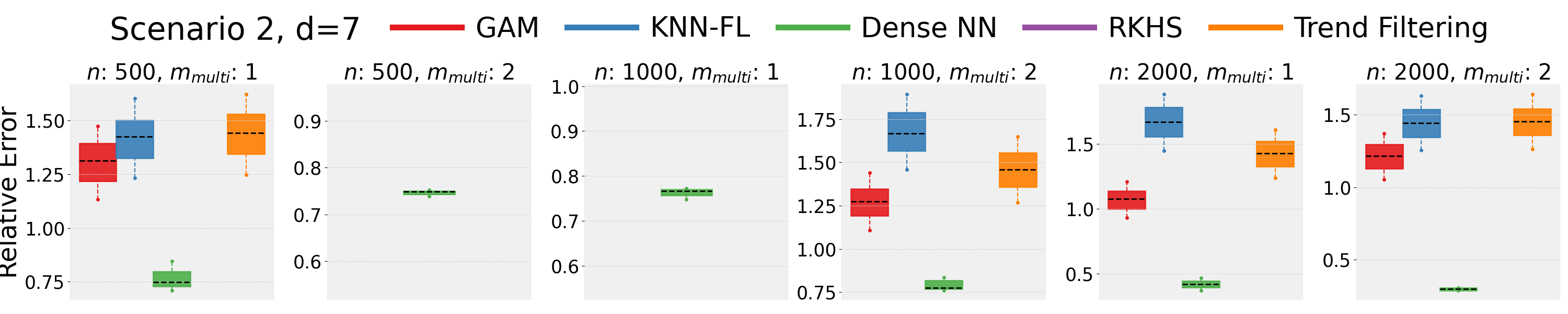}
    \end{minipage}\vspace{-2pt}

    \begin{minipage}[c]{\textwidth}
        \centering
    \includegraphics[width=0.85\linewidth]{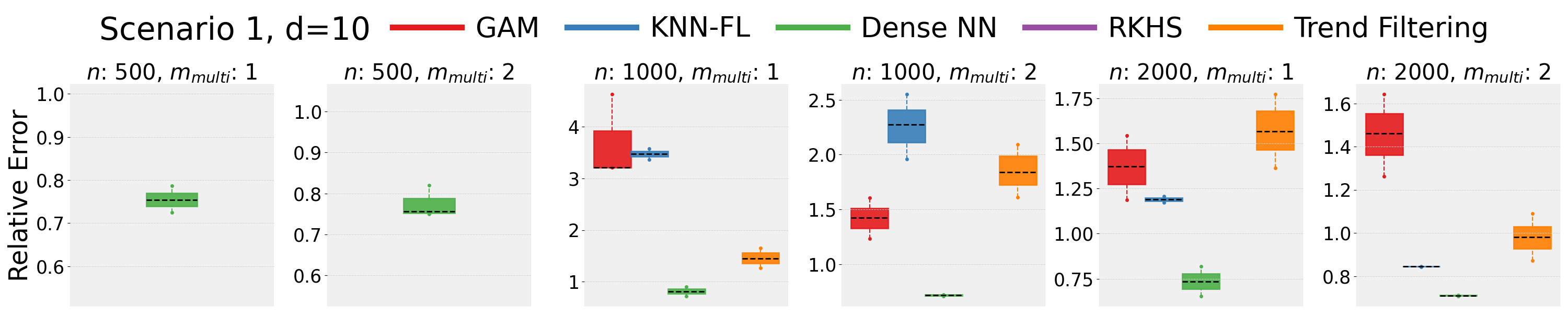}
    \end{minipage}\vspace{-4pt}

    \begin{minipage}[c]{\textwidth}
        \centering
        \includegraphics[width=0.85\linewidth]{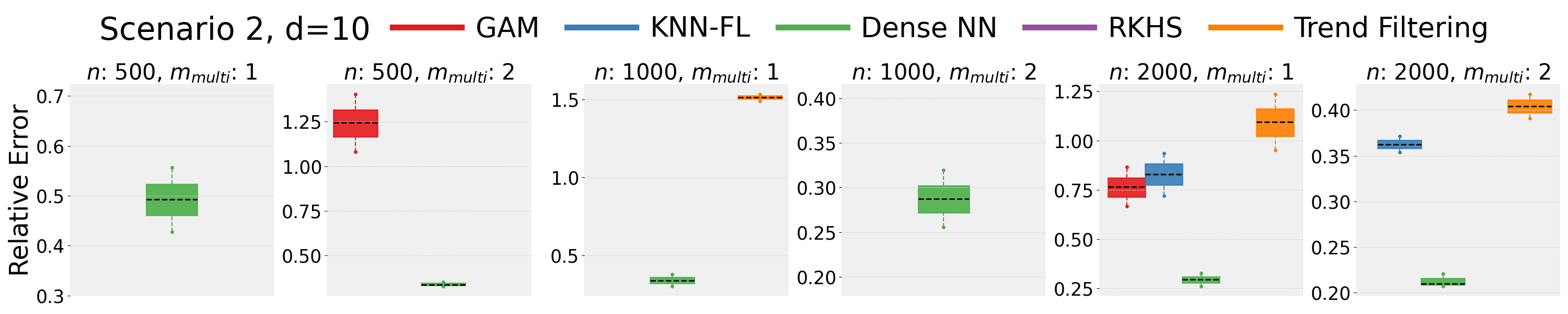}
    \end{minipage}\vspace{-4pt}

  \caption{Comparison of methods for \textbf{Scenario 1} and \textbf{Scenario 2} with $d = 7$ (first two rows), and $d = 10$ (last two rows), by box-plot. Each $(n, m_{mult})$ setting presents the errors of the competitors within 5 times the smallest error.}
    \label{fig_box_1}
\end{figure}





\vspace{-4pt}
\begin{figure}[H]
    \captionsetup[subfigure]{aboveskip=-1pt, belowskip=-1pt, font=footnotesize}
    \centering

    \begin{minipage}[c]{0.62\textwidth}
        \includegraphics[width=\linewidth]{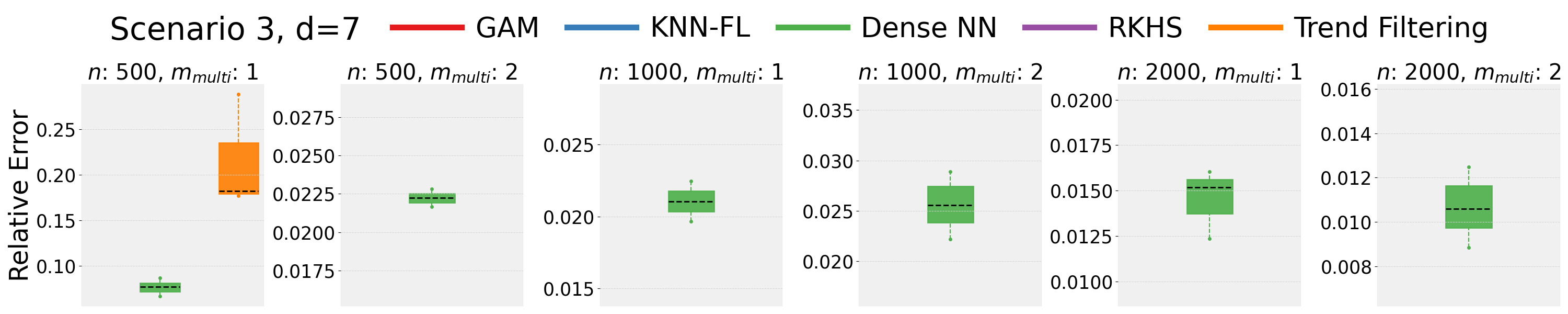}
    \end{minipage}\vspace{-2pt}
     \begin{minipage}[c]{0.62\textwidth}
        \includegraphics[width=\linewidth]{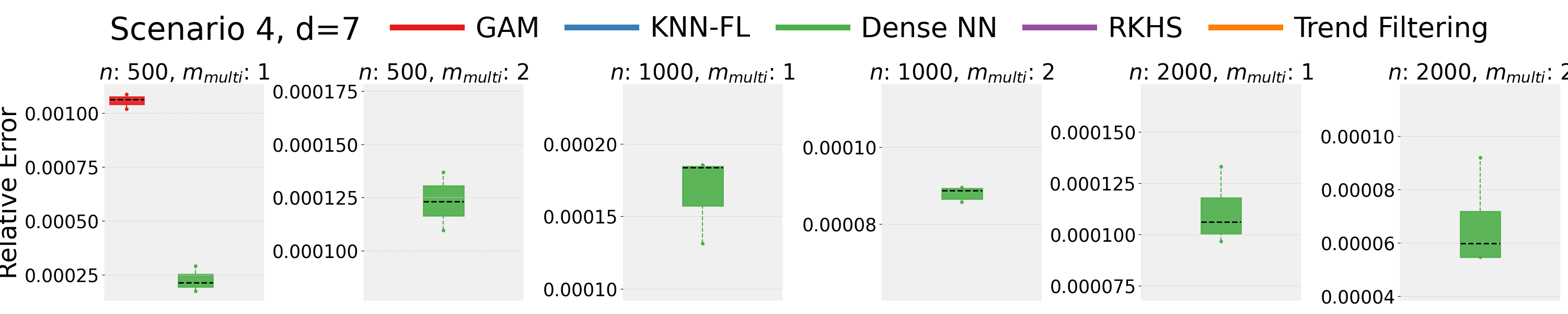}
    \end{minipage}\vspace{-2pt}

   \begin{minipage}[c]{0.62\textwidth}
        \includegraphics[width=\linewidth]{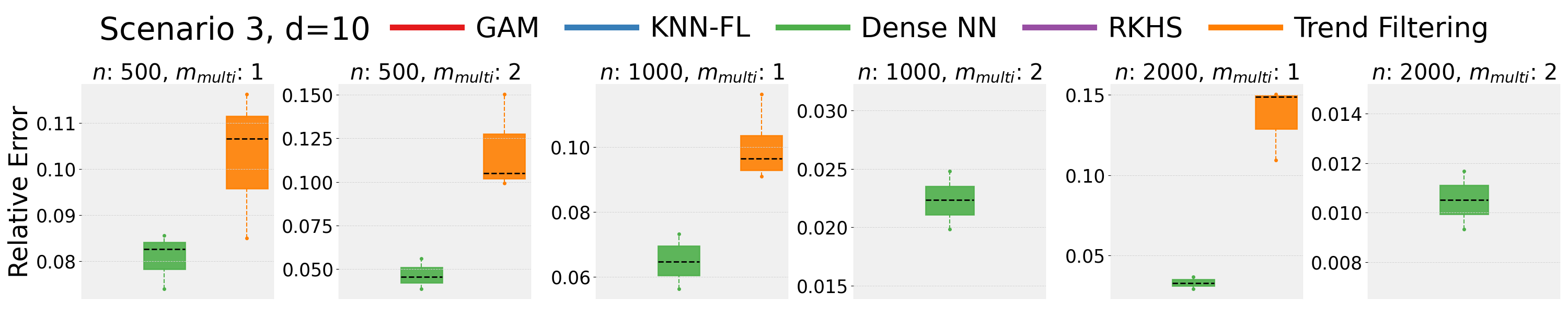}
    \end{minipage}\vspace{-4pt}
    \begin{minipage}[c]{0.62\textwidth}
        \includegraphics[width=\linewidth]{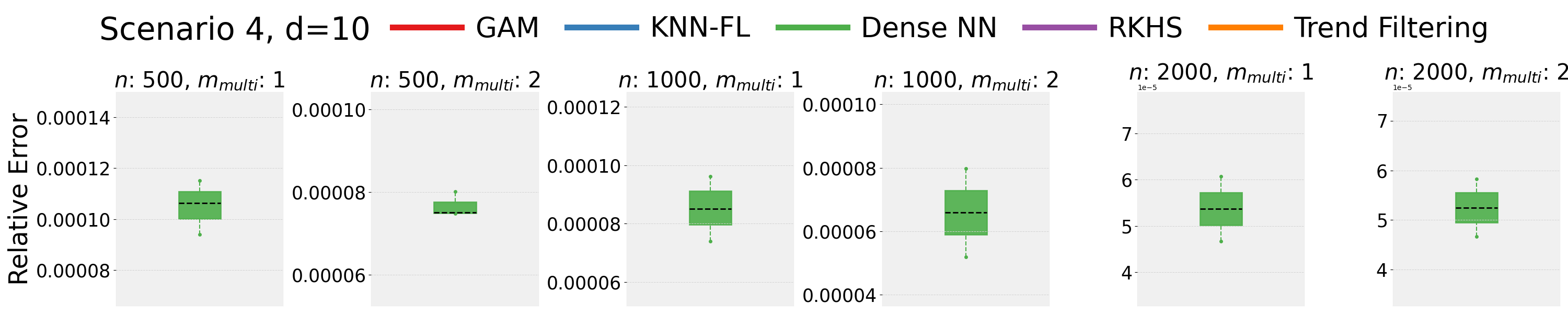}
    \end{minipage}\vspace{-4pt}

   \caption{Comparison of methods for \textbf{Scenario 3} and \textbf{Scenario 4} with $d = 7$ (first two rows), and $d = 10$ (last two rows), by box-plot. Each $(n, m_{mult})$ setting presents the errors of the competitors within 5 times the smallest error.}
    \label{fig_box_3}
\end{figure}





\begin{figure}[H]
    \captionsetup[subfigure]{aboveskip=-1pt, belowskip=-1pt, font=footnotesize}
    \centering

    \begin{minipage}[c]{0.62\textwidth}
        \includegraphics[width=\linewidth]{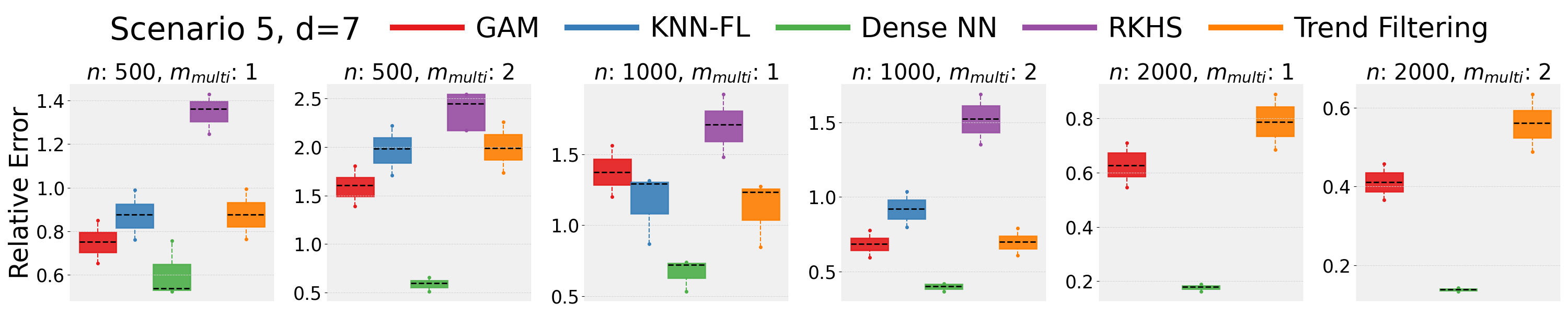}
    \end{minipage}\hspace{-4pt}
     \begin{minipage}[c]{0.62\textwidth}
        \includegraphics[width=\linewidth]{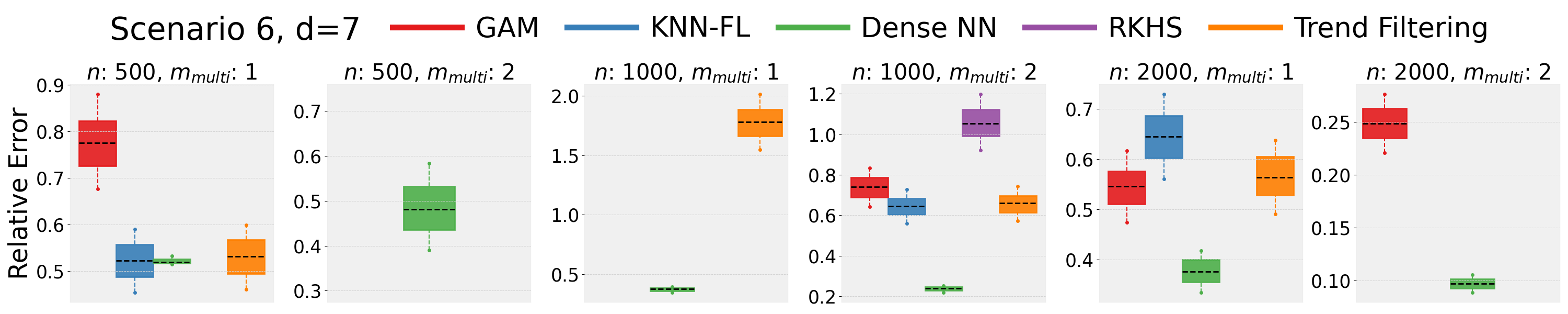}
    \end{minipage}\hspace{-4pt}

   
      \begin{minipage}[c]{0.62\textwidth}
        \includegraphics[width=\linewidth]{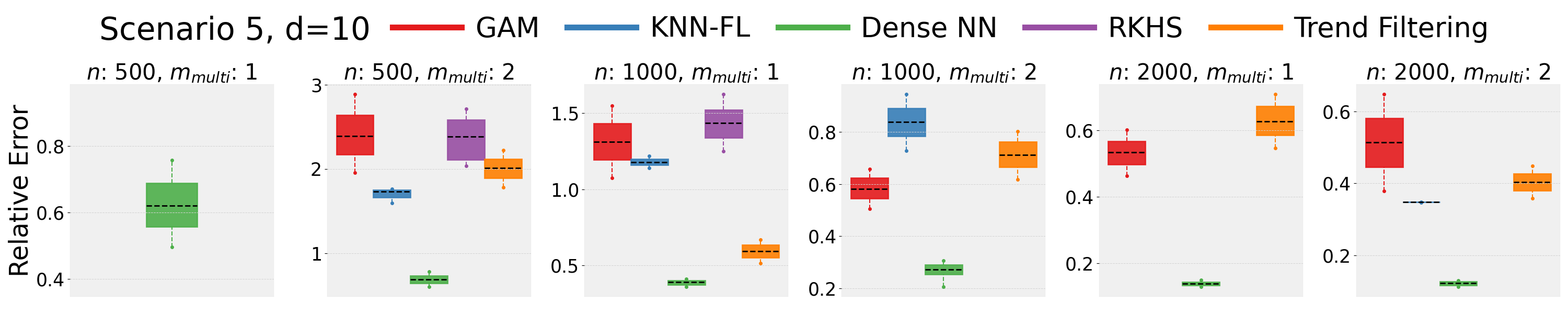}
    \end{minipage}\hspace{-4pt}
    \begin{minipage}[c]{0.62\textwidth}
        \includegraphics[width=\linewidth]{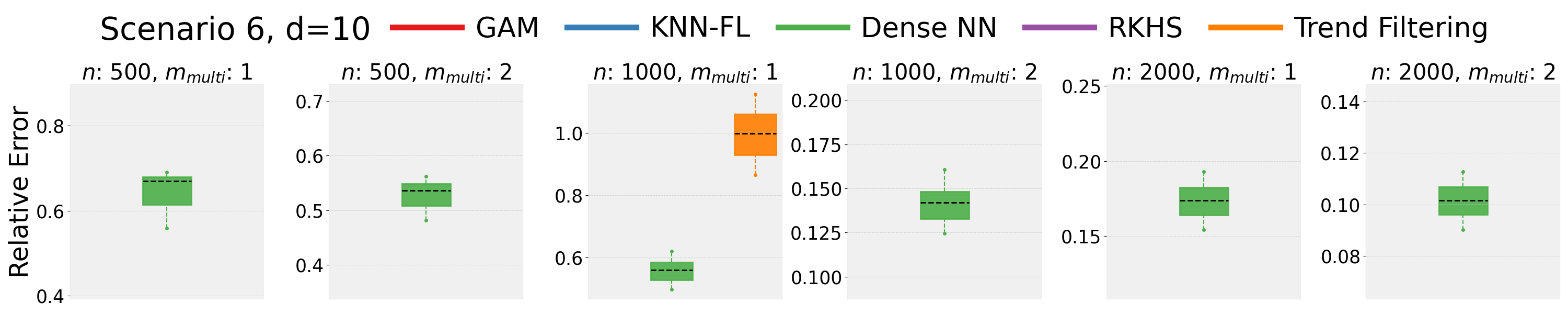}
    \end{minipage}\hspace{-4pt}

    \caption{Comparison of methods for \textbf{Scenario 5} and \textbf{Scenario 6} with $d = 7$ (first two rows), and $d = 10$ (last two rows), by box-plot. Each $(n, m_{mult})$ setting presents the errors of the competitors within 5 times the smallest error.}
    \label{fig_box_5}
\end{figure}
\vspace{-4pt}

 Notably, the Dense NN estimator defined in (\ref{eqn_estimator})  consistently outperforms all other methods. 
In each scenario, as spatial noise, measurement error, and mixture probability vary, Dense NN is the only method whose relative error consistently decreases with larger sample sizes. This aligns with our theoretical findings, where the Dense NN estimator attains fast convergence rates for hierarchical composition functions under broad structural assumptions.

\subsection{Real data application}

We consider the EPA Regional dataset, which consists of daily ozone measurements collected across various monitoring stations in different regions of the United States (\url{https://www.epa.gov/enviro/data-downloads}). The dataset includes measurements of ozone concentration and associated variables such as Air Quality Index (AQI), wind speed, temperature, Latitude, and  Longitude. The specific variables in the dataset are: State Code, County Code, Site Number, Latitude, Longitude, Date (Year, Month, Day), Ozone (in parts per million), AQI, Wind (in miles per hour), and Temperature (in Fahrenheit). Data were collected daily over multiple years.

For our analysis, we focus on predicting daily ozone levels based on these geographic and meteorological variables.  The data consist of  monitoring stations across different states in 10  EPA regions,  ensuring a representative variety of locations. The observations cover an entire year of daily measurements. The regions considered include latitude and longitude ranges of $63^\circ N - 70^\circ N$ and $148^\circ W - 153^\circ W$, respectively, including several representative sites across the continental United States. 

To implement our analysis, we have applied our proposed Dense NN method alongside other methods such as GAM, KNN-FL, Trend Filtering, and RKHS. For each model, we used daily data from the year 2023. A total of 260 days were considered for each region, with an average of 40 sites or stations per day, resulting in 12,240 measurements. Categorical variables were encoded using one-hot encoding, and numeric variables were scaled to the range $[0,1]$. We split the data into training and test sets using a random $75\%/25\%$ split  for each EPA region. All models were trained on the training data, and 5-fold cross-validation was used to select tuning parameters.  Prediction performance was evaluated on the test set for each method. We assessed the accuracy of the ozone predictions based on relative error defined as: 
$\text{Relative Error} = \|\hat f - \tilde{y} \|_{nm}^2/\|\tilde{y}\|_{nm}^2$, where $\tilde{y}$ is the test set, and with an abuse of notation $\hat f$ is the vectorized version of the estimator at the locations corresponding to the test data.

Table \ref{tab_realdata_results} shows that the Dense NN   achieved the lowest relative error among all methods. Figure \ref{OzoneRegions} provides visualizations of the selected regions analyzed in this study. March 3rd was chosen as a representative day from the year-long dataset to streamline the analysis and enhance clarity. The figure's results further corroborate those presented in Table~\ref{tab_realdata_results}. Our proposed method, Dense NN, produces predictions that are closest to the true ozone levels, as clearly demonstrated in the figures.

\vspace{-12pt}
\begin{figure}[H]
    \captionsetup[subfigure]{aboveskip=-1pt,belowskip=-1pt,font=footnotesize}
    \centering

    \begin{minipage}[c]{0.3\textwidth}
        \includegraphics[width=\linewidth, height=0.6\linewidth]{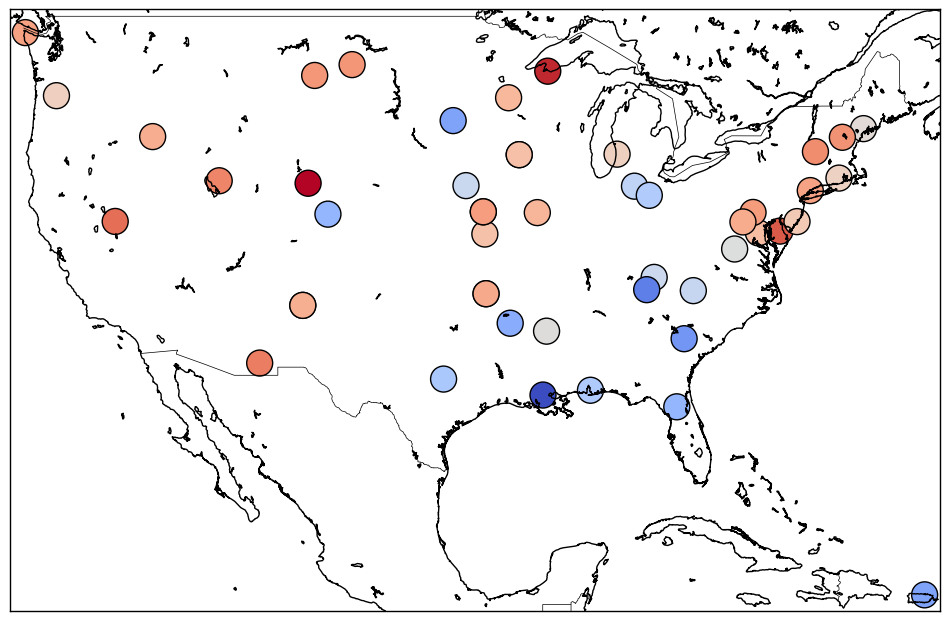}
        \caption*{(a) Ozone, $y$ }
    \end{minipage}\hspace{-4pt}
    \begin{minipage}[c]{0.3\textwidth}
        \includegraphics[width=\linewidth, height=0.6\linewidth]{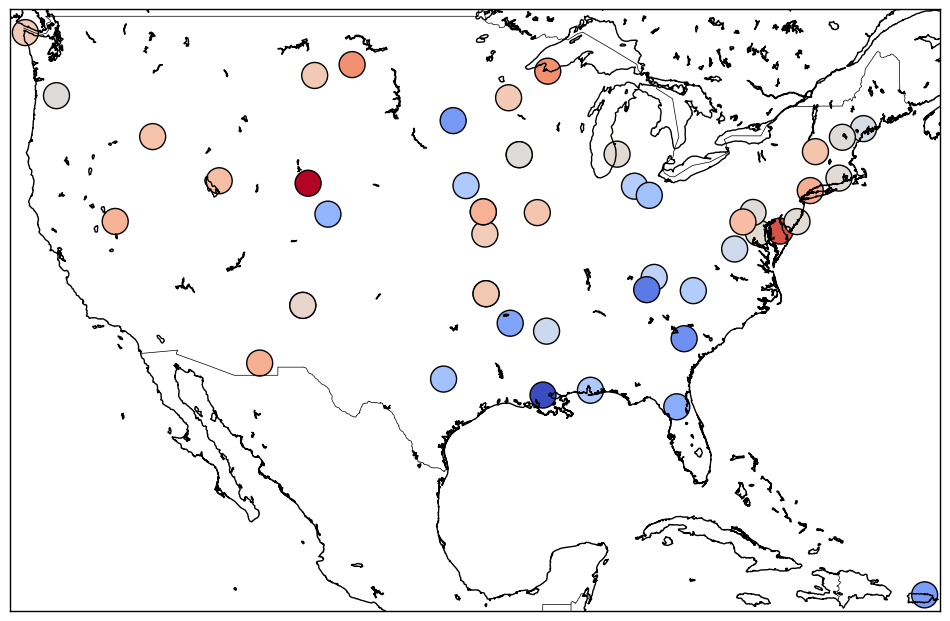}
        \caption*{(b) Dense NN, $\hat y$}
    \end{minipage}\hspace{-4pt}
    \begin{minipage}[c]{0.3\textwidth}
        \includegraphics[width=\linewidth, height=0.6\linewidth]{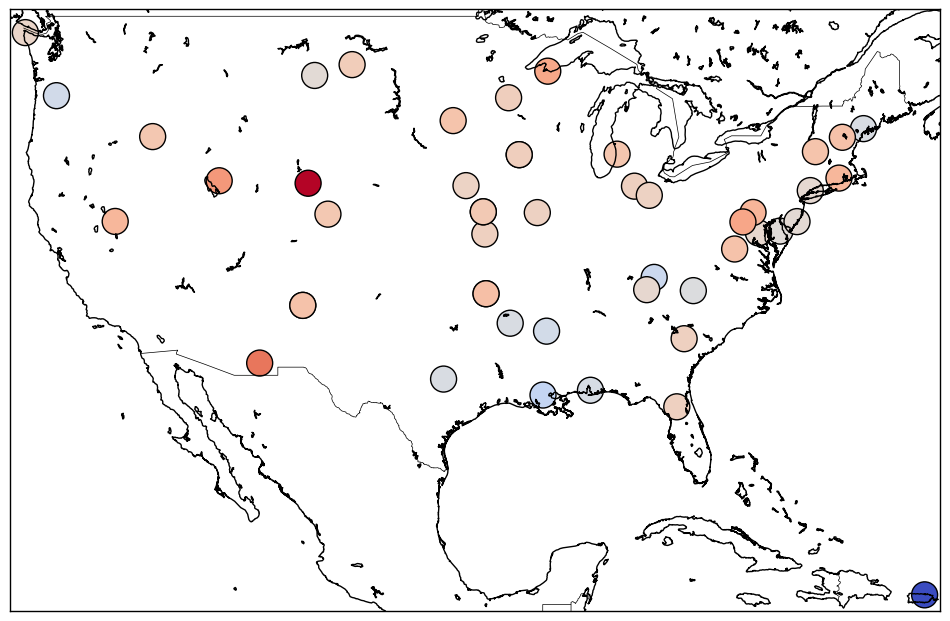}
        \caption*{(c) GAM, $\hat y$}
    \end{minipage}\hspace{-4pt}
    
    \begin{minipage}[c]{0.3\textwidth}
        \includegraphics[width=\linewidth, height=0.6\linewidth]{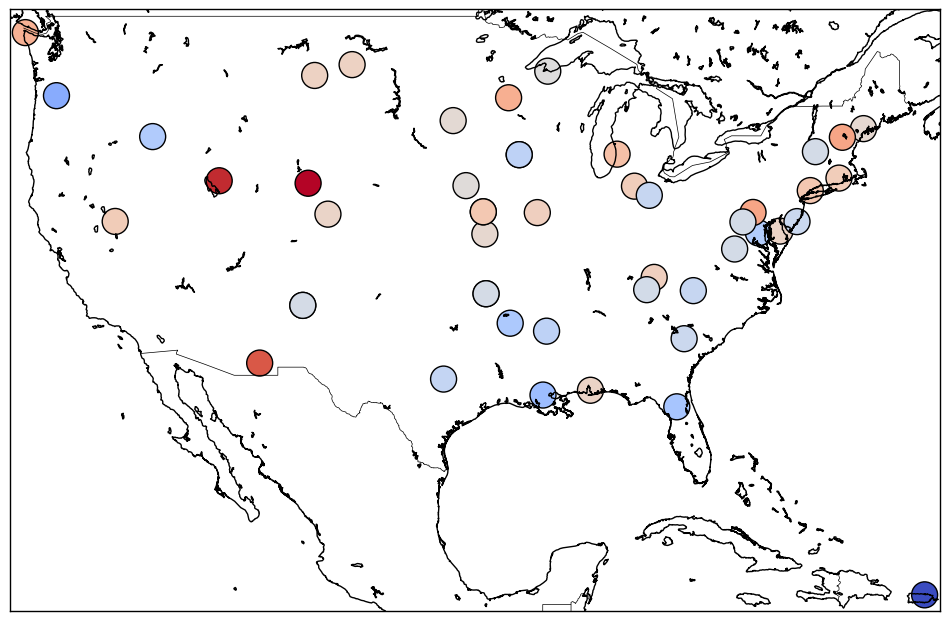}
        \caption*{(d) KNN-FL, $\hat y$}
    \end{minipage}\hspace{-4pt}
    \begin{minipage}[c]{0.3\textwidth}
        \includegraphics[width=\linewidth, height=0.6\linewidth]{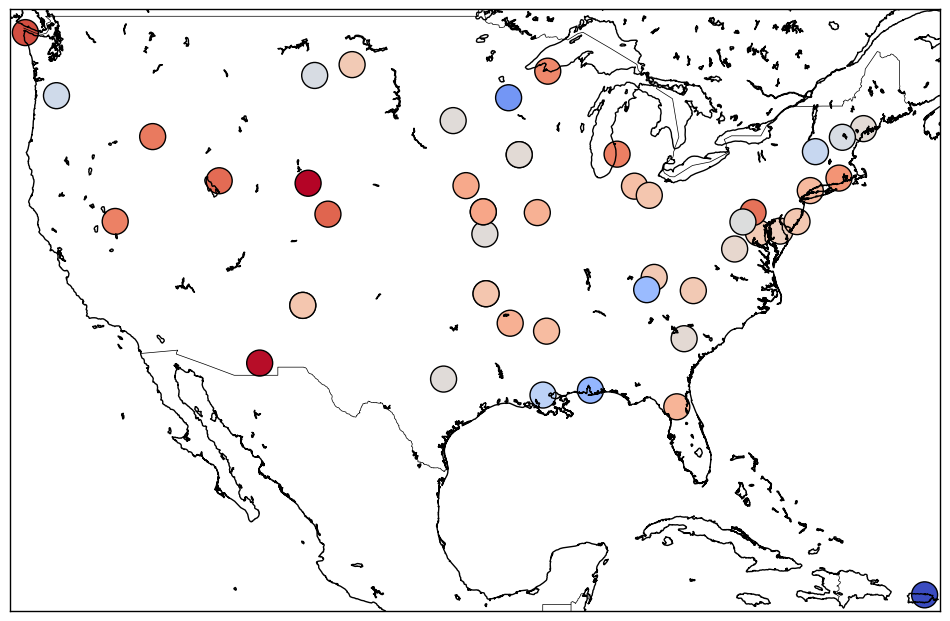}
        \caption*{(e) RKHS, $\hat y$}
    \end{minipage}\hspace{-4pt}
    \begin{minipage}[c]{0.3\textwidth}
        \includegraphics[width=\linewidth, height=0.6\linewidth]{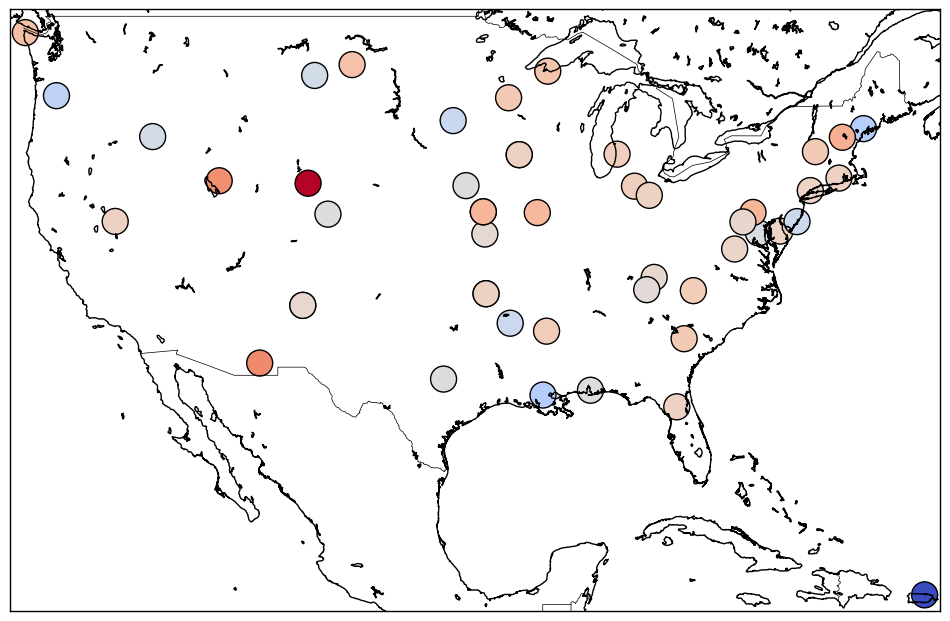}
        \caption*{(f) Trend Filtering, $\hat y$}
    \end{minipage}\hspace{-4pt}

    \vspace{0.5em}  
    \includegraphics[width=0.5\textwidth, height=0.12\linewidth]{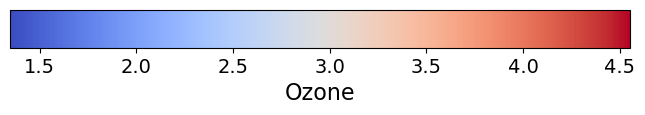}  
    \caption{Visualization of predicted ozone levels for the 45 monitoring locations  on March 3rd. Subplots (a) show the $y$, the observed ozone levels, while subplots (b) to (f) represent predictions from Dense NN, GAM, KNN-FL, RKHS, and Trend Filtering, respectively.}
    \label{OzoneRegions}
    \vspace{-10pt}
\end{figure}
\vspace{-8pt}
\begin{table}[h]
\centering
    \caption{Prediction performance (Relative Error) of different methods for ozone level prediction. The values in parentheses represent standard deviations.}
    \begin{small}
\begin{tabular}{|c|c|c|c|c|}
\hline
 \textbf{Dense NN} & \textbf{GAM}   & \textbf{KNN-FL} & \textbf{RKHS}  & \textbf{Trend Filtering} \\ \hline
 0.0086 (0.009)    & 0.0574 (0.015) & 0.0255 (0.010)  & 0.1374 (0.022) & 0.0518 (0.020)          \\ \hline
\end{tabular}
\label{tab_realdata_results}
\end{small}
\end{table}
\vspace{-8pt}
\section{Conclusion}\label{sec_conclusion}

This paper examines fully connected deep neural networks (Dense NN) with ReLU activation for nonparametric regression under temporal and spatial dependencies, demonstrating robust non-asymptotic convergence rates despite such complexities.

Future research could explore relaxing the assumption of exponentially decaying mixing coefficients to polynomial decay or weaker dependencies, broadening the applicability of the results. Additionally, investigating Dense NN with alternative activation functions could enhance versatility, and extending the theoretical framework to accommodate these functions would be a valuable contribution.

\newpage
\bibliographystyle{plainnat}
\bibliography{citations}

\begin{thebibliography}{49}
\providecommand{\natexlab}[1]{#1}
\providecommand{\url}[1]{\texttt{#1}}
\expandafter\ifx\csname urlstyle\endcsname\relax
  \providecommand{\doi}[1]{doi: #1}\else
  \providecommand{\doi}{doi: \begingroup \urlstyle{rm}\Url}\fi

\bibitem[Bartlett et~al.(2019)Bartlett, Harvey, Liaw, and
  Mehrabian]{bartlett2019nearly}
Peter~L Bartlett, Nick Harvey, Christopher Liaw, and Abbas Mehrabian.
\newblock Nearly-tight vc-dimension and pseudodimension bounds for piecewise
  linear neural networks.
\newblock \emph{The Journal of Machine Learning Research}, 20\penalty0
  (1):\penalty0 2285--2301, 2019.

\bibitem[Bauer and Kohler(2019)]{bauer2019deep}
Benedikt Bauer and Michael Kohler.
\newblock {On deep learning as a remedy for the curse of dimensionality in
  nonparametric regression}.
\newblock \emph{The Annals of Statistics}, 47\penalty0 (4):\penalty0 2261 --
  2285, 2019.
\newblock \doi{10.1214/18-AOS1747}.
\newblock URL \url{https://doi.org/10.1214/18-AOS1747}.

\bibitem[Boucheron et~al.(2013)Boucheron, Lugosi, and
  Massart]{boucheron2012concentration}
Stéphane Boucheron, Gábor Lugosi, and Pascal Massart.
\newblock \emph{{Concentration Inequalities: A Nonasymptotic Theory of
  Independence}}.
\newblock Oxford University Press, 02 2013.
\newblock ISBN 9780199535255.
\newblock \doi{10.1093/acprof:oso/9780199535255.001.0001}.
\newblock URL \url{https://doi.org/10.1093/acprof:oso/9780199535255.001.0001}.

\bibitem[Cai and Hall(2006)]{cai2006prediction}
T~Tony Cai and Peter Hall.
\newblock Prediction in functional linear regression.
\newblock 2006.

\bibitem[Cai and Yuan(2012)]{cai2012minimax}
T~Tony Cai and Ming Yuan.
\newblock Minimax and adaptive prediction for functional linear regression.
\newblock \emph{Journal of the American Statistical Association}, 107\penalty0
  (499):\penalty0 1201--1216, 2012.

\bibitem[Cardot et~al.(1999)Cardot, Ferraty, and Sarda]{cardot1999functional}
Herv{\'e} Cardot, Fr{\'e}d{\'e}ric Ferraty, and Pascal Sarda.
\newblock Functional linear model.
\newblock \emph{Statistics \& Probability Letters}, 45\penalty0 (1):\penalty0
  11--22, 1999.

\bibitem[Cardot et~al.(2003)Cardot, Ferraty, Mas, and Sarda]{cardot2003testing}
Herv{\'e} Cardot, Fr{\'e}d{\'e}ric Ferraty, Andr{\'e} Mas, and Pascal Sarda.
\newblock Testing hypotheses in the functional linear model.
\newblock \emph{Scandinavian Journal of Statistics}, 30\penalty0 (1):\penalty0
  241--255, 2003.

\bibitem[Chambolle and Darbon(2009)]{chambolle2009total}
Antonin Chambolle and J{\'e}r{\^o}me Darbon.
\newblock On total variation minimization and surface evolution using
  parametric maximum flows.
\newblock \emph{International journal of computer vision}, 84\penalty0
  (3):\penalty0 288--307, 2009.

\bibitem[Chen et~al.(2022)Chen, Jiang, Liao, and Zhao]{chen2022nonparametric}
Minshuo Chen, Haoming Jiang, Wenjing Liao, and Tuo Zhao.
\newblock Nonparametric regression on low-dimensional manifolds using deep relu
  networks: Function approximation and statistical recovery.
\newblock \emph{Information and Inference: A Journal of the IMA}, 11\penalty0
  (4):\penalty0 1203--1253, 2022.

\bibitem[Cloninger and Klock(2021)]{cloninger2021deep}
Alexander Cloninger and Timo Klock.
\newblock A deep network construction that adapts to intrinsic dimensionality
  beyond the domain.
\newblock \emph{Neural Networks}, 141:\penalty0 404--419, 2021.

\bibitem[Cover(1968)]{cover1968rates}
Thomas~M Cover.
\newblock Rates of convergence for nearest neighbor procedures.
\newblock In \emph{Proceedings of the Hawaii International Conference on
  Systems Sciences}, volume 415, 1968.

\bibitem[Donoho(2000)]{donoh2000high}
David~L Donoho.
\newblock High-dimensional data analysis: The curses and blessings of
  dimensionality.
\newblock \emph{AMS Math Challenges Lecture}, 2000.

\bibitem[Doukhan(2012)]{doukhan2012mixing}
Paul Doukhan.
\newblock \emph{Mixing: properties and examples}, volume~85.
\newblock Springer Science \& Business Media, 2012.

\bibitem[Glorot and Bengio(2010)]{glorot2010understanding}
Xavier Glorot and Yoshua Bengio.
\newblock Understanding the difficulty of training deep feedforward neural
  networks.
\newblock In \emph{Proceedings of the thirteenth international conference on
  artificial intelligence and statistics}, pages 249--256. JMLR Workshop and
  Conference Proceedings, 2010.

\bibitem[Goodfellow et~al.(2014)Goodfellow, Pouget-Abadie, Mirza, Xu,
  Warde-Farley, Ozair, Courville, and Bengio]{goodfellow2014generative}
Ian Goodfellow, Jean Pouget-Abadie, Mehdi Mirza, Bing Xu, David Warde-Farley,
  Sherjil Ozair, Aaron Courville, and Yoshua Bengio.
\newblock Generative adversarial nets.
\newblock \emph{Advances in neural information processing systems}, 27, 2014.

\bibitem[Graves and Graves(2012)]{graves2012long}
Alex Graves and Alex Graves.
\newblock Long short-term memory.
\newblock \emph{Supervised sequence labelling with recurrent neural networks},
  pages 37--45, 2012.

\bibitem[Gy{\"o}rfi et~al.(2002)Gy{\"o}rfi, K{\"o}hler, Krzy{\.z}ak, and
  Walk]{gyorfi2002distribution}
L{\'a}szl{\'o} Gy{\"o}rfi, Michael K{\"o}hler, Adam Krzy{\.z}ak, and Harro
  Walk.
\newblock \emph{A distribution-free theory of nonparametric regression},
  volume~1.
\newblock Springer, 2002.

\bibitem[Hall and Horowitz(2007)]{hall2007methodology}
Peter Hall and Joel~L Horowitz.
\newblock Methodology and convergence rates for functional linear regression.
\newblock 2007.

\bibitem[Hamers and Kohler(2006)]{hamers2006nonasymptotic}
Michael Hamers and Michael Kohler.
\newblock Nonasymptotic bounds on the l 2 error of neural network regression
  estimates.
\newblock \emph{Annals of the Institute of Statistical Mathematics},
  58:\penalty0 131--151, 2006.

\bibitem[Hofmann et~al.(2008)Hofmann, Sch{\"o}lkopf, and
  Smola]{hofmann2008kernel}
Thomas Hofmann, Bernhard Sch{\"o}lkopf, and Alexander~J Smola.
\newblock Kernel methods in machine learning.
\newblock 2008.

\bibitem[Jiao et~al.(2023)Jiao, Shen, Lin, and Huang]{jiao2023deep}
Yuling Jiao, Guohao Shen, Yuanyuan Lin, and Jian Huang.
\newblock Deep nonparametric regression on approximate manifolds: Nonasymptotic
  error bounds with polynomial prefactors.
\newblock \emph{The Annals of Statistics}, 51\penalty0 (2):\penalty0 691--716,
  2023.

\bibitem[Kohler and Krzy{\.z}ak(2005)]{kohler2005adaptive}
Michael Kohler and Adam Krzy{\.z}ak.
\newblock Adaptive regression estimation with multilayer feedforward neural
  networks.
\newblock \emph{Nonparametric Statistics}, 17\penalty0 (8):\penalty0 891--913,
  2005.

\bibitem[Kohler and Langer(2019)]{kohler2019rate}
Michael Kohler and Sophie Langer.
\newblock On the rate of convergence of fully connected very deep neural
  network regression estimates.
\newblock \emph{arXiv preprint arXiv:1908.11133}, 2019.

\bibitem[Kohler and Langer(2021)]{kohler2021rate}
Michael Kohler and Sophie Langer.
\newblock On the rate of convergence of fully connected deep neural network
  regression estimates.
\newblock \emph{The Annals of Statistics}, 49\penalty0 (4):\penalty0
  2231--2249, 2021.

\bibitem[Kohler et~al.(2023)Kohler, Langer, and Reif]{kohler2023estimation}
Michael Kohler, Sophie Langer, and Ulrich Reif.
\newblock Estimation of a regression function on a manifold by fully connected
  deep neural networks.
\newblock \emph{Journal of Statistical Planning and Inference}, 222:\penalty0
  160--181, 2023.

\bibitem[Krizhevsky et~al.(2012)Krizhevsky, Sutskever, and
  Hinton]{krizhevsky2012imagenet}
Alex Krizhevsky, Ilya Sutskever, and Geoffrey~E Hinton.
\newblock Imagenet classification with deep convolutional neural networks.
\newblock \emph{Advances in neural information processing systems}, 25, 2012.

\bibitem[Krizhevsky et~al.(2017)Krizhevsky, Sutskever, and
  Hinton]{krizhevsky2017imagenet}
Alex Krizhevsky, Ilya Sutskever, and Geoffrey~E Hinton.
\newblock Imagenet classification with deep convolutional neural networks.
\newblock \emph{Communications of the ACM}, 60\penalty0 (6):\penalty0 84--90,
  2017.

\bibitem[K{\"u}hn and Schilling(2023)]{kuhn2023maximal}
Franziska K{\"u}hn and Ren{\'e}~L Schilling.
\newblock Maximal inequalities and some applications.
\newblock \emph{Probability Surveys}, 20:\penalty0 382--485, 2023.

\bibitem[Lawson(2013)]{lawson2013statistical}
Andrew~B Lawson.
\newblock \emph{Statistical methods in spatial epidemiology}.
\newblock John Wiley \& Sons, 2013.

\bibitem[Ma and Safikhani(2022)]{ma2022theoretical}
Mingliang Ma and Abolfazl Safikhani.
\newblock Theoretical analysis of deep neural networks for temporally dependent
  observations.
\newblock \emph{Advances in Neural Information Processing Systems},
  35:\penalty0 37324--37334, 2022.

\bibitem[McCaffrey and Gallant(1994)]{mccaffrey1994convergence}
Daniel~F McCaffrey and A~Ronald Gallant.
\newblock Convergence rates for single hidden layer feedforward networks.
\newblock \emph{Neural Networks}, 7\penalty0 (1):\penalty0 147--158, 1994.

\bibitem[Nair and Hinton(2010)]{nair2010rectified}
Vinod Nair and Geoffrey~E Hinton.
\newblock Rectified linear units improve restricted boltzmann machines.
\newblock In \emph{Proceedings of the 27th international conference on machine
  learning (ICML-10)}, pages 807--814, 2010.

\bibitem[Padilla et~al.(2018)Padilla, Sharpnack, Chen, and
  Witten]{padilla2018adaptive}
Oscar Hernan~Madrid Padilla, James Sharpnack, Yanzhen Chen, and Daniela~M
  Witten.
\newblock Adaptive non-parametric regression with the $ k $-nn fused lasso.
\newblock \emph{arXiv preprint arXiv:1807.11641}, 2018.

\bibitem[Padilla et~al.(2022)Padilla, Tansey, and Chen]{padilla2022quantile}
Oscar Hernan~Madrid Padilla, Wesley Tansey, and Yanzhen Chen.
\newblock Quantile regression with relu networks: Estimators and minimax rates.
\newblock \emph{Journal of Machine Learning Research}, 23\penalty0
  (247):\penalty0 1--42, 2022.

\bibitem[Petersen and M{\"u}ller(2016)]{petersen2016functional}
Alexander Petersen and Hans-Georg M{\"u}ller.
\newblock Functional data analysis for density functions by transformation to a
  hilbert space.
\newblock 2016.

\bibitem[Radford(2018)]{radford2018improving}
A~Radford.
\newblock Improving language understanding by generative pre-training.
\newblock 2018.

\bibitem[Ramdas and Tibshirani(2016)]{ramdas2016fast}
Aaditya Ramdas and Ryan~J Tibshirani.
\newblock Fast and flexible admm algorithms for trend filtering.
\newblock \emph{Journal of Computational and Graphical Statistics}, 25\penalty0
  (3):\penalty0 839--858, 2016.

\bibitem[Roweis and Saul(2000)]{Roweis2000}
ST~Roweis and LK~Saul.
\newblock Nonlinear dimensionality reduction by locally linear embedding.
\newblock \emph{Science}, 290\penalty0 (5500):\penalty0 2323--2326, 2000.

\bibitem[Sadhanala and Tibshirani(2019)]{sadhanala2019additive}
Veeranjaneyulu Sadhanala and Ryan~J Tibshirani.
\newblock Additive models with trend filtering.
\newblock \emph{The Annals of Statistics}, 47\penalty0 (6):\penalty0
  3032--3068, 2019.

\bibitem[Schmidt-Hieber(2020)]{schmidt2020nonparametric}
Johannes Schmidt-Hieber.
\newblock Nonparametric regression using deep neural networks with relu
  activation function.
\newblock 2020.

\bibitem[Stone(1982)]{stone1982optimal}
Charles~J Stone.
\newblock Optimal global rates of convergence for nonparametric regression.
\newblock \emph{The annals of statistics}, pages 1040--1053, 1982.

\bibitem[Stone(1985)]{stone1985additive}
Charles~J Stone.
\newblock Additive regression and other nonparametric models.
\newblock \emph{The annals of Statistics}, 13\penalty0 (2):\penalty0 689--705,
  1985.

\bibitem[Stone(1994)]{stone1994use}
Charles~J Stone.
\newblock The use of polynomial splines and their tensor products in
  multivariate function estimation.
\newblock \emph{The annals of statistics}, 22\penalty0 (1):\penalty0 118--171,
  1994.

\bibitem[Taigman et~al.(2014)Taigman, Yang, Ranzato, and
  Wolf]{taigman2014deepface}
Yaniv Taigman, Ming Yang, Marc'Aurelio Ranzato, and Lior Wolf.
\newblock Deepface: Closing the gap to human-level performance in face
  verification.
\newblock In \emph{Proceedings of the IEEE conference on computer vision and
  pattern recognition}, pages 1701--1708, 2014.

\bibitem[Tenenbaum et~al.(2000)Tenenbaum, De~Silva, and
  Langford]{Tenenbaum2000}
JB~Tenenbaum, V~De~Silva, and JC~Langford.
\newblock A global geometric framework for nonlinear dimensionality reduction.
\newblock \emph{Science}, 290\penalty0 (5500):\penalty0 2319--2323, 2000.

\bibitem[Vaswani(2017)]{vaswani2017attention}
A~Vaswani.
\newblock Attention is all you need.
\newblock \emph{Advances in Neural Information Processing Systems}, 2017.

\bibitem[Wikle et~al.(2019)Wikle, Zammit-Mangion, and Cressie]{wikle2019spatio}
Christopher~K Wikle, Andrew Zammit-Mangion, and Noel Cressie.
\newblock \emph{Spatio-temporal statistics with R}.
\newblock CRC Press, 2019.

\bibitem[Wood(2017)]{wood2017generalized}
Simon~N Wood.
\newblock \emph{Generalized additive models: an introduction with R}.
\newblock chapman and hall/CRC, 2017.

\bibitem[Wu et~al.(2021)Wu, Li, and Wang]{wu2021varying}
Rong Wu, Zhigang Li, and Shaojian Wang.
\newblock The varying driving forces of urban land expansion in china: Insights
  from a spatial-temporal analysis.
\newblock \emph{Science of The Total Environment}, 766:\penalty0 142591, 2021.

\end{thebibliography}

\newpage
\appendix
\section*{Appendices}

\section{Outline of the proof of \Cref{thm:main thm in the temporal-spatial model}, \Cref{thm:main thm in the temporal-spatial model on manifold}}
\label{outline_proof_main}

{\bf{Theorem \ref{thm:main thm in the temporal-spatial model}}}:
The proof of Theorem \ref{thm:main thm in the temporal-spatial model} is divided into two critical steps. 
The initial step entails a decomposition of the estimation error, measured via the $\mathcal L_2$ norm. In doing this, we use two key features. A minimization attribute and the coupling characteristic between empirical and $\mathcal L_2$ norms. The subsequent step is dedicated to the derivation of deviation bounds. These bounds incorporate Rademacher complexities and pertain to measurement errors as well as spatial noise.

Both stages come with a significant challenge, the interplay between time and space, known as the temporal-spatial dependence. To provide an overview of how we tackled this throughout the proof, in the following we expose our strategy. In Appendix \ref{beta mixing proof temporal-spatial model}, a blocking method is crafted, which effectively transforms dependent $\beta$-mixing sequences into a sequence of independent blocks. 
The number of these blocks scales as $O(\log(n))$. By analyzing each block independently, we ensure that the examination of the $\beta$-mixing sequences is confined within these blocks. This approach ultimately allows us to analyze a process of length $O(\log(n))$, leveraging the independence within each block to manage the temporal-spatial dependence.
\\
\textbf{Proof Sketch:}
\begin{itemize}
  \item \textbf{Step $1$:}
	We find an approximator $  \overline f  \in \mathcal F(L, r) $ such that 
 \begin{align*}
\left\|f^* - \overline f \right\|_\infty^2 \le 2\phi_{nm},
\end{align*} where $\phi_{nm}$ is the upper bound of approximation error for class $\mathcal F(L,r)$, as shown in \eqref{approx_phi_app}. Choosing $ \mathcal{A}_{nm}=\max\{\sigma_\epsilon, \sigma_\gamma\}\cdot o \left(\log(nm)\right)$, leads to $ \left\| \overline f \right\|_\infty \le {\mathcal{A}_{nm}}$.
	
  \item \textbf{Step $2$:} Next, we decompose the $\mathcal L_2$ error of estimator as
\begin{equation}
\label{l2_decomp}
\begin{aligned}
\| \widehat f_{\mathcal{A}_{nm}} - f^*\|_\lt^2  &\le 2\| \overline  f_{\mathcal{A}_{nm}} - f^*\|_\lt ^2 + 2\| \overline  f_{\mathcal{A}_{nm}} - \widehat f_{\mathcal{A}_{nm}} \|_\lt ^2 \\ &\le  4 \phi_{nm} + 2\| \overline  f_{\mathcal{A}_{nm}} - \widehat f_{\mathcal{A}_{nm}} \|_\lt ^2. 
\end{aligned}
\end{equation}

  \item \textbf{Step $3$:} Our subsequent objective is to transition from the $\mathcal L_2$ norm to the empirical norm.  \Cref{lemma:rate in the temporal-spatial model} provides a coupling of empirical and $\mathcal L_2$ norms. Specificially, with high probability, 
  \begin{align} 
 \label{eq:norm comparison 1 in the temporal-spatial model outline} 
 &  \left\|  \widehat{f}_{\mathcal{A}_{nm}}  -\overline{f}_{\mathcal{A}_{nm}} \right\|_\lt ^2 
 \le  2 \left\|  \widehat f_{\mathcal{A}_{nm}}  -\overline f_{\mathcal{A}_{nm}} \right\|_{nm} ^2 + C \left\{\phi_{\frac{nm}{S}} \log^{5.3}(\frac{nm}{S})+  \mathcal A^2_{nm}\beta(S)\right\},
 \end{align} 
 where $S \asymp \log(n)$.
 \item \textbf{Step $4$:} At this point, coupled with the minimization property of $\widehat f_{\mathcal{A}_{nm}}$, it follows that


\begin{equation*}
\begin{array}{l}
\displaystyle      \frac{1}{2} \left\| \widehat f _{\mathcal{A}_{nm}}- \overline f _{\mathcal{A}_{nm}} \right\|_{ nm} ^2 \\
\displaystyle \le  \frac{1}{32}\| \widehat f_{\mathcal{A}_{nm}}  - \overline f_{\mathcal{A}_{nm}}\|_{nm} ^2 + 16\phi_{nm} 
\\
\displaystyle \,\,\,\,+  \sum_{i=1}^n \sum_{j=1}^{m_i} \frac{1}{nm_i} \left\{\widehat f _{\mathcal{A}_{nm}}(x_{ij}) - \overline f _{\mathcal{A}_{nm}}(x_{ij}) \right\} \left\{ \gamma_i (x_{ij}) + \epsilon_{ij}  \right\} .
\end{array}
\end{equation*}

\item \textbf{Step $5$:}
To analyze the second term where involving $\epsilon_{ij}$, by \Cref{lemma:threshold space entropy 2} and \Cref{coro:complexity bound 2 beta mn}, with high probability, we have 

\begin{equation*}
    \begin{array}{l}
         \sum_{i=1}^n \sum_{j=1}^{m_i} \frac{1}{nm_i} \left\{\widehat f_{\mathcal{A}_{nm}} - \overline f_{\mathcal{A}_{nm}} \right\} (x_{ij}) \epsilon_{ij} \\
\le  C\mathcal{A}_{nm} \cdot \phi_{\frac{nm}{S}} \left\{ \log^{4.3}\left(nm\right)\log(\| \widehat f_{\mathcal{A}_{nm}}-\overline f_{\mathcal{A}_{nm}}\|_{\frac{nm}{S}}^{-1} ) \right\}  +  \frac{1}{32} \left\| \widehat f_{\mathcal{A}_{nm}}-\overline f_{\mathcal{A}_{nm}}\right\|_{nm}^2.
    \end{array}
\end{equation*}

Then by \cref{lemma:bound on the expected norm of gamma}, \Cref{coro:deviation bound for gamma with xi mn}, \Cref{lemma:rate in the temporal-spatial model}, also with high probability,   we have 
\begin{equation*}
  \begin{array}{l}
     \sum_{i=1}^n \sum_{j=1}^{m_i} \frac{1}{nm_i} \left\{\widehat f_{\mathcal{A}_{nm}} - \overline f_{\mathcal{A}_{nm}} \right\} (x_{ij}) \gamma_{i}(x_{ij})\\
     \le  \frac{8\sigma_\gamma^2}{n} + \frac{3}{16} \left\|  \widehat f  _{\mathcal{A}_{nm}}-\overline f _{\mathcal{A}_{nm}} \right\|_{nm} ^2 + C\left\{\mathcal{A}^2_{nm}\phi_{\frac{nm}{S}} \log^{5.3}(nm)+  \mathcal{A}^2_{nm}\beta(S)\right\}.
  \end{array}
\end{equation*}



 Putting the previous steps together, it follows that
\begin{align}
\label{eq2279}
\left\| \widehat{f}_{\mathcal{A}_n} - \overline{f}_{\mathcal{A}_{nm}} \right\|_{nm}^2  
\leq C_4 \left\{ \mathcal{A}_{nm}^2 \phi_{\frac{nm}{S}} \log^{5.3}(nm) + \mathcal{A}_{nm}^2 \beta(S) + \frac{1}{n} \right\},
\end{align}
which, together with \eqref{l2_decomp} and \eqref{eq:norm comparison 1 in the temporal-spatial model outline}, holds with high probability.

Then,  assuming \Cref{assumption: assumption in the temporal-spatial model} \textbf{a}, choosing $\mathcal{A}_{nm} = \max\{\sigma_\epsilon, \sigma_\gamma\} \cdot o\left(\log(nm)\right)$, and recalling
$
\phi_{\frac{nm}{S}} = \max_{(p, K) \in \mathcal{P}} \left(\frac{nm}{S}\right)^{-\frac{2p}{2p + K}},
$
we obtain the final bound in \eqref{eq:functional rate wanted in the temporal-spatial model}.
\end{itemize}

{\bf{Theorem \ref{thm:main thm in the temporal-spatial model on manifold}}}:
For inputs on a low-dimensional manifold, the approximation theory leads to a different result. 



When the input variable $x_{ij}$ is supported on some $d^*$-dimensional Lipschitz-manifold $\mathcal{M} \subseteq[0, 1]^d$, and $d^* \in\{1, \ldots, d\}$, the approximation error is $\sqrt{\phi_{nm, \mathcal{M}}}$, presented in \eqref{approximation_error_manifold}. 



 We use $\phi_{nm, \mathcal{M}}$ throughout our proof to derive new deviation bounds.

 The proof steps for the manifold case follow a similar logical structure to those in the non-manifold case, thus we omit them.

\section{Proof of Theorem \ref{thm:main thm in the temporal-spatial model}}
\label{proof of Theorem thm:main thm in the temporal-spatial model}
\subsection{Preliminary Result for Proof of Theorem \ref{thm:main thm in the temporal-spatial model}}



\begin{lemma}[Neural Networks Approximation Result.]
\label{lemma_approximation}
   Suppose that $f^* \in \mathcal{H}(l,\mathcal{P})$, for some $l \in \mathbf{Z}^{+}$ and $\mathcal{P} \subseteq[1, \infty) \times \mathbf{Z}^{+}$, where $\mathcal{H}(l,\mathcal{P})$ is defined in \Cref{def:Generalized Hierarchical Interaction Model}. In addition, assume that each function $g$ in the definition of $f^*$ can have different smoothness $p_g =  q_g +s_g$, for $q_g \in \mathbf{N}$, $s_g \in (0,1]$,  and of potentially different input dimension $K_g $, so that $(p_g, K_g ) \in \mathcal{P}$. Let $K_{\max}$ be the largest input dimension and $p_{\max}$ the largest smoothness of any of the functions $g$. Suppose that for each $g$ all the partial derivatives of order less than or equal to $q_g$ are uniformly bounded by constant $C_{\mathrm{Smooth} }$, and each function $g$ is Lipschitz continuous
with Lipschitz constant $C_{\mathrm{Lip} } \geq1 $. Also, assume that $\max\{p_{\max},K_{\max} \} =O(1)$.
Let 
$$\phi_{nm} =  \max_{(p, K) \in \mathcal P } \left(nm\right)^{-\frac{2p}{2p +K } },$$ 
 where $(p,K) \in \mathcal P$, with smoothness constraint $p$ and order constraint $K$.
   
   Let $L_{nm}, r_{nm} \in \mathbf{Z}^+$,  then there exists a neural network $f \in \mathcal{F}(L, r)$ such that it can either take the form as
\begin{equation}
\begin{aligned}
\label{network_arch_approx}
    \textbf{Case 1.} \quad & \text{a wide network with} \quad
    L = L_{nm} \asymp \log(nm), \quad \\ 
    & \text{and} \quad r = r_{nm} \asymp \max_{(p, K) \in \mathcal{P}} (nm)^{\frac{K}{2(2p+K)}}, \\[1em]
    \textbf{Case 2.} \quad & \text{a deep network with} \quad
    L = L_{nm} \asymp \log(nm) \cdot \max_{(p, K) \in \mathcal{P}} (nm)^{\frac{K}{2(2p+K)}}, \quad \\ 
    & \text{and} \quad r = O(1).
\end{aligned}
\end{equation}

In either case, it holds that
\begin{align} 
\label{approx_phi_app}
\inf_{f \in \mathcal{F}(L , r )} \|f - f^*\|_\infty \lesssim \sqrt{\phi_{nm}}. 
\end{align}


Thus, by the choice of $L$ and $r$ in \eqref{network_arch_approx}, we have
\begin{align}
    Lr 
    &\asymp \log\left(nm\right)\max_{(p, K) \in \mathcal{P}} \left(nm\right)^{\frac{K}{2(2p+K)}} , \nonumber \\
    Lr^2 
    &\lesssim \log\left(nm\right)\left[\max_{(p, K) \in \mathcal{P}} \left(nm\right)^{\frac{K}{2(2p+K)}}\right]^2 \lesssim nm. \label{L_r_size}
\end{align}

\end{lemma}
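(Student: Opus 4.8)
The plan is to reduce the statement to a single-function approximation bound for $(p,K)$-smooth maps and then to track the error and the architecture produced by composing sub-networks along the hierarchy defining $f^*$. Following the approximation theory for fully connected ReLU networks in \cite{kohler2021rate} (see also \cite{schmidt2020nonparametric}), any $(p,C)$-smooth function $u:[0,1]^K\to\mathbb R$ whose partial derivatives of order $\le q$ are uniformly bounded can be approximated in sup-norm to accuracy $\delta>0$ either by a wide network with $\asymp\log(1/\delta)$ hidden layers and $\asymp\delta^{-K/(2p)}$ neurons per layer, or by a deep network with $O(1)$ neurons per layer and $\asymp\delta^{-K/(2p)}\log(1/\delta)$ hidden layers. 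I would apply this to each constituent function $g$ in the definition of $f^*$ with its own accuracy $\delta_g:=(nm)^{-p_g/(2p_g+K_g)}$; since $(p_g,K_g)\in\mathcal P$ we have $\delta_g=\sqrt{(nm)^{-2p_g/(2p_g+K_g)}}\le\sqrt{\phi_{nm}}$, and the required per-layer width (resp.\ number of layers) is $\asymp(nm)^{K_g/(2(2p_g+K_g))}\le\max_{(p,K)\in\mathcal P}(nm)^{K/(2(2p+K))}$, which fixes the common width $r$ (resp.\ the depth $L$) at the orders claimed.

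Second, I would assemble these sub-networks according to the recursive definition of $\mathcal H(l,\mathcal P)$ in \Cref{def:Generalized Hierarchical Interaction Model}. A node $u(f_1,\dots,f_K)$ is realized by feeding the outputs of the sub-networks for $f_1,\dots,f_K$ into the sub-network for $u$; depths add, and when sub-networks run in parallel their widths add. The original input $x$ is carried forward through all layers via the identity channel $x=\rho(x)-\rho(-x)$, at a cost of only $O(d)=O(1)$ extra neurons per layer, and every hidden layer is padded with inactive neurons so that all widths equal a common value $r$. Because the depth $l$ is fixed and the total number of constituent functions is $O(K_{\max}^{l})=O(1)$ (using $K_{\max},l=O(1)$), iterating this over the $l$ levels yields, in Case~1, total depth $\asymp l\log(nm)\asymp\log(nm)$ and width $\asymp\max_{(p,K)\in\mathcal P}(nm)^{K/(2(2p+K))}$; and in Case~2, by evaluating the $O(1)$ constituent functions sequentially to keep the width $O(1)$, total depth $\asymp\log(nm)\cdot\max_{(p,K)\in\mathcal P}(nm)^{K/(2(2p+K))}$. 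Either way the constructed network lies in $\mathcal F(L,r)$ with $L,r$ of the prescribed orders.

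Third, for the error bound I would telescope over the levels of the hierarchy: comparing the constructed network $\tilde f$ with $f^*$ one level at a time and using the Lipschitz continuity of each $g$ (constant $C_{\mathrm{Lip}}\ge1$), the per-level errors propagate multiplicatively, giving $\|f^*-\tilde f\|_\infty\lesssim\sqrt{\phi_{nm}}\,\sum_{k=0}^{l}C_{\mathrm{Lip}}^{k}\lesssim\sqrt{\phi_{nm}}$ since $l,C_{\mathrm{Lip}}=O(1)$. A minor technical point to settle here is that the argument fed into each $u$-sub-network must remain in a bounded region where the approximation is valid; this is handled using the uniform boundedness of the constituent functions together with clipping of the intermediate outputs, which affects only constants. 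The two displayed size estimates then follow by substitution: in both cases $Lr\asymp\log(nm)\max_{(p,K)\in\mathcal P}(nm)^{K/(2(2p+K))}$, while $Lr^2\lesssim\log(nm)\big[\max_{(p,K)\in\mathcal P}(nm)^{K/(2(2p+K))}\big]^2\lesssim nm$, the last step because $p\ge1$ forces $(nm)^{2p/(2p+K)}$ to dominate $\log(nm)$.

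Finally, on difficulty: the conceptual content is entirely in the single-function step, which I would import from \cite{kohler2021rate} rather than reprove, so the main obstacle is organizational — ensuring the composition bookkeeping (padding, identity channels, parallel-versus-sequential evaluation) is arranged so that every hidden layer has width \emph{exactly} $r$ while the depth and width stay within the stated orders in each of the two regimes, and handling the bounded-range issue for the inner arguments cleanly so that the Lipschitz error-propagation argument applies.
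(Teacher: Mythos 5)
Your proposal is correct, and at the top level it relies on the same source as the paper, but it takes a more granular route. The paper's own proof is essentially a one-line citation: Theorem 3 of \cite{kohler2021rate} is already stated for the hierarchical composition class $\mathcal{H}(l,\mathcal{P})$ with the two depth/width regimes, so the paper imports the composed approximation bound wholesale and only verifies the $Lr$ and $Lr^2$ arithmetic. You instead import only the single-function result for $(p,C)$-smooth maps and rebuild the hierarchical assembly yourself: per-node accuracies $\delta_g=(nm)^{-p_g/(2p_g+K_g)}\le\sqrt{\phi_{nm}}$, parallel/sequential wiring with identity channels and width padding, and Lipschitz error propagation with $l,C_{\mathrm{Lip}},K_{\max}=O(1)$. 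This is in effect a re-derivation of the internal proof of the cited theorem; it buys self-containedness (modulo the base case) at the cost of the bookkeeping you flag (depth synchronization across parallel sub-networks, exact common width $r$, and keeping intermediate arguments in the bounded region where the outer approximation is valid, which your clipping/boundedness remark handles since order-zero derivatives of each $g$ are assumed uniformly bounded). Your scalings check out: the required width $\delta_g^{-K_g/(2p_g)}=(nm)^{K_g/(2(2p_g+K_g))}$ is dominated by $\max_{(p,K)\in\mathcal{P}}(nm)^{K/(2(2p+K))}$, and the final step $Lr^2\lesssim nm$ follows exactly as you say because $2p/(2p+K)$ is bounded away from zero, so $\log(nm)\lesssim(nm)^{2p/(2p+K)}$. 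No gap; the only practical difference is that the paper delegates the composition step to the cited theorem rather than reconstructing it.
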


\begin{proof}
The result follows from Theorem 3 of \cite{kohler2021rate}. 

Under the conditions in \eqref{network_arch_approx}, 
then there exists a neural network $\hat{f}_{\text {net}} \in \mathcal{F}(L, r)$ such that
$$
\left\|f^*-\hat{f}_{\text{net}}\right\|_{\infty} \lesssim   \sqrt{\phi_{nm}}.
$$

Overall, we have 
\begin{align*}
	& Lr 
 \asymp \log \left(nm\right)\max_{(p, K) \in \mathcal P  } \left(nm\right)^{\frac{K}{2(2p+K)}}, \\
	& Lr^2 \lesssim \log\left(nm\right)\left[\max_{(p, K) \in \mathcal P  }\left(nm\right)^{\frac{K}{2(2p+K)}}\right]^2 \lesssim nm.	
\end{align*}
\end{proof}

\begin{remark}
The rate above indicates that the networks of the class $\mathcal{F}(L, r)$, which is the so-called fully connected feedfoward neural networks class, can achieve the $\ell_\infty$-error bound that does not depend on the dimension of predictors $d$ and therefore circumvents the curse of dimensionality.

\end{remark}

\begin{lemma}
\label{lemma_epsilon_gamma_infinity_bound}
Suppose \cref{assumption: assumption in the temporal-spatial model} holds, there exists a parameter $\mathcal{B}_{nm}$ depending only on  $n$, $m$ and a parameter $\mathcal{C}_{n}$ depending only on $n$, such that 
\begin{align}\label{eq:tail-bound event}
& \p \left(|\epsilon_{ij}| \le \sigma_\epsilon \cdot \frac{\mathcal{B}_{nm}}{4}    \text{ for all } 1\le i\le n, 1\le j \le m_i\right)=1-o(1),\\
& \p \left(\|\gamma_i\|_\infty \le \sigma_\gamma \cdot \frac{\mathcal{C}_{n}}{4}   \text{ for all } 1\le i\le n \right) =1-o(1), 
\end{align}
where $\E(\epsilon^2) \leq \sigma_\epsilon^2 <\infty, \sigma_\gamma^2=\sup_{x \in \left[0,1\right]^d} \E \gamma_i(x)^2<\infty$.  
\end{lemma}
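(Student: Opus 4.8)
\textbf{Proof plan for Lemma \ref{lemma_epsilon_gamma_infinity_bound}.}
The plan is to control the two maxima --- $\max_{i,j}|\epsilon_{ij}|$ over the at most $n\cdot\max_i m_i \lesssim nm$ measurement errors, and $\max_i\|\gamma_i\|_\infty$ over the $n$ spatial noise fields --- by a union bound combined with the exponential-type tail bounds guaranteed by Assumption \ref{assumption: assumption in the temporal-spatial model}. For the errors, Assumption \ref{assumption: assumption in the temporal-spatial model}(\textbf{d}) gives that, conditionally on $x_{ij}$, each $\epsilon_{ij}$ is sub-Gaussian with parameter $\sigma_\epsilon$, hence $\prob{|\epsilon_{ij}| > t} \le 2\exp(-t^2/(2\sigma_\epsilon^2))$ for every $t>0$. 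Taking $t = \sigma_\epsilon \mathcal{B}_{nm}/4$ with, say, $\mathcal{B}_{nm} \asymp \sqrt{\log(nm)}$ chosen so that $\mathcal{B}_{nm}^2/32 \ge 2\log(nm) + \omega(1)$, a union bound over the (at most $Cnm$, using Assumption \ref{assumption: assumption in the temporal-spatial model}(\textbf{g})) pairs $(i,j)$ gives $\prob{\max_{i,j}|\epsilon_{ij}| > \sigma_\epsilon\mathcal{B}_{nm}/4} \le 2Cnm\exp(-\mathcal{B}_{nm}^2/32) = o(1)$, which is the first claim.

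For the spatial noise, I would use Assumption \ref{assumption: assumption in the temporal-spatial model}(\textbf{e}), which states $\sup_{x\in[0,1]^d}\E\exp\{c\gamma_i^2(x)\} < \infty$ for some $c>0$; however this is only a pointwise bound, whereas we need a bound on $\|\gamma_i\|_\infty = \sup_x|\gamma_i(x)|$. The clean route is to invoke the uniform moment control already available in Assumption \ref{assumption: assumption in the temporal-spatial model}(\textbf{f}) together with \Cref{lemma:bound on the expected norm of gamma}, which (as referenced there) controls $\E\|\gamma_i\|_{\mathcal{L}_2}^{2q}$ uniformly in $i$; combined with separability of the processes $\gamma_i$ (Corollary \ref{thm:main thm in the functional model} makes the separability hypothesis explicit and it is implicit throughout) and a chaining/metric-entropy argument over $[0,1]^d$, one upgrades the pointwise sub-exponential-square tail to a tail for the supremum of the form $\prob{\|\gamma_i\|_\infty > t} \le C_1\exp(-c_1 t^2/\sigma_\gamma^2)$ valid for $t$ large, for suitable constants. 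Alternatively, if the intended reading is that $\gamma_i$ is a.s. bounded with a sub-Gaussian-square supremum tail (which is consistent with the experiments, where $\gamma_i$ is a finite sum of bounded basis functions), then the tail bound for $\|\gamma_i\|_\infty$ is immediate. In either case, setting $t = \sigma_\gamma\mathcal{C}_n/4$ with $\mathcal{C}_n \asymp \sqrt{\log n}$ chosen so that $c_1\mathcal{C}_n^2/16 \ge \log n + \omega(1)$, a union bound over the $n$ indices $i$ yields $\prob{\max_i\|\gamma_i\|_\infty > \sigma_\gamma\mathcal{C}_n/4} \le C_1 n\exp(-c_1\mathcal{C}_n^2/16) = o(1)$, giving the second claim. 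The finiteness statements $\E(\epsilon^2)\le\sigma_\epsilon^2<\infty$ and $\sigma_\gamma^2 = \sup_x\E\gamma_i(x)^2<\infty$ follow directly from parts (\textbf{d}) and (\textbf{e}) of Assumption \ref{assumption: assumption in the temporal-spatial model} (sub-Gaussianity implies a second-moment bound; the $\exp\{c\gamma_i^2\}$ integrability implies all polynomial moments of $\gamma_i(x)$ are finite and, with part (\textbf{e})'s hypothesis that $\sigma_\gamma^2$ is a finite supremum, the claim is just a restatement).

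The one genuine obstacle is the passage from the \emph{pointwise} exponential-square integrability of $\gamma_i(x)$ in Assumption \ref{assumption: assumption in the temporal-spatial model}(\textbf{e}) to a tail bound on the \emph{supremum} $\|\gamma_i\|_\infty$: without some regularity of the sample paths of $\gamma_i$ (continuity, a modulus-of-continuity bound, or a finite-dimensional representation) the supremum need not even be a measurable quantity with finite moments. I expect the paper resolves this either by an implicit sample-path regularity assumption on $\{\gamma_i\}$ (the processes are taken to be separable and sufficiently regular, as is standard in functional regression) or by routing through \Cref{lemma:bound on the expected norm of gamma} and a standard Dudley-type chaining bound; I would state the argument accordingly and cite that lemma. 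Everything else --- the choice of the slowly growing thresholds $\mathcal{B}_{nm}$ and $\mathcal{C}_n$ and the two union bounds --- is routine.
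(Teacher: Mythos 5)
Your treatment of the measurement errors coincides with the paper's: conditional sub-Gaussianity from \Cref{assumption: assumption in the temporal-spatial model} (\textbf{d}), a union bound over the at most $\sum_i m_i \lesssim nm$ pairs $(i,j)$, and the choice $\mathcal{B}_{nm}\asymp\sqrt{\log(nm)}$. That half is fine and essentially identical.

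For the spatial noise you correctly located the crux, namely that the pointwise condition $\sup_{x}\E\exp\{c\gamma_i^2(x)\}<\infty$ does not by itself control $\|\gamma_i\|_\infty$, but your plan does not actually close this step. The paper closes it by strengthening the hypothesis inside the proof: there $\gamma_i$ is taken to be a \emph{separable centered Gaussian} process on $[0,1]^d$, and the supremum tail is obtained from Borell's inequality (concentration of $\sup_x\gamma_i(x)$ around its mean at scale $\sigma_\gamma$) together with Fernique's theorem, with $C=2\E[\sup_x\gamma_i(x)]$ treated as a finite constant; the bound for $\sup_x|\gamma_i(x)|$ then follows by splitting into positive and negative parts and symmetry, and a union bound over the $n$ indices with $\mathcal{C}_n\asymp\sqrt{\log n}$ finishes. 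This is exactly the implicit distributional/sample-path strengthening you suspected, so your diagnosis of the paper is accurate, but neither of your two proposed routes substitutes for it. Route (a), chaining via \Cref{lemma:bound on the expected norm of gamma}, cannot work as stated: moments of $\|\gamma_i\|_{\mathcal{L}_2}$ give no control whatsoever of the supremum, and a Dudley-type chaining bound would require sub-Gaussian (or at least exponential) increments $\gamma_i(x)-\gamma_i(y)$ with respect to a metric whose entropy integral over $[0,1]^d$ converges --- none of which appears in \Cref{assumption: assumption in the temporal-spatial model}. Route (b) simply postulates the desired supremum tail, i.e., it assumes the conclusion. So the supremum-tail step for $\gamma_i$ is a genuine gap in your argument as written; the fix, as in the paper, is an explicit additional hypothesis (Gaussianity with separability and finite expected supremum, or more generally a sub-Gaussian-square tail for $\|\gamma_i\|_\infty$), after which your union bound with $\mathcal{C}_n\asymp\sqrt{\log n}$ goes through unchanged.
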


\eqref{L_r_size}

\begin{proof}
    By assuming that
$ \{ \epsilon_{ij}\}_{i=1, j=1}^{n,m_i}$ are sub-Gaussian random variables with sub-Gaussian parameter $\sigma_{\epsilon}$, we have 
\begin{align*}
&\p \left(|\epsilon_{ij}| \le \sigma_\epsilon \cdot \frac{\mathcal{B}_{nm}}{4}    \text{ for all } 1\le i\le n, 1\le j \le m_i \right)\\
= & 1- \p \left(\max_{i=1,j=1}^{n,m_i}|\epsilon_{ij}|\geq \sigma_\epsilon \cdot \frac{1}{4}\mathcal{B}_{nm}\right) \geq 1- \left(\sum_{i=1}^{n} m_i\right) \exp \left(\sigma_\epsilon^2 \cdot \frac{-\mathcal{B}_{nm}^2}{32 \sigma_\epsilon^2}\right) \\
\geq & 1-\exp\left(\sigma_\epsilon^2 \cdot \frac{-\mathcal{B}_{nm}^2}{32 \sigma_\epsilon^2}+\log \left(\sum_{i=1}^n{m_i}\right)\right).
\end{align*}
Therefore, it suffices to choose $\mathcal{B}_{nm} \asymp  \sqrt{\log(nm)}$.
Then it remains to find $\mathcal{C}_{n}$ that satisfies 
\begin{align*}
&\p( \left\|\gamma_i \right\|_\infty \leq \sigma_\gamma \cdot \frac{\mathcal{C}_{n}}{4}   \text{ for all } 1\le i\le n  ) \\
&=1-\p( \max_{1\leq i\leq n}\|\gamma_i\|_\infty > \sigma_\gamma \cdot  \frac{\mathcal{C}_{n}}{4}) \\
& \geq 1-n\p(\sup_{x \in [0,1]^d}|\gamma_i(x)| > \sigma_\gamma \cdot  \frac{\mathcal{C}_{n}}{4})\\
&=1-o(1). 
\end{align*}
For each $i=1,2,\ldots,n$, let $\gamma_i$ be a separable centered Gaussian process on $[0,1]^d$, which is a totally bounded set. If $\sigma_\gamma^2:=\sup_{x \in \left[0,1\right]^d} \mathbb E \gamma_i(x)^2<\infty$, then by Borell’s inequality (see Theorem 5.8 of \cite{boucheron2012concentration} or Corollary 8.6 of \cite{kuhn2023maximal}), for any $u>0$
$$
\mathbb{P}\left\{\left|\sup_{x \in \left[0,1\right]^d}\gamma_i(x) - \mathbb E\left[\sup_{x \in \left[0,1\right]^d}\gamma_i(x)\right] \right|\geq u\right\} \le 2\exp \left\{-\frac{u^2}{2 \sigma_\gamma^2}\right\} .
$$
Besides, by Fernique’s theorem (see Corollary 8.7 of \cite{kuhn2023maximal}),
$$
\mathbb{E}\left[e^{c \left[\sup_{x \in \left[0,1\right]^d}\gamma_i(x)\right]^2}\right]< \infty , \quad 0 \le c<\frac{1}{2}\sigma_\gamma.
$$
Then there exists $C>0$ such that 
\begin{align}
\label{gamma_concentration}
\mathbb{P}\left\{\sup_{x \in \left[0,1\right]^d}\gamma_i(x)\geq \frac{1}{2}(C+u)\right\} \le 2\exp \left\{-\frac{u^2}{8 \sigma_\gamma^2}\right\}.
\end{align}
Note that 
$$
\begin{aligned}
& \mathbb{P}\left\{\sup_{x \in \left[0,1\right]^d}|{\gamma_i(x)}|\geq C+u\right\} \\
&= \mathbb{P}\left\{\sup_{x \in \left[0,1\right]^d}\left[\gamma_i^{+}(x)+\gamma_i^{-}(x)\right]\geq C+u\right\} \\
& \le \mathbb{P}\left\{\sup_{x \in \left[0,1\right]^d}\gamma_i^{+}(x)+\sup_{x \in \left[0,1\right]^d}\gamma_i^{-}(x)\geq C+u\right\} \\
& \le \mathbb{P}\left\{\sup_{x \in \left[0,1\right]^d}\gamma_i^{+}(x)\geq \frac{1}{2}(C+u)\right\}+ \mathbb{P}\left\{\sup_{x \in \left[0,1\right]^d}\gamma_i^{-}(x)\geq \frac{1}{2}(C+u)\right\} \\
& \le \mathbb{P}\left\{\sup_{x \in \left[0,1\right]^d}\gamma_i(x)\geq \frac{1}{2}(C+u)\right\}+ \mathbb{P}\left\{\sup_{x \in \left[0,1\right]^d}-\gamma_i(x)\geq \frac{1}{2}(C+u)\right\} \\
& \le 4\exp \left\{-\frac{u^2}{8 \sigma_\gamma^2}\right\},	
\end{aligned}
$$
where given a function \( \gamma(x) \), 
$\gamma^+(x) = \max(\gamma(x), 0), 
\gamma^-(x) = \max(-\gamma(x), 0),$ where the first inequality follows by subadditivity of the supremum,\\
$0 < C=2\mathbb E\left[\sup_{x \in \left[0,1\right]^d}\gamma_i(x)\right]< \infty$, and the last inequality holds because $\gamma_i$ is symmetric on zero and by \eqref{gamma_concentration}.
Therefore, we have
$$
\begin{aligned}
& \mathbb{P}\left(\left\|\gamma_i\right\|_{\infty} \le \sigma_\gamma \cdot \frac{\mathcal{C}_n}{4} \text { for all } 1 \le i \le n\right) \\
& \geq 1-n \mathbb{P}\left(\sup _{x \in[0,1]^d}\left|\gamma_i(x)\right|> \sigma_\gamma \cdot \frac{\mathcal{C}_n}{4}\right)\\
& \geq 1-4\exp \left\{-\sigma_\gamma^2 \cdot \frac{(\mathcal{C}_n-C)^2}{128 \sigma_\gamma^2}+\log(n)\right\}.
\end{aligned}
$$

Combining the arguments above on the tail bounds for $\epsilon$ and $\gamma_i$, it suffices to choose $\mathcal{B}_{nm} \asymp \sqrt{\log\left(nm\right)}$, $\mathcal{C}_{n} \asymp \sqrt{\log(n)}$ such that 
$$
\mathbb{P}\left(\left|\epsilon_{i j}\right| \le \sigma_\epsilon \cdot \frac{\mathcal{B}_{nm}}{4} \text { for all } 1 \le i \le n, 1 \le j \le m_i \right)=1-o(1),
$$
and 
$$
\mathbb{P}\left( \ \left\|\gamma_i\right\|_{\infty} \le \sigma_\gamma \cdot \frac{\mathcal{C}_n}{4} \text { for all } 1 \le i \le n\right)=1-o(1).
$$
\end{proof}
\subsection{Proof of Theorem~\ref{thm:main thm in the temporal-spatial model}}
\label{sec_proof_Theorem_dependent}
To facilitate reading, we provide the Theorem~\ref{thm:main thm in the temporal-spatial model} again. 

\begin{theorem*}[Theorem~\ref{thm:main thm in the temporal-spatial model}] 
Suppose \Cref{assumption: assumption in the temporal-spatial model} holds and let $\widehat{f}$ be the estimator in $\mathcal F(L,r)$ defined in (\ref{eqn_estimator}), 
with its number of layers $L$ and number of neurons per layer $r$ set as either one of the two cases: 
\begin{align}
    \text{\textbf{Case 1.} a wide network with}   \quad
    & L  \asymp \log(nm) \quad \text{and} \quad r \asymp \max_{(p, K) \in \mathcal{P}} (nm)^{\frac{K}{2(2p+K)}}, \notag \\
    \text{\textbf{Case 2.} a deep network with} \quad
    & L  \asymp \log(nm) \cdot \max_{(p, K) \in \mathcal{P}} (nm)^{\frac{K}{2(2p+K)}} \quad \text{and} \quad r = O(1).    \label{network_arch}
\end{align}Let $\mathcal{A}_{nm} \asymp \max \left\{\sigma_\epsilon,\sigma_\gamma\right\}\cdot \log^{1/2}(nm).$
Then with probability approaching to one, it holds that 
\begin{align} 
\label{rate_in_theorem_temporal_spatial} 
\| \widehat f_{\mathcal{A}_{nm}} - f^*\|_\lt ^2 
\left(\sigma_\epsilon^2+\sigma_\gamma^2 +1\right)\max_{(p,K)\in \mathcal {P}}\left( \frac{1}{nm}\right)^{\frac{2p}{2p+K}}\log^{7.3}(nm) + \frac{\left(\sigma_\epsilon^2+\sigma_\gamma^2\right)\log(nm)}{n}.
\end{align}
\end{theorem*}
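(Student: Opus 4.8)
The plan is to run the standard least-squares oracle-inequality argument for the ReLU class $\mathcal F(L,r)$ — splitting the $\mathcal L_2$ risk into an approximation part and an estimation part, then transporting the estimation part between the population and empirical norms — while threading it through the $\beta$-mixing temporal dependence and the functional spatial noise $\gamma_i$. Both new ingredients are ultimately isolated in the control of two empirical cross terms, so the argument proceeds by: (i) fixing a good event; (ii) picking the approximating network; (iii) passing to the empirical norm via coupling; (iv) using the minimization property to derive a basic inequality; (v) bounding the two cross terms; (vi) rearranging and substituting the parameter choices.

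First I would fix the high-probability event. By \Cref{lemma_epsilon_gamma_infinity_bound}, with probability $1-o(1)$ one has $|\epsilon_{ij}|\le\sigma_\epsilon\mathcal B_{nm}/4$ for all $i,j$ and $\|\gamma_i\|_\infty\le\sigma_\gamma\mathcal C_n/4$ for all $i$, with $\mathcal B_{nm},\mathcal C_n\asymp\log^{1/2}(nm)$; together with $\|f^*\|_\infty\le\mathcal A_{nm}/4$ and the choice $\mathcal A_{nm}\asymp\max\{\sigma_\epsilon,\sigma_\gamma\}\log^{1/2}(nm)$ this forces $|y_{ij}|\le\mathcal A_{nm}$ for all $i,j$ on that event. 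By \Cref{lemma_approximation} with the sizes in \eqref{network_arch}, pick $\overline f\in\mathcal F(L,r)$ with $\|f^*-\overline f\|_\infty^2\le 2\phi_{nm}$; since $\sqrt{\phi_{nm}}\to 0$ while $\mathcal A_{nm}\to\infty$, we may take $\|\overline f\|_\infty\le\mathcal A_{nm}$, so $\overline f_{\mathcal A_{nm}}=\overline f$. The triangle inequality gives \eqref{l2_decomp}, i.e. $\|\widehat f_{\mathcal A_{nm}}-f^*\|_{\mathcal L_2}^2\le 4\phi_{nm}+2\|\widehat f_{\mathcal A_{nm}}-\overline f_{\mathcal A_{nm}}\|_{\mathcal L_2}^2$, and the coupling inequality \Cref{lemma:rate in the temporal-spatial model} (with $S\asymp\log n$) bounds $\|\widehat f_{\mathcal A_{nm}}-\overline f_{\mathcal A_{nm}}\|_{\mathcal L_2}^2$ by $2\|\widehat f_{\mathcal A_{nm}}-\overline f_{\mathcal A_{nm}}\|_{nm}^2$ plus a remainder $C\{\phi_{nm/S}\log^{5.3}(nm/S)+\mathcal A_{nm}^2\beta(S)\}$ as in \eqref{eq:norm comparison 1 in the temporal-spatial model outline}; so it suffices to control $\|\widehat f_{\mathcal A_{nm}}-\overline f_{\mathcal A_{nm}}\|_{nm}^2$.

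Next, on the good event the truncation of $\widehat f$ at level $\mathcal A_{nm}$ cannot increase the empirical least-squares loss (because $|y_{ij}|\le\mathcal A_{nm}$), while $\widehat f$ beats $\overline f$ by \eqref{eqn_estimator}; expanding $y_{ij}=f^*(x_{ij})+\gamma_i(x_{ij})+\epsilon_{ij}$ and rearranging yields the basic inequality of Step 4, namely $\tfrac12\|\widehat f_{\mathcal A_{nm}}-\overline f_{\mathcal A_{nm}}\|_{nm}^2\le\tfrac1{32}\|\widehat f_{\mathcal A_{nm}}-\overline f_{\mathcal A_{nm}}\|_{nm}^2+16\phi_{nm}+\sum_{i,j}\tfrac1{nm_i}(\widehat f_{\mathcal A_{nm}}-\overline f_{\mathcal A_{nm}})(x_{ij})\{\gamma_i(x_{ij})+\epsilon_{ij}\}$. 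The work is now to bound the two empirical cross terms uniformly over the truncated difference class. For the $\epsilon$ term I would localize to small empirical norm, symmetrize, apply Ledoux--Talagrand contraction, and bound the Rademacher complexity of the truncated ReLU difference class using the covering-number estimate \Cref{lemma:threshold space entropy 2} and the complexity bound \Cref{coro:complexity bound 2 beta mn}, handling the $\beta$-mixing by splitting the time axis into $O(\log n)$ blocks, passing to independent copies of the blocks at the cost of a $\mathcal A_{nm}^2\beta(S)$ term, and then running a peeling argument over dyadic shells of $\|\widehat f_{\mathcal A_{nm}}-\overline f_{\mathcal A_{nm}}\|_{nm}$ to replace the self-referential $\log(\|\cdot\|_{nm/S}^{-1})$ factor by $O(\log(nm))$; this yields a bound of order $\mathcal A_{nm}\phi_{nm/S}\log^{5.3}(nm)$ plus $\tfrac1{32}\|\widehat f_{\mathcal A_{nm}}-\overline f_{\mathcal A_{nm}}\|_{nm}^2$. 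For the $\gamma$ term the same localization and symmetrization scheme applies, but the ``noise'' is now a dependent random element of $\mathcal L_2([0,1]^d)$: \Cref{lemma:bound on the expected norm of gamma} controls $\mathbb E\|\gamma_i\|_{\mathcal L_2}^{2q}$ under the moment condition, \Cref{coro:deviation bound for gamma with xi mn} supplies the deviation bound, and after one more use of the coupling lemma one gets $\le\tfrac{8\sigma_\gamma^2}{n}+\tfrac3{16}\|\widehat f_{\mathcal A_{nm}}-\overline f_{\mathcal A_{nm}}\|_{nm}^2+C\{\mathcal A_{nm}^2\phi_{nm/S}\log^{5.3}(nm)+\mathcal A_{nm}^2\beta(S)\}$.

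Finally I would collect terms: the coefficients of $\|\widehat f_{\mathcal A_{nm}}-\overline f_{\mathcal A_{nm}}\|_{nm}^2$ on the right side sum to less than $\tfrac12$, so they move to the left and produce \eqref{eq2279}, $\|\widehat f_{\mathcal A_{nm}}-\overline f_{\mathcal A_{nm}}\|_{nm}^2\le C\{\mathcal A_{nm}^2\phi_{nm/S}\log^{5.3}(nm)+\mathcal A_{nm}^2\beta(S)+n^{-1}\}$. Taking $S=c\log(nm)$ with $c$ large, the exponential decay $\beta(S)\lesssim e^{-C_\beta S}$ from \Cref{assumption: assumption in the temporal-spatial model} gives $\beta(S)\lesssim(nm)^{-1}$, hence $\mathcal A_{nm}^2\beta(S)\lesssim(\sigma_\epsilon^2+\sigma_\gamma^2)\log(nm)/(nm)$, and $\phi_{nm/S}\lesssim\phi_{nm}\log(nm)$; substituting $\mathcal A_{nm}^2\asymp\max\{\sigma_\epsilon^2,\sigma_\gamma^2\}\log(nm)$, feeding this back through \eqref{l2_decomp} and the coupling remainder, and absorbing the $n^{-1}$- and $\sigma_\gamma^2/n$-type contributions (together with the temporal decoupling loss) into $(\sigma_\epsilon^2+\sigma_\gamma^2)\log(nm)/n$, yields exactly \eqref{rate_in_theorem_temporal_spatial}. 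I expect the main obstacle to be the $\gamma$ cross term: unlike the scalar sub-Gaussian errors, $\gamma_i$ is a dependent random function, so one must run the empirical-process machinery (localization, contraction, ReLU-class entropy), the $\beta$-mixing blocking, and the $\mathcal L_2([0,1]^d)$-norm control of $\gamma_i$ simultaneously and interleave them without the logarithmic factors blowing up — which is precisely what dictates the exponent $\log^{7.3}(nm)$ in the final bound.
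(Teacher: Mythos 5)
Your proposal follows essentially the same route as the paper's proof: the same good event and truncation argument, the same approximator $\overline f$ from \Cref{lemma_approximation}, the same empirical-vs-$\mathcal L_2$ coupling via \Cref{lemma:rate in the temporal-spatial model} with $S\asymp\log n$, the same basic inequality from the minimization property, and the same treatment of the $\epsilon$ and $\gamma$ cross terms through \Cref{lemma:threshold space entropy 2}, \Cref{coro:complexity bound 2 beta mn}, \Cref{lemma:bound on the expected norm of gamma}, and \Cref{coro:deviation bound for gamma with xi mn}, concluding with the same choice of $\mathcal A_{nm}$ and $S$ under exponential $\beta$-mixing. The only cosmetic differences (resolving the self-referential $\log(\|\cdot\|_{nm/S}^{-1})$ factor by peeling rather than the paper's explicit case split at $\|\widehat f_{\mathcal A_{nm}}-\overline f_{\mathcal A_{nm}}\|_{nm/S}^2\lessgtr 1/n$, and writing $S\asymp\log(nm)$ where the paper's blocking requires $S\lesssim n$, i.e. $S\asymp\log n$) are immaterial to the argument.
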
 

\begin{proof}
 Let $\mathcal{A}_{nm} \asymp \max \left\{\sigma_\epsilon \cdot \mathcal{B}_{nm}, \sigma_\gamma \cdot \mathcal{C}_{n}\right\}\asymp \max \left\{\sigma_\epsilon \cdot \sqrt{\log(nm)}, \sigma_\gamma \cdot \sqrt{\log(n)} \right\}$,\\ or $\max\{\sigma_\epsilon, \sigma_\gamma \}\cdot \sqrt{\log(nm)}$. Recall $\phi_{nm} =  \max_{(p, K) \in \mathcal P } \left(nm\right)^{-\frac{2p}{2p +K } }$.

Then, we will show that 
with probability approaching to one, it holds that 
\begin{align*}
\| \widehat f_{\mathcal{A}_{nm}} - f^*\|_\lt ^2 \lesssim  {\mathcal{A}^2_{nm}} \phi_{\frac{nm}{S}}\log^{5.3}(nm) + \mathcal{A}^2_{nm}\beta(S)+ \frac{\sigma_\gamma^2}{n}, 
\end{align*}
for sufficiently large $n$, with $S \asymp \log(n)$. 
Finally, by \cref{assumption: assumption in the temporal-spatial model} for $\beta$ coefficients, let $\beta(S) \lesssim \exp(-C_{\beta}S)$, we obtain the bound. 

Let 
$  \overline f  \in \mathcal F(L, r)   $  be such that 
\begin{align}
\label{eq: error of barf in temporal-spatial model}
&\|f^* - \overline f \|_\infty^2 \le 2\phi_{nm}, 
\end{align} where the existence of $\overline f$ is valid by \cref{lemma_approximation}.
Since $\mathcal{A}_{nm}$ can be chosen appropriately large with respect to $n$ while $\phi_{nm}$ decays with respect to $n$, assume that $\sqrt{2\phi_{nm}} \le \frac{3}{4}\mathcal{A}_{nm}$. In addition, assume $ 1 \leq \mathcal{A}_{nm}=\max\{\sigma_\epsilon, \sigma_\gamma\}\cdot \sqrt{\log(nm)}$ where $m$ and $m_i$ satisfy $cm\leq m_i \leq Cm$ for any $i=1,2,\ldots, n$. Then,
$$ \left\| \overline f \right\|_\infty \le \left\|f^* \right\|_\infty  + \left\| f^* -\overline f \right\|_\infty \le  \frac{1}{4}\mathcal{A}_{nm} + \sqrt {2 \phi_{nm}}  \le {\mathcal{A}_{nm}},$$ where the second inequality holds by \cref{assumption: assumption in the temporal-spatial model} \textbf{g} and \eqref{eq: error of barf in temporal-spatial model}.
It follows that $ \overline f=\overline f_{\mathcal{A}_{nm}}$.
Since 
$$\| \widehat f_{\mathcal{A}_{nm}} - f^*\|_\lt^2  \le 2\| \overline  f_{\mathcal{A}_{nm}} - f^*\|_\lt ^2 + 2\| \overline  f_{\mathcal{A}_{nm}} - \widehat f_{\mathcal{A}_{nm}} \|_\lt ^2 \le  4 \phi_{nm} + 2\| \overline  f_{\mathcal{A}_{nm}} - \widehat f_{\mathcal{A}_{nm}} \|_\lt ^2  , $$ where the second inequality holds by \eqref{eq: error of barf in temporal-spatial model}, 
it suffices to show with probability approaching to one,
\begin{align}
\| \overline  f_{\mathcal{A}_{nm}} - \widehat f_{\mathcal{A}_{nm}} \|_\lt ^2  \lesssim   {\mathcal{A}^2_{nm}} \phi_{\frac{nm}{S}}\log^{5.3}(nm) + \mathcal{A}^2_{nm}\beta(S)+ \frac{\sigma_\gamma^2}{n}. 
\end{align}

The following steps are made conditioning on $\{x_{ij}\}^{n,m_i}_{i=1,j=1}$ except for \eqref{eq:norm comparison 1 in the temporal-spatial model}.
\
\\
{\bf Step 1.}
Denote the event
$$\mathcal E=\left\{ |\epsilon_{ij}| \le \frac{1}{4}\mathcal{A}_{nm}   \text{ for all } 1\le i\le n, 1\le j \le m_i \quad \text{and}\quad  \left\|\gamma_i\right\|_\infty \le \frac{1}{4}\mathcal{A}_{nm}   \text{ for all } 1\le i\le n \right\}.$$

Suppose
the event $\mathcal E $ holds. 
By \cref{lemma_epsilon_gamma_infinity_bound}, we have \bea 
&\mathbb{P}\left(\left|\epsilon_{i j}\right| \le \frac{\mathcal{A}_{nm}}{4} \text { for all } 1 \le i \le n, 1 \le j \le m_i \ \text{and} \ \left\|\gamma_i\right\|_{\infty} \le \frac{\mathcal{A}_{nm}}{4} \text { for all } 1 \le i \le n\right)\\
=&1-o(1), \text{ or } \\
&\mathbb{P}\left( \mathcal E^c\right) = o(1). 
\bea  

Note that under $\mathcal E$, 
$|y_{ij}| \le \|f^*\|_\infty + |\epsilon_{ij}| + \|\gamma_i\|_\infty \le {\mathcal{A}_{nm}} $ for any $1\le i\le n, 1\le j \le m_i $.
It  follows that 
  \begin{align}
\label{inequality_truncation}
     \sum_{i=1}^n \sum_{j=1}^{m_i} \frac{1}{nm_i} \left(  y_{ij} - \widehat{f}_{\mathcal{A}_{nm}}  (x_{ij})  \right)^2   \le  
     \sum_{i=1}^n \sum_{j=1}^{m_i} \frac{1}{nm_i}\left(  y_{ij} - \widehat{f} (x_{ij})  \right)^2  .
    \end{align}
Additionally, from the minimization property of $\widehat{f}(\cdot)$, 
\begin{align*}
\sum_{i=1}^n \sum_{j=1}^{m_i} \frac{1}{nm_i} \left(  y_{ij} - \widehat{f}(x_{ij})  \right)^2 
\le & \sum_{i=1}^n \sum_{j=1}^{m_i} \frac{1}{nm_i} \left(y_{ij} - \overline{f}(x_{ij})  \right)^2    
 \\ =&   \sum_{i=1}^n \sum_{j=1}^{m_i} \frac{1}{nm_i} \left(  y_{ij} -   \overline{f}_{\mathcal{A}_{nm}}   (x_{ij})  \right)^2,
\end{align*}
where the second identity follows from the fact $ \overline f=\overline f_{\mathcal{A}_{nm}}$ mentioned above.
Therefore 
\begin{align}
    \sum_{i=1}^n \sum_{j=1}^{m_i} \frac{1}{nm_i} \left(  y_{ij} - \widehat{f}_{\mathcal{A}_{nm}} (x_{ij})  \right)^2  
\le \sum_{i=1}^n \sum_{j=1}^{m_i} \frac{1}{nm_i} \left(  y_{ij} -   \overline{f}_{\mathcal{A}_{nm}}   (x_{ij})  \right)^2 . \label{eq:minimization property}
\end{align} 
\
\\
Rewriting the left-hand side and expanding the squared term, we get 
\begin{align*}
 &\sum_{i=1}^n \sum_{j=1}^{m_i} \frac{1}{nm_i} \left( y_{ij} - \widehat{f}_{\mathcal{A}_{nm}}(x_{ij}) \right)^2 \\ &= \sum_{i=1}^n \sum_{j=1}^{m_i} \frac{1}{nm_i} \left[ \left( y_{ij} - \overline{f}_{\mathcal{A}_{nm}}(x_{ij}) \right) + \left( \overline{f}_{\mathcal{A}_{nm}}(x_{ij}) - \widehat{f}_{\mathcal{A}_{nm}}(x_{ij}) \right) \right]^2  \\
 &=\sum_{i=1}^n \sum_{j=1}^{m_i} \frac{1}{nm_i} \left( y_{ij} - \overline{f}_{\mathcal{A}_{nm}}(x_{ij}) \right)^2 \\
 & \quad +  \sum_{i=1}^n \sum_{j=1}^{m_i}\frac{2}{nm_i} \left( y_{ij} - \overline{f}_{\mathcal{A}_{nm}}(x_{ij}) \right) \left( \overline{f}_{\mathcal{A}_{nm}}(x_{ij}) - \widehat{f}_{\mathcal{A}_{nm}}(x_{ij}) \right) \\
 & \quad +  \sum_{i=1}^n \sum_{j=1}^{m_i} \frac{1}{nm_i} \left( \overline{f}_{\mathcal{A}_{nm}}(x_{ij}) - \widehat{f}_{\mathcal{A}_{nm}}(x_{ij}) \right)^2 \\
 &\leq \sum_{i=1}^n \sum_{j=1}^{m_i} \frac{1}{nm_i} \left(  y_{ij} -   \overline{f}_{\mathcal{A}_{nm}}   (x_{ij})  \right)^2.
\end{align*}
  
Subtracting \( \sum_{i=1}^n \sum_{j=1}^{m_i} \frac{1}{nm_i} \left( y_{ij} - \overline{f}_{\mathcal{A}_{nm}}(x_{ij}) \right)^2\) from both sides, we get
\bea
 &\sum_{i=1}^n \sum_{j=1}^{m_i} \frac{1}{nm_i} \left( \overline{f}_{\mathcal{A}_{nm}}(x_{ij}) - \widehat{f}_{\mathcal{A}_{nm}}(x_{ij}) \right)^2 \\ 
 \le & - \frac{2}{nm} \sum_{i=1}^n \sum_{j=1}^{m_i} \left( y_{ij} - \overline{f}_{\mathcal{A}_{nm}}(x_{ij}) \right) \left( \overline{f}_{\mathcal{A}_{nm}}(x_{ij}) - \widehat{f}_{\mathcal{A}_{nm}}(x_{ij}) \right),
\bea 


which implies 
\[
\frac{1}{2} \left\| \widehat{f}_{\mathcal{A}_{nm}} - \overline{f}_{\mathcal{A}_{nm}} \right\|_{nm}^2 \le  \sum_{i=1}^n \sum_{j=1}^{m_i} \frac{1}{nm_i} \left\{ \widehat{f}_{\mathcal{A}_{nm}}(x_{ij}) - \overline{f}_{\mathcal{A}_{nm}}(x_{ij}) \right\} \left\{ y_{ij} - \overline{f}_{\mathcal{A}_{nm}}(x_{ij}) \right\}.
\]

It follows that
\begin{align*}
& \frac{1}{2} \left\| \widehat f _{\mathcal{A}_{nm}}- \overline f _{\mathcal{A}_{nm}} \right\|_{ nm} ^2 \\
\le  & \sum_{i=1}^n \sum_{j=1}^{m_i} \frac{1}{nm_i} \left\{\widehat f _{\mathcal{A}_{nm}}(x_{ij}) - \overline f _{\mathcal{A}_{nm}}(x_{ij}) \right\} \left\{   f^*(x_{ij}) +   \gamma_i (x_{ij})  +  \epsilon_{ij}  - \overline f_{\mathcal{A}_{nm}}(x_{ij})\right\}
\\
= & \sum_{i=1}^n \sum_{j=1}^{m_i} \frac{1}{nm_i} \left\{\widehat f _{\mathcal{A}_{nm}}(x_{ij}) - \overline f _{\mathcal{A}_{nm}}(x_{ij}) \right\} \left\{   f^*(x_{ij})   - \overline f_{\mathcal{A}_{nm}}(x_{ij})\right\} 
\\
+ & \sum_{i=1}^n \sum_{j=1}^{m_i} \frac{1}{nm_i} \left\{\widehat f _{\mathcal{A}_{nm}}(x_{ij}) - \overline f _{\mathcal{A}_{nm}}(x_{ij}) \right\} \left\{ \gamma_i (x_{ij}) + \epsilon_{ij}  \right\} .
\end{align*}
\
\\
{\bf Step 2.} Note that 
\begin{align*} 
   & \sum_{i=1}^n \sum_{j=1}^{m_i} \frac{1}{nm_i} \left\{\widehat f _{\mathcal{A}_{nm}}(x_{ij}) - \overline f _{\mathcal{A}_{nm}}(x_{ij}) \right\} \left\{   f^*(x_{ij})   - \overline f_{\mathcal{A}_{nm}}(x_{ij})\right\}
   \\
   = & \sum_{i=1}^n \sum_{j=1}^{m_i} \frac{1}{nm_i} \left\{\widehat f _{\mathcal{A}_{nm}}(x_{ij}) - \overline f _{\mathcal{A}_{nm}}(x_{ij}) \right\} \left\{   f^*(x_{ij})   - \overline{f} (x_{ij})\right\} 
   \\ 
   \le & \left\| \widehat f_{\mathcal{A}_{nm}}  - \overline f_{\mathcal{A}_{nm}}\right\|_{nm} \sqrt {2\phi_{nm}} 
   \\
   \le &\frac{1}{32}\| \widehat f_{\mathcal{A}_{nm}}  - \overline f_{\mathcal{A}_{nm}}\|_{nm} ^2 + 16\phi_{nm},
\end{align*} 
where the first inequality follows from the definition of empirical norm $\left\|\cdot\right\|_{nm}$ and \eqref{eq: error of barf in temporal-spatial model}.

{\bf Step 3.} By \Cref{lemma:threshold space entropy 2} and \Cref{coro:complexity bound 2 beta mn},
\begin{align*} 
& \sum_{i=1}^n \sum_{j=1}^{m_i} \frac{1}{nm_i} \left\{\widehat f_{\mathcal{A}_{nm}} - \overline f_{\mathcal{A}_{nm}} \right\} (x_{ij}) \epsilon_{ij}
\\
\le &  C_1^{\prime\prime}\sigma_\epsilon^2 \cdot \phi_{\frac{nm}{S}} \left\{ \log^{4.3}\left(nm\right)\log(\mathcal{A}_{nm} \cdot \| \widehat f_{\mathcal{A}_{nm}}-\overline f_{\mathcal{A}_{nm}}\|_{\frac{nm}{S}}^{-1} ) \right\}  +  \frac{1}{32} \left\| \widehat f_{\mathcal{A}_{nm}}-\overline f_{\mathcal{A}_{nm}}\right\|_{nm}^2 \\
\le & C_1^{\prime\prime\prime} \mathcal{A}_{nm} \cdot \phi_{\frac{nm}{S}} \left\{ \log^{4.3}\left(nm\right)\log(\| \widehat f_{\mathcal{A}_{nm}}-\overline f_{\mathcal{A}_{nm}}\|_{\frac{nm}{S}}^{-1} ) \right\}  +  \frac{1}{32} \left\| \widehat f_{\mathcal{A}_{nm}}-\overline f_{\mathcal{A}_{nm}}\right\|_{nm}^2,
\end{align*}
where the last inequality follows from the assumption that ${\mathcal{A}_{nm}}=o(\log(nm))$ when $n$ is sufficiently large.
 \
 \\
 {\bf Step 4.}   By \Cref{lemma:rate in the temporal-spatial model},
 \begin{align} \label{eq:norm comparison 1 in the temporal-spatial model} 
 &  \left\|  \widehat{f}_{\mathcal{A}_{nm}}  -\overline{f}_{\mathcal{A}_{nm}} \right\|_\lt ^2 
 \le  2 \left\|  \widehat f_{\mathcal{A}_{nm}}  -\overline f_{\mathcal{A}_{nm}} \right\|_{nm} ^2 + C \left\{\phi_{\frac{nm}{S}} \log^{5.3}(\frac{nm}{S})+  \mathcal A^2_{nm}\beta(S)\right\},
 \end{align} 
  Observe that 
  \begin{align*} 
  &\sum_{i=1}^n \sum_{j=1}^{m_i} \frac{1}{nm_i}   \left[\widehat f _{\mathcal{A}_{nm}}- \overline f _{\mathcal{A}_{nm}} \right] (x_{ij}) \gamma_i(x_{ij}) 
  \\
  = &\sum_{i=1}^n \sum_{j=1}^{m_i} \frac{1}{nm_i}   \left\{\left[\widehat f _{\mathcal{A}_{nm}}- \overline{f} _{\mathcal{A}_{nm}}\right] (x_{ij}) \gamma_i(x_{ij})   -\langle \gamma_i, \widehat f _{\mathcal{A}_{nm}} -\overline f _{\mathcal{A}_{nm}} \rangle_{\mathcal L_2}\right\}+\\
  &\frac{1}{n} \sum_{i=1}^n  \langle \gamma_i, \widehat f  _{\mathcal{A}_{nm}}-\overline{f}_{\mathcal{A}_{nm}} \rangle_{\mathcal L_2}
  \end{align*}
and that 
 \begin{align*}  
 & \frac{1}{n} \sum_{i=1}^n  \left\langle \gamma_i, \widehat{f}_{\mathcal{A}_{nm}} -\overline{f}_{\mathcal{A}_{nm}} \right\rangle_\lt
 = \left\langle \frac{1}{n} \sum_{i=1}^n   \gamma_i, \widehat{f}_{\mathcal{A}_{nm}} -\overline{f} _{\mathcal{A}_{nm}} \right\rangle_\lt\\
 \le &\left\| \frac{1}{n} \sum_{i=1}^n \gamma_i  \right\|_\lt \left\| \widehat{f}_{\mathcal{A}_{nm}}-\overline{f}_{\mathcal{A}_{nm}} \right\|_\lt 
 \\
 \le & \frac{\sigma_\gamma^2}{\sqrt{n}} \left\|  \widehat{f}  _{\mathcal{A}_{nm}}-\overline{f}_{\mathcal{A}_{nm}}\right\|_\lt 
 \le   \frac{8\sigma_\gamma^2}{n} +  \frac{1}{32}  \left\|  \widehat f  _{\mathcal{A}_{nm}}-\overline f  _{\mathcal{A}_{nm}}\right\|_\lt ^2 
 \\
 \le & \frac{8\sigma_\gamma^2}{n} + \frac{1}{16 } \left\|  \widehat f  _{\mathcal{A}_{nm}}-\overline f _{\mathcal{A}_{nm}} \right\|_{nm} ^2 + C\left\{\phi_{\frac{nm}{S}} \log^{5.3}(\frac{nm}{S})+  \mathcal{A}^2_{nm}\beta(S)\right\},
  \end{align*}
  where the second inequality follows from \cref{lemma:bound on the expected norm of gamma} and the last inequality follows from \Cref{lemma:rate in the temporal-spatial model}.

In addition, 
\begin{align*} 
&\sum_{i=1}^n \sum_{j=1}^{m_i} \frac{1}{nm_i}   \left\{ \big[ \widehat f_{\mathcal{A}_{nm}}- \overline f _{\mathcal{A}_{nm}}\big] (x_{ij}) \gamma_i(x_{ij})   -\langle \gamma_i, \widehat f_{\mathcal{A}_{nm}} -\overline f _{\mathcal{A}_{nm}}\rangle_{\mathcal L_2}\right\}
\\
\le & \frac{1}{16} \| \widehat f_{\mathcal{A}_{nm}}- \overline f_{\mathcal{A}_{nm}}\| _\lt^2 + C \sigma_\gamma \mathcal C_{n}  \phi_{nm}  \log^{5.3}(nm)  
\\
\le & \frac{1}{8} \| \widehat f_{\mathcal{A}_{nm}}- \overline f_{\mathcal{A}_{nm}}\| _{nm}^2 + C '' \left\{\phi_{\frac{nm}{S}} \log^{5.3}(\frac{nm}{S})+  \mathcal A^2_{nm}\beta(S)\right\} +   C \sigma_\gamma \mathcal C_{n}  \phi_{nm}  \log^{5.3}(nm),
\end{align*}
where the first inequality follows from
 \Cref{coro:deviation bound for gamma with xi mn} , and the second inequality follows from  \Cref{lemma:rate in the temporal-spatial model}.
 \\
 \\
 {\bf Step 5.} Putting the previous steps together, it follows that 
\begin{align*}
 &\frac{1}{2}\|   \widehat f _{\mathcal{A}_{nm}}- \overline f _{\mathcal{A}_{nm}}\|_{ nm} ^2  \\ \le &
 \frac{1}{32}\| \widehat f_{\mathcal{A}_{nm}}  - \overline f_{\mathcal{A}_{nm}}\|_{nm} ^2 + 16\phi_{nm}
 \\
 +& C_1^{\prime\prime}\sigma_\epsilon^2 \cdot \phi_{\frac{nm}{S}} \left\{ \log^{4.3}(nm) \log( \left\| \widehat f_{\mathcal{A}_{nm}}-\overline f_{\mathcal{A}_{nm}}\right\|_{\frac{nm}{S}}^{-1} ) \right\}  +  \frac{1}{32} \left\| \widehat f_{\mathcal{A}_{nm}}-\overline f_{\mathcal{A}_{nm}} \right\|_{nm}^2 
 \\
 +& \frac{8\sigma_\gamma^2}{n} + \frac{1}{16 } \left\|  \widehat f  _{\mathcal{A}_{nm}}-\overline f _{\mathcal{A}_{nm}} \right\|_{nm} ^2 + C \sigma_\gamma \mathcal C_{n}\phi_{nm} \log^{5.3}(nm) 
 \\
 +&\frac{1}{8}  \| \widehat f_{\mathcal{A}_{nm}}- \overline f_{\mathcal{A}_{nm}}\| _{nm}^2 + C ''\left\{\phi_{\frac{nm}{S}} \log^{5.3}(\frac{nm}{S}) + \mathcal A^2_{nm}\beta(S) \right\}
 \\
 \le & \frac{1}{4} \left\| \widehat f_{\mathcal{A}_{nm}}- \overline f_{\mathcal{A}_{nm}} \right\|_{nm}^2 +
 \\
  & C_1 \left\{\sigma_\epsilon^2 \phi_{\frac{nm}{S}}\log^{4.3}(nm)\log( \left\| \widehat f_{\mathcal{A}_{nm}}-\overline f_{\mathcal{A}_{nm}}\right\|_{\frac{nm}{S}}^{-1} )+  \phi_{\frac{nm}{S}}\log^{5.3}(nm) +  \mathcal A^2_{nm}\beta(S) + \frac{\sigma_\gamma^2}{n}\right\},
 \end{align*}

 or simply
 \begin{align}\label{eq:iid conclude in the temporal-spatial model}
 &\|   \widehat f _{\mathcal{A}_{nm}}- \overline f _{\mathcal{A}_{nm}}\|_{ nm} ^2  
 \\
 \le &  C_2 \left\{\mathcal A^2_{nm} \phi_{\frac{nm}{S}}\log^{4.3}(nm)\log( \| \widehat f_{\mathcal{A}_{nm}}-\overline f_{\mathcal{A}_{nm}}\|_{\frac{nm}{S}}^{-1} )+ \phi_{\frac{nm}{S}}\log^{5.3}(nm) + \mathcal A^2_{nm}\beta(S) + \frac{\mathcal A^2_{nm}}{n}\right\}.
 \end{align}
 If $\| \widehat f_{\mathcal{A}_{nm}} - \overline f_{\mathcal{A}_{nm}} \|^2_{\frac{nm}{S}} \le \frac{1}{n}$, then \eqref{eq:iid conclude in the temporal-spatial model} holds.
 Otherwise, if $\| \widehat f_{\mathcal{A}_{nm}} - \overline f_{\mathcal{A}_{nm}} \|^2_{\frac{nm}{S}} \ge \frac{1}{n}$,
 then \eqref{eq:iid conclude in the temporal-spatial model} implies 
 \begin{align*}
 &\|   \widehat f _{\mathcal{A}_{nm}}- \overline f _{\mathcal{A}_{nm}}\|_{ nm} ^2 \\ 
 \le &  C_3 \left\{\phi_{\frac{nm}{S}}\log^{5.3}(nm)+ \mathcal A^2_{nm}\phi_{\frac{nm}{S}}\log^{4.3}(nm)\log( n ) + \mathcal A^2_{nm}\beta(S) + \frac{\mathcal A^2_{nm}}{n}\right\}\\
 \le &  2C_3 \left\{\mathcal A^2_{nm} \phi_{\frac{nm}{S}}\log^{5.3}(nm)+ \mathcal A^2_{nm}\beta(S)+ \frac{\mathcal A^2_{nm}}{n}\right\}.
 \end{align*}
 Thus \eqref{eq:iid conclude in the temporal-spatial model} gives 
 \begin{align}\label{eq:iid conclude2 in the temporal-spatial model}
 \left\| \widehat{f}_{\mathcal{A}_n}- \overline{f}_{{\mathcal{A}_{nm}}} \right\|_{ nm} ^2  
 \le   (2C_3+1) \left\{\mathcal A^2_{nm} \phi_{\frac{nm}{S}}\log^{5.3}(nm)+ \mathcal A^2_{nm}\beta(S)+ \frac{\mathcal A^2_{nm}}{n}\right\}
 \end{align}
 which, together with \eqref{eq:norm comparison 1 in the temporal-spatial model}, holds with high probability with respect to the distribution of $\{x_{ij}\}^{n,m_i}_{i=1,j=1}$.  
 

So the above implies that, up to a constant, the rate is 
$$ 
\begin{aligned}
& (\mathcal{A}^2_{nm}+1)\max_{(p,K)\in \mathcal {P}}\left( \frac{S}{nm}\right)^{\frac{2p}{2p+K}}\log^{5.3}(nm) + \mathcal{A}_{nm}^2\beta(S) + \frac{\sigma_\gamma^2}{n}.
\end{aligned}
$$

Our rate is for the truncated estimator $\widehat f_{\mathcal{A}_{nm}}$.

{\bf a.} By choosing $\mathcal{A}_{nm} \asymp \max \left\{\sigma_\epsilon,\sigma_\gamma\right\}\cdot \sqrt{\log(nm)}$ based on \cref{lemma_epsilon_gamma_infinity_bound}; 
 
{\bf b.} By \Cref{assumption: assumption in the temporal-spatial model}, if the mixing rate $\beta(S) \lesssim \exp(-C_{\beta}S)$, then it suffices to choose $S = O(\log(n))$.

{\bf a} and {\bf b} imply the rate is
$$
\left(\sigma_\epsilon^2+\sigma_\gamma^2 +1\right)\max_{(p,K)\in \mathcal {P}}\left( \frac{1}{nm}\right)^{\frac{2p}{2p+K}}\log^{7.3}(nm) + \frac{\left(\sigma_\epsilon^2+\sigma_\gamma^2\right)\log(nm)}{n} + \frac{\sigma_\gamma^2}{n},
$$
which gives the rate in \eqref{rate_in_theorem_temporal_spatial}. The rate holds by selecting the network architecture in \eqref{network_arch} to guarantee the approximation error.

 
 \end{proof}
 
 \begin{corollary}[Polynomial Decay for Mixing Coefficient] \label{mixing_rate_coefficient}
 Under the setting in \Cref{thm:main thm in the temporal-spatial model}, If $\beta(S) \lesssim  S^{-\alpha} $ for some $\alpha >0$. 
We can achieve the rate
\begin{align*}  
\frac{\sigma_\epsilon^2 + \sigma_\gamma^2 +1}{n^\alpha }\log^{6.3}(nm) + \frac{\sigma_\gamma^2}{n},\end{align*} with $S=n$ and $ m\ge \max_{(p,K)\in \mathcal{P}} n^{\alpha  ( \frac{2p+K}{2p} )}.$
If $ m\le \max_{(p,K)\in \mathcal{P}}n^{\alpha  ( \frac{2p+K}{2p} )},$
we achieve the rate 
$$
(nm)^{\frac{-2p}{2p+K}}\log^{6.3}(nm) + \frac{\sigma_\gamma^2}{n}, 
$$ with $S = (nm)^{\frac{2p}{2p+2p\alpha + \alpha K}}$ and $\alpha\to \infty$. 
 \end{corollary}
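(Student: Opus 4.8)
The plan is to bypass any new probabilistic estimates and instead re-run the bookkeeping inside the proof of \Cref{thm:main thm in the temporal-spatial model} with the block-length parameter $S$ left free. That argument already establishes that for \emph{any} admissible $S\in\{1,\dots,n\}$, and with the truncation level kept at $\mathcal{A}_{nm}\asymp\max\{\sigma_\epsilon,\sigma_\gamma\}\log^{1/2}(nm)$ as required by \Cref{lemma_epsilon_gamma_infinity_bound}, one has, with probability tending to one,
\begin{align*}
\|\widehat f_{\mathcal{A}_{nm}} - f^*\|_\lt^2 \lesssim{}& (\sigma_\epsilon^2+\sigma_\gamma^2+1)\max_{(p,K)\in\mathcal P}\Bigl(\tfrac{S}{nm}\Bigr)^{\frac{2p}{2p+K}}\log^{6.3}(nm)\\
&{}+ (\sigma_\epsilon^2+\sigma_\gamma^2)\log(nm)\,\beta(S) + \frac{\sigma_\gamma^2}{n}.
\end{align*}
The blocking construction, the coupling inequality \Cref{lemma:rate in the temporal-spatial model}, and the Rademacher deviation bounds for $\epsilon$ and $\gamma$ are all inherited verbatim; only the choice of $S$ changes, and the whole content of the corollary is an optimization over $S$ after substituting $\beta(S)\lesssim S^{-\alpha}$.

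For the first regime I would take $S=n$, so that $\bigl(S/(nm)\bigr)^{2p/(2p+K)}=m^{-2p/(2p+K)}$. The hypothesis $m\ge\max_{(p,K)\in\mathcal P}n^{\alpha(2p+K)/(2p)}$ is exactly the statement that $m^{-2p/(2p+K)}\le n^{-\alpha}$ for every $(p,K)\in\mathcal P$, so the first term is $\lesssim(\sigma_\epsilon^2+\sigma_\gamma^2+1)n^{-\alpha}\log^{6.3}(nm)$; the second term is $(\sigma_\epsilon^2+\sigma_\gamma^2)\log(nm)\,n^{-\alpha}$, which is absorbed into the first, and adding $\sigma_\gamma^2/n$ yields the advertised rate $\frac{\sigma_\epsilon^2+\sigma_\gamma^2+1}{n^\alpha}\log^{6.3}(nm)+\frac{\sigma_\gamma^2}{n}$. (Note that with $S=n$ there is no extra $\log$ from $S$, which is why the power of the logarithm drops from $7.3$ to $6.3$ relative to the exponentially-mixing case.)

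For the second regime I would instead balance the two $S$-dependent terms. Since $S<nm$, the map $x\mapsto(S/(nm))^x$ is decreasing, so $\max_{(p,K)}(S/(nm))^{2p/(2p+K)}$ is attained at the pair $(p^\star,K^\star)$ minimizing $2p/(2p+K)$ — the same pair that attains $\phi_{nm}=\max_{(p,K)}(nm)^{-2p/(2p+K)}$. Setting $(S/(nm))^{2p^\star/(2p^\star+K^\star)}\asymp S^{-\alpha}$ gives $S\asymp(nm)^{\frac{2p^\star}{2p^\star+2p^\star\alpha+\alpha K^\star}}$, which is the choice $S=(nm)^{\frac{2p}{2p+2p\alpha+\alpha K}}$ in the statement; the constraint $m\le\max_{(p,K)\in\mathcal P}n^{\alpha(2p+K)/(2p)}$ is precisely what makes this admissible, i.e. $S\le n$, since it forces $nm\le n^{(2p^\star+2p^\star\alpha+\alpha K^\star)/(2p^\star)}$. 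At this balance point both $S$-terms have order $S^{-\alpha}=(nm)^{-2p^\star\alpha/(2p^\star+2p^\star\alpha+\alpha K^\star)}$ up to a $\log^{6.3}(nm)$ factor, and since $\frac{2p^\star\alpha}{2p^\star+2p^\star\alpha+\alpha K^\star}=\frac{2p^\star}{2p^\star/\alpha+2p^\star+K^\star}\longrightarrow\frac{2p^\star}{2p^\star+K^\star}$ as $\alpha\to\infty$, the bound tends to $\max_{(p,K)\in\mathcal P}(nm)^{-2p/(2p+K)}\log^{6.3}(nm)+\sigma_\gamma^2/n$.

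The only genuinely delicate point is the admissibility bookkeeping: $S$ must be a legitimate block length $1\le S\le n$, and it is exactly the single hypothesis on $m$ that, according to whether $m$ exceeds the threshold $\max_{(p,K)\in\mathcal P}n^{\alpha(2p+K)/(2p)}$, forces one of the two choices $S=n$ or $S=(nm)^{\frac{2p}{2p+2p\alpha+\alpha K}}$; everything else is elementary manipulation of exponents. The ``$\alpha\to\infty$'' caveat in the second regime simply reflects that the balanced rate reaches the nominal nonparametric exponent $2p/(2p+K)$ only in the limit of fast polynomial mixing.
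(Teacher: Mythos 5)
Your proposal is correct and follows essentially the same route as the paper: it reuses the intermediate bound $(\sigma_\epsilon^2+\sigma_\gamma^2+1)\max_{(p,K)\in\mathcal P}(S/(nm))^{2p/(2p+K)}\log^{6.3}(nm)+(\sigma_\epsilon^2+\sigma_\gamma^2)\log(nm)\,\beta(S)+\sigma_\gamma^2/n$ from the proof of Theorem \ref{thm:main thm in the temporal-spatial model}, substitutes $\beta(S)\lesssim S^{-\alpha}$, and then selects $S=n$ in the large-$m$ regime and $S\asymp(nm)^{2p/(2p+2p\alpha+\alpha K)}$ (with $\alpha\to\infty$) in the small-$m$ regime, exactly as the paper does. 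The exponent bookkeeping and the admissibility check $S\le n$ match the paper's argument, so no gap remains.
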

 \begin{proof}
     {\bf Case a.}  If $\beta(S) \lesssim  S^{-\alpha} $ for some $\alpha >0$,  and  
$$ m\le n^{\alpha  ( \frac{2p+K}{2p} )}. 
$$
Then  $S = \max_{(p,K)\in \mathcal{P}}(nm)^{\frac{2p}{2p+2p\alpha + \alpha K}} \le n  $ is a valid choice of $S$ and this implies the error bound is 
\begin{align*}
&\left(\sigma_\epsilon^2+\sigma_\gamma^2 + 1\right)\max_{(p,K)\in \mathcal{P}}(nm)^{\frac{-2p\alpha }{2p+2p\alpha + \alpha K}}\log^{6.3}(nm) \\ &+ \left(\sigma_\epsilon^2+\sigma_\gamma^2\right)\max_{(p,K)\in \mathcal{P}}(nm)^{\frac{-2p\alpha }{2p+2p\alpha + \alpha K}}\log(nm)+\frac{\sigma_\gamma^2}{n}.
\end{align*}  

When $\alpha\to \infty$, so that the data are independent, it follows that this rate goes to 
$$ 
\max_{(p,K)\in \mathcal{P}}(nm)^{\frac{-2p}{2p+K}}\log^{6.3}(nm) + \frac{\sigma_\gamma^2}{n}.
$$
\\
{\bf Case b.} If $\beta(S) \lesssim  S^{-\alpha} $ and  
$$ m\ge \max_{(p,K)\in \mathcal{P}} n^{\alpha  ( \frac{2p+K}{2p} )}. $$
Then  $S =  n  $ is a valid choice and this implies the error bound is 
\begin{align*}  
&\left(\sigma_\epsilon^2 + \sigma_\gamma^2 +1\right) \cdot  \max_{(p,K)\in \mathcal{P}}m^{\frac{-2p}{ 2p+K}}\log^{6.3}(nm) + \frac{\sigma_\epsilon^2 + \sigma_\gamma^2 }{n^\alpha}\log^2(nm)  + \frac{\sigma_\gamma^2}{n} \\ 
&\le \left(\sigma_\epsilon^2 + \sigma_\gamma^2 +1\right) \cdot \left(\frac{1}{n^\alpha}\log^{6.3}(nm) +  \frac{1}{n^\alpha}\log^2(nm)\right) + \frac{\sigma_\gamma^2}{n}
\\ &\le \frac{\sigma_\epsilon^2 + \sigma_\gamma^2 +1}{n^\alpha }\log^{6.3}(nm) + \frac{\sigma_\gamma^2}{n}
.\end{align*}

 \end{proof}

\section{Additional Technical Results for Theorem \ref{thm:main thm in the temporal-spatial model}}\label{beta mixing proof temporal-spatial model main}

\subsection{Deviation Bounds for Independent Case}
\label{proof temporal-spatial model indepdent}
The proof below assumes the data are independent, meaning the $\beta$ coefficients are all zero. We focus on the independent case here, and address the dependent case later in Section~\ref{proof temporal-spatial model depdent}.

Let $g \in \mathcal L_2$ and recall
$$ 
\left\|g\right\|_{nm}^2 =\frac{1}{n}\sum_{i=1}^n \frac{1}{m_i} \sum_{j=1}^{m_i} g^2 \left(x_{ij}\right)=\sum_{i=1}^n  \sum_{j=1}^{m_i} \frac{1}{nm_i}g^2 \left(x_{ij}\right).
$$

Let $m$ be the is the harmonic mean of $\left\{m_i\right\}_{i=1}^{n}$  defined as
$$
m=\left(\frac{1}{n} \sum_{i=1}^n \frac{1}{m_i}\right)^{-1}.
$$ By the assumption that $$
c m \leq m_i \leq C m,
$$ it follows that for any $i=1,2,\cdots, n$, 
$$
\frac{1}{Cm} \leq \frac{1}{m_i} \leq \frac{1}{cm}.
$$

 
\subsubsection{Deviation bounds for $\epsilon$}
In this section, we assume $\epsilon$ is a vector of  independent random variables. The general case is given in Section~\ref{deviation_bound_dependent_epsilon}. 
Denote 
$$ 
D_{nm}\left(g\right) =  \sum_{i=1}^{n}\sum_{j=1}^{m_i} \frac{1}{\sqrt{nm_i}}g \left(x_{ij}\right) \epsilon_{ij}. 
$$ 
In this section, the analysis is made conditioning on $ \left\{x_{ij} \right\}_{i=1,j=1}^{n,m_i}$. We denote the conditional probability conditioning on $ \left\{x_{ij} \right\}_{i=1,j=1}^{n,m_i}$ by $\p_{\cdot \mid \left\{x_{ij} \right\}_{i=1,j=1}^{n,m_i}}$, and we use    
$\p_{\cdot \mid x}$ as its shorthand notation.

\begin{corollary} 
\label{coro:sub-gaussian deviation mn}
Let  $g,f  \in \lt$ be two deterministic functions. Then there exists a constant $c>0$ such that  
\begin{align} 
& \p_{\cdot \mid x} \left( D_{nm}(g) \ge t \right)  \le \exp   \left(  -      \frac{c t^2}{\sigma_\epsilon^2 \cdot \left\|g\right\|_{nm} ^2} \right),
\\ 
&  \p_{\cdot \mid x} \left( D_{nm}(g)  -D_{nm}(f) \ge t \right) \le \exp \left(- \frac{c t^2}{\sigma_\epsilon^2 \cdot \left\|f-g\right\|_{nm} ^2} \right) \label{concentration_D_nm_dif}.
\end{align} 
\end{corollary}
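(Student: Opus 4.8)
The plan is to condition on $\{x_{ij}\}_{i=1,j=1}^{n,m_i}$ and treat $D_{nm}(g)$ as a weighted sum of independent mean-zero sub-Gaussian random variables. Conditioning on the design, the summands $\frac{1}{\sqrt{nm_i}}g(x_{ij})\epsilon_{ij}$ are independent across $(i,j)$ (using the independence of the $\epsilon_{ij}$ assumed in this subsection, together with $\mathbb{E}(\epsilon_{ij}\mid x_{ij})=0$ and the conditional sub-Gaussian bound $\mathbb{E}\{\exp(t\epsilon_{ij})\mid x_{ij}\}\le\exp(\sigma_\epsilon^2 t^2/2)$ from \Cref{assumption: assumption in the temporal-spatial model} \textbf{d}).

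First I would compute the conditional moment generating function: for any $\lambda>0$,
\[
\mathbb{E}_{\cdot\mid x}\bigl[\exp(\lambda D_{nm}(g))\bigr]
=\prod_{i=1}^n\prod_{j=1}^{m_i}\mathbb{E}_{\cdot\mid x}\Bigl[\exp\Bigl(\tfrac{\lambda}{\sqrt{nm_i}}g(x_{ij})\epsilon_{ij}\Bigr)\Bigr]
\le\prod_{i=1}^n\prod_{j=1}^{m_i}\exp\Bigl(\tfrac{\sigma_\epsilon^2\lambda^2}{2nm_i}g^2(x_{ij})\Bigr)
=\exp\Bigl(\tfrac{\sigma_\epsilon^2\lambda^2}{2}\|g\|_{nm}^2\Bigr),
\]
where the last equality is just the definition of $\|g\|_{nm}^2=\sum_{i=1}^n\sum_{j=1}^{m_i}\frac{1}{nm_i}g^2(x_{ij})$. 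Then a standard Chernoff bound gives $\p_{\cdot\mid x}(D_{nm}(g)\ge t)\le\exp(\lambda^2\sigma_\epsilon^2\|g\|_{nm}^2/2-\lambda t)$, and optimizing over $\lambda$ by taking $\lambda=t/(\sigma_\epsilon^2\|g\|_{nm}^2)$ yields $\exp(-t^2/(2\sigma_\epsilon^2\|g\|_{nm}^2))$, which is the first claimed inequality with $c=1/2$ (any $c\le 1/2$ works).

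For the second inequality, I would observe that $D_{nm}$ is linear in its functional argument: $D_{nm}(g)-D_{nm}(f)=D_{nm}(g-f)=\sum_{i,j}\frac{1}{\sqrt{nm_i}}(g-f)(x_{ij})\epsilon_{ij}$, since $g$ and $f$ are deterministic and the map $h\mapsto D_{nm}(h)$ is additive. Applying the first inequality with $g$ replaced by $g-f$ immediately gives $\p_{\cdot\mid x}(D_{nm}(g)-D_{nm}(f)\ge t)\le\exp(-ct^2/(\sigma_\epsilon^2\|f-g\|_{nm}^2))$, using $\|g-f\|_{nm}=\|f-g\|_{nm}$.

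There is no real obstacle here; the only mild subtlety is making precise that the conditional MGF bound of \Cref{assumption: assumption in the temporal-spatial model} \textbf{d} (which is stated for $\mathbb{E}\{\exp(t\epsilon_{ij})\mid x_{ij}\}$) factorizes over $(i,j)$ after conditioning on the \emph{entire} design vector $\{x_{ij}\}$ — this uses the conditional independence structure in \Cref{assumption: assumption in the temporal-spatial model} \textbf{c} together with the independence of the $\epsilon_{ij}$ posited at the start of this subsection, so that conditionally on all designs the noise terms remain independent with the stated MGF bound. I would spell that factorization out carefully and then the rest is the routine Chernoff argument above.
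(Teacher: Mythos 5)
Your proposal is correct and follows essentially the same route as the paper: conditionally on the design the summands $\frac{1}{\sqrt{nm_i}}g(x_{ij})\epsilon_{ij}$ are independent sub-Gaussian terms, so $D_{nm}(g)$ is sub-Gaussian with parameter $\sigma_\epsilon\|g\|_{nm}$, and the second bound follows from the linearity $D_{nm}(g)-D_{nm}(f)=D_{nm}(g-f)$. The only cosmetic difference is that you write out the Chernoff/MGF optimization explicitly, whereas the paper simply invokes the classical Hoeffding-type sub-Gaussian tail bound.
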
 
\begin{proof}
Since $\left\{\epsilon_{ij}\right\}_{i=1, j=1}^{n, m_i}$ are i.i.d. from a sub-Gaussian distribution with sub-Gaussian parameter $\sigma_\epsilon$. Conditioning on $\left\{x_{ij}\right\}_{i=1, j=1}^{n, m_i}$, the first bound is a direct consequence of classical Hoeffding bound. To see this, notice that for $\left\{g(x_{ij})\epsilon_{ij}\right\}_{i=1, j=1}^{n, m_i}$ are independent sub-Gaussian variables with sub-Gaussian parameter $|g(x_{ij})|\sigma_\epsilon$. This implies that $D_{nm}(g)=\sum_{i=1}^{n}\sum_{j=1}^{m_i}\frac{1}{\sqrt{nm_i}} g \left(x_{ij}\right) \epsilon_{ij} $
is sub-Gaussian with sub-Gaussian parameters $$\sqrt{\sigma_\epsilon^2\sum_{i=1}^n\sum_{j=1}^{m_i}\frac{1}{nm_i} g^2\left(x_{ij}\right)}=\sigma_\epsilon\left\|g\right\|_{nm}.$$

The second bound holds from the linearity of $D_{nm}(g)$ with respect to g: for $i=1, \dots, n$, $j=1,\dots, m_i$, conditioning on $x_{ij}$, $\left[g \left(x_{ij}\right)-f \left(x_{ij}\right)\right] \epsilon_{ij}$ are independent sub-Gaussian variables with sub-Gaussian parameter $\left|\left(g-f\right)\left(x_{ij}\right)\right|\sigma_\epsilon$. Then conditioning on $\left\{x_{ij}\right\}_{i=1, j=1}^{n, m_i}$,
$$ 
D_{nm}(g)-D_{nm}(f)=\sum_{i=1}^{n}\sum_{j=1}^{m_i}\frac{1}{\sqrt{nm_i}} \left[g \left(x_{ij}\right)-f \left(x_{ij}\right)\right] \epsilon_{ij}
$$
is sub-Gaussian with parameter $\sqrt{\sigma_\epsilon^2\sum_{i=1}^n\sum_{j=1}^{m_i}\frac{1}{nm_i}\left[(g-f)(x_{ij})\right]^2}=\sigma_\epsilon\left\|g-f\right\|_{nm}$.

By the same arguments for the first and second inequality, it immediately follows that
\begin{align*} 
\p_{\cdot \mid x} \left( D_{nm}(g) \ge t \right)  
& \le \exp   \left(  - \frac{c_0t^2}{\sigma_\epsilon^2 \cdot \sum_{i=1}^{n}\sum_{j=1}^{m_i}\frac{1}{nm_i}g^2(x_{ij})} \right) 
\\ 
& \le \exp   \left(  -      \frac{c_0t^2}{\sigma_\epsilon^2 \cdot \left\|g\right\|_{nm} ^2} \right), \\
\p_{\cdot \mid x} \left( D_{nm}(g)  -D_{nm}(f) \ge t \right) 
& \le \exp   \left(  - \frac{c_0t^2}{\sigma_\epsilon^2 \cdot \sum_{i=1}^{n}\sum_{j=1}^{m_i}\frac{1}{nm_i}\left[g(x_{ij})-f(x_{ij})\right]^2} \right) \\
& \le \exp \left(- \frac{c_0t^2}{\sigma_\epsilon^2 \cdot \left\|f-g\right\|_{nm} ^2} \right).
\end{align*} 

\end{proof}

\
\\
Next, for $\lambda>0$, we introduce the notation
$$
B_{nm}\left(\lambda\right) = \left\{g : \left\|g \right\|_{nm} \le \lambda \right\}.
$$

\begin{corollary} 
\label{coro:complexity bound 1 mn} 
Let $\mathcal F$ be a function class such that
\begin{align}\label{eq:metric entropy assumption in epsilon 1 mn}
\log \mathcal N \left(\delta,  \mathcal F,  \left\| \cdot\right\|_{nm} \right) \le C^{\prime} nm \phi_{nm} \left( \log^{4.3}(nm) \log \left(\delta^{-1}\right) \right).
\end{align}

For  
$$
t\ge  \lambda \sqrt {2C^{\prime}\phi_{nm} nm \left\{ \log^{4.3}(nm)\log \left(\lambda^{-1}\right)\right\}}
$$ with $\lambda \leq \frac{1}{4}$,
it holds that 
$$ 
\p_{\cdot \mid x}\left(\sup_{g \in \mathcal F  \cap B_{nm} \left(\lambda\right) } D_{nm}(g) \ge \sigma_\epsilon \cdot t   \right)\le  \exp \left( -\frac{c't^2}{\lambda^2}\right),  
$$ for some positive constants $c'$ and $C'$.
\end{corollary}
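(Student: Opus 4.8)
The plan is to treat $D_{nm}$, conditionally on the design $\{x_{ij}\}_{i=1,j=1}^{n,m_i}$, as a sub-Gaussian process indexed by $g$ for the (now deterministic) pseudometric $(f,g)\mapsto\|f-g\|_{nm}$, and to bound its supremum over $\mathcal F\cap B_{nm}(\lambda)$ by a chaining argument that is upgraded to a tail bound. The increment condition needed for chaining is exactly \Cref{coro:sub-gaussian deviation mn}: by \eqref{concentration_D_nm_dif}, $D_{nm}(g)-D_{nm}(f)$ is conditionally sub-Gaussian with parameter $\sigma_\epsilon\|f-g\|_{nm}$, and in particular $D_{nm}(g)$ is conditionally sub-Gaussian with parameter at most $\sigma_\epsilon\lambda$ on $B_{nm}(\lambda)$. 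The structural point is that the entropy bound \eqref{eq:metric entropy assumption in epsilon 1 mn} has the logarithmic shape $\log\mathcal N(\delta,\mathcal F,\|\cdot\|_{nm})\le A\log(\delta^{-1})$ with $A:=C'nm\phi_{nm}\log^{4.3}(nm)$, so the Dudley entropy sum $\sum_k\delta_k\sqrt{\log\mathcal N(\delta_k)}$ telescopes to order $\lambda\sqrt{A\log(\lambda^{-1})}$, which is precisely the threshold in the statement (here $\lambda\le\tfrac14$, hence $\log(\lambda^{-1})>1$, is used to absorb lower-order terms).

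Concretely, I would fix $g_0\in\mathcal F\cap B_{nm}(\lambda)$ (the claim is trivial if this set is empty), set $\delta_k=\lambda 2^{-k}$, take $\mathcal F_k$ a minimal $\delta_k$-net of $\mathcal F\cap B_{nm}(\lambda)$ in $\|\cdot\|_{nm}$ so that $\log|\mathcal F_k|\lesssim A\log(\delta_k^{-1})\le A\bigl(k+\log(\lambda^{-1})\bigr)$, and let $\pi_k g$ be a nearest point of $g$ in $\mathcal F_k$, with $\pi_0 g:=g_0$. By linearity of $D_{nm}$ and the telescoping identity $D_{nm}(g)=D_{nm}(g_0)+\sum_{k\ge1}\bigl(D_{nm}(\pi_k g)-D_{nm}(\pi_{k-1}g)\bigr)$, which converges because $|D_{nm}(g-\pi_k g)|\le\|g-\pi_k g\|_{nm}\bigl(\sum_{i,j}\epsilon_{ij}^2\bigr)^{1/2}\to0$ (conditionally the $\epsilon$-sum is almost surely finite and $\|g-\pi_k g\|_{nm}\le\delta_k$), it suffices to control $D_{nm}(g_0)$ and, for each $k\ge1$, the increment $Z_k:=\sup_{g}\bigl(D_{nm}(\pi_k g)-D_{nm}(\pi_{k-1}g)\bigr)$. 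Since $(\pi_{k-1}g,\pi_k g)$ takes at most $|\mathcal F_k|^2$ distinct values, each with $\|\pi_k g-\pi_{k-1}g\|_{nm}\le 3\delta_{k-1}$, \eqref{concentration_D_nm_dif} with a union bound gives $\p_{\cdot\mid x}(Z_k\ge u)\le|\mathcal F_k|^2\exp\bigl(-cu^2/(9\sigma_\epsilon^2\delta_{k-1}^2)\bigr)$, and the first bound of \Cref{coro:sub-gaussian deviation mn} gives $\p_{\cdot\mid x}(D_{nm}(g_0)\ge u)\le\exp\bigl(-cu^2/(\sigma_\epsilon^2\lambda^2)\bigr)$.

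I would then allocate the deviation budget with an explicit excess parameter $\tau>0$: choose $u_k=\sigma_\epsilon\cdot 3\delta_{k-1}\sqrt{(3\log|\mathcal F_k|+k+\tau)/c}$ for $k\ge1$ and $u_0=\sigma_\epsilon\lambda\sqrt{(1+\tau)/c}$. Then $\p_{\cdot\mid x}(Z_k\ge u_k)\le\exp(-\log|\mathcal F_k|-k-\tau)\le e^{-k-\tau}$ and $\p_{\cdot\mid x}(D_{nm}(g_0)\ge u_0)\le e^{-1-\tau}$, so the total failure probability is at most $2e^{-\tau}$; on the complement, $\sup_{g\in\mathcal F\cap B_{nm}(\lambda)}D_{nm}(g)\le\sum_{k\ge0}u_k$, and using $\sum_{k\ge1}\delta_{k-1}=2\lambda$, convergence of $\sum_k 2^{-k}\sqrt k$, $\log|\mathcal F_k|\lesssim A(k+\log(\lambda^{-1}))$ and $\log(\lambda^{-1})>1$, one gets $\sum_{k\ge0}u_k\le C_0\sigma_\epsilon\lambda\bigl(\sqrt{A\log(\lambda^{-1})}+\sqrt\tau\bigr)$ for an absolute constant $C_0$. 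Since the hypothesis on $t$ is exactly $t\ge\lambda\sqrt{2A\log(\lambda^{-1})}$, so that $\lambda\sqrt{A\log(\lambda^{-1})}$ is a fixed fraction of $t$, taking $\tau=\kappa t^2/\lambda^2$ with $\kappa$ a small enough absolute constant makes $C_0\sigma_\epsilon\lambda(\sqrt{A\log(\lambda^{-1})}+\sqrt\tau)\le\sigma_\epsilon t$, and hence $\p_{\cdot\mid x}(\sup_g D_{nm}(g)\ge\sigma_\epsilon t)\le 2e^{-\kappa t^2/\lambda^2}\le e^{-c't^2/\lambda^2}$ with $c'=\kappa/2$ (the factor $2$ absorbed since $t^2/\lambda^2\ge 2A\log(\lambda^{-1})$ is large).

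The main obstacle is the bookkeeping in the allocation: the budget $u_k$ must simultaneously beat the union-bound factor $|\mathcal F_k|^2$ (hence the coefficient $3$ on $\log|\mathcal F_k|$), retain an $e^{-k}$ summability margin and an $e^{-\tau}$ excess, and have its partial sums telescope to a quantity dominated by $t$ on exactly the stated range; calibrating the absolute constant here against the coefficient in the threshold is the one genuinely delicate point, and it is the logarithmic shape of the entropy that makes it possible, since it collapses $\sum_k\delta_k\sqrt{\log\mathcal N(\delta_k)}$ to $\lambda\sqrt{A\log(\lambda^{-1})}$ (if the crude constants prove too lossy, a refined Dudley chaining that bounds consecutive increments without squaring the net size can be substituted). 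The remaining steps — that all norms, nets and covering numbers are deterministic under $\p_{\cdot\mid x}$, and convergence of the chaining series — are routine. This bound is the reusable workhorse: in the proof of \Cref{thm:main thm in the temporal-spatial model} it is applied with $nm$ replaced by $nm/S$ inside the localization/peeling argument.
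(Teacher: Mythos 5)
Your proposal is correct and follows essentially the same route as the paper: a dyadic chaining over $2^{-k}\lambda$-nets of $\mathcal F\cap B_{nm}(\lambda)$, union bounds at each scale combined with the conditional sub-Gaussian increment bound \eqref{concentration_D_nm_dif} of \Cref{coro:sub-gaussian deviation mn}, and the hypothesis $t\ge\lambda\sqrt{2C'\phi_{nm}nm\log^{4.3}(nm)\log(\lambda^{-1})}$ (with $\lambda\le\tfrac14$) used to absorb the entropy of the coarsest net. The only differences are bookkeeping: you anchor the chain at a single fixed $g_0$ and allocate deviations via an explicit excess parameter $\tau$ with a coefficient $3$ on $\log|\mathcal F_k|$, whereas the paper starts from a $\lambda$-net $\mathcal T_0$ and distributes the budget through weights $\eta_l$ with $\sum_l\eta_l\le1$ and $\eta_l^2=C_1 l 2^{-2l}$ — both yield the same $\exp(-c't^2/\lambda^2)$ tail.
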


\begin{proof}
For $ l\in \mathbf{Z}^+$, let $\mathcal{T}_l $ be the $2^{-l} \lambda  $ covering set of $ \mathcal{F} \cap B_{nm} \left(\lambda\right)$. Then by setting $\delta=2^{-l}\lambda$ in \eqref{eq:metric entropy assumption in epsilon 1 mn},  
$$ \left|\mathcal{T}_l \right| \le \exp\left(C^{\prime} nm \phi_{nm} \log^{4.3}(nm)\left\{ l\log(2) + \log(\lambda^{-1}) \right\} \right).$$
Denote $ \psi_l (g) \in \mathcal T_l$ be such that 
$$\left\| g-\psi_l(g) \right\|_{nm} \leq 2^{-l} \lambda . $$
\
\\
{\bf Step 1.} 
Observe that by union bound, 
\begin{equation}
\label{part_1}
\begin{aligned}
   &\p_{\cdot \mid x} \left( \sup_{ g \in\mathcal F \cap B_{nm} \left(\lambda\right) }  D_{nm}\left(g\right) \ge \sigma_\epsilon \cdot t  \right) \\ 
   & \le \p_{\cdot \mid x}\left( \sup_{ g\in \mathcal F \cap B_{nm}\left(\lambda\right) }  D_{nm} \left( \psi_0 \left(g\right)\right) \ge \sigma_\epsilon \cdot t/2  \right)\\
    &+ \p_{\cdot \mid x}\left(  \sup_{g\in\mathcal F \cap B_{nm}(\lambda) }   D_{nm}(g)  -D_{nm} ( \psi_0(g)) \ge \sigma_\epsilon \cdot t/2  \right).
\end{aligned}
\end{equation}
Since  $D_{nm}(\psi_0(g))$ is sub-Gaussian with parameter $\left\|\psi_0(g)\right\|_{nm}\sigma_\epsilon \le \lambda \sigma_\epsilon $, it follows that 
\begin{equation}
\label{part_2}
\begin{aligned}
&\p_{\cdot \mid x}\left( \sup_{ g\in \mathcal F \cap B_{nm}(\lambda) }  D_{nm}( \psi_0(g)) \ge \sigma_\epsilon \cdot t/2  \right)
\\
\le &\p_{\cdot \mid x}\left( \sup_{ \psi_0(g) \in \mathcal T_0 }  D_{nm} ( \psi_0(g)) \ge \sigma_\epsilon \cdot t/2  \right)
\\
\le &  \sum_{\psi_0(g) \in \mathcal T_0 }\p_{\cdot \mid x} \left(D_{nm}( \psi_0(g)) \ge \sigma_\epsilon \cdot t/2  \right)
\\
\le &  \exp\left( C^{\prime} nm \phi_{nm} \left\{ \log^{4.3}(nm) \log(\lambda^{-1}) \right\}  - \frac{c_0t^2}{\lambda^2} \right)
\\
\le &   \exp\left( - \frac{c_0t^2}{2 \lambda^2} \right),
\end{aligned}
\end{equation}
where the last inequality follows if $t\ge  \lambda \sqrt{2C'\phi_{nm} nm\left\{ \log^{4.3}(nm) \log(\lambda^{-1})\right\}}.$





{\bf Step 2.}
Note that 
\begin{align*}
    D_{nm}(\psi_0(g) )-D_{nm}(g) = \sum_{l=1}^\infty D_{nm}( \psi  _{l-1}(g) ) - D_{nm}(\psi  _l (g) ).
\end{align*}
Therefore, for $\eta_l\ge 0 $ and $\sum_{l=1}^\infty \eta_l\le 1 $, 
\begin{align*}
& \p_{\cdot \mid x}\left(  \sup_{g\in\mathcal F \cap B_{nm}(\lambda) }    D_{nm}(g)   -D_{nm} ( \psi_0(g)) \ge \sigma_\epsilon \cdot t/2 \right)\\
\leq & \p_{\cdot \mid x}\left(\exists l \in Z^{+}, \text{such that}\sup_{g\in\mathcal F \cap B_{nm}(\lambda) } D_{nm}(\psi_{l} (g))  -D_{nm} ( \psi_{l-1}(g))  \ge \eta_l \sigma_\epsilon \cdot t/2 \right)\\
= & \p_{\cdot \mid x}\left(\bigcup_{l=1}^{\infty}\left\{\sup_{g\in\mathcal F \cap B_{nm}(\lambda) }   D_{nm}(\psi_{l} (g))  -D_{nm} ( \psi_{l-1}(g))  \ge \eta_l\sigma_\epsilon \cdot  t/2 \right\}\right)\\  
\le & \sum_{l=1}^\infty \p_{\cdot \mid x} \left(  \sup_{g\in\mathcal F \cap B_{nm}(\lambda) }   D_{nm}(\psi_{l} (g))  -D_{nm} ( \psi_{l-1}(g))  \ge \eta_l\sigma_\epsilon \cdot  t/2  \right)\\
= &  \sum_{l=1}^\infty\p_{\cdot \mid x} \left(  \sup_{\psi_{l-1}(g) \in \mathcal{T}_{l-1}, \psi_{l}(g) \in \mathcal{T}_{l}}   D_{nm}(\psi_{l} (g))  -D_{nm} ( \psi_{l-1}(g))  \ge \eta_l\sigma_\epsilon \cdot  t/2  \right)\\
\le & \sum_{l=1}^\infty  \sum_{\psi_{l-1}(g) \in \mathcal T_{l-1}, \psi_{l}(g) \in \mathcal T_{l}}\p_{\cdot \mid x} \left(D_{nm}(\psi_{l-1}(g)-\psi_{l}(g)) \ge \eta_l\sigma_\epsilon \cdot  t/2  \right).
\end{align*}
Since
$$ 
\|\psi_{l-1} (g) -\psi_l (g) \|_{nm} \le \| \psi_{l-1} (g)- g \|_{nm} + \| \psi_l (g) -g \|_{nm} \le 2^{-l+1} \lambda+ 2^{-l}\lambda =3\cdot 2^{-l}\lambda  \leq 2^{-l+2}\lambda,  
$$
and
$$
\left|\mathcal{T}_{l-1}\right| \le \left|\mathcal{T}_{l}\right| \le \exp \left(C^{\prime} nm \phi_{nm}\log^{4.3}(nm)\left\{l \log (2)+\log \left(\lambda^{-1}\right)\right\}\right),
$$
by plugging $\|\psi_{l-1} (g) -\psi_l (g) \|_{nm} \le 2^{-l+2} \lambda $ in \eqref{concentration_D_nm_dif}, 
\begin{align*}
& \p_{\cdot \mid x} \left(  \sup_{g\in\mathcal F \cap B_{nm}(\lambda) }    D_{nm}(g)   -D_{nm} ( \psi_0(g)) \ge \sigma_\epsilon \cdot  t/2  \right)\\
\le & \sum_{l=1}^\infty  \sum_{\psi_{l-1}(g) \in \mathcal T_{l-1}, \psi_{l}(g) \in \mathcal T_{l}}\p_{\cdot \mid x} \left(D_{nm} (\psi_{l-1}(g)-\psi_{l}(g)) \ge \eta_l\sigma_\epsilon \cdot  t/2  \right)\\
\le & \sum_{l=1}^\infty2\exp \left(C^{\prime} nm \phi_{nm}\log^{4.3}(nm)\left\{l \log (2)+\log \left(\lambda^{-1}\right)\right\}\right)\exp \left(-\frac{c^{\prime \prime} 2^{2l}\eta_l^2 t^2}{\lambda^2}\right), 
\end{align*} for some $c^{\prime \prime} >0$.\ 
 \\ 
If $C^{\prime} nm \phi_{nm} \log^{4.3}(nm)\left\{  l\log(2) + \log(\lambda^{-1}) \right\} \le  \frac{c^{\prime \prime} 2^{2l}  \eta_l^2t^2}{2\lambda^2} $, or simply
\begin{align}
\label{inequality_for_eta}
\eta_l^2 \ge \frac{2C^{\prime}\lambda^2 nm  \phi_{nm} \log^{4.3}(nm) \left\{ l\log(2) + \log(\lambda^{-1}) \right\}  }{ c^{\prime \prime}t^2 2^{2l }},\end{align}
following the previous derivation, we have 
\begin{align*}
& \p_{\cdot \mid x} \left(  \sup_{g\in\mathcal F \cap B_{nm}(\lambda) }    D_{nm}(g)   -D_{nm} ( \psi_0(g)) \ge \sigma_\epsilon \cdot  t/2  \right)\\
\le & \sum_{l=1}^\infty2\exp \left(C^{\prime} nm \phi_{nm}\log^{4.3}(nm)\left\{l \log (2)+\log \left(\lambda^{-1}\right)\right\}\right)\exp \left(-\frac{c^{\prime \prime} 2^{2l}\eta_l^2 t^2}{\lambda^2}\right) \\
\le & \sum_{l=1}^\infty2\exp \left(-\frac{c^{\prime \prime} 2^{2l}\eta_l^2 t^2}{2\lambda^2}\right). 
\end{align*}

Since $t\ge \lambda \sqrt{2C^{\prime}\phi_{nm} nm\log^{4.3}(nm) \log(\lambda^{-1})}$, we have 

\begin{align*}
  &  \frac{2C^{\prime}\lambda^2 nm  \phi_{nm} \log^{4.3}(nm) \left\{ l\log(2) + \log(\lambda^{-1}) \right\}  }{ c^{\prime \prime}t^2 2^{2l }} \\
  \leq  &\frac{1}{c^{\prime \prime}2^{2l}} + \frac{l \log(2)}{\log(\lambda^{-1}) c^{\prime \prime}2^{2l}} = \frac{l}{c^{\prime \prime}2^{2l}}\left(\frac{1}{l} + \frac{\log(2)}{\log(\lambda^{-1})}\right) \\
  \leq & \frac{C_1l}{c^{\prime \prime}2^{2l}},
\end{align*}
for some constant $\frac{1}{2} > C_1>0$, where the last inequality holds by $\lambda \leq \frac{1}{4}$.

Thus, to satisfy~\eqref{inequality_for_eta}, 
it suffices to choose 
$\eta_l^2 = C_1  l 2^{-2l} $ for some positive constant $C_1$. Furthermore, with such $\eta_l$, we have $\sum_{l=1}^\infty \eta_l = \sqrt{C_1} \sum_{l=1}^\infty l^{1/2} 2^{-l}.$ For large \(l\), \(2^{-l}\) decreases exponentially faster than \(l^{1/2}\) increases, so the series $\sum_{l=1}^\infty l^{1/2} 2^{-l}$ converges.  And further by $ \frac{1}{2} >C_1$, we have $\sum_{l=1}^\infty \eta_l \leq 1$. Thus 
\begin{equation}
\label{part_3}
\begin{aligned}
    & \p_{\cdot \mid x}\left(  \sup_{g\in\mathcal F \cap B_{nm}(\lambda) }   D_{nm}(g)  -D_{nm} ( \psi_0(g)) \ge \sigma_\epsilon \cdot t/2  \right) \\
    &\le  \sum_{l=1}^\infty  2\exp\left(   - \frac{c^{\prime \prime}   l t^2}{2\lambda^2} \right) \le   \exp( -\frac{c^{\prime \prime \prime}t^2}{2\lambda^2}).
\end{aligned}
\end{equation} for some positive constant $c^{\prime \prime \prime}$.
By \eqref{part_1}, \eqref{part_2}, \eqref{part_3}, we have 
$$ 
\p_{\cdot \mid x}\left(\sup_{g \in \mathcal F  \cap B_{nm} \left(\lambda\right) } D_{nm}(g) \ge \sigma_\epsilon \cdot t   \right)\le  \exp \left( -\frac{c't^2}{\lambda^2}\right),  
$$ for some positive constant $c^{\prime}$.
\end{proof}

\begin{corollary} \label{coro:complexity bound 2 mn}
Let $\mathcal F$ be a function class such that
\begin{align}\label{eq:metric entropy assumption in epsilon 2 mn}
\log \mathcal N (\delta,  \mathcal F  ,  \| \cdot\|_{nm} ) \le C^{\prime}nm\phi_{nm} \left( \log^{4.3}(nm)\log(\delta^{-1}) \right)
\end{align}
and that $ \mathcal F \subset B_\infty\left(\tau\right)$, where $\tau \leq \frac{1}{2}$.
Then it holds that for any $s\ge 1$,
\begin{align*}
    &\p_{\cdot \mid x}\left(\sup_{g \in \mathcal F    } \frac{D_{mn}\left(g\right)}{\left\|g\right\|_{nm}  \sqrt {nm\phi_{nm} \left\{ \log^{4.3}(nm) \log( \|g\|_{nm}^{-1})\right\}}} \ge  C_1\sigma_\epsilon \cdot  s \right)
\\ 
&\le  C_2\exp( - c s ^2 nm\phi_{nm}  \log^{4.3}(nm)  ).  
\end{align*}
\end{corollary}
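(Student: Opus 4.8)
The plan is to pass from the fixed-radius bound of Corollary~\ref{coro:complexity bound 1 mn} to a self-normalized (ratio) bound by a peeling argument over the scale $\|g\|_{nm}$, exactly the standard device for converting a bound on $\sup_{g\in B_{nm}(\lambda)} D_{nm}(g)$ into a bound on $\sup_g D_{nm}(g)/\|g\|_{nm}$. First I would fix $s\ge 1$ and, for each integer $\ell\ge 0$, consider the shell $S_\ell = \{g\in\mathcal F : 2^{-\ell-1}\tau < \|g\|_{nm} \le 2^{-\ell}\tau\}$ (together with the innermost piece $\{g: \|g\|_{nm}\le$ some cutoff$\}$, which can be absorbed since $D_{nm}$ scales linearly in $g$ and the entropy bound is monotone). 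On the event that the ratio in the statement exceeds $C_1\sigma_\epsilon s$ for some $g\in S_\ell$, we get $D_{nm}(g) \ge C_1\sigma_\epsilon s\, \|g\|_{nm}\sqrt{nm\phi_{nm}\{\log^{4.3}(nm)\log(\|g\|_{nm}^{-1})\}}$, and since on $S_\ell$ we have $\|g\|_{nm}\asymp 2^{-\ell}\tau =: \lambda_\ell$ and $\log(\|g\|_{nm}^{-1})\asymp \log(\lambda_\ell^{-1})$ (up to constants, using $\tau\le 1/2$ so all these logs are $\ge \log 2$), this forces $\sup_{g\in\mathcal F\cap B_{nm}(\lambda_\ell)} D_{nm}(g) \ge c\,C_1\sigma_\epsilon s\, \lambda_\ell \sqrt{nm\phi_{nm}\{\log^{4.3}(nm)\log(\lambda_\ell^{-1})\}}$.

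Next I would apply Corollary~\ref{coro:complexity bound 1 mn} on each shell with $\lambda=\lambda_\ell$ and threshold $t=t_\ell := c\,C_1 s\,\lambda_\ell \sqrt{nm\phi_{nm}\{\log^{4.3}(nm)\log(\lambda_\ell^{-1})\}}$. For this to be legitimate we need $t_\ell \ge \lambda_\ell\sqrt{2C'\phi_{nm} nm\{\log^{4.3}(nm)\log(\lambda_\ell^{-1})\}}$ and $\lambda_\ell\le 1/4$; the first holds provided $C_1$ (hence $c C_1$) is chosen at least $\sqrt{2C'}$ — since $s\ge1$ — and the second holds because $\tau\le 1/2$ forces $\lambda_\ell = 2^{-\ell}\tau \le 1/4$ for $\ell\ge 1$, and the $\ell=0$ case can be handled separately by the same inequality with an adjusted constant. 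Corollary~\ref{coro:complexity bound 1 mn} then gives, for each $\ell$,
\begin{align*}
\p_{\cdot\mid x}\!\left(\sup_{g\in\mathcal F\cap B_{nm}(\lambda_\ell)} D_{nm}(g)\ge \sigma_\epsilon t_\ell\right) \le \exp\!\left(-\frac{c' t_\ell^2}{\lambda_\ell^2}\right) = \exp\!\left(-c'' s^2\, nm\phi_{nm}\log^{4.3}(nm)\log(\lambda_\ell^{-1})\right).
\end{align*}
Finally I would take a union bound over $\ell\ge 0$. Writing $\log(\lambda_\ell^{-1}) = \ell\log 2 + \log(\tau^{-1}) \ge \ell\log 2$, the summand is at most $\exp(-c'' s^2 nm\phi_{nm}\log^{4.3}(nm)\,\ell\log 2)$, and summing the geometric-type series over $\ell\ge 0$ (the ratio is bounded away from $1$ since $nm\phi_{nm}\log^{4.3}(nm)$ is large) produces a constant factor $C_2$ and leaves the dominant $\ell=0$ (or $\ell=1$) term $C_2\exp(-c s^2 nm\phi_{nm}\log^{4.3}(nm))$, which is the claimed bound.

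The main obstacle I anticipate is bookkeeping the constants so that the admissibility condition of Corollary~\ref{coro:complexity bound 1 mn} ($t\ge\lambda\sqrt{2C'\phi_{nm} nm\log^{4.3}(nm)\log(\lambda^{-1})}$ with $\lambda\le1/4$) is met uniformly across all shells $\ell$ and all $s\ge1$ simultaneously, and handling the innermost/outermost shells (where $\|g\|_{nm}$ is smallest, so $\log(\|g\|_{nm}^{-1})$ is largest and controls the rate, versus $\ell=0$ where $\lambda_0=\tau$ may be close to $1/2$). A secondary, purely technical point is justifying that shrinking $\|g\|_{nm}$ to zero is harmless: for $g$ with very small norm the normalization $\sqrt{\log(\|g\|_{nm}^{-1})}$ only grows, making the event harder to occur, so the peeling series converges and no truncation of the function class is actually needed — but this should be spelled out to avoid a vacuous supremum. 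Everything else is the routine chaining-plus-peeling computation already used in the proof of Corollary~\ref{coro:complexity bound 1 mn}.
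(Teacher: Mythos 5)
Your proposal is correct and follows essentially the same route as the paper: the paper also peels $\mathcal F$ into dyadic shells $\{f:2^{-l}\tau\le\|f\|_{nm}\le 2^{-l+1}\tau\}$, applies Corollary~\ref{coro:complexity bound 1 mn} on each shell with $\lambda=2^{-l+1}\tau$, uses $\tau\le\tfrac12$ to get $\log(\tau^{-1}2^{l-1})\ge l\log 2$, and concludes by a union bound and geometric summation over $l$. The constant bookkeeping issues you flag (the admissibility threshold for $t$ and the outermost shell where $\lambda$ may slightly exceed $1/4$) are handled in the paper only implicitly through adjusted constants, just as you propose.
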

\begin{proof}
Since 
$ \mathcal F \subset B_\infty \left(\tau \right)\subset B_{nm}\left(\tau \right) $, it holds that 
$$  \mathcal F \cap B_{nm} \left(\tau\right) = \mathcal F . $$
Therefore, it suffices to show that  for any $s\ge 1$,
\begin{align*}
&\p_{\cdot \mid x}\left(\sup_{g \in \mathcal F   \cap B_{nm} \left(\tau\right) } \frac{D_{mn} \left(g\right)}{\left\|g\right\|_{nm}  \sqrt {nm\phi_{nm} \left\{ \log^{4.3}(nm) \log\left(\|g\|_{nm}^{-1}\right)\right\}}}  \ge  C_1 \sigma_\epsilon \cdot s        \right) \\
\le & C_2\exp( - c s ^2 nm\phi_{nm}  \log^{4.3}(nm)  )  .     
\end{align*}

Choose $\lambda = 2^{-l+1} \tau$ in \Cref{coro:complexity bound 1 mn}. For a positive constant $C_1'$ that depends on $C'$, if $s\ge 1 $ and  
$$
t  \ge C_1'  s2^{-l+1}\tau \sqrt {nm\phi_{nm} \left\{ \log^{4.3}(nm) \log\left((2^{-l+1}\tau)^{-1}\right)\right\}},
$$
it holds that 
\begin{align*} 
& \p_{\cdot \mid x}\left(\sup_{g \in \mathcal F  \cap B_{nm}(2^{-l+1}\tau) } D_{nm}(g) \ge  C_1' \sigma_\epsilon \cdot  s2^{-l+1} \tau \sqrt {nm\phi_{nm} \left\{ \log^{4.3}(nm)\log(\tau^{-1}2^{l-1})\right\}}    \right)\\
\le &\exp\left( -c''s^2nm\phi_{nm} \left\{ \log^{4.3}(nm) \log(\tau^{-1}2^{l-1}) \right\}\right)\\
\le &\exp\left( -c''s^2nm\phi_{nm} \log^{4.3}(nm)l \log(2)\right):=\exp\left( -c'''s^2nm\phi_{nm} \log^{4.3}(nm)l\right),   
\end{align*} for some positive constants $c''$ and $c'''$, 
where the second inequality holds since $\tau \leq \frac{1}{2}$, so $\tau^{-1}2^{l-1}\geq 2^l$. 
Thus,
\begin{align*} 
& \p_{\cdot \mid x}\left(
\sup_{g \in \mathcal{F} \cap \left\{ f : 2^{-l}\tau \le \|f\|_{nm} \le 2^{-l+1}\tau \right\} } 
D_{nm}(g) \right. \\
& \quad \left. \geq C_1' \sigma_\epsilon \cdot s 2^{-l+1} \tau 
\sqrt{nm \phi_{nm} \left( \log^{4.3}(nm) \log(\tau^{-1} 2^{l-1}) \right)} \right) \\
& \leq \exp\left( -c''' s^2 nm \phi_{nm} \log^{4.3}(nm) l \right).
\end{align*}

Note that if  
$g\in \{  f: 2^{-l}\tau \le \|f\|_{nm}\le 2^{-l+1}\tau \}$, then  $\|g\|^{-1}_{nm} \ge \tau^{-1}2^{l-1}$ and $2^{-l+1}\tau \le 2\left\|g\right\|_{nm}$. It follows that 
\begin{align*} 
& \p_{\cdot \mid x} \left(\sup_{g \in \mathcal F  \cap \{  f: 2^{-l}\tau \le \|f\|_{nm}\le 2^{-l+1}\tau \} } D_{nm}(g) \right. \\
& \quad \left. \ge  C_1' \sigma_\epsilon \cdot  s2^{-l+1}\tau \sqrt {nm\phi_{nm} \left\{ \log^{4.3}(nm) \log(\tau^{-1}2^{l-1})\right\}} \right)\\
= & \p_{\cdot \mid x} \left(\sup_{g \in \mathcal F  \cap \{  f: 2^{-l}\tau \le \|f\|_{nm}\le 2^{-l+1}\tau \} } \frac{D_{mn}\left(g\right)}{2^{-l+1}\tau \sqrt {nm\phi_{nm} \left\{ \log^{4.3}(nm) \log(\tau^{-1}2^{l-1})\right\}}} \ge  C_1' \sigma_\epsilon \cdot  s \right)\\
\geq & \p_{\cdot \mid x} \left(\sup_{g \in \mathcal F  \cap \{  f: 2^{-l}\tau \le \|f\|_{nm}\le 2^{-l+1}\tau \} } \frac{D_{mn}\left(g\right)}{2\left\|g\right\|_{nm} \sqrt {nm\phi_{nm} \left\{ \log^{4.3}(nm) \log(\|g\|^{-1}_{nm})\right\}}} \ge  C_1' \sigma_\epsilon \cdot  s \right)\\
= & \p_{\cdot \mid x} \left(\sup_{g \in \mathcal F  \cap \{  f: 2^{-l}\tau \le \|f\|_{nm}\le 2^{-l+1}\tau \} } \frac{D_{mn}\left(g\right)}{\left\|g\right\|_{nm} \sqrt {nm\phi_{nm} \left\{ \log^{4.3}(nm) \log(\|g\|^{-1}_{nm})\right\}}} \ge  2C_1' \sigma_\epsilon \cdot s \right).
\end{align*}
Thus, 
\begin{align*} 
& \p_{\cdot \mid x}\left(\sup_{g \in \mathcal F  \cap \{  f: 2^{-l}\tau \le \|f\|_{nm}\le  2^{-l+1}\tau \} } \frac{D_{mn}\left(g\right)}{\|g\|_{nm} \sqrt {nm\phi_{nm} \left\{ \log^{4.3}(nm)\log\left( \| g\|_{nm}^{-1}\right)\right\}}} \ge  C_1 \sigma_\epsilon \cdot  s \right)
\\
\le &\exp\left( -c''''s^2nm\phi_{nm}   \log^{4.3}(nm)l\right).
\end{align*}
Therefore,
\begin{align*}
&\p_{\cdot \mid x}\left(\sup_{g \in \mathcal F  \cap B_{nm} \left(\tau\right) } \frac{D_{nm}(g)}{\|g\|_{nm}  \sqrt {nm\phi_{nm} \left\{ \log^{4.3}(nm)\log(\|g\|_{nm}^{-1})\right\}}} \ge  C_1 \sigma_\epsilon \cdot  s \right) \\
= &\ \p_{\cdot \mid x}\Bigg(\exists l \in Z^{+}, \text{such that} \\
&\ \sup_{g \in \mathcal F \cap \left\{ f: 2^{-l}\tau \le \|f\|_{nm}\le 2^{-l+1}\tau \right\} } 
\frac{D_{nm}(g)}{\|g\|_{nm} \sqrt {nm\phi_{nm} \left\{ \log^{4.3}(nm)\log(\|g\|_{nm}^{-1})\right\}}} 
\ge  C_1 \sigma_\epsilon \cdot s \Bigg) \\
\le &\sum_{l=1}^\infty  \p_{\cdot \mid x}\left(\sup_{g \in \mathcal F  \cap \{  f: 2^{-l}\tau\le \|f\|_{nm}\le 2^{-l+1}\tau  \} } \frac{D_{nm}(g)}{\|g\|_{nm}  \sqrt {nm\phi_{nm} \left\{ \log^{4.3}(nm)\log(\|g\|_{nm}^{-1})\right\}}} \right. \\
& \quad \left. \ge  C_1 \sigma_\epsilon \cdot  s    \right)\\
\le &\sum_{l=1}^\infty \exp\left( -c''''s^2nm\phi_{nm}   \log^{4.3}(nm)l\right) \\
\le & C_2\exp( - c s ^2 nm\phi_{nm}  \log^{4.3}(nm)  ),
\end{align*}
where the first identity follows from a peeling argument, noticing that $\bigcup_{l=1}^{\infty}\mathcal F  \cap \{  f: 2^{-l}\tau \le \|f\|_{nm}\le 2^{-l+1}\tau  \}=\mathcal F  \cap \bigcup_{l=1}^{\infty}\{f: 2^{-l}\tau \le \|f\|_{nm}\le 2^{-l+1}\tau \}=\mathcal F  \cap B_{nm}(\tau)$.
\end{proof}

\newpage 
\subsubsection{Deviation bounds  for  $\xi $}
Suppose $\{\xi_{ij}\}_{i=1, j=1}^{n, m_i}$ are i.i.d. Rademacher   random variables and denote 
$$  H(g) = \sum_{i=1}^n \sum_{j=1}^{m_i} \frac{1}{\sqrt{n m_i}} g^2 (x_{ij}) \xi_{ij}  . $$ 
In \Cref{coro:sub-gaussian deviation sigma mn}, \Cref{coro:sigma complexity bound 1 mn} and \Cref{coro:sigma complexity bound 2 mn}, the  analysis is  conditioning on  $ \{x_{ij}\}_{i=1, j=1}^{n, m_i}$. We denote the conditional probability conditioning on $\left\{x_{ij}\right\}_{i=1, j=1}^{n, m_i}$ by $\p_{\cdot \mid x}$.
\begin{corollary} 
\label{coro:sub-gaussian deviation sigma mn}
Let  $g,f  \in \lt$ be two deterministic functions such that $\|g\|_\infty  \le \tau, \|f\|_\infty   \le \tau$.  Then there exists a constant $c>0$ such that  
\begin{align*} 
& \p_{\cdot \mid x} \left( H \left(g\right)     \ge    t        \right)  \le   \exp   \left(  -      \frac{c t^2}{\tau^2\left\|g\right\|_{mn}^2 }    \right),
\\ 
&  \p_{\cdot \mid x} \left( H \left(g\right)  -H  \left(f\right)     \ge      t       \right)  \le  \exp\left(- \frac{c t^2}{\tau^2\|f-g\|_{mn}^2} \right) .
\end{align*} 
\end{corollary}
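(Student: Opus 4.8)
The plan is to mimic the proof of \Cref{coro:sub-gaussian deviation mn}, treating $H(g)$ as a sum of independent sub-Gaussian increments once we condition on $\{x_{ij}\}_{i=1,j=1}^{n,m_i}$. First I would note that, conditionally on $\{x_{ij}\}_{i=1,j=1}^{n,m_i}$, the summands $\frac{1}{\sqrt{nm_i}}\,g^2(x_{ij})\,\xi_{ij}$ are independent; since each $\xi_{ij}$ is Rademacher, it is sub-Gaussian with parameter $1$ (equivalently, apply Hoeffding's lemma to $\xi_{ij}\in[-1,1]$). Hence $\frac{1}{\sqrt{nm_i}}\,g^2(x_{ij})\,\xi_{ij}$ is sub-Gaussian with parameter $\frac{1}{\sqrt{nm_i}}\,g^2(x_{ij})$, and by independence $H(g)$ is sub-Gaussian with parameter $\bigl(\sum_{i=1}^{n}\sum_{j=1}^{m_i}\frac{1}{nm_i}\,g^4(x_{ij})\bigr)^{1/2}$.

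The second step converts this into the stated bound using $\|g\|_\infty\le\tau$. Since $g^4(x_{ij})=g^2(x_{ij})\cdot g^2(x_{ij})\le\tau^2\,g^2(x_{ij})$, we get
\[
\Bigl(\sum_{i=1}^{n}\sum_{j=1}^{m_i}\frac{1}{nm_i}\,g^4(x_{ij})\Bigr)^{1/2}\le\tau\Bigl(\sum_{i=1}^{n}\sum_{j=1}^{m_i}\frac{1}{nm_i}\,g^2(x_{ij})\Bigr)^{1/2}=\tau\,\|g\|_{nm}.
\]
The standard sub-Gaussian tail inequality $\p_{\cdot\mid x}(Z\ge t)\le\exp(-t^2/(2\varpi^2))$ for $Z$ sub-Gaussian with parameter $\varpi$ then yields the first claim (any $c\le 1/2$ works).

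For the difference bound I would use the linearity of $H$ in the argument $g^2$: conditionally on $\{x_{ij}\}_{i=1,j=1}^{n,m_i}$,
\[
H(g)-H(f)=\sum_{i=1}^{n}\sum_{j=1}^{m_i}\frac{1}{\sqrt{nm_i}}\bigl(g^2(x_{ij})-f^2(x_{ij})\bigr)\xi_{ij},
\]
which is sub-Gaussian with parameter $\bigl(\sum_{i=1}^{n}\sum_{j=1}^{m_i}\frac{1}{nm_i}\,(g^2(x_{ij})-f^2(x_{ij}))^2\bigr)^{1/2}$. Factoring $g^2-f^2=(g-f)(g+f)$ and using $|g(x_{ij})+f(x_{ij})|\le 2\tau$ gives $(g^2(x_{ij})-f^2(x_{ij}))^2\le 4\tau^2\,(g(x_{ij})-f(x_{ij}))^2$, so this parameter is at most $2\tau\,\|g-f\|_{nm}$, and the sub-Gaussian tail bound gives the second claim (with a possibly smaller constant, e.g.\ $c\le 1/8$).

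There is no substantial obstacle here; the only point that distinguishes this argument from that of \Cref{coro:sub-gaussian deviation mn} is that the nonlinearity $g\mapsto g^2$ makes the relevant variance proxy involve $g^4$ and $(g^2-f^2)^2$ rather than $g^2$ and $(g-f)^2$. That is exactly where the hypotheses $\|g\|_\infty\le\tau$ and $\|f\|_\infty\le\tau$ enter, letting us extract the factors $\tau$ and $2\tau$ and reduce back to the empirical norms $\|g\|_{nm}$ and $\|g-f\|_{nm}$.
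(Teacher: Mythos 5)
Your proof is correct and follows essentially the same route as the paper: conditionally on $\{x_{ij}\}$, treat the Rademacher-weighted summands as independent sub-Gaussian terms, bound the variance proxy $\sum_{i,j}\frac{1}{nm_i}g^4(x_{ij})$ by $\tau^2\|g\|_{nm}^2$ via $\|g\|_\infty\le\tau$, and for the difference use linearity of $H$ in $g^2$ together with the factorization $g^2-f^2=(g-f)(g+f)$ and $|g+f|\le 2\tau$ to get the parameter $2\tau\|g-f\|_{nm}$. No gaps to report.
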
 
\begin{proof}
For the first bound,  it suffices to note that 
$\left\{g^2 (x_{ij}) \xi_{ij}\right\}_{i=1, j=1}^{n, m_i}$'s are independent sub-Gaussian with parameter $g^2(x_{ij})   $. So  $H(g)$ is sub-Gaussian with parameter 
$$  
\sqrt{\sum_{i=1}^n \sum_{j=1}^{m_i} \frac{1}{n m_i}  g^4(x_{ij})} \le \tau \left\|g\right\|_{mn},  
$$
since $\left\|g\right\|_\infty  \le \tau$. 
For the second bound, it suffices to note that 
$\left\{\left(f^2 (x_{ij}) -g^2(x_{ij})\right) \xi_{ij}\right\}_{i=1, j=1}^{n, m_i}$ are independent sub-Gaussian with parameter $\left\{|f^2(x_{ij}) -g^2 (x_{ij})|\right\}_{i=1, j=1}^{n, m_i}$. So  $H(f ) -H(g)$ is sub-Gaussian with parameter 
\begin{align*}
  &  \sqrt{\sum_{i=1}^n \sum_{j=1}^{m_i} \frac{1}{n m_i}  (f^2(x_{ij}) -g^2 (x_{ij}))^2}  =  \sqrt{\sum_{i=1}^n \sum_{j=1}^{m_i} \frac{1}{n m_i}  (f (x_{ij}) -g  (x_{ij}))^2 (f (x_{ij}) +g  (x_{ij}))^2}  \\ 
  &\le  \sqrt{\sum_{i=1}^n \sum_{j=1}^{m_i} \frac{4\tau^2}{n m_i}  (f(x_{ij}) -g(x_{ij}))^2} \le 2\tau \|f-g\|_{mn}.
\end{align*}  
Since $\|g\|_\infty  \le \tau , \|f\|_\infty \le \tau$, the claim follows.
\end{proof}

\begin{corollary} 
\label{coro:sigma complexity bound 1 mn} Let $\mathcal F$ be a function class such that
$$ \log \mathcal N (\delta,  \mathcal F  ,  \| \cdot\|_{nm} ) \le C^{\prime} nm \phi_{nm} \left( \log^{4.3}(nm)\log(\delta^{-1}) \right),$$
and suppose that $g \in \mathcal{F}$ implies that $\|g\|_\infty \le \tau$.
For  $$t\ge C'' \lambda \tau \sqrt {nm\phi_{nm} \left\{ \log^{4.3}(nm) \log(\lambda^{-1})\right\}}
$$ with $\tau \leq \frac{1}{2}$, 
it holds that 
$$ \p_{\cdot \mid x}\left(\sup_{g \in \mathcal F  \cap B_{nm}(\lambda) } H_{nm}(g) \ge t   \right)\le  \exp( -\frac{c't^2}{\tau^2\lambda^2}),$$
for some positive constants $c'$ and $C''$. 
\end{corollary}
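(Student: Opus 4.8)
The plan is to repeat, essentially verbatim, the chaining-and-peeling argument from the proof of \Cref{coro:complexity bound 1 mn}, with the Gaussian-concentration inputs of \Cref{coro:sub-gaussian deviation sigma mn} taking the place of those of \Cref{coro:sub-gaussian deviation mn}. The only structural difference is bookkeeping: conditionally on $\{x_{ij}\}$, the functional $H_{nm}(g)$ is sub-Gaussian with parameter $\tau\|g\|_{nm}$, and an increment $H_{nm}(g)-H_{nm}(f)$ is sub-Gaussian with parameter $2\tau\|g-f\|_{nm}$, so the quantity $\sigma_\epsilon$ of the earlier corollary is replaced by $\tau$ and one carries a harmless extra factor of $2$ in the increments. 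First I would fix $\lambda$, and for each $l\in\mathbf{Z}^+$ take a minimal $2^{-l}\lambda$-cover $\mathcal T_l$ of $\mathcal F\cap B_{nm}(\lambda)$ in $\|\cdot\|_{nm}$ with nearest-point map $\psi_l$; the entropy hypothesis gives $|\mathcal T_l|\le\exp\bigl(C'nm\phi_{nm}\log^{4.3}(nm)\{l\log 2+\log(\lambda^{-1})\}\bigr)$, and I would split $H_{nm}(g)=H_{nm}(\psi_0(g))+\sum_{l\ge 1}\bigl(H_{nm}(\psi_l(g))-H_{nm}(\psi_{l-1}(g))\bigr)$.

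Next, for the anchor term I would union-bound over $\mathcal T_0$ using the first inequality of \Cref{coro:sub-gaussian deviation sigma mn} (the relevant parameter is $\le\tau\lambda$), which shows $\p_{\cdot\mid x}(\sup_g H_{nm}(\psi_0(g))\ge t/2)\le\exp(-c_0 t^2/(2\tau^2\lambda^2))$ once $t\ge C''\lambda\tau\sqrt{nm\phi_{nm}\{\log^{4.3}(nm)\log(\lambda^{-1})\}}$, since the entropy term $C'nm\phi_{nm}\log^{4.3}(nm)\log(\lambda^{-1})$ is then dominated. For the telescoping sum I would pick nonnegative weights $\eta_l$ with $\sum_l\eta_l\le 1$, union-bound the event $\{H_{nm}(\psi_l(g))-H_{nm}(\psi_{l-1}(g))\ge\eta_l t/2\}$ over the at most $|\mathcal T_l|^2$ relevant pairs, and invoke the second inequality of \Cref{coro:sub-gaussian deviation sigma mn} together with the triangle bound $\|\psi_{l-1}(g)-\psi_l(g)\|_{nm}\le 3\cdot 2^{-l}\lambda\le 2^{-l+2}\lambda$, so each increment has parameter at most $2\tau\cdot 2^{-l+2}\lambda$. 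The $l$-th summand is then $\le 2\exp\bigl(2C'nm\phi_{nm}\log^{4.3}(nm)\{l\log 2+\log(\lambda^{-1})\}-c''2^{2l}\eta_l^2 t^2/(\tau^2\lambda^2)\bigr)$; taking $\eta_l^2=C_1 l 2^{-2l}$ for a small absolute constant $C_1$, I would verify (exactly as in \Cref{coro:complexity bound 1 mn}, this is where the hypothesis $\tau\le\tfrac12$ is used, together with the fact that the regime of interest is $\lambda\le\tau$) both that $\sum_l\eta_l\le 1$ and that, for $t$ above the stated threshold, the exponent is $\le -c'''l t^2/(\tau^2\lambda^2)$. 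Summing the geometric-type series over $l$ and combining with the anchor estimate yields $\p_{\cdot\mid x}\bigl(\sup_{g\in\mathcal F\cap B_{nm}(\lambda)}H_{nm}(g)\ge t\bigr)\le\exp(-c' t^2/(\tau^2\lambda^2))$, which is the claim.

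The one genuinely new point — and the thing to be careful about — is the quadratic nature of $H_{nm}$: an increment equals $\sum_{i,j}(nm_i)^{-1/2}\bigl(\psi_{l-1}^2-\psi_l^2\bigr)(x_{ij})\xi_{ij}$, and the factorization $\psi_{l-1}^2-\psi_l^2=(\psi_{l-1}-\psi_l)(\psi_{l-1}+\psi_l)$ is what forces the uniform bound $\|g\|_\infty\le\tau$ for $g\in\mathcal F$ and introduces the extra factor $2\tau$ into the increment's sub-Gaussian parameter. Since this is already packaged inside \Cref{coro:sub-gaussian deviation sigma mn}, the remainder is the same routine chaining computation as before; the main risk is simply mis-tracking constants — in particular the doubling coming from $|\mathcal T_{l-1}|\,|\mathcal T_l|\le|\mathcal T_l|^2$ and the factor $\tau^2$ in the denominator — so I would keep those explicit throughout.
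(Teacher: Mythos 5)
Your proposal is correct and matches the paper's approach: the paper's own proof of this corollary is a one-line remark that it follows by the same chaining argument as \Cref{coro:complexity bound 1 mn}, with the concentration inputs of \Cref{coro:sub-gaussian deviation sigma mn} (parameter $\tau\|g\|_{nm}$, increments controlled via the factorization $\psi_{l-1}^2-\psi_l^2=(\psi_{l-1}-\psi_l)(\psi_{l-1}+\psi_l)$) replacing those for $D_{nm}$. Your write-up simply makes explicit the bookkeeping ($\sigma_\epsilon\mapsto\tau$, the extra factor from the increments, and the role of $\tau\le\tfrac12$ in the $\eta_l$ verification) that the paper leaves implicit.
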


\begin{proof}
The proof of the above bound follows from the same argument as in \Cref{coro:complexity bound 1 mn}.
\end{proof}

\begin{corollary} 
\label{coro:sigma complexity bound 2 mn} 
Let $\mathcal F$ be a function class such that
$$ \log \mathcal N (\delta,  \mathcal F  ,  \| \cdot\|_{nm} ) \le C'nm \phi_{nm} \left( \log^{4.3}(nm)\log(\delta^{-1}) \right),$$
and suppose that $g \in \mathcal{F}$ implies that $\|g\|_\infty \le \tau$, where $\tau \leq \frac{1}{2}$.
For any $s\ge 1$, it holds that 
\begin{align*}
 &  \p_{\cdot \mid x} \left(\sup_{g \in \mathcal F    } \frac{H_{nm}(g)}{\|g\|_{nm} \sqrt {nm\phi_{nm} \left\{ \log^{4.3}(nm) \log(\|g\|_{nm}^{-1})\right\}} } \ge  C_1 \tau s \right) \\ 
 &\le C_2\exp( - c s^2nm\phi_{nm}  \log^{4.3}(nm)  )  . 
\end{align*}
\end{corollary}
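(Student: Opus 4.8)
The plan is to adapt, essentially verbatim, the peeling argument from the proof of \Cref{coro:complexity bound 2 mn}, with the Rademacher process $H_{nm}$ in place of $D_{nm}$ and \Cref{coro:sigma complexity bound 1 mn} in place of \Cref{coro:complexity bound 1 mn}.

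First I would reduce to a bounded ball. Since every $g\in\mathcal F$ satisfies $\|g\|_\infty\le\tau$, we have $\mathcal F\subset B_\infty(\tau)\subset B_{nm}(\tau)$, hence $\mathcal F\cap B_{nm}(\tau)=\mathcal F$, and it suffices to control $\sup_{g\in\mathcal F\cap B_{nm}(\tau)}$ of the normalized process. Decompose this ball into the dyadic shells $\mathcal F_l:=\mathcal F\cap\{f:2^{-l}\tau\le\|f\|_{nm}\le 2^{-l+1}\tau\}$, $l\in\mathbf Z^+$, whose union over $l$ is exactly $\mathcal F\cap B_{nm}(\tau)$.

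Second, on each shell I would apply \Cref{coro:sigma complexity bound 1 mn} with $\lambda=2^{-l+1}\tau$. Because $\tau\le\tfrac12$, we get $\lambda^{-1}=\tau^{-1}2^{l-1}\ge 2^l$, and for $s\ge1$ and a constant $C_1'\ge C''$ the threshold
$$t:=C_1'\,s\,\lambda\,\tau\,\sqrt{nm\phi_{nm}\{\log^{4.3}(nm)\log(\lambda^{-1})\}}$$
meets the hypothesis of \Cref{coro:sigma complexity bound 1 mn}, which yields
$$\p_{\cdot\mid x}\Bigl(\sup_{g\in\mathcal F_l}H_{nm}(g)\ge t\Bigr)\le\exp\bigl(-c'C_1'^{2}s^2 nm\phi_{nm}\log^{4.3}(nm)\log(\tau^{-1}2^{l-1})\bigr)\le\exp\bigl(-c'''s^2 nm\phi_{nm}\log^{4.3}(nm)\,l\bigr),$$
using $\log(\tau^{-1}2^{l-1})\ge l\log 2$. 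On $\mathcal F_l$ one has $\|g\|_{nm}^{-1}\ge\tau^{-1}2^{l-1}$ and $2^{-l+1}\tau\le 2\|g\|_{nm}$, so the event that $H_{nm}(g)/\bigl(\|g\|_{nm}\sqrt{nm\phi_{nm}\{\log^{4.3}(nm)\log(\|g\|_{nm}^{-1})\}}\bigr)$ exceeds $2C_1'\tau s$ is contained in the event $\{H_{nm}(g)\ge t\}$, so the same probability bound transfers to the normalized process on each shell.

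Finally I would set $C_1:=2C_1'$, take a union bound over $l\ge1$ (the peeling step, noting $\bigcup_{l\ge1}\mathcal F_l=\mathcal F\cap B_{nm}(\tau)=\mathcal F$), and sum the geometric series $\sum_{l\ge1}\exp(-c'''s^2 nm\phi_{nm}\log^{4.3}(nm)\,l)\le C_2\exp(-c\,s^2 nm\phi_{nm}\log^{4.3}(nm))$, which is valid since $nm\phi_{nm}\log^{4.3}(nm)$ is bounded below. The argument is essentially mechanical once \Cref{coro:sigma complexity bound 1 mn} is in hand; the only points needing care are bookkeeping the extra $\tau^2$ in the exponent of the $H_{nm}$ deviation bound (relative to the $D_{nm}$ bound) and keeping the choice of $t$ consistent across shells so that the shell radius $2^{-l+1}\tau$ matches the denominator normalization $\|g\|_{nm}$ up to the factor $2$.
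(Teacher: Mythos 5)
Your proposal is correct and mirrors the paper's own proof essentially step for step: the same reduction $\mathcal F\cap B_{nm}(\tau)=\mathcal F$, the same dyadic shells with $\lambda=2^{-l+1}\tau$ fed into \Cref{coro:sigma complexity bound 1 mn}, the same transfer to the normalized process via $2^{-l+1}\tau\le 2\|g\|_{nm}$ and $\log(\|g\|_{nm}^{-1})\ge\log(\tau^{-1}2^{l-1})\ge l\log 2$, and the same peeling/union bound summing the geometric series. No gaps.
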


\begin{proof}
Since 
$ \mathcal F \subset B_\infty \left(\tau \right)\subset B_{nm} \left(\tau\right) $, it holds that 
$$  \mathcal F \cap B_{nm}\left(\tau\right) = \mathcal F . $$
Therefore it suffices to show that  for any $s\ge 1$,
\begin{align*}
&\p_{\cdot \mid x} \left(\sup_{g \in \mathcal F \cap B_{nm}\left(\tau\right) } \frac{H_{nm}(g)}{ \left\|g\right\|_{nm}  \sqrt {nm \phi_{nm} \left\{ \log^{4.3}(nm) \log \left(\tau \left\|g\right\|_{nm}^{-1} \right)\right\}}} \ge  C_1 \tau s \right) \\
&\le C_2\exp \left( - c s^2nm\phi_{nm}  \log^{4.3} \left(n\right) \right).    
\end{align*} 
Choose $\lambda = 2^{-l+1}\tau$ in \Cref{coro:sigma complexity bound 1 mn}. For any $s\ge 1 $, and  
$$
t \ge C_1'  s2^{-l+1} \tau^2 \sqrt {nm\phi_{nm} \left\{ \log^{4.3}(nm) \log\left((2^{-l+1}\tau)^{-1}\right)\right\}}
$$
it holds that 
\begin{align*} 
& \p_{\cdot \mid x} \left(\sup_{g \in \mathcal F  \cap B_{nm}(2^{-l+1}\tau) } H_{nm}(g) \ge  C_1'  s2^{-l+1} \tau^2 \sqrt {nm\phi_{nm} \left\{ \log^{4.3}(nm)\tau^{-1}\log(\tau^{-1}2^{l-1})\right\}} \right)\\
\le &\exp\left( -c''s^2nm\phi_{nm} \left\{ \log^{4.3}(nm) \log(2^{l-1}\tau^{-1}) \right\}\right)\\
\le &\exp\left( -c''s^2nm\phi_{nm} \left\{ \log^{4.3}(nm) \log(2^{l}) \right\}\right)
\\
\le &\exp\left( -c''s^2nm\phi_{nm}   \log^{4.3}(nm)l \log\left(2\right)\right),  
\end{align*} for a positive constant $c''$.
Thus, for a positive constant $c'''$, we have 
\begin{align*} 
& \p_{\cdot \mid x} \left(\sup_{g \in \mathcal F  \cap \{  f: 2^{-l} \tau \le \|f\|_{nm}\le 2^{-l+1}\tau \} } H_{nm}(g) \ge  C_1'  s2^{-l+1}\tau^2 \sqrt {nm\phi_{nm} \left\{ \log^{4.3}(nm) \log(\tau^{-1}2^{l-1})\right\}} \right)\\
\le &\exp\left( -c'''s^2nm\phi_{nm} \log^{4.3}(nm)l\right).
\end{align*}
Note that if  
$g\in \{  f: 2^{-l}\tau \le \|f\|_{nm}\le 2^{-l+1}\tau \}$, then  $\|g\|^{-1}_{nm} \ge \tau^{-1}2^{l-1}$ and $2^{-l+1}\tau \le 2\left\|g\right\|_{nm}$.It follows that 
\begin{align*} 
& \p_{\cdot \mid x} \left(\sup_{g \in \mathcal F  \cap \{  f: 2^{-l} \tau^2\le \|f\|_{nm}\le 2^{-l+1}\tau \} } H_{nm}(g) \right. \\ 
& \quad \left. \ge  C_1'  s2^{-l+1}\tau^2 \sqrt {nm\phi_{nm} \left\{ \log^{4.3}(nm) \log(\tau^{-1}2^{l-1})\right\}} \right)\\
= & \p_{\cdot \mid x} \left(\sup_{g \in \mathcal F  \cap \{  f: 2^{-l} \tau^2\le \|f\|_{nm}\le 2^{-l+1}\tau \} } \frac{H_{nm}(g)}{2^{-l+1}\tau \sqrt {nm\phi_{nm} \left\{ \log^{4.3}(nm) \log(\tau^{-1}2^{l-1})\right\}}} \ge  C_1' \tau s \right)\\
\geq & \p_{\cdot \mid x} \left(\sup_{g \in \mathcal F  \cap \{  f: 2^{-l} \tau^2\le \|f\|_{nm}\le 2^{-l+1}\tau \} } \frac{H_{nm}(g)}{2\left\|g\right\|_{nm}  \sqrt {nm \phi_{nm} \left\{ \log^{4.3}(nm) \log(\|g\|_{nm}^{-1})\right\}}} \ge   C_1' \tau s \right).
\end{align*}
Thus,  
\begin{align*} 
& \p_{\cdot \mid x} \left(\sup_{g \in \mathcal F  \cap \{  f: 2^{-l}\tau \le \|f\|_{nm}\le  2^{-l+1}\tau  \} } \frac{H_{nm}(g)}{\left\|g\right\|_{nm} \sqrt {nm\phi_{nm} \left\{ \log^{4.3}(nm)\log(\left\| g\right\|_{nm}^{-1})\right\}} } \ge  C_1  \tau s \right)
\\
\le &\exp\left( -c''''s^2nm\phi_{nm}   \log^{4.3}(nm)l\right).
\end{align*}
Therefore,
\begin{align*}
&\p_{\cdot \mid x} \left(\sup_{g \in \mathcal F  \cap B_{nm}(\tau) } \frac{H_{nm}(g)}{\left\|g\right\|_{nm}  \sqrt {nm\phi_{nm} \left\{ \log^{4.3}(nm)\log(\|g\|_{nm}^{-1})\right\}}} \ge  C_1 \tau s    \right)
\\
= &\ \p_{\cdot \mid x} \Bigg(\exists l \in Z^{+}, \text{such that} \\
&\ \sup_{g \in \mathcal F  \cap \left\{  f: 2^{-l}\tau\le \|f\|_{nm}\le 2^{-l+1}\tau  \right\} } 
\frac{H_{nm}(g)}{\|g\|_{nm} \sqrt {nm\phi_{nm} \left\{ \log^{4.3}(nm) \log(\| g\|_{nm}^{-1})\right\}}} 
\ge  C_1 \tau s  \Bigg)
\\
\le &\sum_{l=1}^\infty \p_{\cdot \mid x}\left(\sup_{g \in \mathcal F  \cap \{  f: 2^{-l}\tau \le \|f\|_{nm}\le 2^{-l+1}\tau  \} } \frac{H_{nm}(g)}{\left\|g\right\|_{nm} \sqrt {nm\phi_{nm} \left\{ \log^{4.3}(nm) \log(\| g\|_{nm}^{-1})\right\}}} \ge  C_1 \tau s \right) 
\\
\le 
&\sum_{l=1}^\infty \exp\left( -c''''s^2nm\phi_{nm}   \log^{4.3}(nm)l\right)
\\
\le & C_2\exp( - c s^2nm\phi_{nm}  \log^{4.3}(nm)  ),
\end{align*}
where the first identity follows from a peeling argument that $\bigcup_{l=1}^{\infty}\mathcal F  \cap \{  f: 2^{-l}\tau \le \|f\|_{nm}\le 2^{-l+1}\tau  \}=\mathcal F  \cap \bigcup_{l=1}^{\infty}\{  f: 2^{-l}\tau \le \|f\|_{nm}\le 2^{-l+1}\tau  \}= \mathcal F  \cap B_{nm}(\tau)$.
\end{proof}

For \cref{coro:deviation for norms mn}, we assume $x$ is a vector of  independent random variables. The general case is given in Section~\ref{section_dependent_x}, \cref{lemma:rate in the temporal-spatial model}. 

\begin{corollary} \label{coro:deviation for norms mn}
Let $n$ be sufficiently large. For any $\eta \in (0, \frac{1}{4})$, there exists a  constant $C_\eta $  only depending on $\eta $ such that 
\begin{align*}
    \E\left( \sup_{ g \in \mathcal F } \left| \frac{\|g\|_{nm}^2  - \|g\|_\lt ^2  }{ \eta  \|g\|_\lt ^2 +  C_\eta\phi_{nm} \log^{5.3}(nm)}\right| \right) \le 12.
\end{align*}
Consequently,  there exists a constant $C'_\eta>0$ such that 
\begin{align} \label{eq:norm localization probability bound mn} 
& \p \left(\sup_{ g \in \mathcal F  } \frac{\big |\|g\|_{nm}^2  - \|g\|_\lt ^2|}{ \|g\|_\lt ^2 +  C'_\eta\phi_{nm} \log^{5.3}(nm)} \le \frac{1}{2} \right) \ge  1-24\eta . 
\end{align}
\end{corollary}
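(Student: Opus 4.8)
The plan is to prove \Cref{coro:deviation for norms mn} by a standard chaining/symmetrization argument adapted to the weighted empirical measure $\mathbb P_{nm}$, combined with a peeling (localization) argument that handles the multiplicative normalization $\eta\|g\|_{\lt}^2 + C_\eta\phi_{nm}\log^{5.3}(nm)$ in the denominator. First I would reduce the expectation bound to controlling, for each dyadic shell $\mathcal S_k=\{g\in\mathcal F: 2^{-k}\le \|g\|_{\lt}\le 2^{-k+1}\}$ (plus the shell where $\|g\|_{\lt}$ is below the approximation scale $\sqrt{\phi_{nm}}$), the quantity $\E\sup_{g\in\mathcal S_k}\bigl|\|g\|_{nm}^2-\|g\|_{\lt}^2\bigr|$. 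On each shell, the functions $g^2$ are uniformly bounded (recall $\mathcal F\subset B_\infty(\tau)$ with $\tau\le 1/2$, so $\|g^2\|_\infty\le \tau^2$ and $\|g^2\|_{\lt}\lesssim \tau\|g\|_{\lt}$), which lets me apply symmetrization to pass to the Rademacher process $H_{nm}(g)=\sum_{i,j}\frac{1}{\sqrt{nm_i}}g^2(x_{ij})\xi_{ij}$ already analyzed in \Cref{coro:sigma complexity bound 2 mn}. That corollary, together with the metric-entropy hypothesis $\log\mathcal N(\delta,\mathcal F,\|\cdot\|_{nm})\le C'nm\phi_{nm}(\log^{4.3}(nm)\log(\delta^{-1}))$, gives a tail bound for $\sup_{g\in\mathcal S_k} H_{nm}(g)/(\|g\|_{nm}\sqrt{nm\phi_{nm}\log^{4.3}(nm)\log(\|g\|_{nm}^{-1})})$, hence after integrating the tail a bound of order $\tau\|g\|_{nm}\sqrt{\phi_{nm}\log^{4.3}(nm)\log(\|g\|_{nm}^{-1})/(nm)}\cdot\sqrt{nm}=\tau\|g\|_{nm}\sqrt{\phi_{nm}\,nm\,\log^{4.3}(nm)\log(\|g\|_{nm}^{-1})}$; dividing through by $nm$ (from the $\frac1{nm}$ in the empirical norm) and using $\log(\|g\|_{nm}^{-1})\lesssim \log(nm)$ on the relevant shells yields a per-shell bound of order $\|g\|_{\lt}\sqrt{\phi_{nm}\log^{5.3}(nm)}$ once one replaces the empirical norm by the $\lt$ norm (which is itself justified by an iteration / self-bounding step, or by first establishing the bound on the event where the two norms are comparable).

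Next I would feed this per-shell estimate into the peeling argument: writing $\Delta(g):=\bigl|\|g\|_{nm}^2-\|g\|_{\lt}^2\bigr|$, on shell $\mathcal S_k$ we have $\Delta(g)\lesssim \|g\|_{\lt}\sqrt{\phi_{nm}\log^{5.3}(nm)} + (\text{lower order})$, and by the AM-GM inequality $\|g\|_{\lt}\sqrt{\phi_{nm}\log^{5.3}(nm)}\le \eta\|g\|_{\lt}^2 + \frac{1}{4\eta}\phi_{nm}\log^{5.3}(nm)$. Choosing $C_\eta$ a large enough multiple of $1/\eta$ (times the absolute constants coming from the tail integration and the number of effectively-contributing shells, which is $O(\log(nm))$ and gets absorbed into the $\log^{5.3}$ budget, or one pays an extra union-bound factor that is killed by the exponent $\phi_{nm}nm\log^{4.3}(nm)\to\infty$), we obtain $\Delta(g)\le \eta\|g\|_{\lt}^2+C_\eta\phi_{nm}\log^{5.3}(nm)$ uniformly, i.e. the supremum of the normalized ratio is bounded. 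The constant $12$ is just a convenient explicit bound obtained by summing the geometric tail contributions across shells; I would not optimize it. For the "consequently" part, Markov's inequality applied to the nonnegative random variable $\sup_{g\in\mathcal F}\bigl|\|g\|_{nm}^2-\|g\|_{\lt}^2\bigr|/(\eta\|g\|_{\lt}^2+C_\eta\phi_{nm}\log^{5.3}(nm))$ gives that it is $\le \tfrac12$ with probability $\ge 1-24\eta$; then on that event, since $\eta<1/4<1/2$, one has $\eta\|g\|_{\lt}^2\le \tfrac12\|g\|_{\lt}^2$ and absorbing constants one gets \eqref{eq:norm localization probability bound mn} with $C'_\eta$ a suitable multiple of $C_\eta$.

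The main obstacle I anticipate is the self-referential nature of the normalization: the tail bound from \Cref{coro:sigma complexity bound 2 mn} is stated in terms of $\|g\|_{nm}$ (the empirical norm), whereas the target bound is phrased in terms of $\|g\|_{\lt}$. One must either (i) run the peeling on empirical shells $\{2^{-l}\tau\le\|g\|_{nm}\le 2^{-l+1}\tau\}$ and then translate, using the very conclusion being proved in a bootstrapping loop (first get a crude two-sided comparison $\tfrac12\|g\|_{\lt}^2-c\phi_{nm}\log^{5.3}(nm)\le \|g\|_{nm}^2\le 2\|g\|_{\lt}^2+c\phi_{nm}\log^{5.3}(nm)$, then refine), or (ii) argue directly on $\lt$-shells and control the entropy in $\|\cdot\|_{nm}$ by noting it dominates a fixed fraction of the $\lt$-entropy on the high-probability comparison event. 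Option (i) is cleaner and is the route I would take: establish a rough comparison first, which is enough to replace $\|g\|_{nm}$ by $\|g\|_{\lt}$ (up to constants and the additive $\phi_{nm}\log^{5.3}(nm)$ term) inside the square-root and in the shell definitions, then plug back to get the sharp statement. A secondary technical point is ensuring the exponent $s^2 nm\phi_{nm}\log^{4.3}(nm)$ in the tail from \Cref{coro:sigma complexity bound 2 mn} is large enough that $\sum_l \exp(-cs^2nm\phi_{nm}\log^{4.3}(nm)\,l)$ converges and the integrated tail $\int_1^\infty \p(\cdots\ge s)\,ds$ is $O(1)$ — this is exactly why the hypothesis forces $nm\phi_{nm}\log^{4.3}(nm)\gtrsim 1$ (in fact $\to\infty$), and I would state that observation explicitly, since it is what makes the whole argument close.
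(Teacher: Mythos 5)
Your overall toolbox matches the paper's: symmetrize to the Rademacher process, invoke \Cref{coro:sigma complexity bound 2 mn} (whose proof already contains the peeling over empirical shells), integrate the tail using the fact that $nm\phi_{nm}\log^{4.3}(nm)\to\infty$, absorb the cross term via an AM--GM-type step, and finish with Markov. The place where your plan diverges — and where it has a genuine gap — is the conversion between the empirical-norm normalization delivered by \Cref{coro:sigma complexity bound 2 mn} and the $\mathcal L_2$-norm normalization appearing in the statement. The paper does \emph{not} peel over $\mathcal L_2$-shells and does \emph{not} prove a preliminary "crude comparison." Instead, after symmetrization it multiplies and divides by the empirical-norm denominator $\eta\|g\|_{nm}^2+C_\eta\phi_{nm}\log^{5.3}(nm)$, shows (conditionally on the design, splitting at $\|g\|_{nm}\le (nm)^{-1}$ and integrating the tail of \Cref{coro:sigma complexity bound 2 mn}) that the Rademacher ratio with that denominator has conditional expectation at most $3/2$, and then bounds the leftover factor $\E_x\sup_g\bigl(\eta\|g\|_{nm}^2+C_\eta\phi_{nm}\log^{5.3}(nm)\bigr)/\bigl(\eta\|g\|_{\lt}^2+C_\eta\phi_{nm}\log^{5.3}(nm)\bigr)$ by $1+\eta T$, where $T$ is exactly the expectation being bounded. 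This yields the self-bounding inequality $T\le 3+3\eta T$, hence $T\le 3/(1-3\eta)\le 12$ for $\eta<1/4$; the constant $12$ and the restriction $\eta<1/4$ come from solving this fixed-point inequality, not from summing geometric shell contributions as you suggest.

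Your proposed route (i) — "first get a crude two-sided comparison $\tfrac12\|g\|_{\lt}^2-c\phi_{nm}\log^{5.3}(nm)\le\|g\|_{nm}^2\le 2\|g\|_{\lt}^2+c\phi_{nm}\log^{5.3}(nm)$, then refine" — is circular as stated: that crude comparison \emph{is} the high-probability conclusion \eqref{eq:norm localization probability bound mn} of the corollary (up to constants), and any natural proof of it runs into the same empirical-vs-population normalization problem you are trying to resolve. To close your argument you would need either to supply an independent derivation of the crude comparison (e.g.\ via Talagrand-type concentration plus a separate localization) or, more simply, to replace the bootstrap by the paper's ratio-swap plus fixed-point step, which is the single missing idea. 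The remaining pieces of your sketch (using $\mathcal F\subset B_\infty(\tau)$ so that $g^2$ is controlled by $\tau\|g\|$, restricting to $\|g\|_{nm}\ge(nm)^{-1}$ so $\log(\|g\|_{nm}^{-1})\lesssim\log(nm)$, and the final Markov step giving $1-24\eta$ with $C_\eta'=C_\eta/\eta$) are consistent with what the paper does.
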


\begin{proof}
 By the symmetrization argument,
 \begin{align*}
 &\E\left( \sup_{ g \in \mathcal F  }\big |  \frac{\|g\|_{nm}^2  - \|g\|_\lt ^2 }{ \eta \|g\|_\lt ^2 + C_\eta\phi_{nm} \log^{5.3}(nm)} \big | \right)
 \\ 
 = & \E\left( \sup_{ g \in \mathcal F  }\big |  \frac{\sum_{i=1}^{n}\sum_{j=1}^{m_i}\frac{1}{nm_i} g^2(x_{ij})  - \E(g^2(x_{ij})) }{ \eta \|g\|_\lt ^2 + C_\eta\phi_{nm} \log^{5.3}(nm)} \big | \right)
 \\ 
 = & \E\left( \sup_{ g \in \mathcal F  }\big |  \frac{\sum_{i=1}^{n}\sum_{j=1}^{m_i}\frac{1}{nm_i} {\E}_{x' \mid x}(g^2(x_{ij})  - g^2(x'_{ij}) ) }{ \eta \|g\|_\lt ^2 + C_\eta\phi_{nm} \log^{5.3}(nm)} \big | \right)
 \\ 
 \le & \E \left[ {\E}_{x' \mid x}\left( \sup_{ g \in \mathcal F  }\big |  \frac{\sum_{i=1}^{n}\sum_{j=1}^{m_i}\frac{1}{nm_i} (g^2(x_{ij})  - g^2(x'_{ij})) }{ \eta \|g\|_\lt ^2 + C_\eta\phi_{nm} \log^{5.3}(nm)} \big | \right)\right]
 \\ 
 = & \E \left( \sup_{ g \in \mathcal F  }\big |  \frac{\sum_{i=1}^{n}\sum_{j=1}^{m_i}\frac{1}{nm_i} (g^2(x_{ij})  - g^2(x'_{ij})) }{ \eta \|g\|_\lt ^2 + C_\eta\phi_{nm} \log^{5.3}(nm)} \big |\right)
 \\ 
 = & \E \left( \sup_{ g \in \mathcal F  }\big |  \frac{\sum_{i=1}^{n}\sum_{j=1}^{m_i}\frac{1}{nm_i} \xi_{ij}(g^2(x_{ij})  - g^2(x'_{ij})) }{ \eta \|g\|_\lt ^2 + C_\eta\phi_{nm} \log^{5.3}(nm)} \big |\right)
 \\ 
 \leq & \E \left( \sup_{ g \in \mathcal F  }\big |  \frac{\sum_{i=1}^{n}\sum_{j=1}^{m_i}\frac{1}{nm_i} \xi_{ij} g^2(x_{ij}) }{ \eta \|g\|_\lt ^2 + C_\eta\phi_{nm} \log^{5.3}(nm)} \big |\right)
 + \E \left( \sup_{ g \in \mathcal F  }\big |  \frac{\sum_{i=1}^{n}\sum_{j=1}^{m_i}\frac{1}{nm_i} \xi_{ij} g^2(x'_{ij}) }{ \eta \|g\|_\lt ^2 + C_\eta\phi_{nm} \log^{5.3}(nm)} \big |\right)
 \\ 
 = & 2 \E \left( \sup_{ g \in \mathcal F  }\big |  \frac{\sum_{i=1}^{n}\sum_{j=1}^{m_i}\frac{1}{nm_i} \xi_{ij} g^2(x_{ij}) }{ \eta \|g\|_\lt ^2 + C_\eta\phi_{nm} \log^{5.3}(nm)} \big |\right),
  \end{align*}
where we use ${\E}_{x' \mid x}$ to indicate ${\E}_{\{x'_{ij}\}_{i=1, j=1}^{n, m_i} \mid \{x_{ij}\}_{i=1, j=1}^{n, m_i}}$, 
 where $\left\{x'_{ij}\right\}_{i=1, j=1}^{n, m_i}$ are ghost samples of $\left\{x_{ij}\right\}_{i=1, j=1}^{n, m_i}$, which means that $\left\{x_{ij}\right\}$ and $\left\{x'_{ij}\right\}$ are i.i.d drawn from the same distribution as $\left\{x'_{ij}\right\}_{i=1, j=1}^{n, m_i}$ and $\left\{x_{ij}\right\}_{i=1, j=1}^{n, m_i}$ are independent of $\left\{x'_{ij}\right\}_{i=1, j=1}^{n, m_i}$. In addition, we use $x_{ij}$ to denote a random variable that follows the same distribution as $x_{ij}$. $\left\{\xi_{ij}\right\}_{i=1, j=1}^{n, m_i}$ are i.i.d. Rademacher variables. The fourth identity follows from the fact that $\xi_{ij} (g(x_{ij})-g(x'_{ij}))$ and $g(x_{ij})-g(x'_{ij})$ have the same distribution.

We use ${\E}_{x}$ to indicate ${\E}_{\{x_{ij}\}_{i=1, j=1}^{n, m_i}}$, and use ${\E}_{\xi \mid x}$ to indicate ${\E}_{\{\xi_{ij}\}_{i=1, j=1}^{n, m_i} \mid \{x_{ij}\}_{i=1, j=1}^{n, m_i}}$.

Therefore, 
    \begin{align*}
    &\E\left( \sup_{ g \in \mathcal F  }\left|  \frac{\|g\|_{nm}^2  - \|g\|_\lt ^2 }{ \eta \|g\|_\lt ^2 + C_\eta\phi_{nm} \log^{5.3}(nm)} \right| \right)\\ 
    \le & 2  \E\left( \sup_{ g \in \mathcal F  } \left| \frac{\sum_{i=1}^{n}\sum_{j=1}^{m_i}\frac{1}{nm_i} g^2(x_{ij}) \xi_{ij}    }{ \eta \|g\|_\lt ^2 +  C_\eta\phi_{nm} \log^{5.3}(nm)} \right| \right)\\
     = &  2  {\E}_{x}\left[ {\E}_{\xi \mid x}\left(\sup_{ g \in \mathcal F  } \left| \frac{ \sum_{i=1}^{n}\sum_{j=1}^{m_i}\frac{1}{nm_i} g^2(x_{ij}) \xi_{ij} }{ \eta \|g\|_\lt ^2 + C_\eta\phi_{nm} \log^{5.3}(nm)} \right| \right)\right] \\
     = & 2 {\E}_{x}\left[ {\E}_{\xi \mid x}\left(\sup_{ g \in \mathcal F } \left| \frac{ \sum_{i=1}^{n}\sum_{j=1}^{m_i}\frac{1}{nm_i} g^2(x_{ij}) \xi_{ij} }{ \eta \|g\|_{nm} ^2 + C_\eta\phi_{nm} \log^{5.3}(nm)} \right| \cdot \frac{\eta \|g\|_{nm} ^2 + C_\eta\phi_{nm} \log^{5.3}(nm) }{\eta \|g\|_\lt  ^2 + C_\eta\phi_{nm} \log^{5.3}(nm)}\right)\right] \\
     \le & 2{\E}_{x}\left[{\E}_{\xi \mid x}\left(\sup_{ g \in \mathcal F  } \left| \frac{\sum_{i=1}^{n}\sum_{j=1}^{m_i}\frac{1}{nm_i} g^2(x_{ij}) \xi_{ij}  }{ \eta \|g\|_{nm} ^2 + C_\eta\phi_{nm} \log^{5.3}(nm)} \right| \cdot \sup_{ g \in \mathcal F }  \frac{\eta \|g\|_{nm} ^2 + C_\eta\phi_{nm} \log^{5.3}(nm) }{\eta \|g\|_\lt  ^2 + C_\eta\phi_{nm} \log^{5.3}(nm) } \right)\right]\\
     \le & 2{\E}_{x}\left[\sup_{ g \in \mathcal F }  \frac{\eta \|g\|_{nm} ^2 + C_\eta\phi_{nm} \log^{5.3}(nm) }{\eta \|g\|_\lt  ^2 + C_\eta\phi_{nm} \log^{5.3}(nm) } {\E}_{\xi \mid x}\left(\sup_{ g \in \mathcal F  } \left| \frac{ 
     \sum_{i=1}^{n}\sum_{j=1}^{m_i}\frac{1}{nm_i} g^2(x_{ij}) \xi_{ij}    }{ \eta \|g\|_{nm} ^2 + C_\eta\phi_{nm} \log^{5.3}(nm)}\right|  \right)\right].
\end{align*}
\
\\
{\bf Step 1.} 
Denote 
$$ B_{nm}(a, b) =\{ g: a\le \|g\|_{nm} \le b \}.$$
Observe that 
\begin{align*}
     &{\E}_{\xi \mid x} \left(\sup_{ g \in \mathcal F } \left| \frac{ 
     \sum_{i=1}^{n}\sum_{j=1}^{m_i}\frac{1}{nm_i} g^2(x_{ij}) \xi_{ij} }{ \eta \|g\|_{nm} ^2 + C_\eta\phi_{nm} \log^{5.3}(nm)} \right|  \right)
     \\
     \le & {\E}_{\xi \mid x} \left(\sup_{ g \in \mathcal F \cap B_{nm}\left( (nm)^{-1}\right)} \left| \frac{ 
     \sum_{i=1}^{n}\sum_{j=1}^{m_i}\frac{1}{nm_i} g^2(x_{ij}) \xi_{ij} }{ \eta \|g\|_{nm} ^2 + C_\eta\phi_{nm} \log^{5.3}(nm)} \right|  \right) \\ 
     &+ 
     {\E}_{\xi \mid x} \left(\sup_{ g \in \mathcal F \cap B_{nm}((nm)^{-1},  \infty )   } \big  | \frac{ 
     \sum_{i=1}^{n}\sum_{j=1}^{m_i}\frac{1}{nm_i} g^2(x_{ij}) \xi_{ij}    }{ \eta \|g\|_{nm} ^2 + C_\eta\phi_{nm} \log^{5.3}(nm)}\big |  \right).
\end{align*}
Since $\sum_{i=1}^{n}\sum_{j=1}^{m_i}\frac{1}{nm_i} g^2(x_{ij}) \xi_{ij} \le \|g\|_{nm}^2$,
\begin{align*}
&{\E}_{\xi \mid x} \left(\sup_{ g \in \mathcal F \cap B_{nm}( (nm)^{-1})} \left| \frac{ \sum_{i=1}^{n}\sum_{j=1}^{m_i}\frac{1}{nm_i} g^2(x_{ij}) \xi_{ij} }{ \eta \|g\|_{nm} ^2 + C_\eta\phi_{nm} \log^{5.3}(nm)} \right|  \right) 
\\ \le &{\E}_{\xi \mid x} \left(\sup_{ g \in \mathcal F \cap B_{nm}((nm)^{-1})} \left| \frac{ 
      \frac{1}{(nm)^2}  }{ \eta \|g\|_{nm} ^2 + C_\eta\phi_{nm} \log^{5.3}(nm)}\right|  \right) \\ 
\le  & \frac{1}{C_\eta (nm)^2\phi_{nm}\log^{5.3}(nm)}.
\end{align*}
In addition, 
\begin{align*}
    &{\E}_{\xi \mid x} \left(\sup_{ g \in \mathcal F \cap B_{nm} \left((nm)^{-1}, \infty \right)   } \left| \frac{ 
     \sum_{i=1}^{n}\sum_{j=1}^{m_i}\frac{1}{nm_i} g^2(x_{ij}) \xi_{ij} }{ \eta \|g\|_{nm} ^2 + \tau C_\eta\phi_{nm} \log^{5.3}(nm)} \right|  \right)
     \\
     \le &
     {\E}_{\xi \mid x} \left(\sup_{ g \in \mathcal F \cap B_{nm}((nm)^{-1}, \infty )   } \left| \frac{ 
     \sum_{i=1}^{n}\sum_{j=1}^{m_i}\frac{1}{nm_i} g^2(x_{ij}) \xi_{ij}    }{ \eta \|g\|_{nm} ^2 + \tau C_\eta \phi_{nm} \{ \log^{4.3}(nm)\log(\|g\|_{nm}^{-1} ) \}} \right|  \right)
     \\ 
     \le &{\E}_{\xi \mid x} \left(\sup_{ g \in \mathcal F} \left| \frac{ 
     \sum_{i=1}^{n}\sum_{j=1}^{m_i}\frac{1}{nm_i} g^2(x_{ij}) \xi_{ij} }{ \eta \|g\|_{nm} ^2 + C_\eta \phi_{nm} \{ \log^{4.3}(nm) \log(\|g\|_{nm}^{-1} ) \}} \right|  \right)
     \\
     \le &\int_0^\infty  \p_{\cdot \mid x}\left( \sup_{ g \in \mathcal F     } \left| \frac{ 
     \sum_{i=1}^{n}\sum_{j=1}^{m_i}\frac{1}{nm_i} g^2(x_{ij}) \xi_{ij}    }{ \eta \|g\|_{nm} ^2 + C_\eta\phi_{nm} \{ \log^{4.3}(nm) \log(\|g\|_{nm}^{-1} ) \}} \right| \ge s\right) ds
     \\
     \le &\int_0^1  1 ds 
     +  \int_1^\infty  \p_{\cdot \mid x}\left(   \sup_{ g \in \mathcal F     }  \left|  \frac{ 
     \sum_{i=1}^{n}\sum_{j=1}^{m_i}\frac{1}{nm_i} g^2(x_{ij}) \xi_{ij}    }{ \eta \|g\|_{nm} ^2 + C_\eta \phi_{nm} \{ \log^{4.3}(nm) \log(\|g\|_{nm}^{-1} ) \}} \right|  \ge s\right) ds,
\end{align*}
where the first inequality follows from the fact that $g \in B_{nm}((nm)^{-1}, \infty )$ so that $\log(\|g\|_{nm}^{-1} ) \le \log(nm)$.

Note that 
\begin{align*}
 &\int_1^\infty  \p_{\cdot \mid x}\left(   \sup_{ g \in \mathcal F    }  \left|  \frac{ 
     \sum_{i=1}^{n}\sum_{j=1}^{m_i}\frac{1}{nm_i} g^2(x_{ij}) \xi_{ij}    }{ \eta \|g\|_{nm} ^2 
     + C_\eta \phi_{nm} \{ \log^{4.3}(nm)\log(\|g\|_{nm}^{-1} ) \}} \right|  \ge s\right) ds  
     \\
     \le & \int_1^\infty  \p_{\cdot \mid x}\left(   \sup_{ g \in \mathcal F    }  \left|\frac{ 
     \sum_{i=1}^{n}\sum_{j=1}^{m_i}\frac{1}{nm_i} g^2(x_{ij}) \xi_{ij} }{\|g\|_{nm} \sqrt { C_\eta \phi_{nm} \{ \log^{4.3}(nm)\log(\|g\|_{nm}^{-1} ) \}}} \right|  \ge s\right) ds  \\
     \le & \int_1^\infty  \p_{\cdot \mid x}\left(   \sup_{ g \in \mathcal F    }  \frac{ 
     \sum_{i=1}^{n}\sum_{j=1}^{m_i}\frac{1}{nm_i} g^2(x_{ij}) \xi_{ij}    }{\|g\|_{nm} \sqrt { C_\eta \phi_{nm} \{ \log^{4.3}(nm)\log(\|g\|_{nm}^{-1} ) \}}}  \ge s\right) ds 
     \\
     + &\int_1^\infty  \p_{\cdot \mid x}\left(   \sup_{ g \in \mathcal F } \frac{ -
\sum_{i=1}^{n}\sum_{j=1}^{m_i}\frac{1}{nm_i} g^2(x_{ij}) \xi_{ij}    }{\|g\|_{nm} \sqrt { C_\eta \phi_{nm} \{ \log^{4.3}(nm)\log(\|g\|_{nm}^{-1} ) \}}}  \ge s\right) ds  
     \\
     \le & 2C_2 \int_1^\infty  \exp(-cs^2 nm\phi_{nm} \log^{4.3}(nm))ds 
     \\
     \le & 2C_2 \frac{1}{\sqrt {cnm \phi_{nm} \log^{4.3}(nm) }} \int_0^\infty  \exp(-u^2) du
     = \frac{\sqrt \pi C_2 }{ \sqrt { cnm \phi_{nm} \log^{4.3}(nm) }  },
\end{align*}
where the third inequality follows from  \Cref{coro:sigma complexity bound 2 mn} for sufficiently large positive constant $C_\eta$ and the fact that $\{-\xi_{ij}\}_{i=1}^n $ are i.i.d. Rademacher random variables.
So for sufficiently large $n$,
\begin{align*}
&{\E}_{\xi \mid x} \left(\sup_{ g \in \mathcal F  } \left| \frac{ 
     \sum_{i=1}^{n}\sum_{j=1}^{m_i}\frac{1}{nm_i} g^2(x_{ij}) \xi_{ij}    }{ \eta \left\|g\right\|_{nm} ^2 +C_\eta\phi_{nm} \log^{5.3}(nm)}\right|  \right)\\
& \le {\E}_{\xi \mid x} \left(\sup_{ g \in \mathcal F\cap B_{nm} \left( (nm)^{-1}\right) } \left| \frac{ \sum_{i=1}^{n}\sum_{j=1}^{m_i}\frac{1}{nm_i} g^2(x_{ij}) \xi_{ij}    }{ \eta \left\|g\right\|_{nm}^2 +C_\eta\phi_{nm} \log^{5.3}(nm)}\right|  \right)
+ \\
&{\E}_{\xi \mid x} \left(\sup_{ g \in \mathcal F\cap B_{nm}\left((nm)^{-1}, \infty\right)  } \left| \frac{ 
     \sum_{i=1}^{n}\sum_{j=1}^{m_i}\frac{1}{nm_i} g^2(x_{ij}) \xi_{ij} }{ \eta \left\|g\right\|_{nm} ^2 +C_\eta\phi_{nm} \log^{5.3}(nm)} \right|  \right)\\
& \le  \frac{1}{C_\eta (nm)^2\phi_{nm}\log^{5.3}(nm)}+1+ \frac{\sqrt \pi C_2 }{ \sqrt {cnm \phi_{nm} \log^{4.3}(nm) }  }\le\frac{3}{2} .
\end{align*}
This implies that 
\begin{align*}
&\E \left( \sup_{ g \in \mathcal F  } \left|  \frac{ \left\|g\right\|_{nm}^2  - \left\|g\right\|_\lt ^2 }{ \eta \left\|g\right\|_\lt ^2 + C_\eta\phi_{nm} \log^{5.3}(nm)} \right| \right) \\
\leq & 2 \mathbb{E}_x\left[\sup _{g \in \mathcal{F}} \frac{\eta \left\|g\right\|_{nm}^2+ C_\eta \phi_{nm} \log^{5.3}(nm)}{\eta \left\|g\right\|_{\mathcal{L}_2}^2+ C_\eta \phi_{nm} \log^{5.3}(nm)} \mathbb{E}_{\xi \mid x} \left(\sup_{g \in \mathcal{F}}\left|\frac{\frac{1}{n} \sum_{i=1}^n g^2\left(x_{ij}\right) \xi_{ij}}{\eta \left\|g\right\|_{nm}^2+ C_\eta \phi_{nm} \log^{5.3}(nm)}\right|\right)\right]\\
\leq & 2 \mathbb{E}_x\left(\sup _{g \in \mathcal{F}} \frac{\eta \left\|g\right\|_{nm}^2+ C_\eta \phi_{nm} \log ^5(nm)}{\eta \left\|g\right\|_{\mathcal{L}_2}^2+ C_\eta \phi_{nm} \log ^5(nm)} \cdot \frac{3}{2}\right)\\
\le &3{\E}_{x} \left(\sup_{ g \in \mathcal F  }  \frac{\eta \|g\|_{nm} ^2 + \tau C_\eta\phi_{nm} \log^{5.3}(nm) }{\eta \|g\|_\lt  ^2 +C_\eta\phi_{nm} \log^{5.3}(nm) } \right).
\end{align*}
\
\\
{\bf Step 3.} 
Note that 
\begin{align}\label{eq:rademacher middle steps mn}  \nonumber 
    & {\E}_{x}\left(\sup_{ g \in \mathcal F  }  \frac{\eta \left\|g\right\|_{nm} ^2 +  C_\eta\phi_{nm} \log^{5.3}(nm) }{\eta \left\|g\right\|_\lt  ^2 + C_\eta\phi_{nm} \log^{5.3}(nm) }      \right)
    \\ =&   {\E}_{x}\left(\sup_{ g \in \mathcal F  }     \frac{\eta \left\|g\right\|_{nm} ^2 + C_\eta\phi_{nm} \log^{5.3}(nm) }{\eta \left\|g\right\|_\lt  ^2 + C_\eta\phi_{nm} \log^{5.3}(nm) }-1 \right) + 1   \nonumber
    \\
     \le  &  {\E}_{x}\left(\sup_{ g \in \mathcal F  }  \frac{ \left| \eta \left\|g\right\|_{nm} ^2   - \eta \left\|g\right\|_\lt^2 \right| }{\eta \left\|g\right\|_\lt  ^2 + C_\eta\phi_{nm} \log^{5.3}(nm) }       \right) +1   \nonumber
    \\ = &  \eta{\E}_{x} \left(\sup_{ g \in \mathcal F  } \left|  \frac{ \left\|g\right\|_{nm} ^2 - \left\|g\right\|_\lt^2   }{\eta \left\|g\right\|_\lt  ^2 + C_\eta\phi_{nm} \log^{5.3}(nm) } \right| \right) + 1 .
\end{align}
Thus,  
\begin{align*}
&\E \left( \sup_{ g \in \mathcal F  } \left|  \frac{\|g\|_{nm}^2  - \left\|g\right\|_\lt ^2 }{ \eta \left\|g\right\|_\lt ^2 + C_\eta\phi_{nm} \log^{5.3}(nm)} \right| \right)\\ 
\le & 3 \mathbb{E}_x\left(\sup _{g \in \mathcal{F}} \frac{\eta \left\|g\right\|_{nm}^2+ C_\eta \phi_{nm} \log ^5(nm)}{\eta \|g \|_{\mathcal{L}_2}^2+ C_\eta \phi_{nm} \log ^5(nm)}\right)\\
\le & 3 \eta{\E}_{x} \left(\sup_{ g \in \mathcal F  } \left|  \frac{ \left\|g\right\|_{nm} ^2 - \left\|g\right\|_\lt^2   }{\eta \left\|g\right\|_\lt^2 + C_\eta\phi_{nm} \log^{5.3}(nm) }  \right| \right) + 3 ,\\
\text{i.e. } & {\E}_{x} \left(\sup_{ g \in \mathcal F  } \left|  \frac{  \left\|g\right\|_{nm} ^2 - \left\|g\right\|_\lt^2   }{\eta \left\|g\right\|_\lt  ^2 + C_\eta\phi_{nm} \log^{5.3}(nm) } \right| \right) \le \frac{3}{1-3\eta}
\end{align*} for any $ 0\le \eta<1/3$.
Since $ 0\le \eta<1/4$, this implies that 
\begin{align*}
    &\E \left( \sup_{ g \in \mathcal F  } \left|  \frac{ \left\|g\right\|_{nm}^2  - \left\|g\right\|_\lt ^2 }{ \eta \left\|g\right\|_\lt ^2 + C_\eta\phi_{nm} \log^{5.3}(nm)} \right| \right) 
     \le 12.
\end{align*}
    \
    \\
    {\bf Step 3.}
   By Markov's inequality,
    \begin{align*}
    &\p \left( \sup_{ g \in \mathcal F  } \left| \frac{ \left\|g\right\|_{nm}^2  - \|g\|_\lt ^2  }{  \left\|g\right\|_\lt ^2 + C_\eta\phi_{nm} \log^{5.3}(nm)} \right| \ge \frac{1}{2} \right) 
     \\ & \le \p \left( \sup_{ g \in \mathcal F  } \left| \frac{ \left\|g\right\|_{nm}^2  - \left\|g\right\|_\lt ^2  }{ \eta \left\|g\right\|_\lt ^2 + C_\eta\phi_{nm} \log^{5.3}(nm)} \right| \ge \frac{1}{2\eta }\right) \\
     & \le  2 \eta  \E \left( \sup_{ g \in \mathcal F  } \left| \frac{ \left\|g\right\|_{nm}^2  - \left\|g\right\|_\lt ^2  }{\eta \left\|g\right\|_\lt ^2 + C_\eta\phi_{nm} \log^{5.3}(nm)} \right| \right)   \le 24 \eta .
\end{align*}
\end{proof}

\newpage
\subsubsection{Deviation bounds  for  $\gamma $}
\label{sec: Deviation bounds  for gamma in temporal-spatial mn}
Conditioning on $\{\gamma_i\}_{i=1}^n$,
we aim to bound 
$$
\begin{aligned} 
&\sup_{g \in \mathcal F \cap B_{nm}(\lambda)}  \sum_{i=1}^n \sum_{j=1}^{m_i}\frac{1}{nm_i } \left\{ g(x_{ij}) \gamma_i(x_{ij}) - \left\langle g, \gamma_i \right\rangle_{\lt} \right\}. 	
\end{aligned}
$$

\
\\


Suppose $\left\{\xi_{ij}\right\}_{i=1, j=1}^{n, m_i}$ are i.i.d. Rademacher random variables and denote 
$$  
\begin{aligned}
J_{nm}(g) 
& =  \sum_{i=1}^n \sum_{j=1}^{m_i}\frac{1}{\sqrt{nm_i}}  g  \left(x_{ij}\right) \gamma_i \left(x_{ij}\right) \xi_{ij} . 
\end{aligned}
$$ 
In \Cref{coro:sub-gaussian deviation gamma mn} and \Cref{coro:gamma complexity bound mn}, the  analysis is  conditioning on  $ \left\{x_{ij}\right\}_{i=1, j=1}^{n, m_i}$, as well as conditioning on $\left\{\gamma_i\right\}_{i=1}^n$ as mentioned above. 
We denote the conditional probability conditioning on $\left\{x_{ij}\right\}_{i=1, j=1}^{n, m_i}$ by $\p_{\cdot \mid \left\{x_{ij}\right\}_{i=1,j=1}^{n,m_i}}$, and conditioning on $\left\{x_{ij}\right\}_{i=1, j=1}^{n, m_i}$ and $\left\{\gamma_i\right\}_{i=1}^n$ by $\p_{\cdot \mid \left\{x_{ij}\right\}_{i=1,j=1}^{n,m_i}, \left\{\gamma_i\right\}_{i=1}^n}$, and we use $\p_{\cdot \mid x}$ and $\p_{\cdot \mid x, \gamma}$ as their shorthand notations. 

\begin{corollary} 
\label{coro:sub-gaussian deviation gamma mn}
Let  $g,f  \in \lt$ be two deterministic functions, 
furthermore, let $\|\gamma\|_\infty  \le \tau$.    Then conditioning on $ \{x_{ij}\}_{i=1, j=1}^{n, m_i}$, and $\left\{\gamma_i\right\}_{i=1}^n$, there exists a constant $c>0$ such that  
\begin{align*} 
& \p_{\cdot \mid x, \gamma} \left( J_{nm}(g)     \ge    t        \right)  \le   \exp   \left(  -      \frac{c t^2}{\tau^2\|g\|_{nm}^2 }    \right),
\\ 
&  \p_{\cdot \mid x, \gamma} \left( J_{nm}  (g)  -J_{n}  (f)     \ge t \right)  \le  \exp\left(  -     \frac{c t^2}{\tau^2\|f-g\|_{nm} ^2}    \right) .
\end{align*} 
\end{corollary}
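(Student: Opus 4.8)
\textbf{Proof proposal for Corollary \ref{coro:sub-gaussian deviation gamma mn}.}

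The plan is to condition on both $\{x_{ij}\}_{i=1,j=1}^{n,m_i}$ and $\{\gamma_i\}_{i=1}^n$, so that the only remaining randomness comes from the i.i.d. Rademacher variables $\{\xi_{ij}\}_{i=1,j=1}^{n,m_i}$, and then apply a Hoeffding-type sub-Gaussian bound exactly as in the proofs of Corollary \ref{coro:sub-gaussian deviation mn} and Corollary \ref{coro:sub-gaussian deviation sigma mn}. First I would observe that, conditionally on $x$ and $\gamma$, each summand $\frac{1}{\sqrt{nm_i}}\, g(x_{ij})\,\gamma_i(x_{ij})\,\xi_{ij}$ is a bounded centered random variable, hence sub-Gaussian with parameter $\frac{1}{\sqrt{nm_i}}\,|g(x_{ij})|\,|\gamma_i(x_{ij})|$. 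Since the $\xi_{ij}$ are independent, $J_{nm}(g)$ is a sum of independent centered sub-Gaussian variables, and therefore sub-Gaussian with parameter
\[
\Bigg(\sum_{i=1}^n\sum_{j=1}^{m_i}\frac{1}{nm_i}\,g^2(x_{ij})\,\gamma_i^2(x_{ij})\Bigg)^{1/2}
\le \tau\Bigg(\sum_{i=1}^n\sum_{j=1}^{m_i}\frac{1}{nm_i}\,g^2(x_{ij})\Bigg)^{1/2}
= \tau\,\|g\|_{nm},
\]
using the hypothesis $\|\gamma\|_\infty\le\tau$ (interpreting $\|\gamma\|_\infty = \max_i\|\gamma_i\|_\infty$, or applying the bound coordinatewise). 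The standard sub-Gaussian tail bound then gives $\p_{\cdot\mid x,\gamma}(J_{nm}(g)\ge t)\le \exp(-ct^2/(\tau^2\|g\|_{nm}^2))$ for an absolute constant $c>0$.

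For the second inequality I would use the linearity of $J_{nm}$ in its functional argument: conditionally on $x$ and $\gamma$,
\[
J_{nm}(g) - J_{nm}(f) = \sum_{i=1}^n\sum_{j=1}^{m_i}\frac{1}{\sqrt{nm_i}}\,\bigl(g(x_{ij})-f(x_{ij})\bigr)\,\gamma_i(x_{ij})\,\xi_{ij},
\]
which is again a sum of independent centered sub-Gaussian variables, now with aggregate parameter
\[
\Bigg(\sum_{i=1}^n\sum_{j=1}^{m_i}\frac{1}{nm_i}\,\bigl(g(x_{ij})-f(x_{ij})\bigr)^2\gamma_i^2(x_{ij})\Bigg)^{1/2}
\le \tau\,\|f-g\|_{nm}.
\]
Applying the same sub-Gaussian tail estimate yields $\p_{\cdot\mid x,\gamma}(J_{nm}(g)-J_{nm}(f)\ge t)\le\exp(-ct^2/(\tau^2\|f-g\|_{nm}^2))$.

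This corollary is essentially a routine adaptation of the earlier sub-Gaussian deviation lemmas, so there is no serious obstacle; the only point that needs care is the bookkeeping of \emph{which} variables are held fixed. In particular, one must condition on $\{\gamma_i\}_{i=1}^n$ (not just $x$) before treating $\gamma_i(x_{ij})$ as a deterministic multiplier, and one must invoke the mutual independence/identical-distribution structure of the Rademacher sequence given $(x,\gamma)$ — which is immediate since the $\xi_{ij}$ are introduced as an auxiliary independent symmetrization sequence. The bound $\|\gamma\|_\infty\le\tau$ is used only to pull $\gamma_i^2(x_{ij})$ out of the variance-proxy sum; everything else is the classical Hoeffding bound for weighted Rademacher sums, identical in structure to Corollary \ref{coro:sub-gaussian deviation sigma mn}.
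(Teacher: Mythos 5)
Your proposal matches the paper's own argument: conditionally on $x$ and $\gamma$, each term $g(x_{ij})\gamma_i(x_{ij})\xi_{ij}$ is sub-Gaussian with parameter $|g(x_{ij})\gamma_i(x_{ij})|$, so $J_{nm}(g)$ is sub-Gaussian with parameter at most $\tau\|g\|_{nm}$, and the second bound follows from linearity of $J_{nm}$ in its functional argument, exactly as in Corollaries \ref{coro:sub-gaussian deviation mn} and \ref{coro:sub-gaussian deviation sigma mn}. The proof is correct and essentially identical to the paper's.
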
 
\begin{proof}
For the first bound,  it suffices to note that 
$g  (x_{ij})\gamma_i(x_{ij})  \xi_{ij}$ is sub-Gaussian with parameter $|g (x_{ij}) \gamma_i(x_{ij})|\ $. So  $J(g)$ is sub-Gaussian with parameter 
$$  
 \sqrt{\sum_{i=1}^n \sum_{j=1}^{m_i} \frac{1}{nm_i} g^ 2 (x_{ij}) \gamma_i^ 2 (x_{ij})} \le \tau\|g\|_{nm}. 
$$
The second bound follows from the fact that $J(g)$ is linear in $g$. The second part of this lemma follows by the same argument.
\end{proof}

\begin{corollary} 
\label{coro:gamma complexity bound mn} Let $\mathcal F$ be a function class such that
$$ 
\log \mathcal N \left(\delta,  \mathcal F  ,  \left\| \cdot \right\|_{nm} \right) \le nm \phi_{nm} \left( \log^{4.3}(nm)\log(\delta^{-1}) \right) 
$$
and that $\mathcal F\subset B_\infty(\tau)$.
For  
$$
t\ge C \lambda \tau \sqrt {nm\phi_{nm} \left\{ \log^{4.3}(nm) \log(\lambda^{-1})\right\}},
$$
it holds that 
\begin{align*} 
\p_{\cdot \mid x, \gamma}\left(\sup_{g \in \mathcal F  \cap B_n(\lambda) } J_{nm}(g) \ge t   \right)\le  \exp( -\frac{c't^2}{\tau^2\lambda^2}) .  
\end{align*}
In addition, for any $s\ge 1$, it holds that 
\begin{align*} 
&\p_{\cdot \mid x, \gamma}\left(\sup_{g \in \mathcal F    } \frac{J_{nm}(g)}{\|g\|_{nm}  \sqrt {nm\phi_{nm} \left\{ \log^{4.3}(nm)\log(\|g\|_{nm}^{-1})\right\}}} \ge  C_1 \tau s \right)
\\& \le  C_2\exp( - c s ^2 nm\phi_{nm} \log^{4.3}(nm)  )  . \end{align*}
\end{corollary}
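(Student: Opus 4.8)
\textbf{Proof proposal for Corollary~\ref{coro:gamma complexity bound mn}.}
The plan is to mirror closely the arguments already established for the measurement-error terms, namely \Cref{coro:complexity bound 1 mn} and \Cref{coro:complexity bound 2 mn}, since the object $J_{nm}(g) = \sum_{i,j} (nm_i)^{-1/2} g(x_{ij})\gamma_i(x_{ij})\xi_{ij}$ behaves, conditionally on $\{x_{ij}\}$ and $\{\gamma_i\}$, exactly like a sub-Gaussian chaos indexed by $g$. The only structural difference is that the effective ``noise level'' at the point $x_{ij}$ is $|g(x_{ij})\gamma_i(x_{ij})|$ rather than $|g(x_{ij})|\sigma_\epsilon$, and under $\|\gamma\|_\infty \le \tau$ we have already recorded in \Cref{coro:sub-gaussian deviation gamma mn} that $J_{nm}(g)$ is sub-Gaussian with parameter $\tau\|g\|_{nm}$, and $J_{nm}(g)-J_{nm}(f)$ is sub-Gaussian with parameter $\tau\|f-g\|_{nm}$. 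This is precisely the pair of one-point tail bounds that drive the chaining argument; the factor $\sigma_\epsilon$ in \Cref{coro:complexity bound 1 mn} is simply replaced by $\tau$ everywhere.

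The first claimed bound follows the template of \Cref{coro:complexity bound 1 mn} verbatim. I would fix $\lambda \le \tfrac14$ (this is implicit once $\mathcal F\subset B_\infty(\tau)$ and $\tau\le\tfrac12$, but one may also simply restrict attention to $B_{nm}(\lambda)$ with $\lambda$ small), construct for each $l\in\mathbf Z^+$ a $2^{-l}\lambda$-net $\mathcal T_l$ of $\mathcal F\cap B_{nm}(\lambda)$ under $\|\cdot\|_{nm}$ with $\log|\mathcal T_l| \le C' nm\phi_{nm}\log^{4.3}(nm)\{l\log 2 + \log\lambda^{-1}\}$, and split $J_{nm}(g)$ into the base term $J_{nm}(\psi_0(g))$ plus the telescoping sum $\sum_{l\ge1}\big(J_{nm}(\psi_l(g)) - J_{nm}(\psi_{l-1}(g))\big)$. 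A union bound over $\mathcal T_0$ combined with the sub-Gaussian tail (parameter $\le \tau\lambda$) handles the base term provided $t \ge C\lambda\tau\sqrt{nm\phi_{nm}\log^{4.3}(nm)\log\lambda^{-1}}$; for the telescoping sum, the increments $\psi_l(g)-\psi_{l-1}(g)$ have $\|\cdot\|_{nm} \le 3\cdot 2^{-l}\lambda \le 2^{-l+2}\lambda$, so \Cref{coro:sub-gaussian deviation gamma mn} gives a tail $\exp(-c\,2^{2l}\eta_l^2 t^2/(\tau^2\lambda^2))$, and choosing $\eta_l^2 = C_1 l 2^{-2l}$ (with $\sum_l\eta_l\le1$, using $\lambda\le\tfrac14$ exactly as in \Cref{coro:complexity bound 1 mn}) makes the entropy term absorbable into the exponent. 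Summing the geometric-type series yields $\p_{\cdot\mid x,\gamma}\big(\sup_{g\in\mathcal F\cap B_{nm}(\lambda)}J_{nm}(g)\ge t\big)\le \exp(-c't^2/(\tau^2\lambda^2))$.

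For the second, normalized bound I would follow \Cref{coro:complexity bound 2 mn}: since $\mathcal F\subset B_\infty(\tau)\subset B_{nm}(\tau)$ we have $\mathcal F\cap B_{nm}(\tau)=\mathcal F$, and I peel over dyadic shells $\{f: 2^{-l}\tau\le\|f\|_{nm}\le 2^{-l+1}\tau\}$. On each shell, apply the first bound with $\lambda = 2^{-l+1}\tau$ and $t \asymp C_1\tau s\,2^{-l+1}\tau\sqrt{nm\phi_{nm}\log^{4.3}(nm)\log(\tau^{-1}2^{l-1})}$; using $\tau\le\tfrac12$ so that $\tau^{-1}2^{l-1}\ge 2^l$, the shell probability is at most $\exp(-c\,s^2 nm\phi_{nm}\log^{4.3}(nm)\,l)$. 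Replacing $2^{-l+1}\tau$ by $2\|g\|_{nm}$ and $\log(\tau^{-1}2^{l-1})$ by $\log(\|g\|_{nm}^{-1})$ on each shell, then summing the geometric series $\sum_{l\ge1}\exp(-c s^2 nm\phi_{nm}\log^{4.3}(nm)\,l)$, gives the stated bound $C_2\exp(-c s^2 nm\phi_{nm}\log^{4.3}(nm))$. The main obstacle — really the only point requiring care — is bookkeeping of the $\tau$-dependence: one must check that the threshold on $t$ in the first bound is compatible with the value of $t$ plugged in during the peeling (so the hypothesis of the first bound is met on every shell), and that the extra factor $\tau$ appearing in the sub-Gaussian parameter does not spoil the cancellation of the metric-entropy term; since $\tau\le\tfrac12$ the entropy bound is, if anything, more favorable, so no new difficulty arises. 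All of this is conditional on $\{x_{ij}\}$ and $\{\gamma_i\}$, so no independence or mixing considerations enter at this stage.
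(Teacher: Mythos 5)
Your proposal is correct and follows essentially the same route as the paper, whose proof simply defers to the chaining argument of \Cref{coro:sigma complexity bound 1 mn}/\Cref{coro:complexity bound 1 mn} and the peeling argument of \Cref{coro:sigma complexity bound 2 mn}/\Cref{coro:complexity bound 2 mn}, with the one-point sub-Gaussian tails of \Cref{coro:sub-gaussian deviation gamma mn} replacing the role of $\sigma_\epsilon$ by $\tau$. Your more explicit bookkeeping of the $\tau$-dependence (including the $\tau^2$ factor in the shell-wise choice of $t$ and the use of $\tau\le\tfrac12$ implicit in the template corollaries) matches the paper's intended argument.
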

\begin{proof}
The proof of the first bound follows from the same argument as in \Cref{coro:sigma complexity bound 1 mn} and the second bound follows from the same argument as in \Cref{coro:sigma complexity bound 2 mn}.
\end{proof}

\begin{corollary} \label{coro:deviation bound for gamma with xi mn}
Let $n$ be sufficiently large and suppose that $\|\gamma_i\|_\infty\le \tau$, 
there exists a  constant $C_\eta $  only depending on $\eta $ such that 
\begin{align*}
    \E\left( \sup_{ g \in \mathcal F  } \big | \frac{   \sum_{i=1}^n \sum_{j=1}^{m_i} \frac{1}{nm_i} \left[ g(x_{ij}) \gamma_i(x_{ij}) - \langle g, \gamma_i\rangle_{\lt} \right]}{\eta \|g\|_\lt ^2 + \tau C_\eta\phi_{nm} \log^{5.3}(nm)}\big | \right) \le 12.
\end{align*}
    Consequently, for any $ \eta \in (0,\frac{1}{4})$,  there exists a constant $C'_\eta>0$ such that 
    \begin{align} \label{eq:norm localization probability bound with xi mn} \p \left(\sup_{ g \in \mathcal F } \left|\frac{ \sum_{i=1}^n \sum_{j=1}^{m_i} \frac{1}{nm_i} \left[g(x_{ij}) \gamma_i(x_{ij}) - \langle g, \gamma_i\rangle_{\lt} \right]}{ \|g\|_\lt ^2 + \tau C_\eta'\phi_{nm} \log^{5.3}(nm)}\right| \le   \frac{1}{16}    \right) \ge 1- 192\eta . 
    \end{align}
\end{corollary}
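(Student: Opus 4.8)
\textbf{Proof plan for \Cref{coro:deviation bound for gamma with xi mn}.}
The plan is to mirror the structure of the proof of \Cref{coro:deviation for norms mn}, replacing the quadratic statistic $\sum \frac1{nm_i} g^2(x_{ij})\xi_{ij}$ by the cross statistic $\sum \frac1{nm_i} g(x_{ij})\gamma_i(x_{ij})\xi_{ij}$, and using \Cref{coro:gamma complexity bound mn} in place of \Cref{coro:sigma complexity bound 2 mn}. First I would apply the standard symmetrization argument: introducing ghost samples $\{x'_{ij}\}$ and i.i.d. Rademacher variables $\{\xi_{ij}\}$, one has
\[
\E\Bigl(\sup_{g\in\mathcal F}\Bigl|\tfrac{\sum_{i,j}\frac1{nm_i}[g(x_{ij})\gamma_i(x_{ij})-\langle g,\gamma_i\rangle_{\lt}]}{\eta\|g\|_\lt^2+\tau C_\eta\phi_{nm}\log^{5.3}(nm)}\Bigr|\Bigr)
\le 2\,\E\Bigl(\sup_{g\in\mathcal F}\Bigl|\tfrac{\sum_{i,j}\frac1{nm_i}\xi_{ij}g(x_{ij})\gamma_i(x_{ij})}{\eta\|g\|_\lt^2+\tau C_\eta\phi_{nm}\log^{5.3}(nm)}\Bigr|\Bigr).
\]
Here the conditioning is on $\{\gamma_i\}_{i=1}^n$ with $\|\gamma_i\|_\infty\le\tau$, and the symmetrization uses that $\gamma_i(x_{ij})-\gamma_i(x'_{ij})$ has a symmetric conditional distribution given $\{\gamma_i\}$; the extra factor $\tau$ inside $C_\eta\phi_{nm}\log^{5.3}(nm)$ is exactly what is needed to absorb the $\tau$ appearing in the sub-Gaussian parameter $\tau\|g\|_{nm}$ from \Cref{coro:sub-gaussian deviation gamma mn}.

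Next I would replace $\|g\|_\lt^2$ by $\|g\|_{nm}^2$ in the denominator at the cost of the multiplicative factor $\sup_{g\in\mathcal F}\frac{\eta\|g\|_{nm}^2+\tau C_\eta\phi_{nm}\log^{5.3}(nm)}{\eta\|g\|_\lt^2+\tau C_\eta\phi_{nm}\log^{5.3}(nm)}$, pulled outside the inner Rademacher expectation. Then I would split $\mathcal F$ into the ball $B_{nm}((nm)^{-1})$ and its complement $B_{nm}((nm)^{-1},\infty)$. On the small ball, the trivial bound $\bigl|\sum\frac1{nm_i}\xi_{ij}g(x_{ij})\gamma_i(x_{ij})\bigr|\le\tau\|g\|_{nm}^2\le\tau(nm)^{-2}$ makes that contribution negligible. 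On the complement, $\log(\|g\|_{nm}^{-1})\le\log(nm)$, so the denominator dominates $\tau\|g\|_{nm}\sqrt{C_\eta\phi_{nm}\{\log^{4.3}(nm)\log(\|g\|_{nm}^{-1})\}}$; bounding the expectation by $\int_0^\infty \p(\cdot\ge s)\,ds\le 1+\int_1^\infty \p(\cdot\ge s)\,ds$ and invoking the second inequality of \Cref{coro:gamma complexity bound mn} (applied both to $\xi_{ij}$ and to $-\xi_{ij}$) gives a tail $\lesssim\exp(-cs^2 nm\phi_{nm}\log^{4.3}(nm))$, whose integral is $O((nm\phi_{nm}\log^{4.3}(nm))^{-1/2})$. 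For $n$ large this yields an inner expectation $\le 3/2$. Feeding this back and using the elementary inequality (as in \eqref{eq:rademacher middle steps mn}) $\E_x\sup_g\frac{\eta\|g\|_{nm}^2+\tau C_\eta\phi_{nm}\log^{5.3}(nm)}{\eta\|g\|_\lt^2+\tau C_\eta\phi_{nm}\log^{5.3}(nm)}\le \eta\,\E_x\sup_g\bigl|\frac{\|g\|_{nm}^2-\|g\|_\lt^2}{\eta\|g\|_\lt^2+\tau C_\eta\phi_{nm}\log^{5.3}(nm)}\bigr|+1$ and the already-proved self-bounding estimate from \Cref{coro:deviation for norms mn} (the norm-comparison quantity is $\le 12$ for $\eta<1/4$, hence in particular bounded, so the factor is $O(1)$), one closes the argument to obtain $\E(\cdots)\le 12$.

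For the probability statement \eqref{eq:norm localization probability bound with xi mn}, I would apply Markov's inequality exactly as in Step 3 of \Cref{coro:deviation for norms mn}: writing $C_\eta'$ for the constant produced above (absorbing $\tau$), replacing $\eta\|g\|_\lt^2$ in the denominator by $\|g\|_\lt^2$ loses a factor $\eta^{-1}$, and the threshold $1/16$ versus the bound $12$ on the expectation gives a failure probability $\le 16\cdot 12\,\eta=192\eta$. The only genuinely new point compared with \Cref{coro:deviation for norms mn} is the bookkeeping of the factor $\tau$: one must check it propagates consistently from \Cref{coro:sub-gaussian deviation gamma mn} through \Cref{coro:gamma complexity bound mn} into the denominator normalization, so that the final constant $C_\eta'$ is indeed $\tau$-free while the denominator carries the $\tau C_\eta'\phi_{nm}\log^{5.3}(nm)$ term. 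I expect this $\tau$-tracking, together with making sure the self-bounding step is legitimate (i.e.\ the norm-comparison expectation one cites is finite and does not circularly depend on the present bound), to be the main — though essentially routine — obstacle; everything else is a transcription of the $\gamma$-free argument.
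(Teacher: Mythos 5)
Your plan follows the paper's own proof essentially step for step: symmetrization conditional on $\{\gamma_i\}$, passing from the $\mathcal L_2$ norm to the empirical norm in the denominator, the split at $\|g\|_{nm}=(nm)^{-1}$, tail integration via \Cref{coro:gamma complexity bound mn}, the self-bounding step through \eqref{eq:rademacher middle steps mn} and \Cref{coro:deviation for norms mn} (which is indeed non-circular, being established independently), and Markov's inequality yielding the $192\eta$ failure probability. The only correction needed is in your small-ball bound: since the statistic here is linear in $g$ (unlike the quadratic one in \Cref{coro:deviation for norms mn}), Cauchy--Schwarz gives $\bigl|\sum_{i,j}\tfrac{1}{nm_i}\xi_{ij}g(x_{ij})\gamma_i(x_{ij})\bigr|\le\tau\|g\|_{nm}\le\tau/(nm)$ rather than $\tau\|g\|_{nm}^2\le\tau(nm)^{-2}$, which still renders that contribution negligible, exactly as in the paper.
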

\begin{proof}
 By symmetrization argument,
    \begin{align*}
    &\E\left( \sup_{ g \in \mathcal F  }\big |  \frac{\sum_{i=1}^n \sum_{j=1}^{m_i} \frac{1}{nm_i} \left[ g(x_{ij}) \gamma_i(x_{ij}) - \langle g, \gamma_i\rangle_{\lt} \right] }{ \eta \|g\|_\lt ^2 + \tau C_\eta\phi_{nm} \log^{5.3}(nm)} \big | \right)
    \\
    \le & 2  \E\left( \sup_{ g \in \mathcal F  } \big  | \frac{ 
      \sum_{i=1}^n \sum_{j=1}^{m_i} \frac{1}{nm_i} g(x_{ij}) \gamma_i(x_{ij}) \xi_{ij}  }{ \eta \|g\|_\lt ^2 + \tau C_\eta\phi_{nm} \log^{5.3}(nm)}\big | \right)
     \\
     = & 2  {\E}_{x, \gamma} \left[ {\E}_{\xi|x, \gamma } \left(\sup_{ g \in \mathcal F  } \left| \frac{ 
     \sum_{i=1}^n \sum_{j=1}^{m_i} \frac{1}{nm_i} g (x_{ij})\gamma_i(x_{ij}) \xi_{ij}    }{ \eta \|g\|_\lt ^2 + \tau C_\eta\phi_{nm} \log^{5.3}(nm)}\right |\right) \right]
     \\
     = & 2 {\E}_{x, \gamma} \left[ {\E}_{\xi|x, \gamma } \left(\sup_{ g \in \mathcal F  } \left| \frac{ 
      \sum_{i=1}^n \sum_{j=1}^{m_i} \frac{1}{nm_i} g(x_{ij}) \gamma_i(x_{ij}) \xi_{ij}   }{ \eta \|g\|_{nm} ^2 + \tau C_\eta\phi_{nm} \log^{5.3}(nm)} \right| \cdot  \frac{\eta \|g\|_{nm} ^2 + \tau C_\eta\phi_{nm} \log^{5.3}(nm) }{\eta \|g\|_\lt  ^2 + \tau C_\eta\phi_{nm} \log^{5.3}(nm) }  \right)\right]
     \\
     \le & 2{\E}_{x, \gamma}\left[ {\E}_{\xi|x, \gamma } \left(\sup_{ g \in \mathcal F  } \big  | \frac{ 
     \sum_{i=1}^n \sum_{j=1}^{m_i} \frac{1}{nm_i}  g(x_{ij}) \gamma_i(x_{ij}) \xi_{ij}    }{ \eta \|g\|_{nm} ^2 + \tau C_\eta\phi_{nm} \log^{5.3}(nm)}\big | \sup_{ g \in \mathcal F  }  \frac{\eta \|g\|_{nm} ^2 + \tau C_\eta\phi_{nm} \log^{5.3}(nm) }{\eta \|g\|_\lt  ^2 + \tau C_\eta\phi_{nm} \log^{5.3}(nm) }\right) \right]
     \\
     \le & 2{\E}_{x, \gamma}\left[\sup_{ g \in \mathcal F  }  \frac{\eta \|g\|_{nm} ^2 + \tau C_\eta\phi_{nm} \log^{5.3}(nm) }{\eta \|g\|_\lt  ^2 + \tau C_\eta\phi_{nm} \log^{5.3}(nm) } \cdot  {\E}_{\xi|x, \gamma }\left(\sup_{ g \in \mathcal F  } \left| \frac{ \sum_{i=1}^n \sum_{j=1}^{m_i} \frac{1}{nm_i}  g(x_{ij}) \gamma_i(x_{ij}) \xi_{ij}     }{ \eta \|g\|_{nm} ^2 + \tau C_\eta\phi_{nm} \log^{5.3}(nm)} \right| \right)  \right].
\end{align*}
\
\\
{\bf Step 1.} 
Denote 
$$ B_{nm}(a, b) =\{ g: a\le \|g\|_{nm} \le b \}.$$
Observe that 
\begin{align*}
     &{\E}_{\xi|x, \gamma } \left(\sup_{ g \in \mathcal F } \left| \frac{ 
     \sum_{i=1}^n \sum_{j=1}^{m_i} \frac{1}{nm_i}  g(x_{ij}) \gamma_i(x_{ij}) \xi_{ij} }{ \eta \|g\|_{nm} ^2 + \tau C_\eta\phi_{nm} \log^{5.3}(nm)}\right| \right)
     \\
     \le & {\E}_{\xi|x, \gamma } \left(\sup_{ g \in \mathcal F \cap B_{nm}((nm)^{-1})} \left| \frac{ 
     \sum_{i=1}^n \sum_{j=1}^{m_i} \frac{1}{nm_i} g(x_{ij}) \gamma_i(x_{ij}) \xi_{ij}     }{ \eta \|g\|_{nm} ^2 + \tau C_\eta\phi_{nm} \log^{5.3}(nm)} \right| \right) + \\
     &
     {\E}_{\xi|x, \gamma } \left(\sup_{ g \in \mathcal F \cap B_{nm}((nm)^{-1}, \infty )   } \left| \frac{ 
     \sum_{i=1}^n \sum_{j=1}^{m_i} \frac{1}{nm_i}  g(x_{ij}) \gamma_i(x_{ij}) \xi_{ij}   }{ \eta \|g\|_{nm} ^2 + \tau C_\eta\phi_{nm} \log^{5.3}(nm)} \right| \right).
\end{align*}
Since 
\begin{align*}
 &\left|\sum_{i=1}^n \sum_{j=1}^{m_i} \frac{1}{nm_i} g (x_{ij})\gamma_i(x_{ij}) \xi_{ij} \right|
\le \sum_{i=1}^n \sum_{j=1}^{m_i} \frac{1}{nm_i} \left| g (x_{ij})\gamma_i(x_{ij}) \right| 
\\ &\le \sqrt{\sum_{i=1}^n \sum_{j=1}^{m_i} \frac{1}{nm_i} g^2 (x_{ij})} \sqrt{\sum_{i=1}^n \sum_{j=1}^{m_i} \frac{1}{nm_i} \gamma_i^2 (x_{ij})} \le \tau \|g\|_{nm} ,   
\end{align*}
\begin{align*}
&{\E}_{\xi|x, \gamma } \left(\sup_{ g \in \mathcal F \cap B_{nm} \left((nm)^{-1}\right)} \big  | \frac{ 
     \sum_{i=1}^n \sum_{j=1}^{m_i} \frac{1}{nm_i} g (x_{ij}) \gamma_i(x_{ij}) \xi_{ij}    }{ \eta \|g\|_{nm} ^2 + \tau C_\eta\phi_{nm} \log^{5.3}(nm)}\big |  \right) 
     \\
     \le & {\E}_{\xi|x, \gamma } \left(\sup_{ g \in \mathcal F \cap B_{nm}((nm)^{-1})} \big  | \frac{ 
      \tau \|g\|_{nm}}{ \eta \|g\|_{nm} ^2 + \tau C_\eta\phi_{nm} \log^{5.3}(nm)}\big |  \right) 
     \\
     \le & {\E}_{\xi|x, \gamma } \left(\sup_{ g \in \mathcal F \cap B_{nm}((nm)^{-1})} \left| \frac{ 
      \frac{\tau}{nm}  }{ \eta \|g\|_{nm} ^2 + \tau C_\eta\phi_{nm} \log^{5.3}(nm)}\right|  \right) 
      \\
      \le & {\E}_{\xi|x, \gamma } \left(\sup_{ g \in \mathcal F \cap B_{nm}((nm)^{-1})} \big  | \frac{ 
      \frac{\tau}{nm}  }{\tau C_\eta\phi_{nm} \log^{5.3}(nm)}\big |  \right) \le \frac{1}{C_\eta nm \phi_{nm}\log^{5.3}(nm)}.
\end{align*}
In addition, 
\begin{align*}
    &{\E}_{\xi|x, \gamma } \left( \sup_{ g \in \mathcal F \cap B_{nm} \left((nm)^{-1}, \infty \right) } \left| \frac{ 
     \sum_{i=1}^n \sum_{j=1}^{m_i} \frac{1}{nm_i} g (x_{ij}) \gamma_i(x_{ij}) \xi_{ij}      }{ \eta \|g\|_{nm} ^2 + \tau C_\eta\phi_{nm} \log^{5.3}(nm)} \right| \right)
     \\
     \le &
     {\E}_{\xi|x, \gamma } \left(\sup_{ g \in \mathcal F \cap B_{nm} \left((nm)^{-1},   \infty \right) } \left| \frac{ 
     \sum_{i=1}^n \sum_{j=1}^{m_i} \frac{1}{nm_i}  g (x_{ij}) \gamma_i(x_{ij}) \xi_{ij}     }{ \eta \|g\|_{nm} ^2 + \tau C_\eta \phi_{nm} \{ \log^{4.3}(nm)\log(\|g\|_{nm}^{-1} ) \}} \right|  \right)
     \\ 
     \le &{\E}_{\xi|x, \gamma } \left(\sup_{ g \in \mathcal F     } \big  | \frac{ 
     \sum_{i=1}^n \sum_{j=1}^{m_i} \frac{1}{nm_i}  g (x_{ij}) \gamma_i(x_{ij}) \xi_{ij}     }{ \eta \|g\|_{nm} ^2 + \tau C_\eta \phi_{nm} \{ \log^{4.3}(nm)\log(\|g\|_{nm}^{-1} ) \}}\big |  \right)
     \\
     \le &\int_0^\infty  \p_{\cdot \mid x, \gamma} \left( \sup_{ g \in \mathcal F     } \big  | \frac{ 
     \sum_{i=1}^n \sum_{j=1}^{m_i} \frac{1}{nm_i}  g (x_{ij}) \gamma_i(x_{ij}) \xi_{ij}      }{ \eta \|g\|_{nm} ^2 + \tau C_\eta\phi_{nm} \{ \log^{4.3}(nm)\log(\|g\|_{nm}^{-1} ) \}}\big | \ge s\right) ds
     \\
     \le &\int_0^1  1 ds 
     +  \int_1^\infty  \p_{\cdot \mid x, \gamma} \left(   \sup_{ g \in \mathcal F     }  \big|  \frac{ 
     \sum_{i=1}^n \sum_{j=1}^{m_i} \frac{1}{nm_i}  g (x_{ij}) \gamma_i(x_{ij}) \xi_{ij}    }{ \eta \|g\|_{nm} ^2 + \tau C_\eta \phi_{nm} \{ \log^{4.3}(nm)\log(\|g\|_{nm}^{-1} ) \}} \big|  \ge s\right) ds.
\end{align*}
where the first inequality follows from the restriction on the function class that $\|g\|_{nm} \ge \frac{1}{nm}$, which implies that $ \|g\|^{-1}_{nm} \le nm$. 

Note that 
\begin{align*}
 &\int_1^\infty  \p_{\cdot \mid x, \gamma}\left(   \sup_{ g \in \mathcal F     }  \left|  \frac{ 
     \sum_{i=1}^n \sum_{j=1}^{m_i} \frac{1}{nm_i}  g (x_{ij}) \gamma_i(x_{ij}) \xi_{ij} }{ \eta \|g\|_{nm} ^2 
     + \tau C_\eta \phi_{nm} \{ \log^{4.3}(nm)\log(\|g\|_{nm}^{-1} ) \}} \right| \ge s\right) ds  
     \\
     \le &\int_1^\infty  \p_{\cdot \mid x, \gamma} \left(   \sup_{ g \in \mathcal F     }  \big|  \frac{ 
     \sum_{i=1}^n \sum_{j=1}^{m_i} \frac{1}{nm_i}  g (x_{ij}) \gamma_i(x_{ij}) \xi_{ij} }{\|g\|_{nm}\sqrt{\tau C_\eta \phi_{nm} \{ \log^{4.3}(nm)\log(\|g\|_{nm}^{-1} ) \}}} \big|  \ge s\right) ds  \\
     \le & \int_1^\infty  \p_{\cdot \mid x, \gamma} \left( \sup_{ g \in \mathcal F }     \frac{ 
     \sum_{i=1}^n \sum_{j=1}^{m_i} \frac{1}{nm_i}  g (x_{ij}) \gamma_i(x_{ij}) \xi_{ij} }{\|g\|_{nm}\sqrt{\tau C_\eta \phi_{nm} \{ \log^{4.3}(nm) \log(\|g\|_{nm}^{-1} ) \}}} \ge s\right) ds 
     \\
     + &\int_1^\infty  \p_{\cdot \mid x, \gamma}\left(   \sup_{ g \in \mathcal F }     \frac{ -
     \sum_{i=1}^n \sum_{j=1}^{m_i} \frac{1}{nm_i} g (x_{ij}) \gamma_i(x_{ij}) \xi_{ij} }{\|g\|_{nm}\sqrt{\tau C_\eta \phi_{nm} \{ \log^{4.3}(nm) \log(\|g\|_{nm}^{-1} ) \}}}    \ge s\right) ds  
     \\
     \le & 2C_2 \int_1^\infty  \exp(-cs^2 nm \phi_{nm} \log^{4.3}(nm))ds
     \\
     \le &2C_2 \frac{1}{\sqrt {cnm \phi_{nm} \log^{4.3}(nm) }} \int_0^\infty  \exp(-u^2) du
     = \frac{\sqrt \pi C_2 }{ \sqrt { cnm \phi_{nm} \log^{4.3}(nm) }  },
\end{align*}
where the second inequality follows from  \Cref{coro:gamma complexity bound mn} for sufficiently large constant $C_\eta$ and the fact that $\{-\xi_{ij}\}_{i=1, j=1}^{n, m_i} $ are also i.i.d. Rademacher random variables.
Since $r=o\left({nm \phi_{nm}\log^{5.3}(nm)}\right)$, for sufficiently large $n$,
\begin{align*}
&{\E}_{\xi|x, \gamma } \left(\sup_{ g \in \mathcal F  } \big  | \frac{ 
     \sum_{i=1}^n \sum_{j=1}^{m_i} \frac{1}{nm_i} g (x_{ij}) \gamma_i(x_{ij}) \xi_{ij}   }{ \eta \|g\|_{nm} ^2 +C_\eta\phi_{nm} \log^{5.3}(nm)}\big |  \right)\\
\le & {\E}_{\xi|x, \gamma }\left(\sup _{g \in \mathcal{F} \cap B_{nm}\left((nm)^{-1}\right)}\left|\frac{\sum_{i=1}^n \sum_{j=1}^{m_i} \frac{1}{nm_i} g\left(x_{ij}\right) \gamma_i\left(x_{ij}\right) \xi_{ij}}{\eta\|g\|_{nm}^2+ \tau C_\eta \phi_{nm} \log^{5.3}(nm)}\right|\right)+\\
&{\E}_{\xi|x, \gamma }\left(\sup _{g \in \mathcal{F} \cap B_{nm}\left((nm)^{-1}, \infty\right)}\left|\frac{\sum_{i=1}^n \sum_{j=1}^{m_i} \frac{1}{nm_i} g\left(x_{ij}\right) \gamma_i\left(x_{ij}\right) \xi_{ij}}{\eta\|g\|_{nm}^2+\tau C_\eta \phi_{nm} \log^{5.3}(nm)}\right|\right)\\
\le &\frac{\tau^2}{C_\eta (nm)^2\phi_{nm}\log^{5.3}(nm)} + 1 +\frac{\sqrt \pi C_2}{\sqrt{ c nm \phi_{nm} \log^{4.3}(nm)}} \le \frac{3}{2}.
\end{align*}
This implies that 
\begin{align*}
	& 2\E\left( \sup_{ g \in \mathcal F  }\big |  \frac{ \sum_{i=1}^n g(x_{ij}) \gamma_i(x_{ij}) \xi_{ij} }{ \eta \|g\|_\lt ^2 +C_\eta\phi_{nm} \log^{5.3}(nm)} \big | \right) \\
	\le & 2 {\E}_{x, \gamma}\left[\sup _{g \in \mathcal{F}} \frac{\eta\|g\|_{nm}^2+ \tau C_\eta \phi_{nm} \log^{5.3}(nm)}{\eta\|g\|_{\mathcal{L}_2}^2+ \tau C_\eta \phi_{nm} \log^{5.3}(nm)} {\E}_{\xi|x, \gamma } \left(\sup _{g \in \mathcal{F}}\left|\frac{\sum_{i=1}^n \sum_{j=1}^{m_i} \frac{1}{nm_i} g\left(x_{ij}\right) \gamma_i\left(x_{ij}\right) \xi_{ij}}{\eta\|g\|_{nm}^2+ \tau C_\eta \phi_{nm} \log^{5.3}(nm)}\right| \right) \right]\\
	\le & 3{\E}_{x}\left(\sup_{ g \in \mathcal F }  \frac{\eta \|g\|_{nm} ^2 +C_\eta\phi_{nm} \log^{5.3}(nm) }{\eta \|g\|_\lt  ^2 +C_\eta\phi_{nm} \log^{5.3}(nm) }      \right).
\end{align*}
\
\\
{\bf Step 2.} 
Note that since $0\le \eta \le 1/4$
\begin{align*}
     &{\E}_{x}\left(\sup_{ g \in \mathcal F  }  \frac{\eta \|g\|_{nm} ^2 +C_\eta\phi_{nm} \log^{5.3}(nm) }{\eta \|g\|_\lt  ^2 +C_\eta\phi_{nm} \log^{5.3}(nm) }      \right)
    \\
    \le &  \eta{\E}_{x}\left(\sup_{ g \in \mathcal F  } \big|  \frac{  \|g\|_{nm} ^2   -\|g\|_\lt^2   }{\eta \|g\|_\lt  ^2 +C_\eta\phi_{nm} \log^{5.3}(nm) }     \big| \right) + 1 
    \le 12\eta+1  =4.
\end{align*}
where the first inequality follows from \eqref{eq:rademacher middle steps mn}, and the second inequality follows from  \Cref{coro:deviation for norms mn}.
Thus 
\begin{align*}
   &\E\left( \sup_{ g \in \mathcal F  }\big |  \frac{\sum_{i=1}^n \sum_{j=1}^{m_i} \frac{1}{nm_i} \left[ g(x_{ij}) \gamma_i(x_{ij}) - \langle g, \gamma_i\rangle_{\lt} \right] }{ \eta \|g\|_\lt ^2 + \tau C_\eta\phi_{nm} \log^{5.3}(nm)} \big | \right) \\ 
   &\leq  2\E\left( \sup_{ g \in \mathcal F  }\big |  \frac{ \sum_{i=1}^n g(x_{ij}) \gamma_i(x_{ij}) \xi_{ij} }{ \eta \|g\|_\lt ^2 +C_\eta\phi_{nm} \log^{5.3}(nm)} \big | \right)       \le 12. 
     \end{align*}
  \
  \\
    {\bf Step 3.} 
   By Markov's inequality,
    \begin{align*}
    &\p\left( \sup_{ g \in \mathcal F  } \left|\frac{ \sum_{i=1}^n \sum_{j=1}^{m_i} \frac{1}{nm_i} \left[g(x_{ij}) \gamma_i(x_{ij}) - \langle g, \gamma_i\rangle_{\lt} \right]}{ \|g\|_\lt ^2 + \tau C_\eta'\phi_{nm} \log^{5.3}(nm)}\right|  \ge \frac{1}{16}\right) \\
     & =\p\left( \sup_{ g \in \mathcal F  } \left|\frac{ \sum_{i=1}^n \sum_{j=1}^{m_i} \frac{1}{nm_i} \left[g(x_{ij}) \gamma_i(x_{ij}) - \langle g, \gamma_i\rangle_{\lt} \right]}{ \eta\|g\|_\lt ^2 + \tau \eta C_\eta'\phi_{nm} \log^{5.3}(nm)}\right|  \ge \frac{1}{16\eta }\right) \\
     & \le 16\eta \E\left( \sup_{ g \in \mathcal F  } \left|\frac{ \sum_{i=1}^n \sum_{j=1}^{m_i} \frac{1}{nm_i} \left[g(x_{ij}) \gamma_i(x_{ij}) - \langle g, \gamma_i\rangle_{\lt} \right]}{ \|g\|_\lt ^2 + \tau C_\eta\phi_{nm} \log^{5.3}(nm)}\right| \right)      \le  192 \eta,
\end{align*}
where $\eta C_\eta' = C_\eta$.
\end{proof}

\
\\
     

\begin{lemma} \label{lemma:sub-Gaussian conditioning}
Suppose $X$ is a sub-Gaussian random variable and that for some constant $c$, it holds that
$$ \E(\exp(cX^2)) < \infty .$$ For any $t>0$,  let 
$\mathcal E = \{|X|\le t  \}$  and suppose that $ \p( \mathcal E  )\ge 1/2$. Then 
$ X| \mathcal E    $ is also a sub-Gaussian random variable.
    
\end{lemma}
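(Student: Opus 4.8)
The plan is to argue through the sub-Gaussian (Orlicz $\psi_2$) norm rather than directly through the Laplace transform, since this sidesteps the issue of controlling the conditional mean. From $\E[\exp(cX^2)]<\infty$ one can pick a finite $s>0$ with $\E[\exp(X^2/s^2)]\le 2$, i.e. $\|X\|_{\psi_2}\le s<\infty$ (the extra hypothesis that $X$ is sub-Gaussian only matters for fixing that $X$ has genuinely two-sided control, since the paper's definition constrains $\E[\exp(tX)]$ only for $t>0$). The crucial but elementary observation is that conditioning on an event of probability at least $1/2$ inflates the expectation of a nonnegative functional by at most a factor of $2$:
\[
\E\!\left[\exp(X^2/s^2)\mid\mathcal E\right]=\frac{\E\!\left[\exp(X^2/s^2)\,\1_{\mathcal E}\right]}{\p(\mathcal E)}\le\frac{\E\!\left[\exp(X^2/s^2)\right]}{\p(\mathcal E)}\le 4 .
\]

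Next I would bring this constant $4$ back down to $2$ at the price of a universal factor in the norm, using Jensen's inequality for the concave map $z\mapsto z^{1/2}$ together with $\exp(a/2)=\exp(a)^{1/2}$:
\[
\E\!\left[\exp\!\big(X^2/(2s^2)\big)\mid\mathcal E\right]=\E\!\left[\exp(X^2/s^2)^{1/2}\mid\mathcal E\right]\le\E\!\left[\exp(X^2/s^2)\mid\mathcal E\right]^{1/2}\le 4^{1/2}=2 .
\]
Since $2s^2=(\sqrt2\,s)^2$, this says $\|X\mid\mathcal E\|_{\psi_2}\le\sqrt2\,s$, a bound that does not depend on $t$. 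It then remains only to convert the finite $\psi_2$-norm into the moment-generating-function form of sub-Gaussianity: because $|X|\le t$ on $\mathcal E$ the conditional mean $\mu:=\E[X\mid\mathcal E]$ is finite, so $(X-\mu)\mid\mathcal E$ is centered with $\|(X-\mu)\mid\mathcal E\|_{\psi_2}\le 2\sqrt2\,s$, and the standard equivalence between the sub-Gaussian norm and the exponential-moment bound yields a universal constant $C_0$ with $\E[\exp(\lambda(X-\mu))\mid\mathcal E]\le\exp(C_0\lambda^2 s^2)$ for all $\lambda$, which is the asserted sub-Gaussianity (with parameter of order $s$, in particular independent of $t$).

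The point where one must be careful — and what I expect to be the only real obstacle — is that the interesting content of the lemma is \emph{not} sub-Gaussianity per se, which is automatic since $X\mid\mathcal E$ is bounded by $t$, but a sub-Gaussian parameter that does not degrade as $t$ grows; this is exactly what makes the statement usable when $t\asymp\sqrt{\log(nm)}$, as in the truncation step of the proof of \Cref{thm:main thm in the temporal-spatial model}. The naive estimate $\E[\exp(\lambda X)\mid\mathcal E]\le 2\,\E[\exp(\lambda X)]$ is benign only for $|\lambda|$ bounded away from $0$, and the delicate small-$|\lambda|$ regime amounts to controlling the conditional drift $\mu$; routing the argument through the Orlicz norm and Jensen's inequality as above is precisely what avoids having to estimate $\mu$ directly, which is why I would take that route rather than working with the Laplace transform from the start.
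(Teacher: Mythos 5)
Your proof is correct and rests on the same key observation as the paper's own argument: conditioning on an event with $\p(\mathcal E)\ge 1/2$ at most doubles the expectation of $\exp(cX^2)$ (the paper establishes this via the domination $\p(X\in A\mid\mathcal E)\le 2\,\p(X\in A)$ and stops as soon as the conditional exponential square moment is finite). Your extra Jensen and centering steps merely make explicit a quantitative $\psi_2$/moment-generating-function parameter that is independent of $t$, which the paper leaves implicit.
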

\begin{proof}
For any $A \subset \mathbb{R}$.
Observe that 
$$ \p(X \in A | \mathcal E) = \frac{ \p( X\in A \text{ and } |X|\le t ) }{\p(|X|\le t)} \le 2 \p(X\in A).$$
    \begin{align*}
      \text{Therefore } & \E(\exp(cX^2) |\mathcal E ) =\int_0^\infty \p(\exp(cX^2)  \ge s | \mathcal E)  \\ \le &\int_0^\infty 2 \p(\exp(cX^2)  \ge s  ) ds  = 2   \E(\exp(cX^2)   )     < \infty  .
    \end{align*}
\end{proof}
\newpage 
\subsection{Deviation Bounds for Temporal Dependence with $\beta$-mixing Conditions}
\label{proof temporal-spatial model depdent}

Now we address the general case of $\epsilon$ and $x$. The proof strategy involves constructing 'ghost' versions of $\epsilon$ and $x$ that are independent but closely mimic the behavior of the original $\epsilon$ and $x$. The details are provided below.

\subsubsection{The $\beta$-mixing Lemmas for Theorem \ref{thm:main thm in the temporal-spatial model}}\label{beta mixing proof temporal-spatial model}

A important Theorem we will use is present below. 

\begin{theorem} [\cite{doukhan2012mixing} Theorem 1]
\label{lemma:coupling in the temporal-spatial model}
Let $E$ and $F$ be two Polish spaces and $(X, Y)$ some $E\times F $-values random variables. A random variable $Y^*$ can be defined with the same probability distribution as $Y$, independent of $X$ and such that 
$$\p(Y\not = Y^*) = \beta(\sigma(X), \sigma(Y)).$$
For some measurable function $f$ on $E\times F \times [0,1]$, and some uniform random variable $\Delta $ on the interval $[0,1]$, $Y^*$ 
takes the form $Y^* = f(X, Y, \Delta)$.
\end{theorem}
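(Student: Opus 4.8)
The final statement is Berbee's coupling lemma (stated here as Theorem~1 of \cite{doukhan2012mixing}); the plan is to reproduce the standard maximal-coupling, or ``common part,'' construction, which is available precisely because $E$ and $F$ are Polish. First I would fix a regular conditional distribution: since $F$ is Polish there is a regular conditional probability $\mu_x(\cdot):=\p(Y\in\cdot\mid X=x)$, and I write $\nu:=\p_Y$ for the marginal law of $Y$. For $\p_X$-a.e.\ $x$, dominating $\mu_x$ and $\nu$ by $\lambda_x:=\mu_x+\nu$, I define the common part $\rho_x$ by $\tfrac{d\rho_x}{d\lambda_x}=\min\!\big(\tfrac{d\mu_x}{d\lambda_x},\tfrac{d\nu}{d\lambda_x}\big)$ and set $a(x):=\rho_x(F)\in[0,1]$. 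The standard facts I would record are: $\mu_x-\rho_x$ and $\nu-\rho_x$ are nonnegative, mutually singular, each of total mass $1-a(x)$; one has $1-a(x)=\|\mu_x-\nu\|_{\tv}$, where $\|\mu-\nu\|_{\tv}=\sup_B|\mu(B)-\nu(B)|$; and $x\mapsto\rho_x$, $x\mapsto a(x)$, $(x,y)\mapsto\tfrac{d\rho_x}{d\mu_x}(y)$ are jointly measurable. Connecting this to the $\beta$ coefficient from the excerpt,
\[
\beta(\sigma(X),\sigma(Y))=\E\,\sup_{C\in\sigma(Y)}\big|\p(C)-\p(C\mid\sigma(X))\big|=\int_E \sup_B|\nu(B)-\mu_x(B)|\,\p_X(dx)=\int_E(1-a(x))\,\p_X(dx).
\]

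Next comes the coupling. On the enlarged space $E\times F\times[0,1]$ with $(X,Y)\sim\p_{XY}$ and $\Delta\sim\mathrm{Unif}[0,1]$ independent of $(X,Y)$, I would define $Y^*=f(X,Y,\Delta)$ as follows: using part of $\Delta$, draw a Bernoulli indicator $U$ with $\p(U=1\mid X=x,Y=y)=\tfrac{d\rho_x}{d\mu_x}(y)\le1$; if $U=1$ set $Y^*:=Y$; if $U=0$, use the remaining randomness in $\Delta$ to draw $Y^*$ from the residual law $(\nu-\rho_x)/(1-a(x))$ (on $\{a(x)=1\}$ one has $U=1$ a.s., so this case is vacuous). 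Splitting $\Delta$ into independent uniforms and realizing these conditional draws as measurable functions of $(X,Y,\Delta)$ is routine on Polish spaces: the space of probability laws on a Polish space is itself Polish, and a Borel isomorphism reduces the draws to quantile transforms of uniforms.

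Finally I would verify the three claims. Conditionally on $X=x$, $\p(U=1,\,Y\in B\mid X=x)=\int_B\tfrac{d\rho_x}{d\mu_x}\,d\mu_x=\rho_x(B)$, hence $\p(Y^*\in B\mid X=x)=\rho_x(B)+(1-a(x))\cdot\tfrac{(\nu-\rho_x)(B)}{1-a(x)}=\nu(B)$, which is free of $x$; therefore $Y^*$ is independent of $X$ and has law $\p_Y$, with the stated form $Y^*=f(X,Y,\Delta)$. On $\{U=1\}$ we have $Y^*=Y$; on $\{U=0\}$, $Y^*$ is drawn from $\nu-\rho_x$ while $Y$ given $U=0$ has law proportional to $\mu_x-\rho_x$, and these two measures are mutually singular, so $Y^*\neq Y$ a.s.; thus $\p(Y\neq Y^*\mid X=x)=\p(U=0\mid X=x)=1-a(x)$, and integrating against $\p_X$ and using the identity above yields $\p(Y\neq Y^*)=\beta(\sigma(X),\sigma(Y))$. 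The main obstacle is the measurability bookkeeping — joint measurability of $x\mapsto\rho_x$ and of the Radon--Nikodym derivative, plus packaging all auxiliary randomness into the single variable $\Delta$ — which is standard given the Polish hypothesis; the probabilistic core (the conditional maximal-coupling identity, then integration) is short. Since this is exactly Theorem~1 of \cite{doukhan2012mixing}, one may also simply invoke it without reproving it.
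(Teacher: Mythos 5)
The paper gives no proof of this statement: it is imported as Theorem~1 of \cite{doukhan2012mixing} (Berbee's coupling lemma) and used as a black box, so there is nothing in the paper to compare your argument against line by line. Your reconstruction is the standard proof and is correct. The key points are all in place: existence of a regular conditional distribution $\mu_x=\p(Y\in\cdot\mid X=x)$ because $F$ is Polish; the common part $\rho_x$ with $a(x)=\rho_x(F)$ and the identity $1-a(x)=\sup_B|\mu_x(B)-\nu(B)|$, whose $\p_X$-expectation is exactly the $\beta$ coefficient as defined in the paper (the definition via $\E\sup_C|\p(C)-\p(C\mid\sigma(X))|$ matches the sup-over-sets distance you use, so there is no factor-of-two issue); the acceptance/rejection coupling with acceptance probability $\tfrac{d\rho_x}{d\mu_x}(Y)$ and residual law $(\nu-\rho_x)/(1-a(x))$, which makes the conditional law of $Y^*$ given $X=x$ equal to $\nu$ for a.e.\ $x$ and hence gives both the correct marginal and independence from $X$; and exact equality $\p(Y\neq Y^*)=\beta(\sigma(X),\sigma(Y))$, which indeed requires the mutual singularity of $\mu_x-\rho_x$ and $\nu-\rho_x$ so that the residual draw misses $Y$ almost surely on $\{U=0\}$ — a step you state and which works because, conditionally on $(X,Y)$ and $U=0$, the residual draw uses fresh randomness from $\Delta$, making $(Y,Y^*)$ conditionally a product of two mutually singular laws. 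The only technical debt is the measurability packaging (joint measurability of $x\mapsto\rho_x$ and of $\tfrac{d\rho_x}{d\mu_x}$, and encoding the Bernoulli variable and the residual quantile draw into disjoint digits of a single uniform $\Delta$), which you correctly flag as routine on Polish spaces via Borel isomorphism; for the purposes of this paper, simply invoking the cited theorem, as the authors do, is equally legitimate.
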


For any  $S \in \{1,\ldots, n\} $ and 
$ K=\lfloor n/S \rfloor$,  denote 
$$ \I_k = \begin{cases}
    (0+(k-1) S, \ k  S] & \text{when } 1 \le k \le K,
    \\
    ( K  S, n] &\text{when } k=K+1  .
\end{cases}$$
Denote for $ 0\le s \le S-1$,
\begin{align*} 
\J_{e,s} =&\{ i \in \mathcal I_k \text{ for }  k  \% 2=0 \text{ and } i    \%   S= s\} \quad \text{and}
\\ \J_{o,s} = &\{ i \in \mathcal I_k \text{ for }  k  \% 2=1 \text{ and } i    \%   S = s\} .
\end{align*}
   Then 
   \begin{align} \label{eq:effective sample size in the temporal-spatial model}\frac{k-1}{2 }\le |\mathcal J_{e,s}|  \le \frac{k+2 }{2} \text{ and }
   \frac{k-1}{2 }\le |\mathcal J_{o,s}|  \le \frac{k+2 }{2}
   \end{align} 
   and that 
   $$\bigcup _{s=0}^{S-1}\mathcal J_{e,s} \cup  \bigcup_{s=0}^{S-1}\mathcal J_{o,s} =[1,\ldots, n].$$

\begin{lemma}
\label{lemma:beta mixing x}
For any $s\in [0,\ldots S-1]$, there exists a collection of independent identically distributed  random variables 
$\left\{ x^*_{ij} \right\}_{1\le j \le m_i ,i \in \mathcal J_{e,s}} $ such that for any $i \in \mathcal J_{e,s}$,
$$ 
\p (x_{ij}^* \not =  x_{ij} \text{ for some } j \in [1,\ldots,m_i] ) \le \beta(S).
$$ The exact same result is attained for $\mathcal{J}_{o,s}.$
\end{lemma}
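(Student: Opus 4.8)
The plan is to apply the coupling result of \Cref{lemma:coupling in the temporal-spatial model} (Berbee/Bradley-type coupling for $\beta$-mixing) iteratively along the index set $\mathcal J_{e,s}$, which by construction consists of time indices that are spaced at least $S$ apart (they lie in alternating blocks $\mathcal I_k$ with $k$ even, all congruent to $s$ modulo $S$). For each fixed $s$, enumerate $\mathcal J_{e,s} = \{i_1 < i_2 < \cdots < i_T\}$ and write $Z_{i_t} = (x_{i_t 1},\ldots,x_{i_t m_{i_t}})$ for the block of design points at time $i_t$; these are vector-valued random variables taking values in a Polish space $([0,1]^d)^{m_{i_t}}$. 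The key arithmetic fact is that consecutive elements of $\mathcal J_{e,s}$ differ by at least $S$ (indeed by at least $2S$, since we skip odd blocks, but $S$ suffices), so the $\sigma$-algebra $\sigma(Z_{i_1},\ldots,Z_{i_{t-1}})$ is separated from $\sigma(Z_{i_t})$ by a gap of at least $S$ time steps; hence $\beta\big(\sigma(Z_{i_1},\ldots,Z_{i_{t-1}}),\,\sigma(Z_{i_t})\big) \le \beta(S)$ by the definition of the $\beta$-mixing coefficients in \Cref{assumption: assumption in the temporal-spatial model}~\textbf{b} and monotonicity of $\beta(\cdot)$.

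The construction proceeds inductively. Set $Z_{i_1}^* = Z_{i_1}$. Given $Z_{i_1}^*,\ldots,Z_{i_{t-1}}^*$ already constructed to be i.i.d.\ (each with the common marginal law of a $Z_i$, which is well-defined since the sequence is identically distributed), apply \Cref{lemma:coupling in the temporal-spatial model} with $X = (Z_{i_1},\ldots,Z_{i_{t-1}})$ and $Y = Z_{i_t}$: this yields a random variable $Z_{i_t}^*$ with the same law as $Z_{i_t}$, independent of $(Z_{i_1},\ldots,Z_{i_{t-1}})$, and with $\p(Z_{i_t}^* \ne Z_{i_t}) = \beta(\sigma(X),\sigma(Y)) \le \beta(S)$. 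One must check that $Z_{i_t}^*$ is also independent of the previously constructed ghosts $Z_{i_1}^*,\ldots,Z_{i_{t-1}}^*$; this follows because each $Z_{i_r}^*$ for $r<t$ is, by the form $Z_{i_r}^* = f_r(Z_{i_1},\ldots,Z_{i_{r-1}}, Z_{i_r}, \Delta_r)$ guaranteed by the theorem, a measurable function of $(Z_{i_1},\ldots,Z_{i_r})$ together with auxiliary independent uniforms $\Delta_r$, hence $\sigma(Z_{i_1}^*,\ldots,Z_{i_{t-1}}^*) \subseteq \sigma(Z_{i_1},\ldots,Z_{i_{t-1}}) \vee \sigma(\Delta_1,\ldots,\Delta_{t-1})$, and $Z_{i_t}^*$ is built using a fresh independent uniform $\Delta_t$ and is independent of $\sigma(Z_{i_1},\ldots,Z_{i_{t-1}})$. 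Writing $x_{i_t j}^*$ for the $j$-th coordinate of $Z_{i_t}^*$, the event $\{x_{i_t j}^* \ne x_{i_t j}\text{ for some }j\}$ equals $\{Z_{i_t}^* \ne Z_{i_t}\}$, which has probability at most $\beta(S)$. Finally, within each time block the original $\{x_{ij}\}_{j=1}^{m_i}$ are i.i.d.\ by \Cref{assumption: assumption in the temporal-spatial model}~\textbf{c}, and this property is inherited by the ghost block $Z_{i_t}^*$ since it has the same law; combined with the cross-time independence just established, $\{x_{ij}^*\}_{1\le j\le m_i,\, i\in\mathcal J_{e,s}}$ is a collection of i.i.d.\ random variables. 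The identical argument with $\mathcal J_{o,s}$ in place of $\mathcal J_{e,s}$ gives the last claim.

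The main obstacle is the bookkeeping around independence of the ghost variables from each other, rather than merely from the originals: \Cref{lemma:coupling in the temporal-spatial model} as stated only guarantees $Z_{i_t}^* \perp X$, so one has to track the auxiliary randomization variables $\Delta_r$ and argue that the $\sigma$-algebra generated by the already-constructed ghosts is contained in $\sigma(Z_{i_1},\ldots,Z_{i_{t-1}}) \vee \sigma(\Delta_1,\ldots,\Delta_{t-1})$, from which mutual independence follows by an induction on $t$. A secondary (routine) point is verifying that the gap between successive indices in $\mathcal J_{e,s}$ is indeed at least $S$ so that $\beta(S)$ is the correct bound; this is immediate from the definitions of $\mathcal I_k$ and $\mathcal J_{e,s}$ since two indices in $\mathcal J_{e,s}$ that are congruent mod $S$ and lie in distinct blocks are at least $S$ apart, while indices in the same block $\mathcal I_k$ congruent mod $S$ must in fact coincide.
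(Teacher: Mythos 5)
Your proposal is correct and follows essentially the same route as the paper's proof: apply the coupling result of \Cref{lemma:coupling in the temporal-spatial model} block by block along the (at least $S$-separated) indices of $\mathcal J_{e,s}$, and establish joint independence of the ghost blocks by induction. The only differences are cosmetic — you couple the current block against the previously selected blocks and bound the mismatch probability by $\beta(S)$ via monotonicity of the $\beta$-coefficient, while the paper couples against the full history up to time $i-S$, and you spell out the bookkeeping of the auxiliary uniforms $\Delta_t$ that the paper's induction leaves implicit.
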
 
\begin{proof}
    
By \Cref{lemma:coupling in the temporal-spatial model}, for any $i\in [1,\ldots,n]$,  there exists $ \left\{x_{ij}^* \right\}_{j=1}^{m_i} $ such that 
\begin{align} \label{eq:coupling error in the temporal-spatial model}\p(x_{ij}^*\not = x_{ij} 
\text{ for some } j \in [0, \ldots, m_i]) =\beta(S),
\end{align} 
that $ \left\{ x_{ij} \right\}_{j=1}^{m_i} $ and  $ \left\{ x_{ij}^* \right\}_{j=1}^{m_i} $ have the same marginal distribution,
and that $     \{x_{ij} ^*\}_{j=1}^{m_i}     $ is independent of $ \sigma( \{x_{tj}  \}_{ t\le i-S ,1\le j \le m_i  } )  .$ In addition, there exists a uniform random variable $\Delta_i$ independent of $\sigma_x$ such that  $  \{x_{ij} ^*\}_{j=1}^{m_i}   $ 
 is measurable with respect to $\sigma(\{ x_{ij}\}_{j=1}^{m_i} , \{x_{tj}  \}_{ t\le i-S ,1\le j \le {m_i}  }, \Delta  ) $.
 \\
 \\
Note that  $\{ x^*_{ij}\}_{i \in \J_{e,s}}$ are identically distributed because $\{ x^*_{ij}\}_{i \in \J_{e,s}}$ have the same marginal density. Therefore it suffices to justify the independence. 
\\
\\
 Note that 
$$\J_{e,s} = \{s, 2S+s, 4S+s, \dots  \} \cap [1,  n].$$
 Denote 
$$ \J_{e,s, K} = \{s, 2S+s, 4S+s, \dots, 2KS+s  \}\cap [1,  n]$$
 and so $ \bigcup _{K \ge 0 } \J_{e,s, K} = \J_{e,s} $.
 By induction, suppose 
 $ \{ x^*_{ij}\}_{i\in \mathcal J_{e, s, K}, 1\le j \le m_i } $ are jointly independent. Let $ \{ [a_{ij}, b_{ij}] \}_{i\in \J_{e,s}, 1\le j \le m_i}$  be any collection of intervals in $\mathbb{R}$. Then 
 \begin{align*}
     &\p\left( x^*_{i j} \in [a_{ij} , b_{ij}]  
     , i \in \mathcal J_{e,s,K+1}, 1\le j \le m_i \right) 
     \\
     =&\p \left(  x^*_{i j} \in [a_{ij} , b_{ij}]  
     , i \in \mathcal J_{e,s,K}, 1\le j \le m_i \right) \cdot \\  
     &\p \left(  x^*_{2(K+1)S+s, j} \in [a_{2(K+1)S+s,j} , b_{2(K+1)S+s, j}]   
      , 1\le j \le m_i \right) , 
 \end{align*}
 where equality follows because $\{ x^*_{2(K+1)S+s, j}\}_{1\le j \le m_i}$ are independent of $     \sigma( \{x_{tj}  \}_{ t\le 2KS+s ,1\le j \le m_i} )  .$
 By induction,
 $$\p \left(  x^*_{i j} \in [a_{ij} , b_{ij}] , 
     i \in \mathcal J_{e,s,K}, 1\le j \le m_i  \right)=  \prod_{i \in \J_{e, s, K} , 1\le j \le m_i} \p\left(  x^*_{i j} \in [a_{ij} , b_{ij}]\right).$$
     Since for any $i \in [1, \ldots, n ]$,
       $  \{ x_{ij}\}_{j=1}^m  $ and  $  \{x_{ij} ^*\}_{j=1}^{m_i} $ have the same marginal distribution,
  \begin{align*}
       &\p \left(  x^*_{2(K+1)S+s, j} \in [a_{2(K+1)S+s,j} , b_{2(K+1)S+s, j}] , 1\le j \le m_i  \right)
     \\
     =&\prod_{1\le j \le m }\p \left(  x^*_{2(K+1)S+s, j} \in [a_{2(K+1)S+s,j} , b_{2(K+1)S+s, j}]  \right) .
  \end{align*}    
  So \begin{align*}
     &\p\left( x^*_{i j} \in [a_{ij} , b_{ij}]  ,
     i \in \mathcal J_{e,s,K+1}, 1\le j \le m_i
     \right)  =  \prod_{i \in \J_{e, s, K+1} , 1\le j \le m_i} \p\left(  x^*_{i j} \in [a_{ij} , b_{ij}]\right).
      \end{align*}
      This implies that  $ \{ x^*_{ij}\}_{i\in \mathcal J_{e, s, K+1}, 1\le j \le m_i} $ are jointly independent.
\end{proof}

\begin{lemma}
\label{Ind-cop-measerrors}Let $\{\epsilon_{ij}\}_{i=1,j=1}^{n,m_i}$ in $\mathbb{R}$ are identically distributed random variables such that $\{\sigma_\epsilon(i)\}_{i=0}^n$ are $\beta$-mixing, where $\sigma_\epsilon(i)=\sigma(\epsilon_{ij},j\in[m_i])$. Moreover, suppose that for any $i\in[n]$ fixed, $\{\epsilon_{ij}\}_{j=1}^{m_i}$ are independent.
For any $s\in [1,\ldots S]$, there exists a collection of independent identically distributed  random variables 
$\{   \epsilon^*_{ij}   \}_{1\le j \le m_i ,i \in \mathcal J_{e,s}} $, with same distribution as $\epsilon_{1,1}$, such that for any $i \in \mathcal J_{e,s}$,
$$ \mathbb{P} (\epsilon_{ij}^* \not =  \epsilon_{ij} \text{ for some } j \in [1,\ldots,m_i] ) \le \beta(S).$$
The exact same result is attained for $\mathcal{J}_{o,s}.$
\end{lemma}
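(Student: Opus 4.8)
The plan is to reproduce the argument of \Cref{lemma:beta mixing x} essentially verbatim, with the design-point $\sigma$-algebras $\sigma_x(i)=\sigma(\{x_{ij}\}_{j=1}^{m_i})$ replaced throughout by the measurement-error $\sigma$-algebras $\sigma_\epsilon(i)=\sigma(\{\epsilon_{ij}\}_{j=1}^{m_i})$. The hypotheses make this substitution legitimate: by \Cref{assumption: assumption in the temporal-spatial model}(\textbf{b}) the sequence $\{\sigma_\epsilon(i)\}_{i=0}^n$ is $\beta$-mixing with coefficients $\beta(s)$; the $\epsilon_{ij}$ are identically distributed, each with the law of $\epsilon_{1,1}$; and for each fixed $i$ the variables $\{\epsilon_{ij}\}_{j=1}^{m_i}$ are independent by assumption.

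\textbf{Step 1 (coupling at each time index).} For each $i\in\{1,\dots,n\}$ I would apply \Cref{lemma:coupling in the temporal-spatial model} with $X$ the collection of all errors at times $\le i-S$, i.e.\ $\sigma(X)=\sigma(\{\epsilon_{tj}\}_{t\le i-S,\,1\le j\le m_t})$, and $Y=(\epsilon_{i1},\dots,\epsilon_{im_i})$. This yields a vector $\epsilon_i^*=(\epsilon_{i1}^*,\dots,\epsilon_{im_i}^*)$ with the same joint law as $(\epsilon_{i1},\dots,\epsilon_{im_i})$, independent of $\sigma(\{\epsilon_{tj}\}_{t\le i-S})$, of the form $\epsilon_i^*=f(X,Y,\Delta_i)$ for some measurable $f$ and an auxiliary uniform $\Delta_i$ (which we take i.i.d.\ and independent of the whole error process), and satisfying $\mathbb P(\epsilon_i^*\neq\epsilon_i)=\beta(\sigma(X),\sigma_\epsilon(i))\le\beta(S)$. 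In particular $\mathbb P(\epsilon_{ij}^*\neq\epsilon_{ij}\text{ for some }j\in\{1,\dots,m_i\})\le\beta(S)$, which is the disagreement bound claimed. Since $\{\epsilon_{ij}\}_{j=1}^{m_i}$ are independent and $\epsilon_i^*$ has the same joint law, the coordinates $\{\epsilon_{ij}^*\}_{j=1}^{m_i}$ are independent, each distributed as $\epsilon_{1,1}$.

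\textbf{Step 2 (joint independence across a block).} Fix $s\in\{0,\dots,S-1\}$ and enumerate $\mathcal J_{e,s}$ in increasing order; by construction of the blocks $\mathcal I_k$ consecutive elements of $\mathcal J_{e,s}$ differ by $2S$, hence by at least $S$. I would then argue, as in \Cref{lemma:beta mixing x}, by induction on $\mathcal J_{e,s}$: writing $i'$ for an earlier element of $\mathcal J_{e,s}$, the copy $\epsilon_{i'}^*$ is measurable with respect to $\sigma(\{\epsilon_{i'j}\}_{j},\{\epsilon_{tj}\}_{t\le i'-S},\Delta_{i'})$, and since $i'\le i-2S\le i-S$ this is contained in $\sigma(\{\epsilon_{tj}\}_{t\le i-S})\vee\sigma(\Delta_{i'})$; a short conditioning argument (averaging out $\Delta_i$, then using the independence of $\{\Delta_{i'}\}_{i'<i}$ from $(X,Y)$ and the independence of $\epsilon_i^*$ from $X$) shows that $\epsilon_i^*$ is independent of $\sigma\big(\{\epsilon_{tj}\}_{t\le i-S},\{\Delta_{i'}\}_{i'<i}\big)$, hence of $\sigma\big(\{\epsilon_{i'j}^*\}_{i'\in\mathcal J_{e,s},\,i'<i,\,1\le j\le m_{i'}}\big)$. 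Factoring the joint distribution function of $\{\epsilon_{ij}^*\}_{i\in\mathcal J_{e,s},\,1\le j\le m_i}$ one new time-block at a time then gives their joint independence, which together with Step 1 shows that these variables are i.i.d.\ with the law of $\epsilon_{1,1}$. The argument for $\mathcal J_{o,s}$ is word-for-word the same, replacing even blocks by odd blocks.

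\textbf{Main obstacle.} The only genuine point requiring care — exactly as in the proof of \Cref{lemma:beta mixing x} — is verifying that the ``past'' $\sigma$-algebra used in the coupling at step $i$ already contains (up to the independent auxiliary randomization $\Delta_{i'}$) every previously constructed copy $\epsilon_{i'}^*$ with $i'\in\mathcal J_{e,s}$, $i'<i$, so that the induction closes; this hinges on the $2S$-separation of successive block indices and on bookkeeping the auxiliary uniforms $\Delta_{i'}$ from \Cref{lemma:coupling in the temporal-spatial model}. Everything else is routine, and no idea beyond the proof of \Cref{lemma:beta mixing x} is needed.
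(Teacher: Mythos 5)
Your proposal is correct and follows essentially the same route as the paper, whose proof of this lemma simply states that the argument parallels that of Lemma \ref{lemma:beta mixing x}; your substitution of $\sigma_\epsilon(i)$ for $\sigma_x(i)$, the per-index coupling via \Cref{lemma:coupling in the temporal-spatial model}, and the block-by-block induction are exactly that parallel argument (with your extra bookkeeping of the auxiliary uniforms being a slightly more careful rendering of the same induction step).
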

\begin{proof}
    The arguments used in this proof parallel to those employed in the proof of Lemma  \ref{lemma:beta mixing x}. 
\end{proof}

\begin{lemma}
\label{Ind-cop-func} Let $\{\delta_{i}\}_{i=1}^{n}$ be identically distributred random functions in $[0,1]^d,$ such that $\{\sigma_{\delta}(i)\}_{i=1}^n$ are $\beta$-mixing, where $\sigma_{\delta}(i)=\sigma(\delta_i).$
For any $s\in [0,\ldots S-1]$, there exists a collection $\{   \delta^*_{i}   \}_{i \in \mathcal J_{e,s}} $  of independent identically distributed  random functions in $[0,1]^d,$  with same distribution as $\delta_1,$ 
such that for any $i \in \mathcal J_{e,s}$,
$$ \mathbb{P} (\delta_{i}^* \not =  \delta_{i} ) \le \beta(S).$$
The exact same result is attained for $\mathcal{J}_{o,s}.$
\end{lemma}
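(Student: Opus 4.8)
\textbf{Proof plan for Lemma \ref{Ind-cop-func}.}

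The plan is to mimic the proof of Lemma \ref{lemma:beta mixing x} (and its variant Lemma \ref{Ind-cop-measerrors}), adapting it from $\mathbb{R}^d$-valued random vectors to random functions on $[0,1]^d$, which are elements of the Polish space $\mathcal{L}_2([0,1]^d)$ (or any suitable Polish space of functions, such as $C([0,1]^d)$ when the processes are a.s. continuous). First I would fix $s \in \{0,\ldots,S-1\}$ and work with $\mathcal J_{e,s} = \{s, 2S+s, 4S+s, \ldots\} \cap [1,n]$; the case $\mathcal J_{o,s}$ is identical. For each index $i \in \{1,\ldots,n\}$, I would invoke the coupling theorem, Theorem \ref{lemma:coupling in the temporal-spatial model}, with $E = F = \mathcal{L}_2([0,1]^d)$, taking $X$ to be the ``past'' $\sigma$-algebra generator $\{\delta_t : t \le i - S\}$ and $Y = \delta_i$. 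Since $\{\sigma_\delta(i)\}$ is $\beta$-mixing with coefficients $\beta(\cdot)$, Theorem \ref{lemma:coupling in the temporal-spatial model} produces a random function $\delta_i^*$ with the same law as $\delta_i$, independent of $\{\delta_t : t \le i-S\}$, satisfying $\mathbb{P}(\delta_i^* \ne \delta_i) = \beta(\sigma(\{\delta_t : t\le i-S\}),\sigma(\delta_i)) \le \beta(S)$, and of the form $\delta_i^* = f_i(\{\delta_t : t\le i-S\}, \delta_i, \Delta_i)$ for a uniform $\Delta_i$ independent of $\sigma_\delta$ and a measurable $f_i$. Because all the $\delta_i$ are identically distributed, all the $\delta_i^*$ are identically distributed; so the only substantive point left is joint independence of $\{\delta_i^*\}_{i\in\mathcal J_{e,s}}$.

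For the independence, I would argue by induction on the truncated index sets $\mathcal J_{e,s,K} = \{s, 2S+s, \ldots, 2KS+s\}\cap[1,n]$, exactly as in Lemma \ref{lemma:beta mixing x}. The base case is trivial. For the inductive step, note that $\delta_i^*$ for $i \in \mathcal J_{e,s,K}$ is measurable with respect to $\sigma(\{\delta_t : t \le 2KS+s\}, \{\Delta_t : t \le 2KS+s\})$, while the next one, $\delta_{2(K+1)S+s}^*$, is by construction independent of $\sigma(\{\delta_t : t \le 2(K+1)S+s - S\}) = \sigma(\{\delta_t : t \le 2KS+s\})$ and depends additionally only on $\delta_{2(K+1)S+s}$ and its own auxiliary uniform $\Delta_{2(K+1)S+s}$ — and we may take these auxiliary uniforms $\{\Delta_i\}$ to be mutually independent and independent of $\sigma_\delta$. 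Hence, for any collection of Borel sets $A_i$ in the function space, $\mathbb{P}(\delta_i^* \in A_i,\ i \in \mathcal J_{e,s,K+1}) = \mathbb{P}(\delta_i^* \in A_i,\ i \in \mathcal J_{e,s,K})\cdot \mathbb{P}(\delta_{2(K+1)S+s}^* \in A_{2(K+1)S+s})$, and the induction hypothesis factorizes the first factor. Taking $K$ large enough that $\mathcal J_{e,s,K} = \mathcal J_{e,s}$ closes the argument. The union-bound control $\mathbb{P}(\delta_i^*\ne\delta_i)\le\beta(S)$ then transfers immediately.

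The one genuinely new ingredient, and the main (mild) obstacle, is the measurability/Polish-space bookkeeping: Theorem \ref{lemma:coupling in the temporal-spatial model} is stated for Polish spaces, so I must check that the space in which the $\delta_i$ live is Polish and separable — which is precisely why Corollary \ref{thm:main thm in the functional model} assumes the processes are \emph{separable}, and why Assumption \ref{assumption: assumption in the temporal-spatial model}(e) imposes enough regularity (separable centered processes with finite moments) to view $\gamma_i$ as a well-defined random element of a separable Banach space. Once separability is in hand, the Borel $\sigma$-algebra generated by the $\|\cdot\|_{\mathcal L_2}$-topology is standard, products of Polish spaces are Polish, and every step above goes through verbatim; no new estimates are needed. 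I would therefore present the proof as ``the arguments are identical to those of Lemma \ref{lemma:beta mixing x} and Lemma \ref{Ind-cop-measerrors}, replacing $\mathbb{R}^d$ by the separable Polish space $\mathcal L_2([0,1]^d)$ and intervals $[a_{ij},b_{ij}]$ by Borel sets $A_i$,'' spelling out only the induction on $\mathcal J_{e,s,K}$ and the coupling invocation, and pointing to separability as the hypothesis that makes Theorem \ref{lemma:coupling in the temporal-spatial model} applicable.
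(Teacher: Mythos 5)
Your proposal is correct and follows essentially the same route as the paper: invoke Theorem \ref{lemma:coupling in the temporal-spatial model} for each index with the past $\sigma$-algebra $\sigma(\{\delta_t\}_{t\le i-S})$, note the coupled copies are identically distributed, and prove joint independence by induction over the truncated sets $\mathcal J_{e,s,K}$ using the independence of each new $\delta_i^*$ from the relevant past. The only (immaterial) difference is that the paper checks independence through pointwise evaluations $\delta_i^*(t_i)\in B_i$, whereas you work with Borel sets in a separable function space and are, if anything, slightly more careful about the auxiliary uniforms and the Polish-space hypothesis.
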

\begin{proof}
    
By \Cref{lemma:coupling in the temporal-spatial model}, for any $i\in [1,\ldots,n]$,  there exists $  \delta_{i} ^* $ such that 
\begin{align} \label{eq:temporal spacial coupling error-func}\mathbb{P}(\delta_{i}^{*}\not = \delta_{i} 
) =\beta(S),
\end{align} 
that $  \delta_i  $ and  $  \delta_i^*  $ have the same marginal distribution,
and that $     \delta_{i} ^*     $ is independent of $     \sigma( \{\delta_{t}  \}_{ t\le i-S   } )  .$ In addition, there exists a uniform random variable $\Delta_i$ independent of $\sigma_\delta$ such that  $  \delta_{i} ^* $ 
 is measurable with respect to $\sigma(\{ \delta_{i}, \{\delta_{t}  \}_{ t\le i-S }, \Delta  ) $.
 \\
 \\
Note that  $\{ \delta^*_{i}\}_{i \in \mathcal{J}_{e,s}}$ are identically distributed because $\{ \delta^*_{i}\}_{i \in \mathcal{J}_{e,s}}$ have the same marginal density. Therefore it suffices to justify the independence. 
\\
\\
 Note that 
$$\mathcal{J}_{e,s} = \{S+s, 3S+s, 5S+s, \dots  \} \cap [1,  n].$$
 Denote 
$$ \mathcal{J}_{e,s, K} = \{S+s, 3S+s, 5S+s, \dots, (2K+1)S+s  \}\cap [1,  n],$$
 and so $ \bigcup _{K \ge 0 } \mathcal{J}_{e,s, K} = \mathcal{J}_{e,s} $.
 By induction, suppose 
 $ \{ \delta^*_{i}\}_{i\in \mathcal J_{e, s, K}} $ are jointly independent. Let $\{t_i\}_{i\in \mathcal{J}_{e,s}} \subset [0,1]^d$ and $ \{ B_{i} \}_{i\in \mathcal{J}_{e,s}}$  be any collection of Borel sets in $\mathbb{R}$. Then 
 \begin{align*}
     &\mathbb{P}\bigg( \delta^*_{i}(t_i) \in B_i  
     , i \in \mathcal J_{e,s,K+1}
     \bigg) 
     \\
     =&\mathbb{P} \bigg(  \delta^*_{i}(t_i) \in B_{i}  
     , i \in \mathcal J_{e,s,K} \bigg)  
     \mathbb{P} \bigg(  \delta^*_{(2K+3)S+s}(t_{(2K+3)S+s}) \in B_{(2K+3)S+s}   
     \bigg) , 
 \end{align*}
 where equation follows because $\delta^*_{(2K+3)S+s}$ is independent of $     \sigma( \{\delta_{t}  \}_{ t\le 2(K+1)S+s  } )  .$
 By induction,
 $$\mathbb{P} \bigg(  \delta^*_{i}(t_i) \in B_{i} , 
     i \in \mathcal J_{e,s,K}  \bigg)=  \prod_{i \in \mathcal{J}_{e, s, K} } \mathbb{P}\bigg(  \delta^*_{i}(t_i) \in B_{i}\bigg).$$  
  Therefore \begin{align*}
     &\mathbb{P}\bigg( \delta^*_{i}(t_i) \in B_{i}  ,
     i \in \mathcal J_{e,s,K+1}
     \bigg)  =  \prod_{i \in \mathcal{J}_{e, s, K+1} } \mathbb{P}\bigg(  \delta^*_{i}(t_i) \in B_{i}\bigg).
      \end{align*}
      This implies that  $ \{ \delta^*_{i}\}_{i\in \mathcal J_{e, s, K+1}} $ are jointly independent.
      To analyze the case where the indices are considered in $\mathcal{J}_{o,s}$, we follow the same line of arguments, but now we note that
$$\mathcal{J}_{o,s} = \{s, 2S+s, 4S+s, \dots  \} \cap [1,  n],$$
and denote,
$$ \mathcal{J}_{o,s,K} = \{s, 2S+s, 4S+s, \dots, (2K)S+s  \}\cap [1,  n],$$
which implies $ \bigcup _{K \ge 0 } \mathcal{J}_{o,s,K} = \mathcal{J}_{o,s} $. Therefore, the result is followed by replicating the analysis performed above.
\end{proof}

\
\\
\subsubsection{Deviation bounds for $\epsilon$}
\label{deviation_bound_dependent_epsilon}
Now we handle the general case, where $\epsilon^*$ is the ghost version of $\epsilon$.

Let 
$$D'_{mn}\left(g\right) = \sum_{i=1}^{n}\sum_{j=1}^{m_i}\frac{1}{nm_i} g \left(x_{ij}\right) \epsilon_{ij}.
$$

\begin{corollary} \label{coro:complexity bound 2 beta mn}
Let $\mathcal F$ be a function class such that
\begin{align}\label{eq:metric entropy assumption in epsilon 2 beta mn}
\log \mathcal N (\delta,  \mathcal F  ,  \| \cdot\|_{nm} ) \le nm \phi_{nm} \left( \log^{4.3}(nm)\log(\delta^{-1}) \right)
\end{align}
and that $ \mathcal F \subset B_\infty\left(\tau\right)$, where $\tau \ge 1$. Let $S \asymp \log(n)$, 
then it holds that for any $t\ge 1$, there exist positive constants $C_1^{\prime\prime}$ and $C_2$, 
\begin{align*}
  &\p_{\cdot \mid x}\left(\sup_{g \in \mathcal F} \frac{D'_{mn}\left(g\right)}{C_1^{\prime\prime} \sigma_\epsilon^2 \phi_{\frac{nm}{S}}
  \left\{ \log^{4.3}(nm)\log(\|g\|_{\frac{nm}{S}}^{-1})\right\} + \frac{1}{32} \left\|g\right\|_{nm}^2} \geq 1 \right)  \\
  &
\le  2\frac{S}{nm}C_2\exp( -c t^2  \frac{nm}{S}\phi_{\frac{nm}{S}}  \log^{4.3}(\frac{nm}{S})) + \beta(S) .   
\end{align*} 
\end{corollary}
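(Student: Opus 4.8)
The plan is to reduce the dependent case to the independent deviation bound already established in \Cref{coro:complexity bound 2 mn}, at the cost of an additive $\beta(S)$ term coming from the coupling. First I would invoke the blocking construction from Section~\ref{beta mixing proof temporal-spatial model}: with $S \asymp \log(n)$ and $K = \lfloor n/S\rfloor$, partition the time indices $\{1,\ldots,n\}$ into the $2S$ index sets $\mathcal J_{e,s}$ and $\mathcal J_{o,s}$, $0 \le s \le S-1$, so that $\bigcup_s \mathcal J_{e,s} \cup \bigcup_s \mathcal J_{o,s} = \{1,\ldots,n\}$, each block has cardinality comparable to $K/2 \asymp \frac{n}{S}$, and (crucially) within a single block the $\beta$-mixing structure lets us couple. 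By \Cref{lemma:beta mixing x} and \Cref{Ind-cop-measerrors}, for each $s$ there are i.i.d.\ ghost variables $\{x^*_{ij}\}_{i\in\mathcal J_{e,s}}$ and $\{\epsilon^*_{ij}\}_{i\in\mathcal J_{e,s}}$ with $\mathbb P(x^*_{ij}\ne x_{ij}\text{ or }\epsilon^*_{ij}\ne\epsilon_{ij}\text{ for some }j)\le \beta(S)$, and likewise for the odd blocks; a union bound over the $O(1)$ coupling events per block (there are $\le 2S \cdot \frac{n}{S} \lesssim n$ such events total, but one only needs the single event ``some coupling fails within the relevant block'') replaces $D'_{nm}(g)$ by its ghost analogue $D'^{*}_{nm}(g)$ up to probability $\le$ (number of blocks times) $\beta(S)$, which is absorbed into the stated $\beta(S)$ after noting $\beta(S)\lesssim e^{-C_\beta S}$ makes the block count harmless, or simply reported as the $+\beta(S)$ in the bound.

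Next I would split $D'_{nm}(g) = \sum_{i=1}^n\sum_{j=1}^{m_i}\frac{1}{nm_i}g(x_{ij})\epsilon_{ij}$ along the blocking into at most $2S$ sub-sums, one per $\mathcal J_{e,s}$ or $\mathcal J_{o,s}$. On each sub-sum, after replacing $(x,\epsilon)$ by the i.i.d.\ ghost versions, the randomness is exactly that of an independent problem with ``effective sample size'' $\frac{nm}{S}$ (up to the constants $c,C$ from \Cref{assumption: assumption in the temporal-spatial model} \textbf{g} governing $cm\le m_i\le Cm$), so \Cref{coro:complexity bound 2 mn} applies with $nm$ replaced by $\frac{nm}{S}$ and $\phi_{nm}$ replaced by $\phi_{\frac{nm}{S}}$: for $s\ge 1$,
\[
\p_{\cdot\mid x}\!\left(\sup_{g\in\mathcal F}\frac{D^{*}_{\mathcal J_{e,s}}(g)}{\|g\|_{\frac{nm}{S}}\sqrt{\tfrac{nm}{S}\phi_{\frac{nm}{S}}\{\log^{4.3}(\tfrac{nm}{S})\log(\|g\|_{\frac{nm}{S}}^{-1})\}}}\ge C_1\sigma_\epsilon t\right)\le C_2\exp\!\big(-ct^2\tfrac{nm}{S}\phi_{\frac{nm}{S}}\log^{4.3}(\tfrac{nm}{S})\big).
\]
Then I would convert this ``ratio $\ge t\sqrt{\cdots}$'' statement into the ``ratio $\ge 1$'' statement in the corollary by the standard $ab \le \frac{a^2}{2}+\frac{b^2}{2}$ (Young/AM–GM) trick: on the complement event, $D^{*}_{\mathcal J_{e,s}}(g)\le C_1\sigma_\epsilon\,\|g\|_{\frac{nm}{S}}\sqrt{\tfrac{nm}{S}\phi_{\frac{nm}{S}}\{\log^{4.3}\log(\|g\|^{-1})\}}$, and splitting this product with weight chosen so the quadratic piece becomes $\frac{1}{64}\|g\|_{\frac{nm}{S}}^2$ (then comparing $\|g\|_{\frac{nm}{S}}$ with $\|g\|_{nm}$ via $cm\le m_i\le Cm$, absorbing constants into $\tfrac1{32}\|g\|_{nm}^2$) leaves the linear-entropy piece bounded by $C_1''\sigma_\epsilon^2\phi_{\frac{nm}{S}}\{\log^{4.3}(nm)\log(\|g\|_{\frac{nm}{S}}^{-1})\}$, as required. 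Summing the probabilities over the $2S$ blocks multiplies the right side by $2S$; but $2S \cdot C_2\exp(-ct^2\tfrac{nm}{S}\phi_{\frac{nm}{S}}\log^{4.3}(\tfrac{nm}{S}))$ can be rewritten, using $S\asymp\log(n)$ and pulling a factor $\frac{S}{nm}$ out of the exponential bound (which dominates the polynomial $2S$ for large $n$), as $2\frac{S}{nm}C_2\exp(-ct^2\tfrac{nm}{S}\phi_{\frac{nm}{S}}\log^{4.3}(\tfrac{nm}{S}))$ after adjusting $c,C_2$; adding the coupling loss gives the $+\beta(S)$.

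The main obstacle I expect is bookkeeping rather than a genuine difficulty: (i) making the passage from $\sup_g$ of a \emph{ratio} to $\sup_g$ of the normalized object legitimate when $\|g\|_{\frac{nm}{S}}$ appears both in the ``variance proxy'' $\sqrt{\cdots}$ and inside the $\log(\|g\|^{-1})$ — this needs the peeling/shell decomposition already used inside \Cref{coro:complexity bound 2 mn}, so one must be careful to apply that corollary on each block as a \emph{black box} rather than re-peeling; (ii) correctly tracking that the deviation bound is \emph{conditional on} $\{x_{ij}\}$, whereas the coupling of $x$ is an unconditional statement, so the clean way is to couple $\epsilon$ conditionally on $x$ (using the $\beta$-mixing of $\sigma_\epsilon(i)$ together with the independence $\sigma(x_{ij})\perp\sigma_\epsilon^{(ij)}$ from \Cref{assumption: assumption in the temporal-spatial model} \textbf{c}), absorbing the $x$-coupling into the probability statement over the distribution of $\{x_{ij}\}$ separately; and (iii) verifying the hypothesis $\tau\ge 1$ versus the $\tau\le\frac12$ needed in \Cref{coro:complexity bound 2 mn} — this is handled by a preliminary rescaling $g\mapsto g/(2\tau)$, which rescales $D'_{nm}$ and both sides of the bound linearly and is harmless up to constants. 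None of these steps introduces new probabilistic content; the heavy lifting was done in \Cref{coro:complexity bound 2 mn} and the coupling lemmas.
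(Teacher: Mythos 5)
Your proposal is correct and follows essentially the same route as the paper's proof: couple only the $\epsilon$'s blockwise via \Cref{Ind-cop-measerrors} (paying $\beta(S)$), apply the independent-case bound \Cref{coro:complexity bound 2 mn} to each of the $2S$ blocks with effective sample size $\asymp nm/S$ and rescaled noise $\tilde\epsilon_{ij}=\sqrt{m/m_i}\,\epsilon_{ij}^*$, then pass to the ratio form by the product-splitting inequality and a union bound over blocks. The only caution is in recombining the quadratic pieces: do not compare a single block norm to $\|g\|_{nm}$ up to constants (a block's empirical norm can exceed the overall one by a factor $\asymp S$); instead keep the natural block weights and use the exact identity $\sum_{s}\bigl(\tfrac{|\mathcal J_{e,s}|}{n}\|g\|_{|\mathcal J_{e,s}|m}^2+\tfrac{|\mathcal J_{o,s}|}{n}\|g\|_{|\mathcal J_{o,s}|m}^2\bigr)=\|g\|_{nm}^2$, which is exactly how the paper absorbs these terms into $\tfrac{1}{32}\|g\|_{nm}^2$.
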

\begin{proof}

Observe that  \begin{align}\label{eq:coupling applied to localization in the temporal-spatial model beta}
    & D'_{mn}\left(g\right) = \sum_{i=1}^{n}\sum_{j=1}^{m_i}\frac{1}{nm_i} g \left(x_{ij}\right) \epsilon_{ij} 
    \\ \nonumber 
  = &   \sum_{i=1}^n \sum_{j=1}^{m_i} \frac{1}{nm_i}   \left\{g \left(x_{ij}\right) \epsilon_{ij} - g \left(x_{ij}\right) \epsilon_{ij}^*) \right\}  
      \\
     & +    \sum_{s=0}^{S-1} \sum_{i \in \J_{e,s}}\sum_{j=1}^{m_i}\frac{1}{nm_i} g \left(x_{ij}\right) \epsilon_{ij}^* + \sum_{s=0}^{S-1} \sum_{i \in \J_{o,s}}\sum_{j=1}^{m_i}\frac{1}{nm_i}g \left(x_{ij}\right) \epsilon_{ij}^* . 
 \end{align}

Since 
$ \mathcal F \subset B_\infty \left(\tau \right)\subset B_{nm}\left(\tau \right) $, it holds that 
$$  \mathcal F \cap B_{nm} \left(\tau\right) = \mathcal F . $$ 

{\bf Step 1.} Note that by \Cref{Ind-cop-measerrors},

\begin{align*}
     &\p _{\cdot | x }\bigg(  \sup_{ g \in \mathcal F  }    \sum_{i=1}^n \sum_{j=1}^{m_i} \frac{1}{nm_i}  g \left(x_{ij}\right) \epsilon_{ij}      \not = \sup_{ g \in \mathcal F  }    \sum_{i=1}^n \sum_{j=1}^{m_i} \frac{1}{nm_i}    g  (x_{ij} ) \epsilon_{ij}^*      \bigg) 
     \\
  \le  &  \p _{\cdot | x } \bigg(  \epsilon_{ij} \not = \epsilon^*_{ij }    \text{ for some $i,j$} \bigg)
    \le  \beta(S).
\end{align*}

{\bf Step 2.}
Furthermore, we have 

\begin{align*}
 &\sum_{s=0}^{S-1} \sum_{i \in \J_{e,s}}\sum_{j=1}^{m_i}\frac{1}{nm_i} g \left(x_{ij}\right) \epsilon_{ij}^* =   \sum_{s=0}^{S-1}  \frac{\sqrt{|\mathcal J_{e,s}|}}{n\sqrt{m}}\sum_{i \in \J_{e,s}}\sum_{j=1}^{m_i} \frac{1}{\sqrt{|\mathcal J_{e,s}|m_i}} g \left(x_{ij}\right) \frac{\sqrt{m}}{\sqrt{m_i}}\epsilon_{ij}^*,\\
& \sum_{s=0}^{S-1} \sum_{i \in \J_{o,s}}\sum_{j=1}^{m_i}\frac{1}{nm_i} g \left(x_{ij}\right) \epsilon_{ij}^* =   \sum_{s=0}^{S-1}  \frac{\sqrt{|\mathcal J_{o,s}|}}{n\sqrt{m}}\sum_{i \in \J_{o,s}}\sum_{j=1}^{m_i} \frac{1}{\sqrt{|\mathcal J_{o,s}|m_i}} g \left(x_{ij}\right) \frac{\sqrt{m}}{\sqrt{m_i}}\epsilon_{ij}^*.
\end{align*}

From now along the proof we define 
\begin{equation}
\label{T4ildevariables}
\widetilde{\epsilon}_{ij}=\frac{\sqrt{m}}{\sqrt{m_i}}\epsilon_{ij}^*.
\end{equation}
 Some observations concerning these random variables are presented. First, by \Cref{assumption: assumption in the temporal-spatial model}{\bf{d}} and {\bf{f}} we have that $\widetilde{\epsilon}_{ij}$ are sub-Gaussians of parameter $\frac{1}{\sqrt{c}}\sigma_\epsilon$.

For the second and third therm, we can apply the Corollary~\ref{coro:complexity bound 2 mn} we have for all $g \in \mathcal F \subset B_\infty\left(\tau\right)$, for any $ t > 1$, for all $s$
\begin{align*}
&\sum_{i \in \J_{e,s}}\sum_{j=1}^{m_i} \frac{1}{\sqrt{|\mathcal J_{e,s}|m_i}} g \left(x_{ij}\right) \widetilde{\epsilon}_{ij} \\ \leq &\left\|g\right\|_{|\mathcal J_{e,s}|m}  \sqrt {|\mathcal J_{e,s}|m\phi_{|\mathcal J_{e,s}|m} \left\{ \log^{4.3}(nm) \log(\|g\|_{|\mathcal J_{e,s}|m}^{-1})\right\}}\frac{C_1}{c}\sigma_\epsilon \cdot  t. 
\end{align*}
with probability at least 
$1- C_2\exp( - c t^2 |\mathcal J_{e,s}|m\phi_{|\mathcal J_{e,s}|m}  \log^{4.3}(|\mathcal J_{e,s}|m)  )$. A similar result holds for $\sum_{i \in \J_{o,s}}\sum_{j=1}^{m_i} \frac{1}{\sqrt{|\mathcal J_{o,s}|m_i}} g \left(x_{ij}\right) \widetilde{\epsilon}_{ij}$.

Thus, with high probability,
\begin{align*}
   &\sum_{s=0}^{S-1}  \frac{\sqrt{|\mathcal J_{e,s}|}}{n\sqrt{m}}\sum_{i \in \J_{e,s}}\sum_{j=1}^{m_i} \frac{1}{\sqrt{|\mathcal J_{e,s}|m_i}} g \left(x_{ij}\right) \frac{\sqrt{m}}{\sqrt{m_i}}\epsilon_{ij}^* \\
   &\quad + \sum_{s=0}^{S-1}  \frac{\sqrt{|\mathcal J_{o,s}|}}{n\sqrt{m}}\sum_{i \in \J_{o,s}}\sum_{j=1}^{m_i} \frac{1}{\sqrt{|\mathcal J_{o,s}|m_i}} g \left(x_{ij}\right) \frac{\sqrt{m}}{\sqrt{m_i}}\epsilon_{ij}^* \\
   &\leq \frac{C_1}{c}\sigma_\epsilon \cdot  t
  \sum_{s=0}^{S-1}\frac{\sqrt{|\mathcal J_{e,s}|}}{n}\left\{\left\|g\right\|_{|\mathcal J_{e,s}|m}  \sqrt {|\mathcal J_{e,s}|\phi_{|\mathcal J_{e,s}|m} \left\{ \log^{4.3}(nm) \log(\|g\|_{|\mathcal J_{e,s}|m}^{-1})\right\}} \right\}+\\
  &\quad   
  \frac{C_1}{c}\sigma_\epsilon \cdot  t
  \sum_{s=0}^{S-1}\frac{\sqrt{|\mathcal J_{o,s}|}}{n}\left\{\left\|g\right\|_{|\mathcal J_{e,s}|m}  \sqrt {|\mathcal J_{e,s}|\phi_{|\mathcal J_{e,s}|m} \left\{ \log^{4.3}(nm) \log(\|g\|_{|\mathcal J_{e,s}|m}^{-1})\right\}}\right\}  \\
&= \frac{C_1}{c}\sigma_\epsilon \cdot  t
  \sum_{s=0}^{S-1}\frac{|\mathcal J_{e,s}|}{n}\left\{\left\|g\right\|_{|\mathcal J_{e,s}|m}  \sqrt {\phi_{|\mathcal J_{e,s}|m} \left\{ \log^{4.3}(nm) \log(\|g\|_{|\mathcal J_{e,s}|m}^{-1})\right\}} \right\}+\\
  &\quad   
  \frac{C_1}{c}\sigma_\epsilon \cdot  t
  \sum_{s=0}^{S-1}\frac{|\mathcal J_{o,s}|}{n}\left\{\left\|g\right\|_{|\mathcal J_{e,s}|m}  \sqrt {\phi_{|\mathcal J_{e,s}|m} \left\{ \log^{4.3}(nm) \log(\|g\|_{|\mathcal J_{e,s}|m}^{-1})\right\}}\right\}.   
   \\
\end{align*}

Furthermore, 
\begin{align*}
&\left\|g\right\|_{|\mathcal J_{e,s}|m}  \sqrt {\phi_{|\mathcal J_{e,s}|m} \left\{ \log^{4.3}(nm) \log(\|g\|_{|\mathcal J_{e,s}|m}^{-1})\right\}} \frac{C_1}{c}\sigma_\epsilon \cdot  t 
\\
&\leq C_1^\prime  \sigma_\epsilon^2\phi_{|\mathcal J_{e,s}|m} \left\{ \log^{4.3}(nm) \log(\|g\|_{|\mathcal J_{e,s}|m}^{-1})\right\} + \frac{1}{32}\left\|g\right\|_{|\mathcal J_{e,s}|m}^2, 
\end{align*}

\begin{align*}
&\left\|g\right\|_{|\mathcal J_{o,s}|m}  \sqrt {\phi_{|\mathcal J_{o,s}|m} \left\{ \log^{4.3}(nm) \log(\|g\|_{|\mathcal J_{o,s}|m}^{-1})\right\}} \frac{C_1}{c}\sigma_\epsilon \cdot  t 
\\
&\leq C_1^\prime \sigma_\epsilon^2\phi_{|\mathcal J_{o,s}|m} \left\{ \log^{4.3}(nm) \log(\|g\|_{|\mathcal J_{o,s}|m}^{-1})\right\} + \frac{1}{32}\left\|g\right\|_{|\mathcal J_{o,s}|m}^2, 
\end{align*}
where $C_1^\prime  = \frac{C_1^2}{c^2}t^2$.
Thus we have 
\begin{align*}
 &\sum_{s=0}^{S-1} \sum_{i \in \J_{e,s}}\sum_{j=1}^{m_i}\frac{1}{nm_i} g \left(x_{ij}\right) \epsilon_{ij}^* + \sum_{s=0}^{S-1} \sum_{i \in \J_{e,s}}\sum_{j=1}^{m_i}\frac{1}{nm_i} g \left(x_{ij}\right) \epsilon_{ij}^* \\
 &\leq C_1^\prime \sigma_\epsilon^2 \log^{4.3}(nm)
  \sum_{s=0}^{S-1}\frac{1}{n}\left\{|\mathcal J_{e,s}|\phi_{|\mathcal J_{e,s}|m}   \log(\|g\|_{|\mathcal J_{e,s}|m}^{-1}) + |\mathcal J_{o,s}|\phi_{|\mathcal J_{o,s}|m}   \log(\|g\|_{|\mathcal J_{o,s}|m}^{-1})\right\} \\
   &+ \frac{1}{32}\sum_{s=0}^{S-1}\frac{1}{n}(  |\mathcal J_{e,s}|\left\|g\right\|_{|\mathcal J_{e,s}|m}^2 +   |\mathcal J_{o,s}|\left\|g\right\|_{|\mathcal J_{o,s}|m}^2) \\
   & = C_1^\prime \sigma_\epsilon^2 \log^{4.3}(nm)
  \sum_{s=0}^{S-1}\frac{1}{n}\left\{|\mathcal J_{e,s}|\phi_{|\mathcal J_{e,s}|m}   \log(\|g\|_{|\mathcal J_{e,s}|m}^{-1}) + |\mathcal J_{e,s}|\phi_{|\mathcal J_{e,s}|m}   \log(\|g\|_{|\mathcal J_{o,s}|m}^{-1})\right\} \\
  &+ \frac{1}{32} \left\|g\right\|_{nm}^2
   \\ 
   &\leq 
   C_1^{\prime\prime} \sigma_\epsilon^2 \phi_{\frac{nm}{S}}
  \left\{ \log^{4.3}(nm)\log(\|g\|_{\frac{nm}{S}}^{-1})\right\} + \frac{1}{32} \left\|g\right\|_{nm}^2,
\end{align*}
with probability at least $1- 2\frac{S}{nm}C_2\exp( -c t^2  \frac{nm}{S}\phi_{\frac{nm}{S}}  \log^{4.3}(\frac{nm}{S}))$, where the second last inequality follows from \eqref{eq:effective sample size in the temporal-spatial model} which implies $ | \J_{e,s}| \asymp n/S$ as well as $ | \J_{o,s}| \asymp n/S$.

Combine the results of Step 1 and Step 2, we obtain, 
\begin{align*}
  &\p_{\cdot  |x  }\left(\sup_{g \in \mathcal F    }    \frac{  \sum_{i=1}^n \sum_{j=1}^{m_i} \frac{1}{nm_i}    g  (x_{ij} ) \epsilon_{ij}}{C_1^{\prime\prime} \sigma_\epsilon^2 \phi_{\frac{nm}{S}}
  \left\{ \log^{4.3}(nm)\log(\|g\|_{\frac{nm}{S}}^{-1})\right\} + \frac{1}{32} \left\|g\right\|_{nm}^2} \geq 1 \right)  \\
  &
\le  2\frac{S}{nm}C_2\exp( -c t^2  \frac{nm}{S}\phi_{\frac{nm}{S}}  \log^{4.3}(\frac{nm}{S}))  +\beta(S) .   
\end{align*}  
\

\end{proof}

\subsubsection{Coupling Error in Temporal-spatial Model}
\label{section_dependent_x}
Now we handle the general case, where $x^*$ is the ghost version of $x$.
Let
$$ 
\left\| g \right\|_{nm}^2 = \sum_{i=1}^n \sum _{j=1}^{m_i} \frac{1}{nm_i}   g^2(x_{ij}).
$$

\begin{lemma}\label{lemma:rate in the temporal-spatial model}
Suppose  $g\in \mathcal F $ implies $\left\|g\right\|_\infty \leq \tau$.
For any $\eta \in (0, \frac{1}{4})$, there exists a  constant $C_\eta $  only depending on $\eta $ such that 
\begin{align*}
    \E\left( \sup_{ g \in \mathcal F  } \left| \frac{ \left\|g\right\|_{nm}^2  - \left\|g\right\|_\lt ^2  }{ \eta \left\|g\right\|_\lt^2 + \tau C_\eta\phi_{nm/S} \log^{5.3}(nm/S) + \tau^2\beta(S)}\right| \right) \le 12.
\end{align*}
\end{lemma}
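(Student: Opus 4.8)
The plan is to reduce the dependent-data statement of Lemma~\ref{lemma:rate in the temporal-spatial model} to the independent case already established in Corollary~\ref{coro:deviation for norms mn}, via the blocking-and-coupling machinery of Section~\ref{beta mixing proof temporal-spatial model}. First I would observe that, exactly as in \eqref{eq:coupling applied to localization in the temporal-spatial model beta}, we may decompose
\[
\|g\|_{nm}^2-\|g\|_\lt^2
= \sum_{i=1}^n\sum_{j=1}^{m_i}\frac{1}{nm_i}\bigl\{g^2(x_{ij})-g^2(x_{ij}^*)\bigr\}
+ \Bigl(\sum_{i=1}^n\sum_{j=1}^{m_i}\frac{1}{nm_i}g^2(x_{ij}^*)-\|g\|_\lt^2\Bigr),
\]
where $\{x_{ij}^*\}$ is the coupled ghost sample from Lemma~\ref{lemma:beta mixing x}. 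The first term vanishes on the event $\{x_{ij}=x_{ij}^* \text{ for all }i,j\}$, whose complement has probability at most $\beta(S)$ by the union bound over the $2S$ blocks $\mathcal J_{e,s},\mathcal J_{o,s}$; on that bad event the ratio is bounded crudely by $\tau^2/(\tau^2\beta(S))=\beta(S)^{-1}$ times $\tau^2$, which is exactly why the $\tau^2\beta(S)$ term appears in the denominator — it makes the contribution of the bad event $O(1)$ in expectation.

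Second, for the ghost term I would split the sum over the even and odd super-blocks, $\sum_{s=0}^{S-1}\sum_{i\in\mathcal J_{e,s}}(\cdot)+\sum_{s=0}^{S-1}\sum_{i\in\mathcal J_{o,s}}(\cdot)$, rescale using the $\widetilde{\epsilon}$-type trick $\widetilde{x}$ with weights $\sqrt{m}/\sqrt{m_i}$ so that each inner sum is a clean empirical average over $|\mathcal J_{e,s}|m\asymp nm/S$ i.i.d.\ terms (the $x_{ij}^*$ within a fixed $\mathcal J_{e,s}$ are i.i.d.\ by Lemma~\ref{lemma:beta mixing x}, and within a fixed $i$ the $x_{ij}$ are i.i.d.\ by Assumption~\ref{assumption: assumption in the temporal-spatial model}\,\textbf{c}). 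To each such block I apply Corollary~\ref{coro:deviation for norms mn} with sample size $\asymp nm/S$, which controls $\E\bigl(\sup_g |\|g\|_{|\mathcal J_{e,s}|m}^2-\|g\|_\lt^2|/(\eta\|g\|_\lt^2+C_\eta\phi_{nm/S}\log^{5.3}(nm/S))\bigr)$ by an absolute constant; the approximation error there is $\phi_{nm/S}$ because the metric-entropy bound degrades with the smaller effective sample size. Averaging the $2S$ blocks with weights $|\mathcal J_{e,s}|/n$ (which sum to $1$) and using the triangle inequality for $\|\cdot\|_{nm}^2$ as a convex combination $\|g\|_{nm}^2=\sum_s\frac{|\mathcal J_{e,s}|}{n}\|g\|_{|\mathcal J_{e,s}|m}^2+\sum_s\frac{|\mathcal J_{o,s}|}{n}\|g\|_{|\mathcal J_{o,s}|m}^2$ yields the bound on the ghost term. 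Collecting the two pieces and choosing constants gives the claimed $\le 12$ (the numeric constant being deliberately loose).

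The main obstacle I anticipate is bookkeeping the denominators so that everything lines up: the per-block estimates naturally produce $\eta\|g\|_\lt^2+C_\eta\phi_{|\mathcal J_{e,s}|m}\log^{5.3}(|\mathcal J_{e,s}|m)$ in the denominator, and one needs $\phi_{|\mathcal J_{e,s}|m}\lesssim\phi_{nm/S}$ (monotonicity of $\phi$, since $|\mathcal J_{e,s}|m\gtrsim nm/S$) and $\log^{5.3}(|\mathcal J_{e,s}|m)\lesssim\log^{5.3}(nm/S)$ up to constants — plus the subtlety that the supremum over $g$ does \emph{not} commute with the block-averaging, so one must argue as in the proof of Corollary~\ref{coro:deviation for norms mn} that $\sup_g$ of a convex combination is bounded by the convex combination of per-block suprema only after first pulling out a uniform factor $\sup_g(\eta\|g\|_{|\mathcal J_{e,s}|m}^2+\cdots)/(\eta\|g\|_\lt^2+\cdots)$, whose expectation is in turn controlled by feeding the per-block conclusion back in (the same fixed-point argument used in Steps 2--3 of Corollary~\ref{coro:deviation for norms mn}). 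Handling the coupling-error event cleanly — in particular verifying that conditioning on the (high-probability) good event preserves the relevant independence and sub-Gaussianity, cf.\ Lemma~\ref{lemma:sub-Gaussian conditioning} — is the other place where care is needed, but it is routine given the tools already in the excerpt.
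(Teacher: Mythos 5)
Your route is the same as the paper's: the identical decomposition into a coupling term and a ghost term, the blocking into $\J_{e,s},\J_{o,s}$, an application of Corollary~\ref{coro:deviation for norms mn} to each block of i.i.d.\ ghost samples (with $\phi_{|\J_{e,s}|m}\log^{5.3}(|\J_{e,s}|m)\asymp\phi_{nm/S}\log^{5.3}(nm/S)$ absorbed into $C_\eta$ since $|\J_{e,s}|\asymp n/S$), and then summing against the weights $|\J_{e,s}|/n$, $|\J_{o,s}|/n$, which total one. That part of your sketch is sound.

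The step that fails as written is your treatment of the coupling term. You assert that the complement of the all-match event $\{x_{ij}=x_{ij}^*\ \text{for all } i,j\}$ has probability at most $\beta(S)$ "by the union bound over the $2S$ blocks." Lemma~\ref{lemma:beta mixing x} gives a coupling-failure probability of at most $\beta(S)$ \emph{per time index} $i$, so the union bound over all $n$ time indices gives $n\beta(S)$, not $\beta(S)$. If you then bound the ratio on the global bad event by $2\tau^2/(\tau^2\beta(S))=2/\beta(S)$ and multiply by $n\beta(S)$, the coupling term contributes order $n$, not $O(1)$, and the lemma is lost. The paper's fix is to move the expectation inside the double sum before introducing any event: for each fixed $(i,j)$ one has $\sup_g|g^2(x_{ij})-g^2(x_{ij}^*)|\le 2\tau^2$, the indicator of a mismatch at time $i$ has expectation at most $\beta(S)$, and the denominator is at least $\tau^2\beta(S)$, so each term contributes at most $2$; since the weights $\tfrac{1}{nm_i}$ sum to one over $(i,j)$, the whole coupling term is at most $2$ in expectation. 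This per-index argument is the missing ingredient. Separately, two complications you anticipate are unnecessary: no $\sqrt{m}/\sqrt{m_i}$ rescaling is needed, since Corollary~\ref{coro:deviation for norms mn} is already stated for unequal $m_i$ under Assumption~\ref{assumption: assumption in the temporal-spatial model}~{\bf g}; and no fixed-point step is needed to interchange the supremum with the block average, because the denominator in the lemma involves only the population norm $\|g\|_{\mathcal L_2}$ and fixed deterministic terms, so the plain bound $\sup_g\bigl|\sum_s c_s A_s(g)\bigr|\le\sum_s c_s\sup_g|A_s(g)|$ suffices, exactly as in the paper's Step~2.
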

\begin{proof}
 Observe that  
 \begin{align}\label{eq:coupling applied to localization in the temporal-spatial model}
    & \left\|g\right\|_{nm}^2 - \left\|g\right\|_{\lt}^2 = \sum_{i=1}^n \sum_{j=1}^{m_i} \frac{1}{nm_i}   \left\{g^2(x_{ij}) - g^2(x_{ij}^*) \right\} +  \sum_{i=1}^n \sum_{j=1}^{m_i} \frac{1}{nm_i} \left\{g^2(x_{ij}^*) - \left\|g\right\|_\lt^2 \right\} 
    \\ \nonumber 
  = &   \sum_{i=1}^n \sum_{j=1}^{m_i} \frac{1}{nm_i}   \left\{g^2(x_{ij}) - g^2(x_{ij}^*) \right\}  
  +    \sum_{s=0}^{S-1} \sum_{i \in \J_{e,s}}\sum_{j=1}^{m_i}\frac{1}{nm_i} \left\{g^2(x_{ij}^*) - \|g\|_\lt^2 \right\} \\
  &+ \sum_{s=0}^{S-1} \sum_{i \in \J_{o,s}}\sum_{j=1}^{m_i}\frac{1}{nm_i}\left\{g^2\left(x_{ij}^*\right) - \left\|g\right\|_\lt^2 \right\}.  \nonumber
 \end{align}
{\bf Step 1.} Note that 
\begin{align*}
  &  \E\left( \sup_{ g \in \mathcal F  } \big | \frac{   \sum_{i=1}^n \sum_{j=1}^{m_i} \frac{1}{nm_i}  \left\{g^2(x_{ij}) - g^2(x_{ij}^*) \right\}    }{ \eta  \left\|g\right\|_\lt ^2 + \tau C_\eta\phi_{nm/S} \log^{5.3}(nm/S) + \tau^2\beta(S)}\big | \right)   
    \\
    \le  &  \sum_{i=1}^n \sum_{j=1}^{m_i} \frac{1}{nm_i} \E\left(  \sup_{ g \in \mathcal F  } \frac{     \big | g^2(x_{ij}) - g^2(x_{ij}^*) \big |     }{ \eta  \left\|g\right\|_\lt ^2 + \tau C_\eta\phi_{nm/S} \log^{5.3}(nm/S) + \tau^2\beta(S)}  \right)   
    \\
    = &  \sum_{i=1}^n \sum_{j=1}^{m_i} \frac{1}{nm_i} \E\left(  \mathbb I_{ \left\{ x_{ij}=x_{ij}^* \right\} } \sup_{ g \in \mathcal F  } \frac{ \left| g^2 \left(x_{ij}\right) - g^2 \left(x_{ij}^*\right) \right|     }{ \eta  \left\|g\right\|_\lt ^2 + \tau C_\eta\phi_{nm/S} \log^{5.3}(nm/S) + \tau^2\beta(S)}  \right)    
    \\ + & \sum_{i=1}^n \sum_{j=1}^{m_i} \frac{1}{nm_i} \E\left(  \mathbb I_{ \{ x_{ij}\not =x_{ij}^* \} } \sup_{ g \in \mathcal F  } \frac{     \left| g^2(x_{ij}) - g^2(x_{ij}^*) \right|     }{ \eta  \|g\|_\lt ^2 + \tau C_\eta\phi_{nm/S} \log^{5.3}(nm/S) + \tau^2\beta(S)}  \right) 
    \\
    \le &  0+ \sum_{i=1}^n \sum_{j=1}^{m_i} \frac{1}{nm_i} \E\left(  \mathbb I_{ \{ x_{ij}\not =x_{ij}^* \} } \frac{2\tau^2}{\tau^2\beta(S)} \right)\le   2,
\end{align*}
where the last inequality follows from \eqref{eq:coupling error in the temporal-spatial model}.\\
{ \bf Step 2.} Observe that
\begin{align*}
      &\E\left( \sup_{ g \in \mathcal F  } \left| \frac{     \sum_{s=0}^{S-1} \sum_{i \in \J_{e,s}}
      \sum_{j=1}^{m_i}\frac{1}{nm_i} \left\{g^2(x_{ij}^*) - \left\|g\right\|_\lt^2 \right\}   }{ \eta  \left\|g\right\|_\lt ^2 + \tau C_\eta\phi_{nm/S} \log^{5.3}(nm/S) + \tau^2\beta(S)}\right| \right) 
      \\ 
      \le &  \sum_{s=0}^{S-1}  \frac{|\mathcal J_{e,s}|}{n} \E\left( \sup_{ g \in \mathcal F  } \left| \frac{       \sum_{i \in \J_{e,s}}
      \sum_{j=1}^{m_i} \frac{1}{|\mathcal J_{e,s}| m_i}\left\{g^2(x_{ij}^*) - \|g\|_\lt^2 \right\}   }{ \eta  \|g\|_\lt ^2 + \tau C_\eta\phi_{nm/S} \log^{5.3}(nm/S) + \tau^2\beta(S)}\right| \right) 
      \\
      \le &   \sum_{s=0}^{S-1}  \frac{|\mathcal J_{e,s}|}{n} \E\left( \sup_{ g \in \mathcal F  } \left | \frac{       \sum_{i \in \J_{e,s}}\sum_{j=1}^{m_i} \frac{1}{|\mathcal J_{e,s}|m_i}\left\{g^2(x_{ij}^*) - \|g\|_\lt^2 \right\}   }{ \eta  \|g\|_\lt ^2 + \tau C_\eta\phi_{nm/S} \log^{5.3}(nm/S)  }\right | \right) 
       \\
      \le &   \sum_{s=0}^{S-1}  \frac{|\mathcal J_{e,s}|}{n} \E\left( \sup_{ g \in \mathcal F  } \left | \frac{       \sum_{i \in \J_{e,s}}\sum_{j=1}^{m_i} \frac{1}{|\mathcal J_{e,s}|m_i}\left\{g^2(x_{ij}^*) - \|g\|_\lt^2 \right\}   }{ \eta  \|g\|_\lt ^2 + \tau C_\eta\phi_{|\mathcal J_{e,s}|m} \log^{5.3}(|\mathcal J_{e,s}|m)  }\right | \right) 
      \le  \sum_{s=0}^{S-1}\frac{|\mathcal J_{e,s}|12 }{n}  ,
\end{align*}

where the second last inequality follows from \eqref{eq:effective sample size in the temporal-spatial model} which implies $ | \J_{e,s}| \asymp n/S$ as well as $ | \J_{o,s}| \asymp n/S$, indeed, we have $|\mathcal J_{e,s}| \leq \frac{K+2}{2} \leq \frac{n}{2S} + 1 \leq \frac{n}{S}$, and the last inequality follows from applying
\Cref{coro:deviation for norms mn} in $\mathcal J_{e,s}$ block.
So 
\begin{align*}
      \E\left( \sup_{ g \in \mathcal F  } \big | \frac{    \sum_{i=1}^n \sum_{j=1}^{m_i} \frac{1}{nm_i} \left\{g^2(x_{ij} ^*) - \|g\|_\lt^2 \right\}   }{ \eta  \|g\|_\lt ^2 + \tau C_\eta\phi_{nm/S} \log^{5.3}(nm/S) + \tau^2\beta(S)}\big | \right) &
      \le \sum_{s=0}^{S-1}\frac{|\mathcal J_{e,s}|12 }{n}  +\sum_{s=0}^{S-1}\frac{|\mathcal J_{o,s}|12 }{n} \\    &\le 12.
      \end{align*}
  \\
  {\bf Step 3.} Therefore 
      \eqref{eq:coupling applied to localization in the temporal-spatial model}  and Step 1 gives 
      \begin{align*}
    \E\left( \sup_{ g \in \mathcal F } \left| \frac{   \|g\|_{nm}^2  - \left\|g\right\|_\lt ^2  }{ \eta  \left\|g\right\|_\lt ^2 + \tau C_\eta\phi_{nm/S} \log^{4.3}(nm/S) + \tau^2\beta(S)} \right| \right) \le 14.
\end{align*}
\end{proof}

\newpage

\section{Sketch of Proof Of Neural Network Approximation with Independent Observations (\cref{thm:main thm in the functional model})} 
\label{functional_estimator_proof}

The proof is similar to Step 1 to Step 4 of proof of Theorem~\ref{thm:main thm in the temporal-spatial model} in Section~\ref{proof of Theorem thm:main thm in the temporal-spatial model}, except no terms for $\beta$-mixing needed. 
By the same argument in Section~\ref{proof of Theorem thm:main thm in the temporal-spatial model} \cref{lemma_epsilon_gamma_infinity_bound}, 
we can choose $\mathcal{A}_{nm}= \max\left\{\sigma_\epsilon, \sigma_\gamma \right\}o \left(\log(nm)\right)$.
Thus, it is true that
\begin{align*}
\| \widehat f_{\mathcal{A}_n} - f^*\|_\lt^2  
&\le \| \overline  f_{\mathcal{A}_{nm}} - f^*\|_\lt ^2 + \| \overline  f_{\mathcal{A}_{nm}} - \widehat f_{\mathcal{A}_{nm}} \|_\lt ^2 \\
&\le  4 \phi_{nm} + 2\| \overline  f_{\mathcal{A}_{nm}} - \widehat f_{\mathcal{A}_{nm}} \|_\lt ^2  , 
\end{align*}
 \begin{align*} 
 &  \left\|  \widehat{f}_{\mathcal{A}_{nm}}  -\overline{f}_{\mathcal{A}_{nm}} \right\|_\lt ^2 
 \le  2 \left\|  \widehat f_{\mathcal{A}_{nm}}  -\overline{f}_{\mathcal{A}_{nm}} \right\|_{nm} ^2 + C \mathcal{A}_{nm}\phi_{nm} \log^{5.3}(nm)    ,
 \end{align*}

\begin{align*}
 \left\| \widehat{f}_{\mathcal{A}_{nm}} - \overline{f}_{\mathcal{A}_{nm}} \right\|_{nm} ^2  
 \le  \left(2C_3+1\right) \left\{\mathcal{A}_{nm} \phi_{nm}\log^{5.3} \left(nm\right) + \frac{1}{n}\right\},
 \end{align*}
   hold with high probability with respect to the distribution of $\{x_{ij}\}^{n,m_i}_{i=1,j=1}$, which gives  
$$\left(\sigma_\epsilon^2+\sigma_\gamma^2 +1\right) \max_{(p,K)\in \mathcal {P}}\left( \frac{1}{nm}\right)^{\frac{2p}{2p+K}}\log^{6.3}(nm)    +  \frac{\sigma_\gamma^2}{n}$$ holds with probability approaches to one. 


\newpage 


\section{Sketch of Proof of Neural Network Approximation with Temporal Dependence (\cref{thm:main thm in the temporal model})}
\label{proof_temporal_model}

The proof of \Cref{thm:main thm in the temporal model} adheres to a similar trajectory, with the incorporation of simpler step into the analysis of Theorem \ref{thm:main thm in the temporal-spatial model}. 
The $m_i = 1$, in which the set of designs $\left\{x_i\right\}_{i=1}^{n}$ is replaced by the set of designs $\left\{x_{ij}\right\}_{i=1, j=1}^{n, m_i}$
For $1\le i \le n $  , let 
$$ y_{i} = f^*(x_{i} )  +\epsilon_{i} .$$\

We utilize the following block scheme for $\beta$-mixing conditions. 
Let $\{ x_i\}_{i=1}^n$ be a collection of stationary $\beta$-mixing time series with mixing  coefficient $\beta(s)$. 
For any  $S \in \{1,\ldots, n\} $ and 
$ K=\lfloor n/S \rfloor$,  denote 
$$ \I_k = \begin{cases}
    (0+(k-1) S, \ k  S] & \text{when } 1 \le k \le K,
    \\
    ( K  S, n] &\text{when } k=K+1  .
\end{cases}$$
By \Cref{lemma:coupling in the temporal-spatial model}, for any $\{x_i\}_{i\in \I_k} $, there exists $ \{ x_i ^* \}_{i \in \I_k} $ such that 
\begin{align} 
\p(x_i\not =x^*_i) =\beta(S)
\end{align} 
and that $ x_i^*$ is independent of $ \sigma(\{x_j\}_{j\le i-1 }   ).$
Denote for $ 0\le s \le S-1$,
\begin{align*} 
\J_{e,s} =&\{ i \in \mathcal I_k \text{ for }  k  \% 2=0 \text{ and } i    \%   S= s\} \quad \text{and}
\\ 
\J_{o,s} = &\{ i \in \mathcal I_k \text{ for }  k  \% 2=1 \text{ and } i    \%   S = s\} .
\end{align*}
   Then 
   \begin{align} \label{eq:effective sample size in the temporal model}\frac{k-1}{2 }\le |\mathcal J_{e,s}|  \le \frac{k+2 }{2} \text{ and }
   \frac{k-1}{2 }\le |\mathcal J_{o,s}|  \le \frac{k+2 }{2}
   \end{align} 
   and that 
   $$\bigcup _{s=0}^{S-1}\mathcal J_{e,s} \cup  \bigcup_{s=0}^{S-1}\mathcal J_{o,s} =[1,\ldots, n].$$
   Therefore,
   $ \{ x_i^*\}_{i \in \J_{e, l}} $ are i.i.d. random variables.
   \\
   \\
\begin{remark}
    The same analysis in Appendix~\ref{proof of Theorem thm:main thm in the temporal-spatial model} \cref{lemma_epsilon_gamma_infinity_bound} shows that 
by choosing $ \mathcal{A}_n =o(\log(n))$, then
$$
\mathbb{P}\left(\left|\epsilon_{i}\right| \le \frac{\mathcal{A}_n}{4} \text { for all } 1 \le i \le n\right)=1-o(1).
$$
Then, with the deviation result as a special case of Section \ref{proof temporal-spatial model indepdent} and \ref{proof temporal-spatial model depdent} by letting $m_i = 1$. We can show that 
$$
\left\| \overline f \right\|_\infty \le \left\|f^*\right\|_\infty  + \left\| f^* -\overline f \right\|_\infty \le  \frac{\mathcal{A}_n}{4} + \sqrt {2 \phi_{n}}  \le \mathcal{A}_n, 
 $$
 \begin{align*} 
 \|  \widehat f _{\mathcal{A}_n}  -\overline f_{\mathcal{A}_n} \|_\lt ^2 
 \le  2   \|  \widehat f_{\mathcal{A}_n}  -\overline f_{\mathcal{A}_n} \|_{n} ^2 + C \An\phi_{\frac{n}{S}} \log^{5.3}(\frac{n}{S})+ \mathcal A_n^2\beta(S)    ,
  \end{align*}  
  \begin{align*}
 \|   \widehat f _{\mathcal{A}_n}- \overline f _{\mathcal{A}_n}\|_{ n} ^2  
 \le   (C_3+1){\mathcal{A}_n} \phi_{\frac{n}{S}}\log^{5.3}(n),
 \end{align*}
 by combining the above steps. 

 Note that 
 $\phi_{n}=
\max _{(p, K) \in \mathcal{P}}\left(n^{-\frac{2 p}{2 p+K}}\right)
 \le \phi_{\frac{n}{S}}=
\max _{(p, K) \in \mathcal{P}}(\frac{n}{S})^{-\frac{2 p}{2 p+K}}
 $, 
 and by by \cref{assumption: assumption in the temporal-spatial model} {\bf a} for $\beta(S)$ and replacing $S=O(\log(n))$, 
 with high probability with respect to the probability measure on $\{x_{i}\}^{n}_{i=1}$, gives  error rate
 \begin{align*}
\left\|\widehat{f}_{\mathcal{A}_n}-f^*\right\|_{\mathcal{L}_2}^2 
\lesssim &\mathcal{A}_n\phi_{\frac{n}{S}} \log^{5.3}(n)+ \mathcal{A}_n^2\beta(S) \lesssim (\sigma^2_\epsilon+1) \left( \frac{1}{n}\right)^{\frac{2p}{2p+K}}\log^{7.3}(n).
\end{align*}

\end{remark}
\newpage

\section{Proof of Theorem \ref{thm:main thm in the temporal-spatial model on manifold}}

\subsection{Preliminary Results for Proof of Theorem\ref{thm:main thm in the temporal-spatial model on manifold}}

\begin{lemma}[Neural Network Approximation Result on Manifold, Theorem 2 of  \cite{kohler2023estimation}]
\label{lemma_approximation_manifold} Let $f^*$ be $(p,C)$-smooth 
function on a $d^*$-dimensional Lipschitz-manifold.
There 
exists a neural network $f$  with the the choice of $L=L_{nm}$ and $r= r_{nm}$, where
\begin{equation}
\label{manifold_network}
\begin{aligned}
L=L_{nm} &    \asymp \log(nm) \quad \text{and} \quad r= r_{nm} \asymp \left(nm\right)^{\frac{d^*}{2(2p+d^*)}} \\ \text{ or} ,
L=L_{nm} &    \asymp \log(nm)\cdot \left(nm\right)^{\frac{d^*}{2(2p+d^*)}} \quad \text{and} \quad r= r_{nm} \asymp 1
\end{aligned}    
\end{equation}
 satisfying that \begin{align}
\label{approximation_error_manifold}
\inf_{f \in \mathcal F(L , r  )} \|f-f^*\|_{\infty, \mathcal{M}} \le \sqrt {\phi_{nm, \mathcal{M}}},  
\end{align}
where 
$$
\phi_{nm, \mathcal{M}} =(nm)^{\frac{-2p}{ (2p+d^*)}}, 
$$
when $n$ is sufficiently large.

Thus, by the choice of $L$ and $r$ in \eqref{manifold_network}, we have
\begin{equation}
\label{l_r_manifold}
\begin{aligned}
	& Lr \asymp \log(nm)(nm)^{\frac{d^*}{2(2p+d^*)}},\\
	& Lr^2 \lesssim \log(nm)\left[\left(nm\right)^{\frac{d^*}{2(2p+d^*)}}\right]^2 \lesssim nm.\\	
\end{aligned}
\end{equation}

\end{lemma}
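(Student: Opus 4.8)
\textbf{Proof plan for Lemma~\ref{lemma_approximation_manifold}.} The statement is the manifold analogue of the Euclidean approximation result in Lemma~\ref{lemma_approximation}, and it is asserted in the excerpt to follow from Theorem~2 of \cite{kohler2023estimation}. Accordingly, the plan is to \emph{reduce} the claim to that cited theorem rather than to reprove the approximation theory from scratch. First I would recall the precise hypotheses we are operating under: by \Cref{assumption: assumption in the temporal-spatial model on manifold}, $f^*$ is $(p,C)$-smooth with $p = q+s$, all partial derivatives of order at most $q$ are uniformly bounded, and $x_{ij}$ is supported on a $d^*$-dimensional Lipschitz-manifold $\mathcal{M}\subseteq[-1,1]^d$ in the sense of Definition~2 of \cite{kohler2023estimation}. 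These are exactly the structural inputs required by Theorem~2 of \cite{kohler2023estimation}, so the main task is to translate that theorem's conclusion --- which is phrased in terms of a sample size parameter and a target accuracy --- into the network-architecture parameterization used here.

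\textbf{Key steps.} (i) Invoke Theorem~2 of \cite{kohler2023estimation}: for a target accuracy governed by a parameter $N$, there is a ReLU network with $O(\log N)$ layers and $O(N^{d^*/(2(2p+d^*))})$ neurons per layer (the wide regime), or dually a network with $O(N^{d^*/(2(2p+d^*))}\log N)$ layers and $O(1)$ neurons per layer (the deep regime), achieving sup-norm error over $\mathcal{M}$ of order $N^{-p/(2p+d^*)}$. (ii) Set $N \asymp nm$. Then the error bound becomes $(nm)^{-p/(2p+d^*)} = \sqrt{\phi_{nm,\mathcal{M}}}$ with $\phi_{nm,\mathcal{M}} = (nm)^{-2p/(2p+d^*)}$, which is exactly \eqref{approximation_error_manifold}, and the layer/width counts become precisely \eqref{manifold_network}. (iii) Verify that this network lies in the class $\mathcal{F}(L,r)$ of Definition~\ref{def:fully Dense NN space}: the construction in \cite{kohler2023estimation} is a fully connected feedforward ReLU network, and by padding layers with unused neurons (identity relays) one can make all hidden layers have the same width $r$ without changing the realized function or the asymptotic order of $L$ and $r$. (iv) Finally, compute $Lr$ and $Lr^2$ directly from \eqref{manifold_network}: in either regime $Lr \asymp \log(nm)(nm)^{d^*/(2(2p+d^*))}$, and $Lr^2 \lesssim \log(nm)\,(nm)^{d^*/(2p+d^*)} \lesssim nm$, where the last inequality uses $d^*/(2p+d^*) < 1$ (since $p\ge 1$) so that the polynomial factor is $o(nm^{1-\delta})$ for some $\delta>0$, absorbing the logarithm; this gives \eqref{l_r_manifold}.

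\textbf{Main obstacle.} The analytical content is entirely imported, so the only real work is bookkeeping: matching the parameterization of \cite{kohler2023estimation} (which typically states results for a single smooth function on a manifold with a generic sample-size surrogate) to the $(nm)$-indexed wide/deep dichotomy used throughout this paper, and confirming that the ``same width in every hidden layer'' normalization of Definition~\ref{def:fully Dense NN space} costs nothing. A secondary point to be careful about is the domain: \cite{kohler2023estimation} may state the manifold result on $[0,1]^d$ or $[-1,1]^d$, and \Cref{assumption: assumption in the temporal-spatial model on manifold} uses $[-1,1]^d$ while the outline in Appendix~\ref{outline_proof_main} writes $[0,1]^d$; since an affine rescaling of coordinates maps one cube bi-Lipschitz-continuously onto the other and preserves $(p,C)$-smoothness (up to constants) and the Lipschitz-manifold structure, this discrepancy is harmless, but it should be noted explicitly. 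No step here is genuinely hard; the proof is short and is essentially the verification that Theorem~2 of \cite{kohler2023estimation}, specialized with $N\asymp nm$, yields the stated bounds.
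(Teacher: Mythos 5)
Your proposal is correct and matches the paper's treatment: the paper gives no independent proof of this lemma beyond citing Theorem~2 of \cite{kohler2023estimation} (exactly as in the Euclidean analogue, Lemma~\ref{lemma_approximation}, which cites Theorem~3 of \cite{kohler2021rate}) and then reading off the bounds on $Lr$ and $Lr^2$ from the chosen architecture. Your added remarks on padding to equal layer widths and on the $[0,1]^d$ versus $[-1,1]^d$ domain are harmless bookkeeping that the paper leaves implicit.
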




\begin{lemma}
\label{lemma:threshold space entropy manifold}
    For any fixed $\mathcal{A}_{nm}$, it holds that 
    \begin{align*} 
\log  \mathcal N(\delta,\mathcal F_{\mathcal{A}_{nm}}(L, r  ) , \|\cdot  \|_{nm, \mathcal{M}} )
= & \log  \mathcal N(\delta,\mathcal F_{\mathcal{A}_{nm}}(L, r  ) , \|\cdot  \|_{nm} ) \\
\lesssim &
L^2 r^2 \log(Lr^2)   \log(\mathcal{A}_{nm}\delta ^{-1} ). 
\end{align*} Furthermore, with $L,r$ satisfying \eqref{manifold_network}, then 
$$
\log  \mathcal N(\delta,\mathcal F_{\mathcal{A}_n} (L , r      ) , \|\cdot \|_{nm, \mathcal{M}} ) \lesssim  nm \phi_{nm, \mathcal{M}}\log^{3.3}(nm) \log(\mathcal{A}_{nm}\delta^{-1}).
$$
\end{lemma}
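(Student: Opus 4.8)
\textbf{Proof proposal for Lemma~\ref{lemma:threshold space entropy manifold}.}
The plan is to reduce the covering-number bound on the manifold to the Euclidean covering-number bound for truncated dense ReLU networks, which already exists in the non-manifold setting (it is exactly the content of \Cref{lemma:threshold space entropy 2}, invoked in the proof of \Cref{thm:main thm in the temporal-spatial model}). First I would observe that the empirical norm $\|\cdot\|_{nm,\mathcal M}$ is defined through the same finite set of design points $\{x_{ij}\}$ that now happen to lie on $\mathcal M$; since both $\|f\|_{nm,\mathcal M}$ and $\|f\|_{nm}$ are just $\bigl(\frac1n\sum_i\frac1{m_i}\sum_j f^2(x_{ij})\bigr)^{1/2}$ evaluated at those points, they coincide as pseudometrics on the function class $\mathcal F_{\mathcal A_{nm}}(L,r)$. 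Hence the first displayed equality is immediate, and it suffices to bound $\log\mathcal N(\delta,\mathcal F_{\mathcal A_{nm}}(L,r),\|\cdot\|_{nm})$.

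Next I would quote the generic entropy bound for truncated fully connected ReLU networks: for a network class with $L$ hidden layers and $r$ neurons per layer, truncated at level $\mathcal A_{nm}$, the $\delta$-covering number in any empirical $\ell_2$ norm supported on finitely many points satisfies
\[
\log\mathcal N(\delta,\mathcal F_{\mathcal A_{nm}}(L,r),\|\cdot\|_{nm})\;\lesssim\;L^2 r^2\log(Lr^2)\,\log(\mathcal A_{nm}\delta^{-1}).
\]
This is the standard VC/pseudo-dimension-times-range estimate for ReLU networks (the number of parameters is $O(Lr^2)$, the pseudo-dimension is $O(L^2r^2\log(Lr^2))$, and truncation to $[-\mathcal A_{nm},\mathcal A_{nm}]$ contributes the $\log(\mathcal A_{nm}\delta^{-1})$ factor); it is the manifold analogue of \Cref{lemma:threshold space entropy 2} and I would cite that lemma or the corresponding result in \cite{kohler2021rate}/\cite{kohler2023estimation} rather than reprove it. This gives the second displayed inequality.

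Finally I would substitute the manifold-tuned architecture from \Cref{lemma_approximation_manifold}. In either the wide case ($L\asymp\log(nm)$, $r\asymp (nm)^{d^*/(2(2p+d^*))}$) or the deep case ($L\asymp\log(nm)\cdot(nm)^{d^*/(2(2p+d^*))}$, $r=O(1)$), one has $Lr\asymp\log(nm)(nm)^{d^*/(2(2p+d^*))}$ and $Lr^2\lesssim nm$, as recorded in \eqref{l_r_manifold}. Plugging in: $L^2r^2=(Lr)^2\asymp\log^2(nm)\,(nm)^{d^*/(2p+d^*)}=\log^2(nm)\,nm\,\phi_{nm,\mathcal M}$, and $\log(Lr^2)\lesssim\log(nm)$, so $L^2r^2\log(Lr^2)\lesssim nm\,\phi_{nm,\mathcal M}\log^{3.3}(nm)$ (the exponent $3.3$ rather than $3$ absorbs a harmless slack that matches the convention used elsewhere in the paper), and the claimed bound $\log\mathcal N(\delta,\mathcal F_{\mathcal A_{nm}}(L,r),\|\cdot\|_{nm,\mathcal M})\lesssim nm\,\phi_{nm,\mathcal M}\log^{3.3}(nm)\log(\mathcal A_{nm}\delta^{-1})$ follows. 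The only mild subtlety — and the step I would be most careful about — is making sure the generic entropy bound is stated in a form valid for \emph{any} empirical measure on the $N=\sum_i m_i$ design points with the $\log(Lr^2)$ factor (not, say, a $\log N$ factor that would spoil the clean dependence); since \Cref{lemma:threshold space entropy 2} is already used in exactly this way in the Euclidean proof, I would simply appeal to it, noting that its proof never uses that the points lie in $[0,1]^d$ rather than on $\mathcal M\subseteq[-1,1]^d$.
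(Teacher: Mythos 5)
Your proposal is correct and follows essentially the same route as the paper: identify $\|\cdot\|_{nm,\mathcal M}$ with the empirical norm $\|\cdot\|_{nm}$ on the design points, invoke the Euclidean VC/pseudo-dimension entropy bound $L^2r^2\log(Lr^2)\log(\mathcal A_{nm}\delta^{-1})$ for the truncated class (the paper cites its Lemma \ref{lemma:threshold space entropy}; your appeal to Lemma \ref{lemma:threshold space entropy 2} is the same bound), and then substitute the manifold architecture \eqref{l_r_manifold} so that $L^2r^2\log(Lr^2)\lesssim nm\,\phi_{nm,\mathcal M}\log^{3.3}(nm)$. Your closing remark about the exponent $3.3$ absorbing the $\log(\mathcal A_{nm})\lesssim\log\log(nm)$ slack matches exactly how the paper handles it.
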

\begin{proof}
Recall the probability distribution is supported on a manifold $\mathcal{M}$, so we write $\|f\|_{nm}$ as $\|f\|_{nm, \mathcal{M}}$.
  Then  by the same argument as in the Euclidean space from \cref{lemma:threshold space entropy}, it follows that 
\begin{align*} 
\log  \mathcal N(\delta,\mathcal F_{\mathcal{A}_{nm}}(L, r  ) , \|\cdot  \|_{nm, \mathcal{M}} )
= & \log  \mathcal N(\delta,\mathcal F_{\mathcal{A}_{nm}}(L, r  ) , \|\cdot  \|_{nm} ) \\
\lesssim &
L^2 r^2 \log(Lr^2)   \log(\mathcal{A}_{nm}\delta ^{-1} ). 
\end{align*}  Then, with the choice of $L$ and $r$ in \eqref{l_r_manifold} leads to 
$$
\log  \mathcal N(\delta,\mathcal F_{\mathcal{A}_n} (L , r      ) , \|\cdot \|_{nm, \mathcal{M}} ) \lesssim  nm \phi_{nm, \mathcal{M}}\log^{3.3}(nm) \log(\mathcal{A}_{nm}\delta^{-1}).
$$
\end{proof}

\subsection{Proof of Theorem \ref{thm:main thm in the temporal-spatial model on manifold}}
\begin{proof}
Let 
$  \overline f  \in \mathcal F(L, r)   $  be such that 
\begin{align}
\label{f star minus overline f}
\left\|f^* - \overline f \right\|_{\infty, \mathcal{M}}^2 \le 2\phi_{nm, \mathcal{M}},
\end{align}
where the existence of $\overline f$ is guaranteed by \cref{lemma_approximation_manifold}. 

The same analysis in Appendix~\ref{proof of Theorem thm:main thm in the temporal-spatial model} \cref{lemma_epsilon_gamma_infinity_bound} shows that 
by choosing \\
$\mathcal{A}_{nm} \asymp \max \left\{\sigma_\epsilon \cdot \mathcal{B}_{nm}, \sigma_\gamma \cdot \mathcal{C}_{n}\right\}\asymp \max \left\{\sigma_\epsilon \cdot \sqrt{\log(nm)}, \sigma_\gamma \cdot \sqrt{\log(n)} \right\}$, or $\max\{\sigma_\epsilon, \sigma_\gamma \}\cdot o(\log(nm))$, it holds that
\bea 
&\mathbb{P}\left(\left|\epsilon_{i j}\right| \le \frac{\mathcal{A}_{nm}}{4} \text { for all } 1 \le i \le n, 1 \le j \le m_i \ \text{and} \ \left\|\gamma_i\right\|_{\infty} \le \frac{\mathcal{A}_{nm}}{4} \text { for all } 1 \le i \le n\right)\\ 
=&1-o(1).
\bea  
Since $\mathcal{A}_{nm}$ can be chosen appropriately large with respect to $n$ while $\phi_{nm,\mathcal{M}}$ decays with respect to $n$ and $m$, assume that $\sqrt{\phi_{nm, \mathcal{M}}} \le \frac{3}{4}\mathcal{A}_{nm}$. 
Then,
$$ \left\| \overline f \right\|_{\infty, \mathcal{M}} \le \left\|f^* \right\|_{\infty, \mathcal{M}}  + \left\| f^* -\overline f \right\|_{\infty,\mathcal{M}} \le  \frac{{\mathcal{A}_{nm}}}{4} + \sqrt {2 \phi_{nm, \mathcal{M}}}  \le {\mathcal{A}_{nm}}.$$
It follows that $ \overline f=\overline f_{\mathcal{A}_{nm}}$. 
Since 
\begin{align}
\label{eq:norm comparison 1 in the temporal-spatial model on manifold}
\left\| \widehat f_{\mathcal{A}_{nm}} - f^* \right\|_\lt^2  \le 2 \left\| \overline  f_{\mathcal{A}_{nm}} - f^*\right\|_\lt ^2 + 2 \left\| \overline{f}_{\mathcal{A}_{nm}} - \widehat f_{\mathcal{A}_{nm}} \right\|_\lt ^2 \le  4 \phi_{nm, \mathcal{M}} + 2 \left\| \overline  f_{\mathcal{A}_{nm}} - \widehat f_{\mathcal{A}_{nm}} \right\|_\lt ^2  , 
\end{align}
it suffices to show 
$$
\left\| \overline  f_{\mathcal{A}_{nm}} - \widehat f_{\mathcal{A}_{nm}} \right\|_\lt ^2  \lesssim {\mathcal{A}^2_{nm}}  \phi_{\frac{nm}{S}, \mathcal{M}}\log^{5.3}(nm)    +  {\mathcal{A}^2_{nm}}\beta(S) + \frac{1}{  n },
$$
where $S\asymp \log(n)$.
\\
With \cref{lemma:threshold space entropy manifold} established, we follow a similar approach as in Steps 1 to 4 from Appendix Section \ref{proof of Theorem thm:main thm in the temporal-spatial model}, adapting the deviation bounds to the manifold setting. For positive constants $C$ and $C_3$, we obtain
\begin{align} \label{eq:norm comparison 1 in the temporal-spatial model 2} 
 &  \left\|  \widehat{f}_{\mathcal{A}_{nm}}  -\overline{f}_{\mathcal{A}_{nm}} \right\|_\lt ^2 
 \leq  2 \left\|  \widehat{f}_{\mathcal{A}_{nm}}  -\overline{f}_{\mathcal{A}_{nm}} \right\|_{nm, \mathcal{M}} ^2 + C \left\{\phi_{\frac{nm}{S}, \mathcal{M}} \log^{5.3}(nm)+  \mathcal{A}^2_{nm} \beta(S)\right\},
\end{align}  
and
\begin{align}\label{eq:iid conclude2 in the temporal-spatial model on manifold}
 \left\| \widehat{f}_{\mathcal{A}_{nm}} - \overline{f}_{\mathcal{A}_{nm}} \right\|_{nm, \mathcal{M}} ^2  
 \leq (2C_3+1) \left\{\mathcal{A}^2_{nm} \phi_{\frac{nm}{S}, \mathcal{M}} \log^{5.3}(nm) + \mathcal{A}^2_{nm} \beta(S) + \frac{1}{n}\right\}.
\end{align}
Combining \eqref{eq:norm comparison 1 in the temporal-spatial model on manifold}, \eqref{eq:norm comparison 1 in the temporal-spatial model 2} and \eqref{eq:iid conclude2 in the temporal-spatial model on manifold}, which hold with high probability, 
we establish \eqref{eq:functional rate wanted in the temporal-spatial model on manifold}.

\end{proof}

\section{Proof of Lemma \ref{Mlb-M}}\label{ProofLemma1}
\begin{proof} First, we consider the simpler model where $\gamma_i(x) = 0$ for all $i\in[n],$ and $x\in[0,1]$, i.e., there is no spatial noise. This is, 
$$y_{ij}=f^*(x_{ij})+\epsilon_{ij}.$$

Recall that  $\mathcal{H}(l,\mathcal{P})$ denotes the space of hierarchical composition
model in \Cref{def:Generalized Hierarchical Interaction Model}. Consider the case 
 $$
\widetilde{\mathcal{P}}=\left\{\left(p_1, K_1\right),\left(p_2, K_2\right), \ldots,\left(p_l, K_l\right)\right\},
$$
where $K_l \leq K_{l-1} \leq \cdots \leq K_1 \leq d$.
 By \Cref{lb_epsilon}, there exists $C_{\text{opt,1}}$, a positive constant, such that
$$
\limsup _{n \rightarrow \infty} \inf _{\widetilde{f}} \sup _{f^*\in \mathcal{H}(l,\widetilde{\mathcal{P}})} \mathbb{P}\left(\left\|\widetilde{f}-f^*\right\|_\lt ^2>C_{\text{opt,1}}\max_{(p,K)\in\mathcal{P}}(nm)^{-2p/(2p+K)}\right)>0.
$$
 It now suffices to show that there exist a constant $C_{\text{opt,2}}>0,$ such that
$$
\limsup _{n \rightarrow \infty} \inf _{\widetilde{f}} \sup _{f^*\in \mathcal{H}(l,\widetilde{\mathcal{P}})} \mathbb{P}\left(\left\|\widetilde{f}-f^*\right\|_{\mathcal{L}_2}^2>C_{\text{opt,2}} n^{-1}\right)>0.
$$
To this end, we assume that $\epsilon_{ij}=0, \ \forall i=1,...,n;j=1,...,m_i.$ Therefore, we obtain a simple functional mean model given by,
\begin{equation}
\label{model-only-spnoi}
y_{ij}(x_{ij})=X(x_{ij})=f^*(x_{ij})+\gamma_i(x_{ij}), i\in[n],\ j\in [m_i].
\end{equation}
Here $\{\gamma_{i}\}_{i=1}^{n}$ denotes the spatial noise and follow Assumption \ref{assumption: assumption in the temporal-spatial model}. 

Consider $S=\frac{1}{C_{\beta}}\log^{(2p+K)/2p}(n)$ with $C_{\beta}>0$ a constant stated in Assumption \ref{assumption: assumption in the temporal-spatial model}. Take into account the copies of the design points and spatial noises, denoted as $\{x_{ij}^*\}_{i=1,j=1}^{n,m_i}$ and $\{\gamma_{i}^*\}_{i=1}^{n}$, which were generated in Appendix \ref{proof temporal-spatial model depdent}.  During the formation of these copies in Appendix \ref{proof temporal-spatial model depdent}, the quantities $ | \mathcal J_{e,s}|$ and $ | \mathcal J_{o,s}|$, which represent the count of even and odd blocks respectively, satisfy $ | \mathcal J_{e,s}|,| \mathcal J_{o,s}|  \asymp n/S$. Thus, there exist positive constants $c_1$ and $c_2$ such that  
 \begin{equation}
     \label{BoundBlocksize-K-ap_delta}
      c_1n/S\le| \mathcal J_{e,s}|,| \mathcal J_{o,s}| \le  c_2n/S.
 \end{equation}
By Lemma \ref{lemma:beta mixing x} and \cref{Ind-cop-measerrors}, we have that $\{x_{ij}^*\}_{i\in\mathcal{J}_{e,s},j\in[m_i]}$ and $\{\gamma_{i}^*\}_{i\in\mathcal{J}_{e,s}}$ are independent. Further, for any $i\in[n]$ it follows that $\mathbb{P}(x_{ij}\neq x_{ij}^*\ \text{for some}\ j\in[m_i])\le \beta_{x}(S).$ Moreover, by Assumption $\ref{assumption: assumption in the temporal-spatial model}$ and that $S=\frac{1}{C_\beta}\log^{(2p+K)/2p}(n)\ge \frac{2}{C_{\beta}}\log(n)$, we note that $\mathbb{P}(x_{ij}\neq x_{ij}^*\ \text{for some}\ j\in[m_i])\le \beta_{x}(S)\le \frac{1}{n^2}.$  Therefore, the event $\Omega_x=\bigcap_{i=1}^{n}\{x_{ij}=x_{ij}^*\ \forall\ j\in[m_i]\}$ happens with probability at least $\frac{1}{n}$. The same is satisfied for  $\{\gamma_{i}^*\}_{i=1}^{n}$ with $\Omega_\gamma=\bigcap_{i=1}^{n}\{\gamma_i=\gamma_{i}^*\}$. We denote by $\Omega$ the event $\Omega_x\cap\Omega_\gamma.$

Let $\widetilde{f}$ be an estimator for the model described in Equation (\ref{model-only-spnoi}) based on the observations $\{(x_{ij},y_{ij})\}_{i=1,j=1}^{n,m_i}$. 
Let $s\in[S]$. We notice that $\widetilde{f}$ is an estimator for the model
$$y_{ij}=f^*(x_{ij})+\gamma_i(x_{ij}),\ i\in{\mathcal{J}_{e,s}},\ j\in[m_i].$$

Denote by $\widetilde{f}^{\mathcal{J}_{e,s}}$, the estimator such that, 
$$\widetilde{f}^{\mathcal{J}_{e,s}} = \mathop{\mathrm{arginf}}_{\widehat{f}}  \| \widehat{f} - f^* \|_{\mathcal{L}_2}^2,\ i\in{\mathcal{J}_{e,s}},\ j\in[m_i], $$ 
which applies to all estimators of the model
\begin{equation}
\label{aux-model-lemma1}
y_{ij}=f^*(x_{ij}^*)+\gamma_i^*(x_{ij}^*),\ i\in{\mathcal{J}_{e,s}},\ j\in[m_i].
\end{equation}
Specifically, $\widetilde{f}^{\mathcal{J}_{e,s}}$ is the estimator derived from the data $x_{ij}^*$ and $\gamma_{i}^*$ for $f^*$,  with $(i,j) \in \mathcal{J}_{e,s} \times [m_i]$.
Under the event $\Omega$, $\widetilde{f}$ is an estimator for the model
$$y_{ij}=f^*(x_{ij}^*)+\gamma_i^*(x_{ij}^*),\ i\in{\mathcal{J}_{e,s}},\ j\in[m_i].$$ 

Now observe that when looking for the minimax lower bound for the model described in Equation (\ref{aux-model-lemma1}), for each time $i$ only one observation among the $m_i$ available observations contributes to the determination of such minimax lower bound. To see this, suppose that conditioning in $x_{ij}^*$, we examine the distributions
$P_1$ and $P_2$ originating from $Z_{i,1}=(z_{i,1},...,z_{i,1})^{t}\in \mathbb{R}^{m_i}$ and $Z_{i,2}=(z_{i,1},...,z_{i,1})^{t}\in \mathbb{R}^{m_i}$, where $z_{i,l} \sim \mathcal{N}(\mu_l,1)$ for $l=1$ and $l=2.$ Note that the Kullback-Leibler divergence  $KL(P_1,P_2)$ is solely dependent on $KL(\mathcal{N}(\mu_1,1),\mathcal{N}(\mu_2,1))$. By invoking Le Cam's lemma, the problem reduces to the estimation of the mean based on $\vert\mathcal{J}_{e,s}\vert$ observations. Given the independence of the copies of the design points and spatial noises, it is established that the minimax lower bound for this task is $\frac{1}{\vert \mathcal{J}_{e,s}\vert}$. Thus,
\[
    \underset{ n \rightarrow \infty}{\lim\sup}\,\underset{f^* \in \mathcal{H}(l,\widetilde{\mathcal{P}})      }{ \sup     }\,\mathbb{P}\left(  \| \widetilde{f}^{\mathcal{J}_{e,s}}-f^* \|_\lt^2 \,\geq \,      \frac{C_{\text{opt,2},\mathcal{J}_{e,s}}}{\vert \mathcal{J}_{e,s}\vert }  \right)\,>0,\,
   \]
   for a positive constant $C_{\text{opt,2},\mathcal{J}_{e,s}}.$ 
   Assume that $Cm\ge m_i\ge cm,$ for a positive constant $c,C$, and $m=\Big(\frac{1}{n}\sum_{i=1}^n\frac{1}{m_i}\Big)^{-1}$. From this fact and Inequality (\ref{BoundBlocksize-K-ap_delta}) we obtain that 
   \begin{equation}
   \label{AppendixK-eq1-lemma1}
    \underset{ n \rightarrow \infty}{\lim\sup}\,\underset{f^* \in \mathcal{H}(l,\widetilde{\mathcal{P}})      }{ \sup     }\,\mathbb{P}\left( \| \widetilde{f}^{\mathcal{J}_{e,s}}-f^* \|_\lt^2 \,\geq \,       \frac{C_{\text{opt,2},\mathcal{J}_{e,s}}S}{  Cc_2n } \right)\,>C_0>0.\,
   \end{equation}
   for a constant $C_0.$

Similarly, let 
$\widetilde{f}^{\mathcal{J}_{o,s}}$ be the estimator such that,  
$$\widetilde{f}^{\mathcal{J}_{o,s}} = \mathop{\mathrm{arginf}}_{\widehat{f}} \| \widehat{f} - f^* \|_{\mathcal{L}_2}^2,\ i\in{\mathcal{J}_{o,s}},\ j\in[m_i], $$
 which applies to  all estimators of the model
$$y_{ij}=f^*(x_{ij}^*)+\gamma_i^*(x_{ij}^*),\ i\in{\mathcal{J}_{o,s}},\ j\in[m_i].$$

Similarly, under the event $\Omega$, $\widetilde{f}$ is an estimator for the model
$$y_{ij}=f^*(x_{ij}^*)+\gamma_i^*(x_{ij}^*),\ i\in{\mathcal{J}_{o,s}},\ j\in[m_i],$$ 
with
\begin{equation}
   \label{AppendixK-eq2-lemma1}
    \underset{ n \rightarrow \infty}{\lim\sup}\,\underset{f^* \in \mathcal{H}(l,\widetilde{\mathcal{P}})      }{ \sup     }\,\mathbb{P}\left(  \| \widetilde{f}^{\mathcal{J}_{o,s}}-f^* \|_\lt^2 \,\geq \,       \frac{C_{\text{opt,2},\mathcal{J}_{o,s}}S}{  Cc_2n } \right)\,>C_0>0,\,
   \end{equation}
   for a positive constant $C_{\text{opt,2},\mathcal{J}_{o,s}}.$ 

   We are now going to elaborate on the final observations of this discussion. Let $C_{\text{opt,2}}=\frac{1}{Cc_2}\min_{s\in[S]}\{{C_{\text{opt,2},\mathcal{J}_{e,s}},C_{\text{opt,2},\mathcal{J}_{o,s}}}\}$. Then,
\begin{align*}
     &\underset{ n \rightarrow \infty}{\lim\sup}\,\underset{f^* \in \mathcal{H}(l,\widetilde{\mathcal{P}})      }{ \sup     }\,\mathbb{P}\left(  \| \widetilde{f}-f^* \|_\lt ^2 \,\geq \,       \frac{C_{\text{opt,2}}}{  n } \right) \\ 
     \ge & \underset{ n \rightarrow \infty}{\lim\sup}\,\underset{f^* \in \mathcal{H}(l,\widetilde{\mathcal{P}})      }{ \sup     }\,\mathbb{P}\left( \Big\{ \|\widetilde{f}-f^* \|_\lt ^2 \,\geq \,       \frac{C_{\text{opt,2}}}{  n } \Big\}\cap\Omega\right) >0.
\end{align*}
The logic behind the second inequality is as follows. For any $f^*\in\mathcal{F},$
\begin{align}
\label{AppendixK-eq3-lemma1}
      &\mathbb{P}\left( \Big\{ \| \widetilde{f}-f^* \|_\lt ^2 \,\geq \,       \frac{C_{\text{opt,2}}}{  n } \Big\}\cap\Omega\right)\nonumber
      \\
      \ge &\mathbb{P}\left( \Big\{ \| \widetilde{f}-f^* \|_\lt ^2 \,\geq \,       \frac{SC_{\text{opt,2}}}{  n } \Big\}\cap\Omega\right)\nonumber
      \\
      \ge&\mathbb{P}\left(\bigcup_{s=1}^S \Big\{\Big\{ \| \widetilde{f}^{\mathcal{J}_{e,s}}-f^* \|_\lt^2 \,\geq \,       \frac{SC_{\text{opt,2},\mathcal{J}_{e,s}}}{  Cc_2n } \Big\}\bigcup \Big\{ \| \widetilde{f}^{\mathcal{J}_{o,s}}-f^* \|_\lt^2 \,\geq \,       \frac{SC_{\text{opt,2},\mathcal{J}_{o,s}}}{  Cc_2n } \Big\}\Big\}\cap\Omega\right)\nonumber
      \\
      >&C_0,
\end{align}
where the first inequality is followed by the fact that $S=\frac{1}{C_\beta}\log^{(2p+K)/2p}(n)\ge1.$ Moreover the third inequality is achieved from Inequality (\ref{AppendixK-eq1-lemma1}) and (\ref{AppendixK-eq2-lemma1}). The derivation of the second inequality is explained below.

Under the event $\Omega$ the relation, 
$$\| \widetilde{f}-f^*  \|_\lt ^2\ge \Big\{\| \widetilde{f}^{\mathcal{J}_{e,s}}-f^* \|_\lt ^2,\| \widetilde{f}^{\mathcal{J}_{o,s}}-f^* \|_\lt ^2\Big\},$$ holds for any $s\in[S].$ These specific conditions allow us to conclude that the event $\{\| \widetilde{f}-f^* \|_\lt ^2< \frac{SC_{\text{opt,2}}}{  n } \}$ is contained in the event
$$\bigcap_{s=1}^S \Big\{\Big\{ \| \widetilde{f}^{\mathcal{J}_{e,s}}-f^* \|_\lt^2 \,< \,       \frac{SC_{\text{opt,2},\mathcal{J}_{e,s}}}{  Cc_2n } \Big\}\bigcap \Big\{ \| \widetilde{f}^{\mathcal{J}_{o,s}}-f^* \|_\lt^2 \,< \,       \frac{SC_{\text{opt,2},\mathcal{J}_{o,s}}}{  Cc_2n } \Big\}\Big\},$$
and consequently the second inequality in Inequality (\ref{AppendixK-eq3-lemma1}) is satisfied. Specifically, suppose that  $\|\widetilde{f}-f^*  \|_{2}^2 \,< \,       \frac{SC_{\text{opt,2}}}{  n } $. For any $s\in[S]$ by the definition of $C_{\text{opt,2}}$ we have that $\| \widetilde{f}-f^*  \|_{2}^2 \,< \,       \frac{SC_{\text{opt,2},\mathcal{J}_{e,s}}}{  Cc_2n }$. It follows that $\| \widetilde{f}^{\mathcal{J}_{e,s}}-f^* \|_\lt^2 \,< \,       \frac{SC_{\text{opt,2},\mathcal{J}_{e,s}}}{  Cc_2n }$. Similarly, $\| \widetilde{f}^{\mathcal{J}_{o,s}}-f^{*}\|_{2}^2 \,< \,       \frac{SC_{\text{opt,2},\mathcal{J}_{o,s}}}{  Cc_2n }$, concluding the contention of the aforementioned events.

The claim is then followed by taking $C_{\text{opt}}=\frac{1}{2}\min\{C_{\text{opt,1}},C_{\text{opt,2}}\}$. 
To see this, we think two cases. 
For case 1 where $\frac{1}{n} \geq (nm)^{-2p/(2p+K)}$, then 
$$
\frac{C_{\text{opt}}}{(nm)^{2p/(2p+K)}} \leq \frac{C_{\text{opt}}}{n} \leq \frac{C_{\text{opt,2}}}{2n},  \text{ so } C_{\text{opt}}\left(    \frac{1}{n }  \,+\, \left( \frac{1}{nm}\right)^{\frac{2p}{2p+K}} \right) \leq \frac{C_{\text{opt,2}}}{n},   
$$
which leads to 
$$
\mathbb{P}\left(  \| \widetilde{f}-f^* \|_\lt^2 \,\geq \,  C_{\text{opt}}\left(    \frac{1}{n }  \,+\, \left( \frac{1}{nm}\right)^{\frac{2p}{2p+K}} \right) \right) \geq \mathbb{P}\left(  \| \widetilde{f}-f^* \|_\lt ^2 \,\geq \,       \frac{C_{\text{opt,2}}}{  n } \right)>0.
$$
For case 2 where $\frac{1}{n} < (nm)^{-2p/(2p+K)}$, then 
$$
\frac{C_{\text{opt}}}{n} \leq \frac{C_{\text{opt}}}{(nm)^{2p/(2p+K)}} \leq \frac{C_{\text{opt,1}}}{2(nm)^{2p/(2p+K)}},  \text{ so } C_{\text{opt}}\left(    \frac{1}{n }  \,+\, \left( \frac{1}{nm}\right)^{\frac{2p}{2p+K}} \right) \leq \frac{C_{\text{opt,1}}}{(nm)^{2p/(2p+K)}},
$$
which leads to 
$$
\mathbb{P}\left(  \| \widetilde{f}-f^* \|_\lt^2 \,\geq \,  C_{\text{opt}}\left(    \frac{1}{n }  \,+\, \left( \frac{1}{nm}\right)^{\frac{2p}{2p+K}} \right) \right) \geq \mathbb{P}\left(  \| \widetilde{f}-f^* \|_\lt ^2 \,\geq \,       \frac{C_{\text{opt,1}}}{  (nm)^{2p/(2p+K)} } \right) >0.
$$ Since the above condition holds for any $(p,k) \in \mathcal{P}$, it leads to 
$$
\mathbb{P}\left(  \| \widetilde{f}-f^* \|_\lt^2 \,\geq \,  C_{\text{opt}}\left(    \frac{1}{n }  \,+\, \max_{(p,K)\in\mathcal{P}}\left( \frac{1}{nm}\right)^{\frac{2p}{2p+K}} \right) \right) >0.
$$
\end{proof}

\begin{lemma}
\label{lb_epsilon}
    Consider the simpler model where $\delta_i(x) = 0$ for all $i\in[n],$ and $x\in[0,1]$, i.e., there is no spatial noise. This is, 
$$y_{ij}=f^*(x_{ij})+\epsilon_{ij}.$$
Recall that  $\mathcal{H}(l,\mathcal{P})$ denotes the space of hierarchical composition
model in \cref{def:Generalized Hierarchical Interaction Model}. Consider the case 
 $$
\widetilde{\mathcal{P}}=\left\{\left(p_1, K_1\right),\left(p_2, K_2\right), \ldots,\left(p_l, K_l\right)\right\},
$$
where $K_l \leq K_{l-1} \leq \cdots \leq K_1 \leq d$. There exists $C_{\text{opt,1}}$, a positive constant, such that
$$
\limsup _{n \rightarrow \infty} \inf _{\widetilde{f}} \sup _{f^*\in \mathcal{H}(l,\widetilde{\mathcal{P}})} \mathbb{P}\left(\left\|\widetilde{f}-f^*\right\|_\lt ^2>C_{\text{opt,1}}\max_{(p,K)\in\mathcal{P}}(nm)^{-2p/(2p+K)}\right)>0.
$$
\end{lemma}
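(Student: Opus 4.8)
The plan is to reduce the bound to the classical i.i.d.\ nonparametric regression setting and then invoke the known minimax lower bound for hierarchical composition models. First I would observe that taking the design points $\{x_{ij}\}$ to be i.i.d.\ uniform on $[0,1]^d$ and the errors $\{\epsilon_{ij}\}$ to be i.i.d.\ $\mathcal{N}(0,\sigma_\epsilon^2)$, with $\gamma_i\equiv 0$ as hypothesized, constitutes an admissible instance of the model: all the $\beta$-mixing coefficients then vanish, so in particular $\beta(s)\le e^{-C_\beta s}$; the errors are sub-Gaussian with parameter $\sigma_\epsilon$; and every remaining condition in \Cref{assumption: assumption in the temporal-spatial model} is satisfied. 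Since a lower bound on the minimax risk under one admissible data-generating mechanism is a fortiori a lower bound on the quantity in the statement, it suffices to prove the bound for this i.i.d.\ instance, which amounts to $N:=\sum_{i=1}^n m_i$ i.i.d.\ observations; by \Cref{assumption: assumption in the temporal-spatial model} \textbf{g} one has $N\asymp nm$. Note that the temporal dependence plays no role here precisely because we have specialized to the independent case.

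Next I would single out the ``hardest'' pair $(p^\star,K^\star)\in\widetilde{\mathcal{P}}$, namely one attaining $\max_{(p,K)\in\widetilde{\mathcal{P}}}N^{-2p/(2p+K)}$ (equivalently, the smallest exponent $\tfrac{2p}{2p+K}$), and embed a ball of $(p^\star,C)$-smooth functions of $K^\star$ variables into $\mathcal{H}(l,\widetilde{\mathcal{P}})$. Concretely, fix a $C^\infty$ bump $\psi$ supported in a small cube of $\mathbb{R}^{K^\star}$, set $h=M^{-1}$ for $M\in\mathbf{Z}^+$, tile $[0,1]^{K^\star}$ with $M^{K^\star}$ disjoint translates, and for $\omega\in\{0,1\}^{M^{K^\star}}$ put $g_\omega(z)=\sum_{k}\omega_k\, h^{p^\star}\psi\!\big((z-z_k)/h\big)$. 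One then builds $f_\omega\in\mathcal{H}(l,\widetilde{\mathcal{P}})$ by placing $g_\omega$ at the level of the hierarchy corresponding to $(p^\star,K^\star)$ and using coordinate-projection maps (which are $(p,C)$-smooth for every $p$ and $1$-Lipschitz) at all other levels, so that $f_\omega$ equals $g_\omega$ acting on a fixed set of $K^\star$ coordinates. The scaling $h^{p^\star}$ makes each summand $(p^\star,C)$-smooth uniformly in $h$, and for $h$ small enough every $f_\omega$ also obeys the uniform derivative bound $C_{\mathrm{Smooth}}$, the Lipschitz bound $C_{\mathrm{Lip}}\ge 1$, and $\|f_\omega\|_\infty\le \mathcal{A}_{nm}/4$ demanded by \Cref{assumption: assumption in the temporal-spatial model} \textbf{h}, \textbf{i}.

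Then I would run the standard Varshamov--Gilbert / Tsybakov argument on this subfamily. One has $\|f_\omega-f_{\omega'}\|_{\mathcal{L}_2}^2\asymp \rho_H(\omega,\omega')\,h^{2p^\star+K^\star}$, so Varshamov--Gilbert yields $\Omega\subseteq\{0,1\}^{M^{K^\star}}$ with $|\Omega|\ge 2^{M^{K^\star}/8}$ and $\rho_H\ge M^{K^\star}/8$ on $\Omega$; and for the $N$-fold product laws, $\mathrm{KL}(P_{f_\omega},P_{f_{\omega'}})=\tfrac{N}{2\sigma_\epsilon^2}\|f_\omega-f_{\omega'}\|_{\mathcal{L}_2}^2\lesssim N h^{2p^\star}$. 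Choosing $h\asymp N^{-1/(2p^\star+K^\star)}$ makes the average KL at most $\alpha\log|\Omega|$ for a small constant $\alpha$, while the separation is $\gtrsim h^{2p^\star}\asymp N^{-2p^\star/(2p^\star+K^\star)}$; Fano's inequality then gives a constant $c'>0$ with $\inf_{\widetilde f}\sup_{\omega\in\Omega}\mathbb{P}(\|\widetilde f-f_\omega\|_{\mathcal{L}_2}^2 > c'N^{-2p^\star/(2p^\star+K^\star)})$ bounded away from zero. Equivalently, since this is exactly the i.i.d.\ hierarchical-composition sub-model, one may quote Theorem~3 of \cite{schmidt2020nonparametric} together with Remark~2 of \cite{kohler2021rate}. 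Because $N\asymp nm$, the rate equals $\max_{(p,K)\in\widetilde{\mathcal{P}}}(nm)^{-2p/(2p+K)}$ up to a multiplicative constant, which I absorb into $C_{\mathrm{opt},1}$, and the $\limsup_{n\to\infty}$ is preserved.

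The main obstacle is the embedding step: verifying that the bump family $\{f_\omega\}$ genuinely lies inside $\mathcal{H}(l,\widetilde{\mathcal{P}})$ while simultaneously respecting \emph{all} the structural and boundedness constraints of \Cref{assumption: assumption in the temporal-spatial model} \textbf{i}, and confirming that the maximization over $(p,K)$ is attained by a single admissible pair. Everything else (the reduction to the i.i.d.\ case, the sample-size bookkeeping $N\asymp nm$, and the Fano computation) is routine once this embedding is in place.
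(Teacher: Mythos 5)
Your proposal is essentially correct as a minimax-lower-bound argument, but it takes a genuinely different route from the paper. You specialize the data-generating mechanism to an i.i.d.\ admissible instance (independent uniform design, i.i.d.\ Gaussian errors, all $\beta$-coefficients zero), so that the problem collapses to classical nonparametric regression with $N\asymp nm$ observations, and then you either quote Theorem~3 of \cite{schmidt2020nonparametric} or redo the bump-embedding/Varshamov--Gilbert/Fano construction inside $\mathcal{H}(l,\widetilde{\mathcal{P}})$ for the pair $(p^\star,K^\star)$ attaining the maximum. The paper instead keeps the dependent process of \Cref{assumption: assumption in the temporal-spatial model}: it builds the independent ghost copies of the design and errors via the coupling of \Cref{lemma:coupling in the temporal-spatial model} (through \Cref{lemma:beta mixing x} and \Cref{Ind-cop-measerrors}) with $S\asymp\log^{(2p+K)/2p}(n)$, applies the i.i.d.\ lower bound of \cite{schmidt2020nonparametric} separately on each block $\mathcal{J}_{e,s}$, $\mathcal{J}_{o,s}$ of effective size $\asymp nm/S$, and then patches the blocks together on the high-probability coupling event $\Omega$, absorbing the resulting factor $S^{2p/(2p+K)}\ge 1$ because only the smaller threshold $C_{\mathrm{opt},1}(nm)^{-2p/(2p+K)}$ is needed. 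Your route is shorter and avoids the block bookkeeping; the paper's route is built to show that the bound holds \emph{under the dependent sampling scheme itself}, i.e.\ that $\beta$-mixing dependence at the assumed rate cannot make the problem easier by more than a polylog factor.

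The one point you should be careful about is your ``a fortiori'' step. If the lemma is read as a minimax statement over the whole admissible class (design, noise, and $f^*$ all chosen adversarially within \Cref{assumption: assumption in the temporal-spatial model}), then exhibiting one hard i.i.d.\ instance is exactly what is required and your argument is complete (modulo the embedding details you already flagged, which do go through since all $p\ge 1$ so coordinate projections are admissible at every level). But the lemma is invoked inside the proof of \Cref{Mlb-M}, which is phrased for an estimator based on the observations generated by the (possibly dependent) process of the assumption; under that reading the probability in the display is computed under the dependent law, and a lower bound proved only for a different, independent law does not transfer --- this is precisely the gap the paper's coupling argument is designed to close. So either state explicitly that you are proving the class-wise minimax version (sufficient for the optimality claim following Theorem~\ref{thm:main thm in the temporal-spatial model}), or add the blocking/coupling step if you want the statement under an arbitrary admissible mixing process. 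Your optional Fano construction is otherwise consistent with, and more self-contained than, the paper's direct citation of \cite{schmidt2020nonparametric}.
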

\begin{proof}
    A direct consequence of \cite{schmidt2020nonparametric} is the following. Let $\{x_{ij}\}_{i=1,j=1}^{n,m_i}\subset [0,1]^d$ denoting the random locations from Assumption \ref{assumption: assumption in the temporal-spatial model}, where the temporal-spatial data $\{y_{ij}\}_{i=1,j=1}^{n,m_i},$ defined by
\begin{equation}
\label{Model-K-appendix}
y_{ij}=f^*(x_{ij})+\epsilon_{ij},
\end{equation}
is observed. Here $\{\epsilon_{ij}\}_{i=1,j=1}^{n,m_i}$ denotes the measurements error and follow Assumption \ref{assumption: assumption in the temporal-spatial model}. Consider $S=\frac{1}{C_{\beta}}\log^{(2p+K)/2p}(n)$ with $C_{\beta}>0$ a constant stated in Assumption \ref{assumption: assumption in the temporal-spatial model}. Take into account the copies of the design points and measurement errors, denoted as $\{x_{ij}^*\}_{i=1,j=1}^{n,m_i}$ and $\{\epsilon_{ij}^*\}_{i=1,j=1}^{n,m_i}$, which were generated in Appendix \ref{proof temporal-spatial model depdent}.  During the formation of these copies in Appendix \ref{proof temporal-spatial model depdent}, the quantities $ | \mathcal J_{e,s}|$ and $ | \mathcal J_{o,s}|$, which represent the count of even and odd blocks respectively, satisfy $ | \mathcal J_{e,s}|,| \mathcal J_{o,s}|  \asymp n/S$. Thus, there exist positive constants $c_1$ and $c_2$ such that  
 \begin{equation}
     \label{BoundBlocksize-K-ap}
      c_1n/S\le| \mathcal J_{e,s}|,| \mathcal J_{o,s}| \le  c_2n/S.
 \end{equation}
By Lemma \ref{lemma:beta mixing x} and \cref{Ind-cop-measerrors}, we have that $\{x_{ij}^*\}_{i\in\mathcal{J}_{e,s},j\in[m_i]}$ and $\{\epsilon_{ij}^*\}_{i\in\mathcal{J}_{o,s},j\in[m_i]}$ are independent. Further, for any $i\in[n]$ it follows that $\mathbb{P}(x_{ij}\neq x_{ij}^*\ \text{for some}\ j\in[m_i])\le \beta_{x}(S).$ Moreover, by Assumption $\ref{assumption: assumption in the temporal-spatial model}$ and that $S=\frac{1}{C_\beta}\log^{(2p+K)/2p}(n)\ge \frac{2}{C_{\beta}}\log(n)$, we note that $\mathbb{P}(x_{ij}\neq x_{ij}^*\ \text{for some}\ j\in[m_i])\le \beta_{x}(S)\le \frac{1}{n^2}.$  Therefore, the event $\Omega_x=\bigcap_{i=1}^{n}\{x_{ij}=x_{ij}^*\ \forall\ j\in[m_i]\}$ happens with probability at least $\frac{1}{n}$. The same is satisfied for  $\{\epsilon_{ij}^*\}_{i=1,j=1}^{n,m_i}$ with $\Omega_\epsilon=\bigcap_{i=1}^{n}\{\epsilon_{ij}=\epsilon_{ij}^*\ \forall\ j\in[m_i]\}$. We denote by $\Omega$ the event $\Omega_x\cap\Omega_\epsilon.$

Let $\widetilde{f}$ be an estimator for the model described in Equation (\ref{Model-K-appendix}) based on the observations $\{(x_{ij},y_{ij})\}_{i=1,j=1}^{n,m_i}$. 


Let $s\in[S]$. We notice that $\widetilde{f}$ is an estimator for the model
$$y_{ij}=f^*(x_{ij})+\epsilon_{ij},\ i\in{\mathcal{J}_{e,s}},\ j\in[m_i].$$

Denote by $\widetilde{f}^{\mathcal{J}_{e,s}}$, the estimator such that, 
$$\widetilde{f}^{\mathcal{J}_{e,s}} = \mathop{\mathrm{arginf}}_{\widehat{f}}  \| \widehat{f} - f^* \|_{\mathcal{L}_2}^2,\ i\in{\mathcal{J}_{e,s}},\ j\in[m_i], $$ 
which applies to all estimators of the model
$$y_{ij}=f^*(x_{ij}^*)+\epsilon_{ij}^*,\ i\in{\mathcal{J}_{e,s}},\ j\in[m_i].$$ 
Specifically, $\widetilde{f}^{\mathcal{J}_{e,s}}$ is the estimator derived from the data $x_{ij}^*$ and $\epsilon_{ij}^*$ for $f^*$, 
 with $(i,j) \in \mathcal{J}_{e,s} \times [m_i]$.
Under the event $\Omega$, $\widetilde{f}$ is an estimator for the model
$$y_{ij}=f^*(x_{ij}^*)+\epsilon_{ij}^*,\ i\in{\mathcal{J}_{e,s}},\ j\in[m_i],$$ 

Using the independence of the copies of the design points and measurement errors, a direct consequence of \cite{schmidt2020nonparametric} is that 
\[
    \underset{ n \rightarrow \infty}{\lim\sup}\,\underset{f^* \in \mathcal{H}(l,\widetilde{\mathcal{P}})     }{ \sup     }\,\mathbb{P}\left(  \| \widetilde{f}^{\mathcal{J}_{e,s}}-f^* \|_\lt ^2 \,\geq \,      C_{\text{opt,1},\mathcal{J}_{e,s}} \max_{(p,K)\in \mathcal{P}} \Big(\sum_{i\in\mathcal{J}_{e,s}}m_i\Big)^{-2p/(2p+K)}   \right)\,>0,\,
   \]
   for a positive constant $C_{\text{opt,1},\mathcal{J}_{e,s}}.$ Thus, for any $(p,k) \in \mathcal{P}$, 
\[
    \underset{ n \rightarrow \infty}{\lim\sup}\,\underset{f^* \in \mathcal{H}(l,\widetilde{\mathcal{P}})     }{ \sup     }\,\mathbb{P}\left(  \| \widetilde{f}^{\mathcal{J}_{e,s}}-f^* \|_\lt ^2 \,\geq \,      C_{\text{opt,1},\mathcal{J}_{e,s}}  \Big(\sum_{i\in\mathcal{J}_{e,s}}m_i\Big)^{-2p/(2p+K)}   \right)\,>0.
   \]   
   Assume that $Cm\ge m_i\ge cm,$ for a positive constant $c,C$, and $m=\Big(\frac{1}{n}\sum_{i=1}^n\frac{1}{m_i}\Big)^{-1}$. From this fact and Inequality (\ref{BoundBlocksize-K-ap}) we obtain that 
   \begin{equation}
   \label{AppendixK-eq1}
    \underset{ n \rightarrow \infty}{\lim\sup}\,\underset{f^* \in \mathcal{H}(l,\widetilde{\mathcal{P}})     }{ \sup     }\,\mathbb{P}\left( \| \widetilde{f}^{\mathcal{J}_{e,s}}-f^* \|_\lt ^2 \,\geq \,       \frac{C_{\text{opt,1},\mathcal{J}_{e,s}}S^{2p/(2p+K)}}{  (Cc_2nm)^{2p/(2p+K)} } \right)\,>C_0>0.\,
   \end{equation}
   for a constant $C_0.$
Similarly, let 
$\widetilde{f}^{\mathcal{J}_{o,s}}$ be the estimator such that,  
$$\widetilde{f}^{\mathcal{J}_{o,s}} = \mathop{\mathrm{arginf}}_{\widehat{f}} \| \widehat{f} - f^* \|_{\mathcal{L}_2}^2,\ i\in{\mathcal{J}_{o,s}},\ j\in[m_i], $$
 which applies to  all estimators of the model
$$y_{ij}=f^*(x_{ij}^*)+\epsilon_{ij}^*,\ i\in{\mathcal{J}_{o,s}},\ j\in[m_i],$$ 
with 
\begin{equation}
   \label{AppendixK-eq2}
    \underset{ n \rightarrow \infty}{\lim\sup}\,\underset{f^* \in \mathcal{H}(l,\widetilde{\mathcal{P}})     }{ \sup     }\,\mathbb{P}\left( \| \widetilde{f}^{\mathcal{J}_{o,s}}-f^* \|_\lt ^2 \,\geq \,       \frac{C_{\text{opt,1},\mathcal{J}_{o,s}}S^{2p/(2p+K)}}{  (Cc_2nm)^{2p/(2p+K)} } \right)\,>C_0>0,\,
   \end{equation}
   for a positive constant $C_{\text{opt,1},\mathcal{J}_{o,s}}.$ 

Under the event $\Omega$, 
$\widetilde{f}$ is an estimator for the model
$$y_{ij}=f^*(x_{ij}^*)+\epsilon_{ij}^*,\ i\in{\mathcal{J}_{o,s}},\ j\in[m_i],$$

   We are now going to elaborate on the final observations of this discussion. Let $C_{\text{opt,1}}=\frac{1}{(Cc_2)^{2p/(2p+K)}}\min_{s\in[S]}\{{C_{\text{opt,1},\mathcal{J}_{e,s}},C_{\text{opt,1},\mathcal{J}_{o,s}}}\}$. Then,
\begin{align}
\label{lower_bound_basic}
     &\underset{ n \rightarrow \infty}{\lim\sup}\,\underset{f^* \in \mathcal{H}(l,\widetilde{\mathcal{P}})      }{  \sup     }\,\mathbb{P}\left(  \| \widetilde{f}-f^* \|_\lt ^2 \,\geq \,       \frac{C_{\text{opt,1}}}{  (nm)^{2p/(2p+K)} } \right)  \\ \nonumber &\ge \underset{ n \rightarrow \infty}{\lim\sup}\,\underset{f^* \in \mathcal{H}(l,\widetilde{\mathcal{P}})      }{ \sup     }\,\mathbb{P}\left( \Big\{ \|\widetilde{f}-f^* \|_\lt ^2 \,\geq \,       \frac{C_{\text{opt,1}}}{  (nm)^{2p/(2p+K)} } \Big\}\cap\Omega\right) >0.
\end{align}
The logic behind the second inequality is as follows. For any $f^*\in\mathcal{F},$
\begin{align}
\label{AppendixK-eq3}
      &\mathbb{P}\left( \Big\{ \| \widetilde{f}-f^* \|_\lt ^2 \,\geq \,       \frac{C_{\text{opt,1}}}{  (nm)^{2p/(2p+K)} } \Big\}\cap\Omega\right)\nonumber
      \\
      \ge &\mathbb{P}\left( \Big\{ \| \widetilde{f}-f^* \|_\lt ^2 \,\geq \,       \frac{S^{2p/(2p+K)}C_{\text{opt,1}}}{  (nm)^{2p/(2p+K)} } \Big\}\cap\Omega\right)\nonumber
      \\
      \ge&\mathbb{P}\left(\bigcup_{l=1}^S \Big\{\Big\{ \| \widetilde{f}^{\mathcal{J}_{e,s}}-f^* \|_\lt ^2 \,\geq \,       \frac{S^{2p/(2p+K)}C_{\text{opt,1},\mathcal{J}_{e,s}}}{  (Cc_2nm)^{2p/(2p+K)} } \Big\}\bigcup \Big\{ \| \widetilde{f}^{\mathcal{J}_{o,s}}-f^* \|_\lt ^2 \right. \\
      &\left. \,\geq \,       \frac{S^{2p/(2p+K)}C_{\text{opt,1},\mathcal{J}_{o,s}}}{  (Cc_2nm)^{2p/(2p+K)} } \Big\}\Big\}\cap\Omega\right)\nonumber
      \\
      >&C_0,
\end{align}
where the first inequality is followed by the fact that $S=\frac{1}{C_\beta}\log^{(2p+K)/2p}(n)\ge1.$ Moreover the third inequality is achieved from Inequality (\ref{AppendixK-eq1}) and (\ref{AppendixK-eq2}). The derivation of the second inequality is explained below.

Under the event $\Omega$ the relation, 
$$\| \widetilde{f}-f^*  \|_\lt ^2\ge \Big\{\| \widetilde{f}^{\mathcal{J}_{e,s}}-f^* \|_\lt ^2,\| \widetilde{f}^{\mathcal{J}_{o,s}}-f^* \|_\lt ^2\Big\},$$ holds for any $s\in[S].$ These specific conditions allow us to conclude that the event $\{\| \widetilde{f}-f^* \|_\lt ^2< \frac{S^{2p/(2p+K)}C_{\text{opt,1}}}{  (nm)^{2p/(2p+K)} } \}$ is contained in the event
$$\bigcap_{s=1}^S \Big\{\Big\{ \| \widetilde{f}^{\mathcal{J}_{e,s}}-f^* \|_\lt ^2 \,< \,       \frac{S^{2p/(2p+K)}C_{\text{opt,1},\mathcal{J}_{e,s}}}{  (Cc_2nm)^{2p/(2p+K)} } \Big\}\bigcap \Big\{ \| \widetilde{f}^{\mathcal{J}_{o,s}}-f^* \|_\lt ^2 \,< \,       \frac{S^{2p/(2p+K)}C_{\text{opt,1},\mathcal{J}_{o,s}}}{  (Cc_2nm)^{2p/(2p+K)} } \Big\}\Big\},$$
and consequently the second inequality in Inequality (\ref{AppendixK-eq3}) is satisfied. Specifically, suppose that  $\|\widetilde{f}-f^*  \|_\lt ^2 \,< \,       \frac{S^{2p/(2p+K)}C_{\text{opt,1}}}{  (nm)^{2p/(2p+K)} } $. For any $s\in[S]$ by the definition of $C_{\text{opt,1}}$ we have that $\| \widetilde{f}-f^*  \|_\lt ^2 \,< \,       \frac{S^{2p/(2p+K)}C_{\text{opt,1},\mathcal{J}_{e,s}}}{  (Cc_2nm)^{2p/(2p+K)} }$. It follows that $\| \widetilde{f}^{\mathcal{J}_{e,s}}-f^* \|_\lt ^2 \,< \,       \frac{S^{2p/(2p+K)}C_{\text{opt,1},\mathcal{J}_{e,s}}}{  (Cc_2nm)^{2p/(2p+K)} }$. Similarly, $\| \widetilde{f}^{\mathcal{J}_{o,s}}-f^*\|_\lt ^2 \,< \,       \frac{S^{2p/(2p+K)}C_{\text{opt,1},\mathcal{J}_{o,s}}}{  (Cc_2nm)^{2p/(2p+K)} }$, concluding the contention of the aforementioned events. 

Since \eqref{lower_bound_basic} holds for any $(p,k)\in \mathcal{P}$, it holds that
\begin{align*}
     &\underset{ n \rightarrow \infty}{\lim\sup}\,\underset{f^* \in \mathcal{H}(l,\widetilde{\mathcal{P}})      }{  \sup     }\,\mathbb{P}\left(  \| \widetilde{f}-f^* \|_\lt ^2 \,\geq \,       C_{\text{opt,1}} \max_{(p,K)\in \mathcal{P}} (nm)^{-2p/(2p+K)}  \right) >0.
\end{align*}

\end{proof}

\section{Additional Technical Lemmas}


\begin{lemma} \label{lemma:threshold space entropy}
For any fixed $\mathcal{A}_{nm}$, it holds that 
\begin{align*} & \log  \mathcal N(\delta,\mathcal F_{\mathcal{A}_{nm}}(L, r ) , \|\cdot  \|_{nm} )  \lesssim 
L^2 r^2 \log(Lr^2)   \log(\mathcal{A}_{nm}\delta ^{-1} )   
\end{align*} Furthermore, with $L,r$ satisfying \eqref{L_r_size}
$Lr \asymp \log \left(nm\right)\max_{(p, K) \in \mathcal P  } \left(nm\right)^{\frac{K}{2(2p+K)}}$ and \\ 
$ Lr^2 \lesssim \log\left(nm\right)\left[\max_{(p, K) \in \mathcal P  }\left(nm\right)^{\frac{K}{2(2p+K)}}\right]^2 \lesssim nm$, we have 
\begin{align*}
\log  \mathcal N(\delta,\mathcal F_{\mathcal{A}_{nm}} (L , r      ) , \|\cdot  \|_{nm} ) 
\lesssim &  nm\phi_{nm}\log^{3.3} (nm) \log(\delta^{-1}).
\end{align*}
\end{lemma}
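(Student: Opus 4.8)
\textbf{Proof proposal for Lemma~\ref{lemma:threshold space entropy}.}

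The plan is to bound the covering number of the truncated network class $\mathcal F_{\mathcal{A}_{nm}}(L,r)$ by a two-step argument: first a VC-type / pseudo-dimension bound for unbounded ReLU networks, then a rescaling to account for the truncation level $\mathcal{A}_{nm}$. First I would recall that the truncation map $T_{\mathcal{A}_{nm}}(t) = \sign(t)\min\{|t|,\mathcal{A}_{nm}\}$ is $1$-Lipschitz, so a $\delta$-cover of $\mathcal F(L,r)$ in $\|\cdot\|_{nm}$ restricted to the ball of radius $\mathcal{A}_{nm}$ yields a $\delta$-cover of $\mathcal F_{\mathcal{A}_{nm}}(L,r)$ of no larger cardinality; equivalently, it suffices to cover the functions viewed as vectors in $\mathbb{R}^{nm}$ (evaluations at the design points $\{x_{ij}\}$) with entries bounded by $\mathcal{A}_{nm}$. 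Then I would invoke the standard pseudo-dimension bound for fully connected ReLU networks with $W$ parameters and $L$ layers, namely $\mathrm{Pdim}(\mathcal F(L,r)) = O(W L \log W)$ with $W \asymp L r^2$ the number of weights (this is the Bartlett--Harvey--Liaw--Mehrabian bound), giving $\mathrm{Pdim} = O(L^2 r^2 \log(L r^2))$. Combining this with the classical relation between covering numbers and pseudo-dimension for uniformly bounded function classes (Haussler's bound / Pollard's lemma): for a class of functions bounded by $\mathcal{A}_{nm}$ with pseudo-dimension $V$,
\[
\log \mathcal N(\delta, \mathcal F_{\mathcal{A}_{nm}}(L,r), \|\cdot\|_{nm}) \;\lesssim\; V \log\!\left(\frac{\mathcal{A}_{nm}}{\delta}\right),
\]
which with $V \asymp L^2 r^2 \log(L r^2)$ yields exactly the first displayed claim.

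For the second part I would simply substitute the architecture choice. Using $Lr^2 \lesssim nm$ and $Lr \asymp \log(nm)\,\max_{(p,K)\in\mathcal P}(nm)^{K/(2(2p+K))}$ from \eqref{L_r_size}, note that $L^2 r^2 = (Lr)^2 / 1 \cdot$ — more carefully, $L^2 r^2 = L \cdot (L r^2) \lesssim L \cdot nm$, but the sharper route is $L^2 r^2 = (Lr)(Lr) \asymp \log^2(nm)\big[\max_{(p,K)}(nm)^{K/(2(2p+K))}\big]^2$. Recalling $\phi_{nm} = \max_{(p,K)\in\mathcal P}(nm)^{-2p/(2p+K)}$, one checks that $\big[(nm)^{K/(2(2p+K))}\big]^2 = (nm)^{K/(2p+K)} = nm \cdot (nm)^{-2p/(2p+K)}$, so $\max_{(p,K)}\big[(nm)^{K/(2(2p+K))}\big]^2 = nm\,\phi_{nm}$ (the max and the factor $nm$ commute since $nm$ is a positive constant in the maximization). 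Hence $L^2 r^2 \lesssim nm\,\phi_{nm}\log^2(nm)$, and the extra $\log(Lr^2) \lesssim \log(nm)$ together with rounding the exponent gives the factor $\log^{3.3}(nm)$; the final $\log(\delta^{-1})$ (in place of $\log(\mathcal{A}_{nm}\delta^{-1})$) is harmless because $\mathcal{A}_{nm} \asymp \max\{\sigma_\epsilon,\sigma_\gamma\}\log^{1/2}(nm)$ is polylogarithmic, so $\log(\mathcal{A}_{nm}\delta^{-1}) \lesssim \log(nm)\log(\delta^{-1})$ can be absorbed into the polylog factor (or, more conservatively, one keeps $\log(\mathcal{A}_{nm}\delta^{-1})$ and notes the bound as stated in the lemma still follows).

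The main obstacle I anticipate is making the pseudo-dimension / covering-number input fully rigorous for the \emph{dense} (non-sparse) architecture with the exact parameter count and confirming that the logarithmic exponent bookkeeping ($\log(Lr^2)$ becoming part of $\log^{3.3}(nm)$) is consistent with how $\phi_{nm}$ and the architecture were defined in \cite{kohler2021rate}; one must be careful that $W \asymp Lr^2$ is the right weight count for the fully connected class $\mathcal F(L,r)$ (each of the $L$ layers contributes $\Theta(r^2)$ weights plus $O(r d + r)$ boundary terms, and $d = O(1)$ under Assumption~\ref{assumption: assumption in the temporal-spatial model}), and that the VC bound's $\log W$ factor is $O(\log(nm))$ under the chosen scaling. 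Everything else is a routine chaining-free covering argument, so I would keep the write-up short, citing the Bartlett--Harvey--Liaw--Mehrabian pseudo-dimension bound and Haussler's covering-number estimate, and then performing the substitution above.
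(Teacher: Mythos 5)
Your proposal is correct and takes essentially the same route as the paper: a Bartlett--Harvey--Liaw--Mehrabian VC/pseudo-dimension bound of order $L^2r^2\log(Lr^2)$ for $\mathcal F(L,r)$ \citep{bartlett2019nearly}, combined with a Haussler-type covering-number estimate for the truncated class (the paper routes this through Lemma 19 of \cite{kohler2021rate} together with Lemma 9.2 and Theorem 9.4 of \cite{gyorfi2002distribution}), followed by the same substitution $(Lr)^2\asymp nm\,\phi_{nm}\log^2(nm)$ and absorption of $\log(\mathcal{A}_{nm})\lesssim\log\log(nm)$ into the $\log^{3.3}(nm)$ factor. The only point to tighten is the truncation step: the $1$-Lipschitz remark alone does not suffice since $\mathcal F(L,r)$ is unbounded; instead note that composing with the monotone truncation map does not increase the pseudo-dimension, so Haussler's bound applies directly to the bounded class $\mathcal F_{\mathcal{A}_{nm}}(L,r)$ with $V\lesssim L^2r^2\log(Lr^2)$, exactly as in the cited lemmas.
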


\begin{proof}
It follows from Theorem 6 of 
\cite{bartlett2019nearly}  (see also the proof of Lemma 19 on page 78 of \cite{kohler2021rate}) that 
the VC-dimension of $\mathcal{F}(L, r )$ satisfies 
$$ \text{VC}(\mathcal F(L, r  )) \lesssim L^2 r^2 \log(Lr^2) . $$

By a slight variant of Lemma 19 of \cite{kohler2021rate}, where $\mathcal L_1$ norm is replaced by $\mathcal L_2$ norm (which is implied by Lemma 9.2 (\cref{lem_9_2}) and Theorem 9.4 (\cref{lem_9_4}) of 
\cite{gyorfi2002distribution}), it follows that 
\begin{align*} 
\log  \mathcal N(\delta,\mathcal F_{\mathcal{A}_{nm}}(L, r  ) , \|\cdot  \|_{nm} ) 
\lesssim &
L^2 r^2 \log(Lr^2) \left\{ \log(\mathcal{A}_{nm}^2\delta ^{-2}     ) + \log\log(\mathcal{A}_{nm}^2\delta^{-2}) \right\} \\
\lesssim &
L^2 r^2 \log(Lr^2)   \log(\mathcal{A}_{nm}\delta ^{-1} ). 
\end{align*} 
Furthermore, with the choice of $L$ and $r$,
\begin{align*}
\log  \mathcal N(\delta,\mathcal F_{\mathcal{A}_{nm}} (L , r      ) , \|\cdot  \|_{nm} ) \lesssim & L^2 r^2 \log(Lr^2)   \log({\mathcal{A}_{nm}}\delta ^{-1} ) \\
\lesssim &  nm\phi_{nm}\log^3 (nm) \log(\mathcal{A}_{nm}\delta^{-1})\\
= &  nm\phi_{nm}\log^3 (nm) \left[\log(\mathcal{A}_{nm})+\log(\delta^{-1})\right]\\
\lesssim &  nm\phi_{nm}\log^3 (nm) \log(\mathcal{A}_{nm})\log(\delta^{-1})\\
\lesssim &  nm\phi_{nm}\log^3 (nm) \log(\log(nm))\log(\delta^{-1})\\
\lesssim &  nm\phi_{nm}\log^{3.3} (nm) \log(\delta^{-1}).	
\end{align*}


\end{proof}

\begin{lemma} \label{lemma:threshold space entropy 2}
For  non-random function $g^*$, denote 
$$ 
\mathcal G_{\mathcal{A}_{nm}} = \left\{ f_{\mathcal{A}_{nm}} -g^* : f \in \mathcal{F}(L,r) \right\}.
$$
Then it holds that 
\begin{align*} 
& \log  \mathcal N(\delta,\mathcal G_{\mathcal{A}_{nm}} , \|\cdot  \|_{nm} )  \lesssim L^2 r^2 \log(Lr^2)   \log(\mathcal{A}_{nm}\delta ^{-1} )   .
\end{align*}

Furthermore, if the probability distribution 
is supported on some manifold $\mathcal M$, then it also holds that 
\begin{align*} 
& \log  \mathcal N(\delta,\mathcal{G}_{\mathcal{A}_{nm}}  , \|\cdot  \|_{nm, \mathcal{M}} )   \lesssim L^2 r^2 \log(Lr^2)   \log(\mathcal{A}_{nm}\delta ^{-1} )   .
\end{align*}
\end{lemma}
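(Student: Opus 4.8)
\textbf{Proof proposal for \Cref{lemma:threshold space entropy 2}.}
The plan is to reduce the covering number of the shifted class $\mathcal{G}_{\mathcal{A}_{nm}} = \{ f_{\mathcal{A}_{nm}} - g^* : f \in \mathcal{F}(L,r)\}$ to that of the truncated network class $\mathcal{F}_{\mathcal{A}_{nm}}(L,r)$, which is already controlled by \Cref{lemma:threshold space entropy}. The key observation is that translating every element of a function class by a fixed (non-random) function $g^*$ is an isometry for any $\mathcal{L}_2$-type norm: for $f_1, f_2 \in \mathcal{F}(L,r)$ we have
\[
\| (f_{1,\mathcal{A}_{nm}} - g^*) - (f_{2,\mathcal{A}_{nm}} - g^*) \|_{nm} = \| f_{1,\mathcal{A}_{nm}} - f_{2,\mathcal{A}_{nm}} \|_{nm}.
\]
Consequently, if $\{h_1,\dots,h_N\}$ is a $\delta$-cover of $\mathcal{F}_{\mathcal{A}_{nm}}(L,r)$ in $\|\cdot\|_{nm}$, then $\{h_1 - g^*, \dots, h_N - g^*\}$ is a $\delta$-cover of $\mathcal{G}_{\mathcal{A}_{nm}}$ in $\|\cdot\|_{nm}$ of the same cardinality, and vice versa. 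Hence
\[
\mathcal{N}(\delta, \mathcal{G}_{\mathcal{A}_{nm}}, \|\cdot\|_{nm}) = \mathcal{N}(\delta, \mathcal{F}_{\mathcal{A}_{nm}}(L,r), \|\cdot\|_{nm}),
\]
and applying \Cref{lemma:threshold space entropy} gives
\[
\log \mathcal{N}(\delta, \mathcal{G}_{\mathcal{A}_{nm}}, \|\cdot\|_{nm}) \lesssim L^2 r^2 \log(Lr^2) \log(\mathcal{A}_{nm}\delta^{-1}),
\]
which is the first claimed bound.

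For the second claim, when the design distribution $\mathbb{P}$ is supported on a manifold $\mathcal{M}$, the empirical norm $\|\cdot\|_{nm}$ is by definition $\|\cdot\|_{nm,\mathcal{M}}$ (the design points $x_{ij}$ all lie in $\mathcal{M}$), so no separate argument is really needed; the translation-invariance identity above applies verbatim with $\|\cdot\|_{nm}$ replaced by $\|\cdot\|_{nm,\mathcal{M}}$. One then invokes \Cref{lemma:threshold space entropy manifold} (or equivalently the observation in its proof that covering numbers restricted to $\mathcal{M}$ coincide with the Euclidean ones since both are computed from the same finite set of evaluation points $\{x_{ij}\}$) to obtain
\[
\log \mathcal{N}(\delta, \mathcal{G}_{\mathcal{A}_{nm}}, \|\cdot\|_{nm,\mathcal{M}}) \lesssim L^2 r^2 \log(Lr^2) \log(\mathcal{A}_{nm}\delta^{-1}).
\]

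There is essentially no hard step here: the whole content is the elementary but crucial fact that covering numbers are invariant under translation of the function class by a fixed function, together with the already-established entropy bound for the truncated network class. The only minor point to state carefully is that $g^*$ must be non-random (which is given in the lemma statement — it is the deterministic target such as $f^*$ or an approximator $\overline{f}$), so that the cover of the translated class does not need to depend on the data beyond what the cover of $\mathcal{F}_{\mathcal{A}_{nm}}(L,r)$ already does; since the bounds in \Cref{lemma:threshold space entropy} hold for any fixed realization of $\{x_{ij}\}$, the argument goes through conditionally on the design and hence unconditionally.
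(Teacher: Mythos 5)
Your argument is essentially identical to the paper's proof: the paper likewise observes that $(f_{\mathcal{A}_{nm}}-g^*)-(\tilde f_{\mathcal{A}_{nm}}-g^*)=f_{\mathcal{A}_{nm}}-\tilde f_{\mathcal{A}_{nm}}$, so the covering numbers of $\mathcal G_{\mathcal{A}_{nm}}$ and $\mathcal F_{\mathcal{A}_{nm}}(L,r)$ coincide, and then invokes \Cref{lemma:threshold space entropy}, with the manifold case handled by noting $\|\cdot\|_{nm}=\|\cdot\|_{nm,\mathcal M}$ when the design is supported on $\mathcal M$. Your proposal is correct and matches the paper's route.
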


\begin{proof}
 It suffices to observe that 
 $ (f_{\mathcal{A}_{nm}} -g^*) -(\tilde{f}_{\mathcal{A}_{nm}} -g^*) = f_{\mathcal{A}_{nm}} - \tilde{f}_{\mathcal{A}_{nm}}$, for any $f$ and $\tilde{f}$ in $\mathcal{F}(L,r)$. Thus, the distance between any two functions in $\mathcal{G}_{\mathcal{A}_{nm}}$ is the same as the distance between the corresponding functions in $\mathcal F_{\mathcal{A}_{nm}}(L, r  )$. 
Then it holds that 
\begin{align*} 
& \log  \mathcal N(\delta,\mathcal{G}_{\mathcal{A}_{nm}} , \|\cdot  \|_{nm} ) \\
& = \log  \mathcal N(\delta,\mathcal{F}_{\mathcal{A}_{nm}}(L,r) , \|\cdot  \|_{nm} ) \\ 
& \lesssim L^2 r^2 \log(Lr^2)   \log({\mathcal{A}_{nm}}\delta ^{-1} ),
\end{align*}
where the last inequality follows from the proof in \Cref{lemma:threshold space entropy}.

Furthermore, recall the probability distribution is supported on a manifold $\mathcal{M}$, we write $\|f\|_{nm}$ as $\|f\|_{nm, \mathcal{M}}$. 

Then it holds that 
\begin{align*} 
\log  \mathcal N(\delta,\mathcal{G}_{\mathcal{A}_{nm}}  , \|\cdot  \|_{nm, \mathcal{M}} ) 
& = \log  \mathcal N(\delta,\mathcal{G}_{\mathcal{A}_{nm}} , \|\cdot  \|_{nm} ) \\ 
& \lesssim L^2 r^2 \log(Lr^2)   \log({\mathcal{A}_{nm}}\delta ^{-1} ).
\end{align*}

\end{proof}

\begin{lemma}[Lemma 9.2, \cite{gyorfi2002distribution}]
\label{lem_9_2}
Let $\mathcal{G}$ be a class of functions on $\mathbb{R}^d$ and let $\nu$ be a probability measure on $\mathbb{R}^d, p \geq 1$ and $\epsilon>0$. Then
$$
\mathcal{M}\left(2 \epsilon, \mathcal{G},\|\cdot\|_{L_p(\nu)}\right) \leq \mathcal{N}\left(\epsilon, \mathcal{G},\|\cdot\|_{L_p(\nu)}\right) \leq \mathcal{M}\left(\epsilon, \mathcal{G},\|\cdot\|_{L_p(\nu)}\right) .
$$
In particular,
$$
\mathcal{M}_p\left(2 \epsilon, \mathcal{G}, z_1^n\right) \leq \mathcal{N}_p\left(\epsilon, \mathcal{G}, z_1^n\right) \leq \mathcal{M}_p\left(\epsilon, \mathcal{G}, z_1^n\right)
$$
for all $z_1, \ldots, z_n \in \mathbb{R}^d$.
\end{lemma}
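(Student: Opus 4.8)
\textbf{Proof proposal for Lemma 9.2 (\cite{gyorfi2002distribution}).}

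The plan is to prove the two inequalities separately, since they are essentially structural facts about packing and covering numbers for any pseudmetric. Recall that $\mathcal{N}(\epsilon,\mathcal{G},\|\cdot\|)$ is the minimum cardinality of an $\epsilon$-net of $\mathcal{G}$, and $\mathcal{M}(\epsilon,\mathcal{G},\|\cdot\|)$ is the maximum cardinality of an $\epsilon$-separated subset of $\mathcal{G}$. Throughout I will work with the pseudometric $\rho(f,g) = \|f-g\|_{L_p(\nu)}$; the argument depends only on $\rho$ being a pseudometric (triangle inequality and symmetry), so the empirical-measure statement follows by specializing $\nu$ to $\frac1n\sum_{i=1}^n \delta_{z_i}$.

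First I would establish the right-hand inequality $\mathcal{N}(\epsilon,\mathcal{G},\rho) \le \mathcal{M}(\epsilon,\mathcal{G},\rho)$. Take a maximal $\epsilon$-separated set $\{g_1,\dots,g_M\} \subseteq \mathcal{G}$ with $M = \mathcal{M}(\epsilon,\mathcal{G},\rho)$ (if this is infinite there is nothing to prove). By maximality, no $g \in \mathcal{G}$ can be added while keeping the set $\epsilon$-separated, so every $g \in \mathcal{G}$ satisfies $\rho(g,g_i) < \epsilon$ for some $i$ (otherwise $\{g_1,\dots,g_M,g\}$ would still be $\epsilon$-separated, contradicting maximality). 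Hence $\{g_1,\dots,g_M\}$ is an $\epsilon$-net of $\mathcal{G}$, giving $\mathcal{N}(\epsilon,\mathcal{G},\rho)\le M$.

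Next I would establish the left-hand inequality $\mathcal{M}(2\epsilon,\mathcal{G},\rho) \le \mathcal{N}(\epsilon,\mathcal{G},\rho)$. Let $\{h_1,\dots,h_N\}$ be an $\epsilon$-net of $\mathcal{G}$ with $N = \mathcal{N}(\epsilon,\mathcal{G},\rho)$, and let $\{g_1,\dots,g_K\}$ be any $2\epsilon$-separated subset of $\mathcal{G}$. Define a map $\phi:\{1,\dots,K\}\to\{1,\dots,N\}$ by sending each $g_k$ to some $h_{\phi(k)}$ with $\rho(g_k,h_{\phi(k)}) \le \epsilon$ (such an index exists since the $h_j$ form an $\epsilon$-net). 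I claim $\phi$ is injective: if $\phi(k)=\phi(k')$ for $k\ne k'$, then by the triangle inequality $\rho(g_k,g_{k'}) \le \rho(g_k,h_{\phi(k)}) + \rho(h_{\phi(k')},g_{k'}) \le 2\epsilon$, contradicting $2\epsilon$-separation. Injectivity forces $K \le N$, and taking the supremum over $2\epsilon$-separated subsets yields $\mathcal{M}(2\epsilon,\mathcal{G},\rho)\le N$. Finally, the ``in particular'' clause is immediate: apply the two displays just proved with $\nu = \frac1n\sum_{i=1}^n\delta_{z_i}$, under which $\|f-g\|_{L_p(\nu)} = \left(\frac1n\sum_{i=1}^n |f(z_i)-g(z_i)|^p\right)^{1/p}$ is exactly the empirical $L_p$ pseudometric defining $\mathcal{M}_p(\cdot,\mathcal{G},z_1^n)$ and $\mathcal{N}_p(\cdot,\mathcal{G},z_1^n)$. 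There is no real obstacle here — this is a textbook packing/covering comparison — so the only care needed is to phrase the maximality argument so it also covers the case where the maximal separated set is not attained (in which case both sides are infinite and the inequality is trivial).
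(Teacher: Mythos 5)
Your proof is correct: it is the standard packing--covering comparison (maximal packings are coverings; a $2\epsilon$-separated set injects into an $\epsilon$-net via the triangle inequality), and it correctly reduces the empirical statement to the general one by taking $\nu$ to be the empirical measure. The paper itself offers no proof of this lemma — it is quoted verbatim from \cite{gyorfi2002distribution} — and your argument is essentially the one given in that reference, so there is nothing further to reconcile beyond the usual (immaterial) care with strict versus non-strict inequalities in the definitions of packing and covering.
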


\begin{lemma}[Theorem 9.4, \cite{gyorfi2002distribution}]
\label{lem_9_4}
 Let $\mathcal{G}$ be a class of functions $g: \mathbb{R}^d \rightarrow[0, B]$ with $V_{\mathcal{G}^{+}} \geq 2$, let $p \geq 1$, let $\nu$ be a probability measure on $\mathbb{R}^d$, and let $0<\epsilon<\frac{B}{4}$. Then
$$
\mathcal{N}\left(\delta, \mathcal{G},\|\cdot\|_{\mathcal L_2(\nu)}\right) \leq 3\left(\frac{2 e B^2}{\epsilon^2} \log \frac{3 e B^2}{\epsilon^2}\right)^{V_{\mathcal{G}^{+}}}.
$$	
\end{lemma}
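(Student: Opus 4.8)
The plan is to establish this as (a restatement of) Theorem~9.4 of \cite{gyorfi2002distribution}, so the argument I would carry out is the classical chain that turns a bound on the VC dimension of the subgraph class into an explicit covering-number bound. There are four ingredients, in order: (i) pass from covering numbers to packing numbers; (ii) pass from the population $\mathcal L_2(\nu)$ packing number to an empirical $\mathcal L_2$ packing number on a large auxiliary sample; (iii) convert empirical $\mathcal L_2$ separation into empirical $\mathcal L_1$ separation using the uniform bound $g\in[0,B]$; and (iv) apply the Sauer--Shelah / shatter-coefficient estimate to bound the empirical $\mathcal L_1$ packing number of a VC-subgraph class.

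First I would invoke Lemma~\ref{lem_9_2} to get $\mathcal N(\epsilon,\mathcal G,\|\cdot\|_{\mathcal L_2(\nu)}) \le \mathcal M(\epsilon,\mathcal G,\|\cdot\|_{\mathcal L_2(\nu)})$, so it suffices to bound the packing number. Fix a maximal $\epsilon$-separated family $g_1,\dots,g_M$ in $\mathcal L_2(\nu)$. Draw $Z_1,\dots,Z_n \stackrel{\text{i.i.d.}}{\sim}\nu$ with $n$ to be chosen large. For each pair $i\ne j$, $\mathbb E\big[\tfrac1n\sum_{k}(g_i-g_j)^2(Z_k)\big] = \|g_i-g_j\|_{\mathcal L_2(\nu)}^2 \ge \epsilon^2$, and since $(g_i-g_j)^2$ takes values in $[0,B^2]$, Hoeffding's inequality gives $\Pr\big(\tfrac1n\sum_k(g_i-g_j)^2(Z_k) < \epsilon^2/2\big) \le \exp(-n\epsilon^4/(2B^4))$. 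A union bound over the $\binom M2$ pairs shows that for $n$ large the bad event has probability strictly less than $1$, hence there is a realization $z_1^n$ on which $\|g_i-g_j\|_{\mathcal L_2(\nu_n)} \ge \epsilon/\sqrt2$ for all $i\ne j$, so $M \le \mathcal M_2(\epsilon/\sqrt2,\mathcal G,z_1^n)$.

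On that sample I would use $\|g_i-g_j\|_{\mathcal L_2(\nu_n)}^2 \le B\,\|g_i-g_j\|_{\mathcal L_1(\nu_n)}$ (again from $g\in[0,B]$), so the family is $(\epsilon^2/(2B))$-separated in $\mathcal L_1(\nu_n)$, giving $M \le \mathcal M_1(\epsilon^2/(2B),\mathcal G,z_1^n)$. Finally I would bound the empirical $\mathcal L_1$ packing number of the VC-subgraph class: discretize $[0,B]$ into a grid, identify each $g$ with the indicator of its (discretized) subgraph restricted to a finely sampled product grid of $N$ points, apply Sauer's lemma (the number of distinct sign patterns is at most $\sum_{i\le V_{\mathcal G^+}}\binom{N}{i} \le (eN/V_{\mathcal G^+})^{V_{\mathcal G^+}}$), and then optimize over $N$; this optimization is precisely what produces the $\log(1/\delta)$ factor, yielding an estimate of the form $\mathcal M_1(\delta,\mathcal G,z_1^n) \le 3\big(\tfrac{aB}{\delta}\log\tfrac{bB}{\delta}\big)^{V_{\mathcal G^+}}$ for $0<\delta<B/4$ with explicit absolute constants $a,b$. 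Substituting $\delta \asymp \epsilon^2/B$ (which lies in $(0,B/4)$ whenever $\epsilon<B/4$) and carrying out the constant bookkeeping exactly as in \cite{gyorfi2002distribution} reproduces the stated inequality $\mathcal N(\epsilon,\mathcal G,\|\cdot\|_{\mathcal L_2(\nu)}) \le 3\big(\tfrac{2eB^2}{\epsilon^2}\log\tfrac{3eB^2}{\epsilon^2}\big)^{V_{\mathcal G^+}}$.

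Steps (i)--(iii) are routine; the main obstacle is step (iv), the combinatorial passage from the VC dimension of the subgraph class to an explicit $\mathcal L_1$ packing bound --- the Dudley--Pollard--Haussler argument --- together with the delicate tracking of constants needed to land on exactly the form $3(\,\cdot\,\log\,\cdot\,)^{V_{\mathcal G^+}}$. Since the statement is a verbatim quotation of Theorem~9.4 of \cite{gyorfi2002distribution} (used in combination with Lemma~\ref{lem_9_2}, also from that reference), an acceptable alternative to reproducing this chain in full is simply to cite it.
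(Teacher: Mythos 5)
Your proposal is fine: the paper does not prove this lemma at all — it is imported verbatim (as Theorem 9.4 of \cite{gyorfi2002distribution}) and used only in combination with Lemma~\ref{lem_9_2} — so citing the reference, as you conclude, is exactly what the paper does. Your sketch of the underlying chain (covering $\to$ packing, population $\to$ empirical sample, $\mathcal L_2 \to \mathcal L_1$ via the bound $B$, then the Sauer--Shelah/Haussler subgraph argument with constant bookkeeping) is the standard route behind that theorem and is consistent with it, so there is nothing to correct.
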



\begin{lemma}
\label{lemma:covariance inequality for Hilbert-valued random variables}
Suppose \Cref{assumption: assumption in the temporal-spatial model} holds. Let $\gamma_0$ and $\gamma_i$ are identically distributed. Then 
$$ \big|  \E(   \langle \gamma_0, \gamma_i\rangle    )  \big| \le 2\beta^{1/r}(i) \E(\|\gamma_0\|_\lt^{2q})^{1/q} $$
for all $ i\in \mathbb Z^+$ and $q,r$ such that $q^{-1} + r^{-1} =1$.
\end{lemma}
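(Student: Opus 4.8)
The statement is a covariance inequality for Hilbert-space-valued $\beta$-mixing random elements, so the natural route is to reduce it to a scalar covariance inequality applied to $\langle \gamma_0, \gamma_i\rangle_{\lt}$, together with Davydov-type covariance bounds under $\beta$-mixing (or the classical $\alpha/\beta$-mixing covariance inequalities, e.g.\ as in \cite{doukhan2012mixing}). First I would note that by Assumption~\ref{assumption: assumption in the temporal-spatial model}~(\textbf{e}), $\E\gamma_i(x) = 0$ for all $x$, and since $\gamma_i$ is a separable process with $\E\|\gamma_i\|_{\lt}^2 < \infty$, one has $\E\langle \gamma_0, g\rangle_{\lt} = \langle \E\gamma_0, g\rangle_{\lt} = 0$ for any fixed $g \in \lt$, and more importantly $\E\langle \gamma_0, \gamma_i\rangle_{\lt} = \cov(\langle \gamma_0,\cdot\rangle, \langle \gamma_i,\cdot\rangle)$ once we condition appropriately; the key point is that $\langle\gamma_0,\gamma_i\rangle_\lt$ is not the product of two scalars but an inner product of two infinite-dimensional objects, so a direct scalar Davydov bound does not immediately apply.

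\textbf{Main steps.} The plan is: (1) Use the coupling result \Cref{lemma:coupling in the temporal-spatial model} (or \Cref{Ind-cop-func}) to construct $\gamma_i^*$ independent of $\sigma(\gamma_0)$, identically distributed to $\gamma_i$, with $\p(\gamma_i^* \neq \gamma_i) \le \beta(i)$ — here I use that $\sigma(\gamma_l)$ is $\beta$-mixing with coefficient $\beta(s)$ so that $\beta(\sigma(\gamma_0), \sigma(\gamma_i)) \le \beta(i)$. (2) Write
\[
\E\langle\gamma_0,\gamma_i\rangle_\lt = \E\langle\gamma_0,\gamma_i^*\rangle_\lt + \E\langle\gamma_0,\gamma_i - \gamma_i^*\rangle_\lt.
\]
(3) For the first term, by independence of $\gamma_0$ and $\gamma_i^*$ and the fact that $\E\gamma_0 = 0$ (zero mean function) and $\gamma_i^* \overset{d}{=}\gamma_i$, we get $\E\langle\gamma_0,\gamma_i^*\rangle_\lt = \langle \E\gamma_0, \E\gamma_i^*\rangle_\lt = 0$; this step needs a Fubini/Bochner-integrability justification using $\E\|\gamma_i\|_\lt < \infty$. (4) For the second term, bound
\[
|\E\langle\gamma_0,\gamma_i-\gamma_i^*\rangle_\lt| \le \E\big[\|\gamma_0\|_\lt\,\|\gamma_i-\gamma_i^*\|_\lt\,\mathbf 1\{\gamma_i \neq \gamma_i^*\}\big]
\le \E\big[\big(\|\gamma_0\|_\lt + \|\gamma_i^*\|_\lt\big)\|\gamma_0\|_\lt\mathbf 1\{\gamma_i\neq\gamma_i^*\}\big],
\]
then apply Hölder with exponents $q$ and $r$ (noting $\mathbf 1\{\gamma_i\neq\gamma_i^*\}^r = \mathbf 1\{\gamma_i\neq\gamma_i^*\}$) to obtain a bound of the form $\big(\E\|\gamma_0\|_\lt^{2q}\big)^{1/q}\cdot\beta^{1/r}(i)$ up to the factor $2$, using Cauchy–Schwarz on the $\lt$ inner products and $\gamma_i^* \overset{d}{=}\gamma_0$ to control the cross term $\E\|\gamma_0\|_\lt^q\|\gamma_i^*\|_\lt^q$.

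\textbf{Main obstacle.} The delicate part is step (4): getting the constant exactly $2$ and the exponent structure $\beta^{1/r}(i)(\E\|\gamma_0\|_\lt^{2q})^{1/q}$ clean requires care with the triangle inequality $\|\gamma_i - \gamma_i^*\|_\lt \le \|\gamma_i\|_\lt + \|\gamma_i^*\|_\lt$ versus $\|\gamma_0\|_\lt + \|\gamma_i^*\|_\lt$ — one must reorganize so that all the $\|\cdot\|_\lt^{2q}$ moments are of a single (identically distributed) variable, which works because $\gamma_0$, $\gamma_i$, $\gamma_i^*$ are all identically distributed; one applies Hölder as $\E[UV\mathbf 1_A] \le (\E|UV|^q)^{1/q}(\p(A))^{1/r}$ with $U = \|\gamma_0\|_\lt$, $V = \|\gamma_i\|_\lt + \|\gamma_i^*\|_\lt$, then $(\E|UV|^q)^{1/q} \le (\E\|\gamma_0\|_\lt^{2q})^{1/(2q)}\cdot 2(\E\|\gamma_0\|_\lt^{2q})^{1/(2q)} = 2(\E\|\gamma_0\|_\lt^{2q})^{1/q}$ by Cauchy–Schwarz inside the $q$-norm and identical distribution. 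The finiteness $\E\|\gamma_0\|_\lt^{2q} < \infty$ is exactly Assumption~\ref{assumption: assumption in the temporal-spatial model}~(\textbf{f}), and the measurability/separability of $\gamma_i$ ensures $\|\gamma_i\|_\lt$ and $\langle\gamma_0,\gamma_i\rangle_\lt$ are genuine random variables, so all expectations are well-defined.
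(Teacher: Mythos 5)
Your proposal is correct and follows essentially the same route as the paper: couple $\gamma_i$ with an independent copy $\gamma_i^*$ via the Berbee-type coupling lemma, kill the $\E\langle\gamma_0,\gamma_i^*\rangle_\lt$ term by independence and zero mean, and bound the remainder on the event $\{\gamma_i\neq\gamma_i^*\}$ by Cauchy–Schwarz in $\lt$, the triangle inequality, and Hölder with exponents $q,r$, using the identical distribution of $\gamma_0,\gamma_i,\gamma_i^*$ and Assumption (\textbf{f}) to get the factor $2\beta^{1/r}(i)\,\E(\|\gamma_0\|_\lt^{2q})^{1/q}$. The only difference is the order of operations (you apply Hölder with the full $V=\|\gamma_i\|_\lt+\|\gamma_i^*\|_\lt$ and then Cauchy–Schwarz inside the $q$-norm, whereas the paper first replaces $\|\gamma_i^*\|_\lt$ by $\|\gamma_i\|_\lt$ under the indicator), which is immaterial and yields the same constant.
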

\begin{proof}
By \Cref{lemma:coupling in the temporal-spatial model}, there exists  $\gamma_i^*$ such that $ \gamma_0 $ and $\gamma_i^*$ are independent, that  $\gamma_i^*$   are identically distributed to    $\gamma_i $, and that 
$$  
\p( \gamma_i^* \not = \gamma_i) \le \beta(i).
$$
Observe that 
\begin{align*}
   \big| \E(\langle \gamma_0, \gamma_i\rangle )\big|  = \big| \E(\langle \gamma_0, \gamma_i-\gamma_i^*\rangle ) +  \E(\langle \gamma_0  ,  \gamma_i^*\rangle )  \big| .
\end{align*}
Note that  by independence 
$$\E( \langle \gamma_0 , \gamma_i^*\rangle)=0,$$
and that 
\begin{align*}
    &\big| \E(\langle \gamma_0, \gamma_i-\gamma_i^*\rangle ) \big| 
    = \big|  \E \big( \mathbb I_{ \{ \gamma_i \neq\gamma_i^* \} }\langle \gamma_0, \gamma_i-\gamma_i^*\rangle   \big) \big| 
    \le   \E \big( \mathbb I_{ \{ \gamma_i \neq\gamma_i^* \} }\|  \gamma_0\|_\lt \|  \gamma_i-\gamma_i^*\|_\lt  \big) 
    \\
    \le & 2 \E \big( \mathbb I_{ \{ \gamma_i \neq\gamma_i^* \} }\|  \gamma_0\|_\lt \|  \gamma_i  \|_\lt  \big) 
    \le 2\p(\gamma_i^* \not = \gamma_i ) ^{1/r} \E(\|\gamma_0\|_\lt^{2q})^{1/q} \le 2\beta^{1/r}(i) \E(\|\gamma_0\|_\lt^{2q})^{1/q},
\end{align*}
where the second inequality follows by Triangle inequality and the fact that $\delta_i$ and $\delta_i^*$ have the same distribution, the third inequality follows by Hölder's inequality and that $\gamma_0$ and $\gamma_i$ have the same distribution.
\end{proof}

\begin{lemma}\label{lemma:bound on the expected norm of gamma}
Suppose that \cref{assumption: assumption in the temporal-spatial model} holds. Let $r, q>1$ be such that $\frac{1}{q}+\frac{1}{r}=1$. In particular, suppose $\sum_{l=1}^{\infty} \beta^{1 / r}(l)<\infty$ and $\sup_{i\in \mathbb Z^+}\mathbb{E}\left(\|\gamma_i\|_{\mathcal{L}_2}^{2q}\right)<\infty$ as stated by \cref{assumption: assumption in the temporal-spatial model}.
Then
$$
\mathbb{E}\left(\left\|\frac{1}{n} \sum_{i=1}^n \gamma_i\right\|_{\mathcal{L}_2}^2\right) \lesssim \frac{\sigma_\gamma^2}{n}.
$$
\end{lemma}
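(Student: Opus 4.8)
The plan is to expand the squared norm and bound the diagonal and off-diagonal contributions separately. First I would write
\[
\mathbb{E}\left(\left\|\frac{1}{n}\sum_{i=1}^n \gamma_i\right\|_{\mathcal L_2}^2\right)
= \frac{1}{n^2}\sum_{i=1}^n \mathbb{E}\left(\|\gamma_i\|_{\mathcal L_2}^2\right)
+ \frac{1}{n^2}\sum_{i\neq i'} \mathbb{E}\left(\langle \gamma_i,\gamma_{i'}\rangle_{\mathcal L_2}\right).
\]
For the diagonal term, since $\mathbb{E}(\gamma_i(x))=0$ and $\sigma_\gamma^2=\sup_{x}\mathbb{E}\gamma_i(x)^2$, Fubini (using separability of $\gamma_i$ and that $[0,1]^d$ has finite Lebesgue measure) gives $\mathbb{E}(\|\gamma_i\|_{\mathcal L_2}^2)=\int_{[0,1]^d}\mathbb{E}\gamma_i(x)^2\,dx\le \sigma_\gamma^2$, so the diagonal piece is at most $\sigma_\gamma^2/n$, which is exactly the target order.

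The off-diagonal term is the main work. For $i\neq i'$, the pair $(\sigma_\gamma(i),\sigma_\gamma(i'))$ is $\beta$-mixing with coefficient $\beta(|i-i'|)$, and $\gamma_i,\gamma_{i'}$ are identically distributed, so \Cref{lemma:covariance inequality for Hilbert-valued random variables} applies and yields
\[
\left|\mathbb{E}\left(\langle \gamma_i,\gamma_{i'}\rangle_{\mathcal L_2}\right)\right|
\le 2\,\beta^{1/r}(|i-i'|)\,\mathbb{E}\left(\|\gamma_0\|_{\mathcal L_2}^{2q}\right)^{1/q}.
\]
Then I would sum: grouping by lag $l=|i-i'|$ there are at most $2n$ ordered pairs for each $l\in\{1,\ldots,n-1\}$, so
\[
\frac{1}{n^2}\sum_{i\neq i'}\left|\mathbb{E}\langle\gamma_i,\gamma_{i'}\rangle_{\mathcal L_2}\right|
\le \frac{2}{n^2}\cdot 2n\,\mathbb{E}(\|\gamma_0\|_{\mathcal L_2}^{2q})^{1/q}\sum_{l=1}^{n-1}\beta^{1/r}(l)
\le \frac{C}{n}\,\mathbb{E}(\|\gamma_0\|_{\mathcal L_2}^{2q})^{1/q}\sum_{l=1}^{\infty}\beta^{1/r}(l).
\]
Under \Cref{assumption: assumption in the temporal-spatial model}, the $\beta$-coefficients decay exponentially, hence $\sum_{l\ge1}\beta^{1/r}(l)<\infty$, and the moment assumption gives $\sup_i\mathbb{E}(\|\gamma_i\|_{\mathcal L_2}^{2q})<\infty$; so this contribution is $O(1/n)$. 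Combining with the diagonal bound gives $\mathbb{E}(\|n^{-1}\sum_i\gamma_i\|_{\mathcal L_2}^2)\lesssim \sigma_\gamma^2/n$ (absorbing the finite constants into $\lesssim$), which is the claim.

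The only genuinely delicate point is the bookkeeping to make sure the final bound carries the factor $\sigma_\gamma^2$ rather than just a generic constant: the diagonal term already has it explicitly, and the off-diagonal term is a lower-order contribution whose constant involves $\sum\beta^{1/r}(l)$ and $\sup_i\mathbb{E}\|\gamma_i\|_{\mathcal L_2}^{2q}$ — both finite by assumption — so it is dominated by $C/n$ and can be folded in. A minor technical step is justifying the interchange of expectation and the $\mathcal L_2$ inner product / integral, which follows from separability of the processes, the moment bounds, and Tonelli's theorem; I would state this once and not belabor it. I expect no substantial obstacle beyond invoking \Cref{lemma:covariance inequality for Hilbert-valued random variables} correctly and handling the lag-summation.
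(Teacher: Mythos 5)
Your proposal matches the paper's argument essentially step for step: the same expansion into diagonal and cross terms, the same appeal to Lemma \ref{lemma:covariance inequality for Hilbert-valued random variables} for the off-diagonal covariances, and the same summation over lags using $\sum_{l\ge 1}\beta^{1/r}(l)<\infty$. The proof is correct as proposed (the paper handles the $\sigma_\gamma^2$ bookkeeping by normalizing $\sigma_\gamma^2=1$, which is the same absorption of constants you describe).
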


\begin{proof}
Note that
$$
\mathbb{E}\left(\left\|\frac{1}{n} \sum_{i=1}^n \gamma_i\right\|_{\mathcal{L}_2}^2\right)=\frac{1}{n^2} \sum_{i=1}^n \mathbb{E}\left\{\left\|\gamma_i\right\|_{\mathcal{L}_2}^2\right\}+\frac{2}{n^2} \sum_{i<j} \mathbb{E}\left\{\left\langle\gamma_i, \gamma_j\right\rangle\right\} .
$$
Without loss of generality, we assume $\sigma_\gamma^2$ = 1, so that $\frac{1}{n^2} \sum_{i=1}^n \mathbb{E}\left\{\left\|\gamma_i\right\|_{\mathcal{L}_2}^2\right\} \lesssim \frac{1}{n}$. In addition,
\begin{align*}
& \frac{1}{n^2} \sum_{i<j} \mathbb{E}\left\{\left\langle\gamma_i, \gamma_j\right\rangle\right\} \le \frac{1}{n^2} \sum_{i=1}^n \sum_{j=i+1}^{\infty}\left|\mathbb{E}\left\{\left\langle\gamma_i, \gamma_j\right\rangle\right\}\right| \\
\lesssim & \frac{1}{n^2} \sum_{i=1}^n \sum_{j=i+1}^{\infty} \beta^{1 / r}(j-i) \mathbb{E}\left(\left\|\gamma_i\right\|_{\mathcal{L}_2}^{2q}\right)^{1 / q} \lesssim \frac{1}{n^2} \sum_{i=1}^n \mathbb{E}\left(\left\|\gamma_i\right\|_{\mathcal{L}_2}^{2q}\right)^{1 / q} \lesssim \frac{1}{n},
\end{align*}
where the second inequality follows from \cref{lemma:covariance inequality for Hilbert-valued random variables} and the third inequality by the fact that from \cref{assumption: assumption in the temporal-spatial model} we have $ \sum_{l=1}^\infty \beta^{1/r}(l) < \infty $.
\end{proof}

\newpage
\section{Additional Experiment Results}
\label{table_result_box_plots}
In this section, we present additional experiments from Section~\ref{sec_exp}, which further support our previous findings and demonstrate that our method consistently outperforms the competitors.


\begin{table}[H]
\centering
\caption{Results of relative error for different methods, $d$, $n$ and $m_{mult}$: Mean (Std). We highlight the best result in \textbf{bold} and
        the second best result in \textbf{\textit{bold and italic}}.}
\resizebox{\textwidth}{0.9\height}{
\begin{tabular}{@{}llccccc@{}}
    \toprule
    \multirow{2}{*}{d} & \multirow{2}{*}{scenario} & \multicolumn{5}{c}{\textcolor{blue}{$n$: 500, $m_{mult}$: 1}} \\ \cmidrule(lr){3-7}
                        &                          & Dense NN         & GAM             & KNN-FL           & RKHS             & Trend Filtering  \\ \midrule
\multirow{6}{*}{2}  & 1        & \bf 0.0676 (0.012) & \itbf 0.3087 (0.044)  & 0.1249 (0.008)     & 1.2926 (0.135)     & 0.3196 (0.058)  \\
                    & 2        & \bf 0.0133 (0.002) & \itbf 0.0322 (0.002)  & 0.0491 (0.003)     & 0.1678 (0.039)     & 0.0370 (0.002)  \\
                    & 3        & \bf 0.0061 (0.002) & 0.3547 (0.004)  & \itbf 0.0616 (0.004)     & 0.4069 (0.004)     & 0.3547 (0.004)  \\
                    & 4        & \bf 0.0008 (0.000) & \itbf 0.0013 (0.000)  & 0.0068 (0.001)     & 0.2007 (0.005)     & 0.0014 (0.000)  \\
                    & 5        & \bf 0.0640 (0.005) & 0.4844 (0.142)  & \itbf 0.2419 (0.016)     & 0.9635 (0.196)     & 0.6408 (0.042)  \\
                    & 6        & \bf 0.2315 (0.030) & \itbf 0.4059 (0.026)  & 0.4684 (0.031)     & 1.8777 (0.122)     & 0.6153 (0.040)    \\ \midrule
    \multirow{6}{*}{5}  & 1        & \bf 0.3779 (0.006) & 0.7387 (0.048)  & 0.9350 (0.135)     & 1.0916 (0.111)     & \itbf 0.7370 (0.048)  \\
                    & 2        & \bf 0.4711 (0.013) & \itbf 0.4645 (0.030)  & 1.0571 (0.069)     & 1.3723 (0.089)     & 0.4938 (0.032)  \\
                    & 3        & \bf 0.0235 (0.002) & 0.7988 (0.007)  & 0.3254 (0.021)     & 0.8077 (0.006)     & \itbf 0.3640 (0.024)  \\
                    & 4        & \bf 0.0002 (0.000) & \itbf 0.0015 (0.000)  & 0.0048 (0.000)     & 0.1652 (0.004)     & 0.0016 (0.000)  \\
                    & 5        & \bf 0.3719 (0.075) & \itbf 0.6536 (0.043)  & 0.8695 (0.194)     & 1.0597 (0.270)     & 0.6688 (0.044)  \\
                    & 6        & \bf 0.3700 (0.010) & 0.7022 (0.046)  & 0.7266 (0.047)     & 1.1502 (0.075)     & \itbf 0.3778 (0.025)
\\\midrule \multirow{6}{*}{7}  & 1        & \bf 0.7516 (0.022) & \itbf  0.9580 (0.062)  & 1.2894 (0.084)     & 1.9203 (0.125)     & 1.2933 (0.084)  \\
                    & 2        & \bf 0.7684 (0.082) & \itbf 1.3039 (0.085)  & 1.4174 (0.092)     & 5.7272 (0.647)     & 1.4344 (0.094)  \\
                    & 3        & \bf 0.0765 (0.006) & 0.8906 (0.003)  & 0.5716 (0.007)     & 0.8972 (0.002)     & \itbf 0.2159 (0.063)  \\
                    & 4        & \bf 0.0002 (0.000) & \itbf 0.0011 (0.000)  & 0.0053 (0.000)     & 0.1354 (0.001)     &  0.0011 (0.000)  \\
                    & 5        & \bf 0.6064 (0.152) & 0.7518 (0.049)  & 0.8755 (0.057)     & 1.3453 (0.092)     & 0.8789 (0.057)  \\
                    & 6        & \bf 0.5227 (0.011) & 0.7781 (0.051)  & \itbf 0.5223 (0.034)     & 2.6748 (0.516)     & 0.5302 (0.035)  \\
                       \midrule
    \multirow{6}{*}{10} & 1        & \bf 0.7547 (0.036) & 19.7128 (2.571) & \itbf 11.0163 (0.718)    & 12.6360 (1.260)    & 11.2345 (0.733) \\
                    & 2        & \bf 0.4922 (0.038) & 50.0359 (5.814) & \itbf 25.5730 (1.668)    & 49.3684 (3.220)    & 26.1330 (1.704) \\
                    & 3        & \bf 0.0808 (0.007) & 0.9363 (0.003)  & 0.7631 (0.014)     & 0.9487 (0.002)     & \itbf 0.1026 (0.016)  \\
                    & 4        & \bf 0.0001 (0.000) & \itbf 0.0006 (0.000)  & 0.0058 (0.000)     & 0.0998 (0.002)     &  0.0006 (0.000)  \\
                    & 5        & \bf 0.6247 (0.154) & 6.2090 (0.658)  & \itbf 3.5148 (0.229)     & 4.5779 (0.952)     & 3.5807 (0.234)  \\
                    & 6        & \bf 0.6397 (0.082) & 20.4280 (2.210) & \itbf 10.9305 (0.713)    & 19.9058 (1.298)    & 11.1575 (0.728) \\
                       
    \bottomrule
\end{tabular}}
\label{table_1}
\end{table}

\begin{figure}[]
    \captionsetup[subfigure]
    {aboveskip=-1pt, belowskip=-1pt, font=footnotesize}
    \centering

\begin{minipage}[c]{\textwidth}
    \centering
    \begin{minipage}[c]{0.05\textwidth}
        \centering \rotatebox{90}{\textbf{Sce. 1}}
    \end{minipage}
    \begin{minipage}[c]{0.3\textwidth}
        \centering \includegraphics[width=\linewidth]{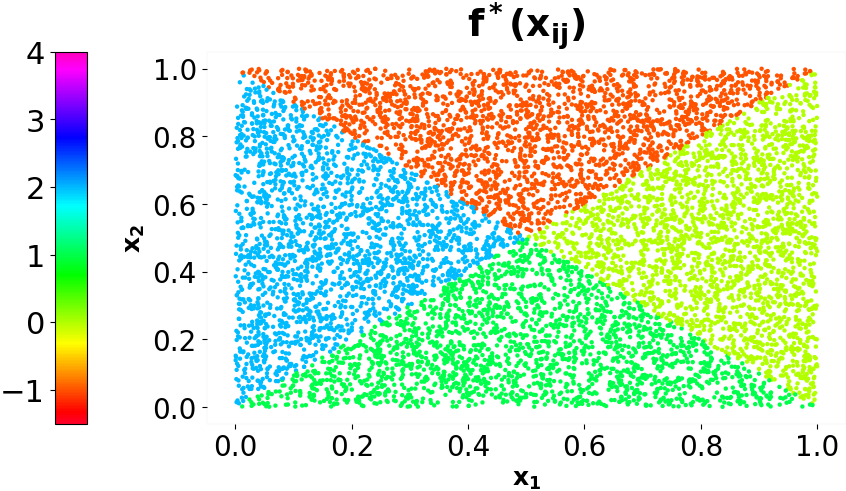}
    \end{minipage}
    \begin{minipage}[c]{0.3\textwidth}
        \centering \includegraphics[width=\linewidth]{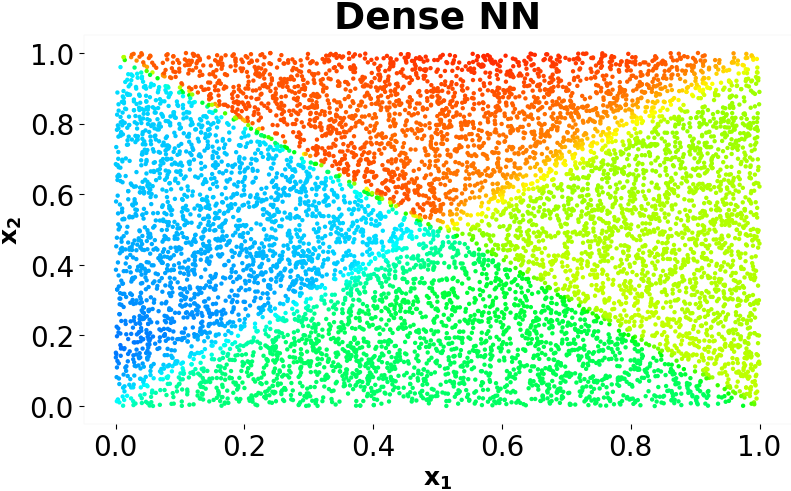}
    \end{minipage}
    \begin{minipage}[c]{0.3\textwidth}
        \centering \includegraphics[width=\linewidth]{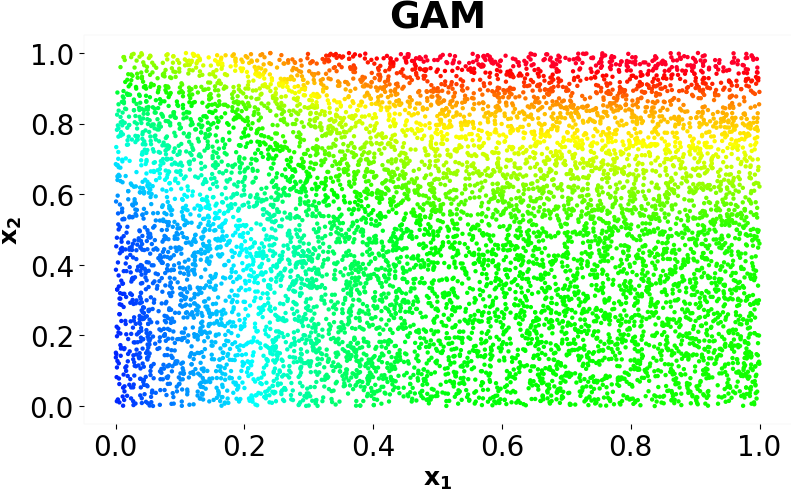}
    \end{minipage}
\end{minipage}\vspace{-1pt}

\begin{minipage}[c]{\textwidth}
    \centering
    \begin{minipage}[c]{0.05\textwidth}
        \centering \rotatebox{90}{\textbf{Sce. 1}}
    \end{minipage}
    \begin{minipage}[c]{0.3\textwidth}
        \centering \includegraphics[width=\linewidth]{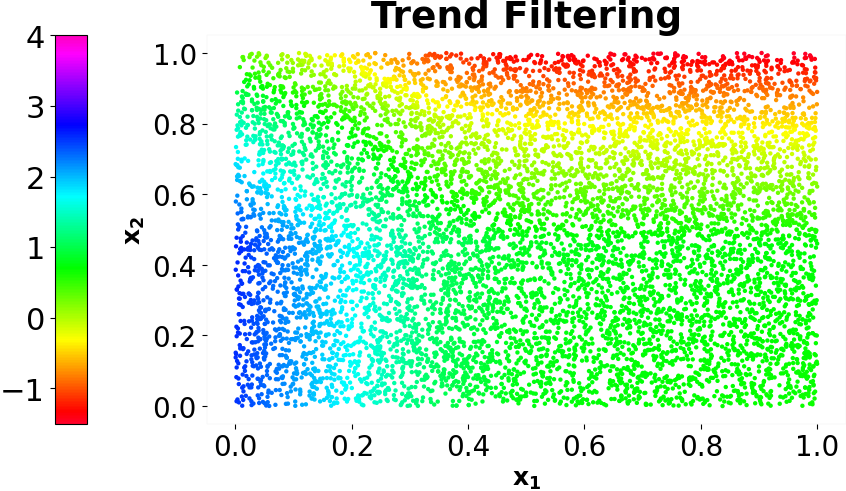}
    \end{minipage}
    \begin{minipage}[c]{0.3\textwidth}
        \centering \includegraphics[width=\linewidth]{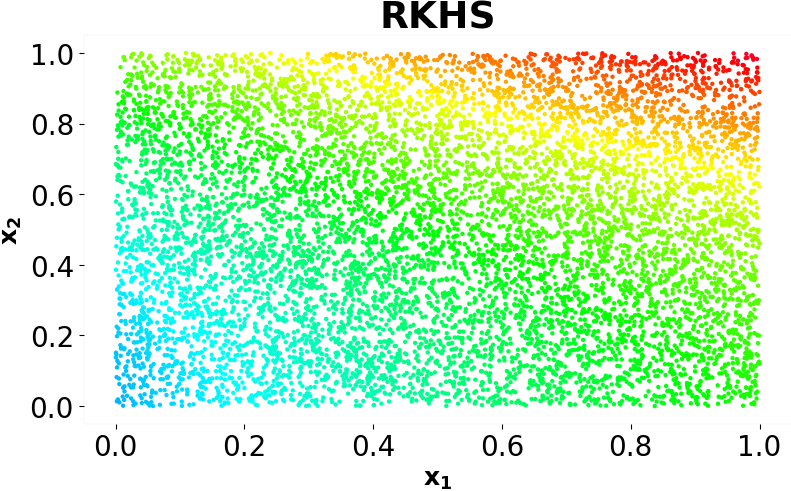}
    \end{minipage}
    \begin{minipage}[c]{0.3\textwidth}
        \centering \includegraphics[width=\linewidth]{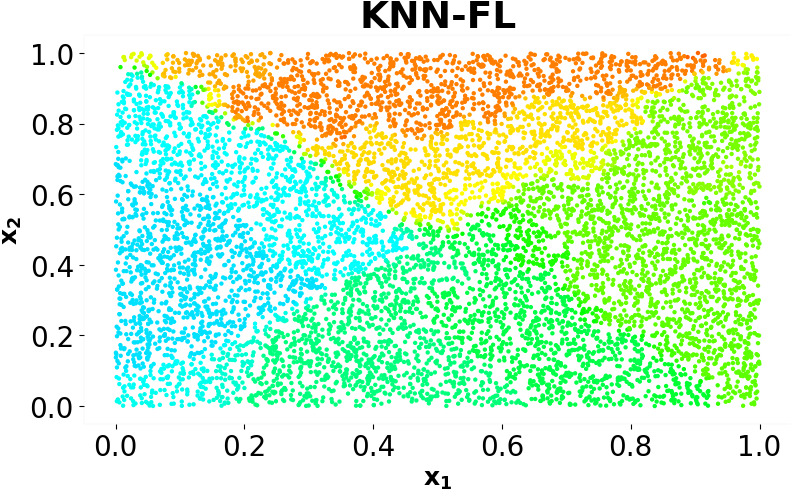}
    \end{minipage}
\end{minipage}\vspace{-1pt}

    \begin{minipage}[c]{\textwidth}
    \centering
    \begin{minipage}[c]{0.05\textwidth}
        \centering \rotatebox{90}{\textbf{Sce. 2}}
    \end{minipage}
    \begin{minipage}[c]{0.3\textwidth}
        \centering \includegraphics[width=\linewidth]{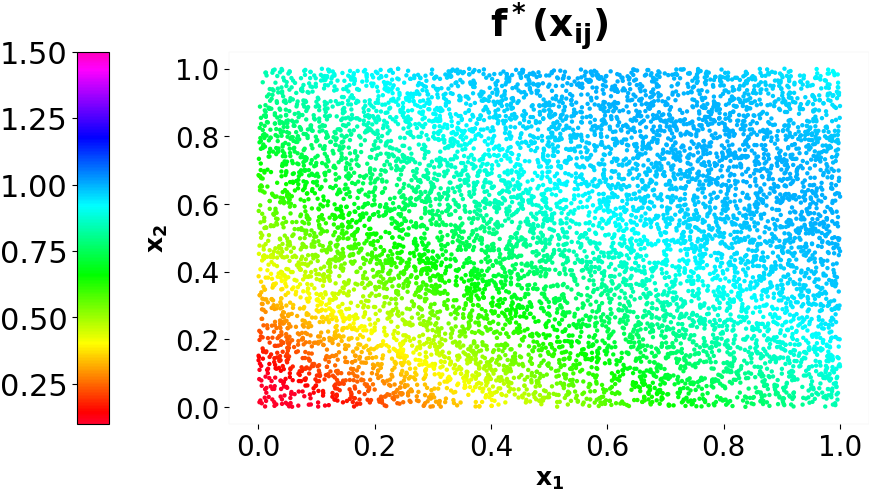}
    \end{minipage}
    \begin{minipage}[c]{0.3\textwidth}
        \centering \includegraphics[width=\linewidth]{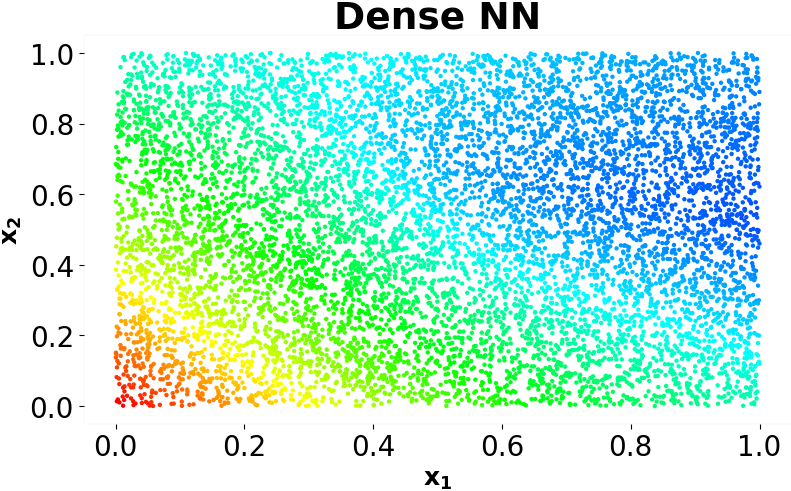}
    \end{minipage}
    \begin{minipage}[c]{0.3\textwidth}
        \centering \includegraphics[width=\linewidth]{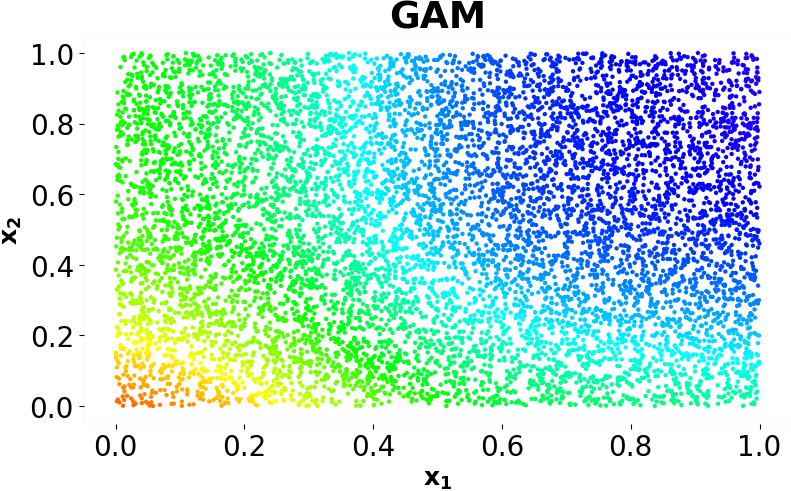}
    \end{minipage}
\end{minipage}\vspace{-1pt}

\begin{minipage}[c]{\textwidth}
    \centering
    \begin{minipage}[c]{0.05\textwidth}
        \centering \rotatebox{90}{\textbf{Sce. 2}}
    \end{minipage}
    \begin{minipage}[c]{0.3\textwidth}
        \centering \includegraphics[width=\linewidth]{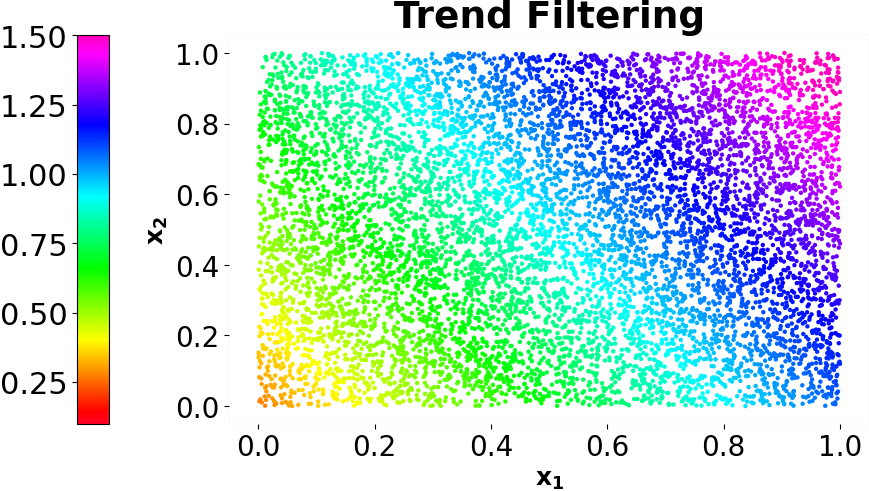}
    \end{minipage}
    \begin{minipage}[c]{0.3\textwidth}
        \centering \includegraphics[width=\linewidth]{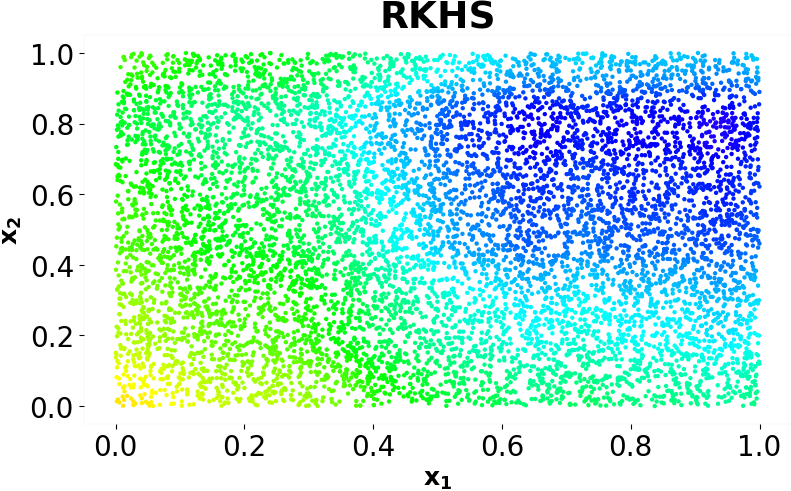}
    \end{minipage}
    \begin{minipage}[c]{0.3\textwidth}
        \centering \includegraphics[width=\linewidth]{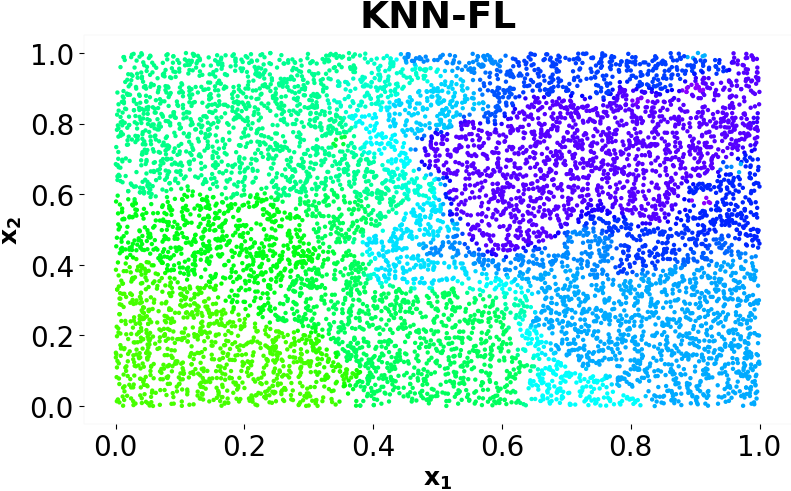}
    \end{minipage}
\end{minipage}\vspace{-1pt}

\begin{minipage}[c]{\textwidth}
    \centering
    \begin{minipage}[c]{0.05\textwidth}
        \centering \rotatebox{90}{\textbf{Sce. 3}}
    \end{minipage}
    \begin{minipage}[c]{0.3\textwidth}
        \centering \includegraphics[width=\linewidth]{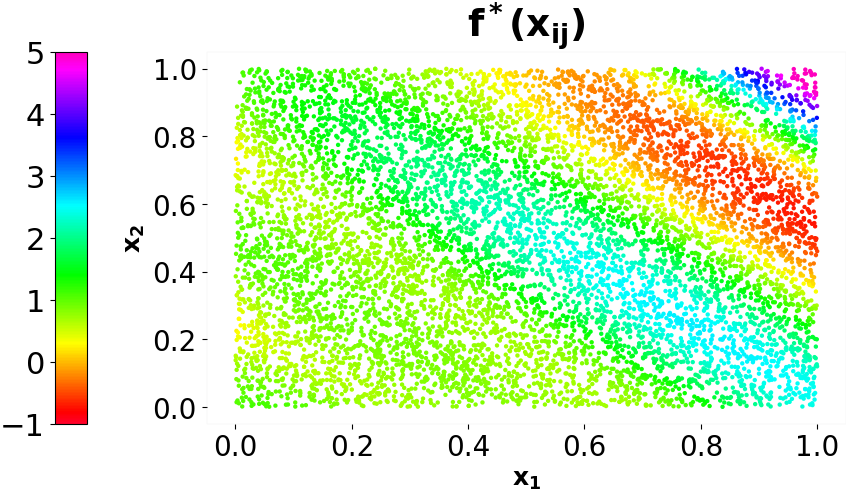}
    \end{minipage}
    \begin{minipage}[c]{0.3\textwidth}
        \centering \includegraphics[width=\linewidth]{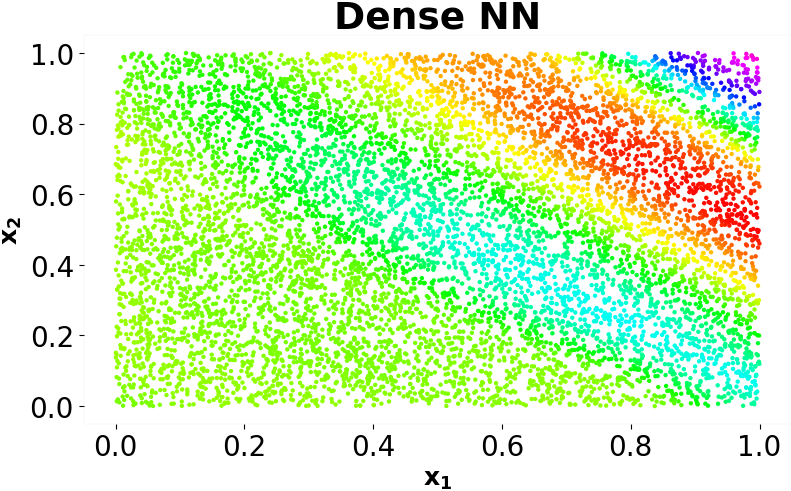}
    \end{minipage}
    \begin{minipage}[c]{0.3\textwidth}
        \centering \includegraphics[width=\linewidth]{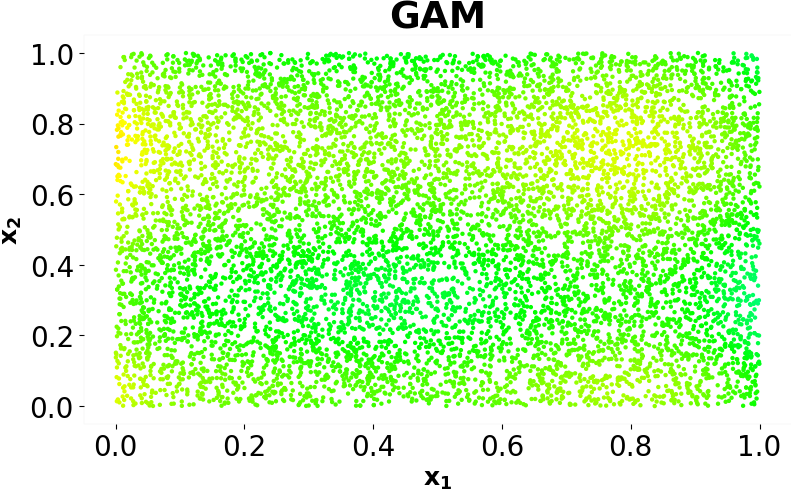}
    \end{minipage}
\end{minipage}\vspace{-1pt}

\begin{minipage}[c]{\textwidth}
    \centering
    \begin{minipage}[c]{0.05\textwidth}
        \centering \rotatebox{90}{\textbf{Sce. 3}}
    \end{minipage}
    \begin{minipage}[c]{0.3\textwidth}
        \centering \includegraphics[width=\linewidth]{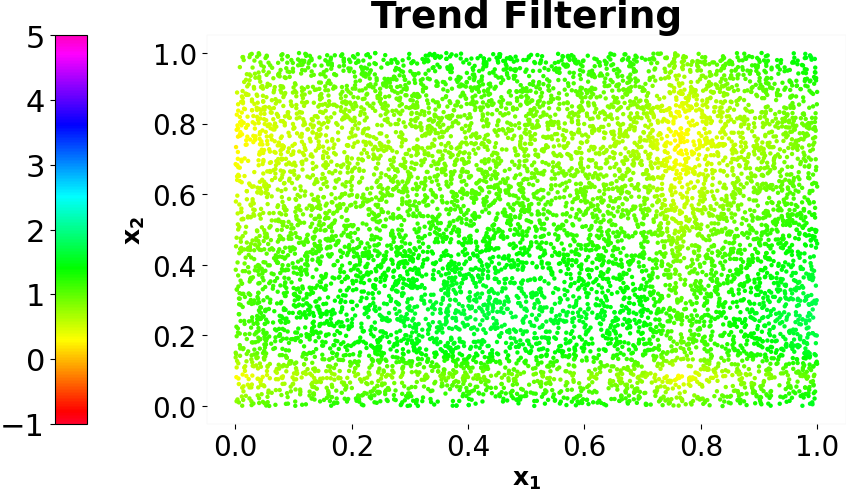}
    \end{minipage}
    \begin{minipage}[c]{0.3\textwidth}
        \centering \includegraphics[width=\linewidth]{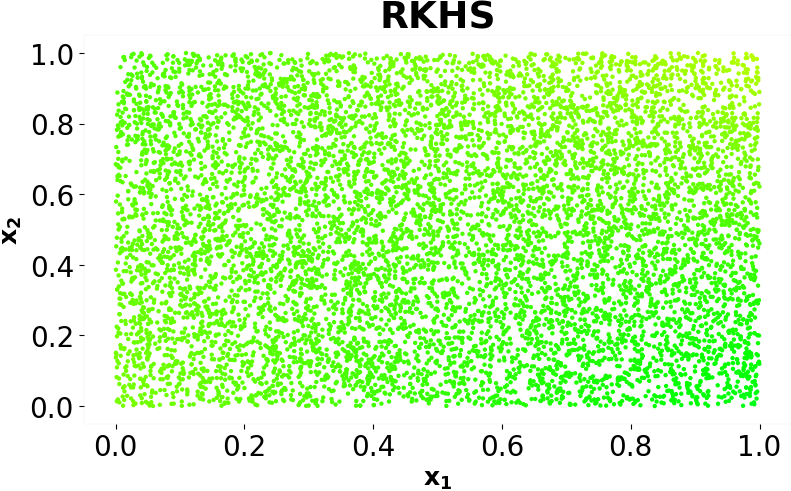}
    \end{minipage}
    \begin{minipage}[c]{0.3\textwidth}
        \centering \includegraphics[width=\linewidth]{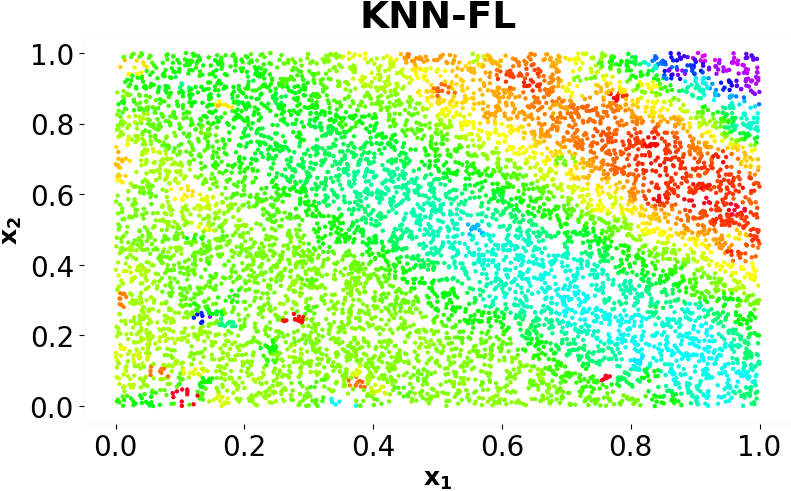}
    \end{minipage}
\end{minipage}\vspace{-1pt}


    \caption{The true mean $f^*(x_{ij})$ and the predicted $\hat{f}(x_{ij})$ for Scenarios 1, 2, 3 with $n=500$, $m_{mult}=1$, $d=2$, as estimated by Dense NN, GAM, Trend Filtering, RKHS, and KNN-FL, respectively.}
    \label{fig:comparison_scenarios}
\end{figure}

\begin{figure}[]
    \captionsetup[subfigure]
    {aboveskip=-1pt, belowskip=-1pt, font=footnotesize}
    \centering

\begin{minipage}[c]{\textwidth}
    \centering
    \begin{minipage}[c]{0.05\textwidth}
        \centering \rotatebox{90}{\textbf{Sce. 4}}
    \end{minipage}
    \begin{minipage}[c]{0.3\textwidth}
        \centering \includegraphics[width=\linewidth]{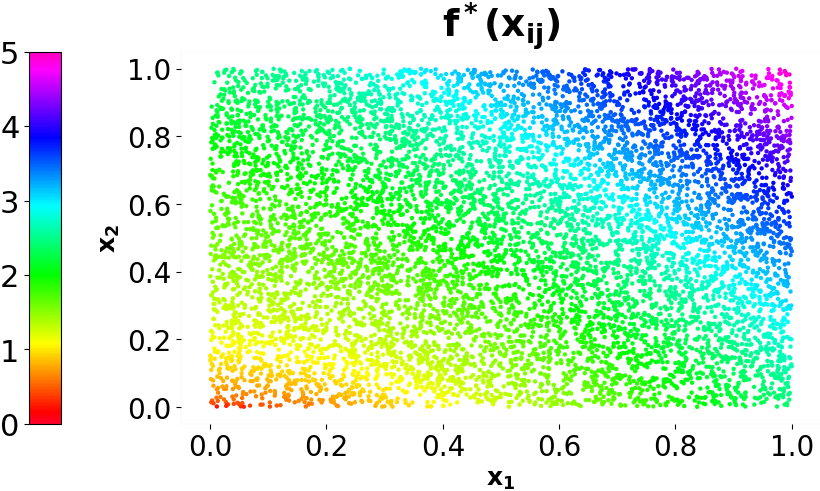}
    \end{minipage}
    \begin{minipage}[c]{0.3\textwidth}
        \centering \includegraphics[width=\linewidth]{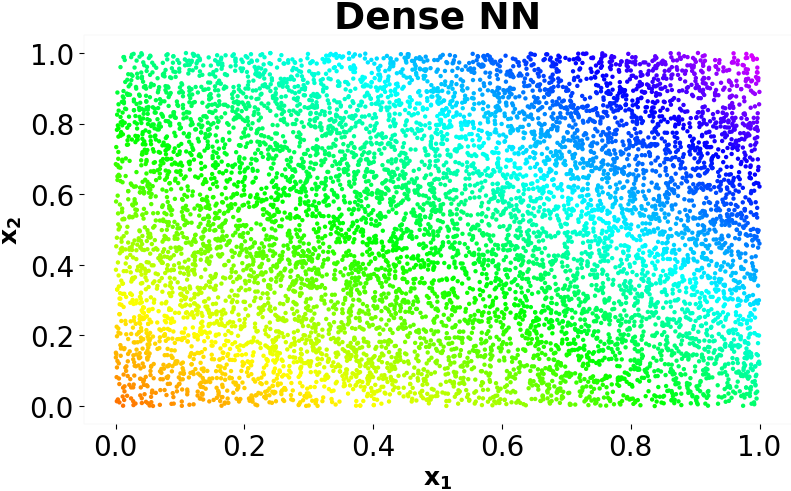}
    \end{minipage}
    \begin{minipage}[c]{0.3\textwidth}
        \centering \includegraphics[width=\linewidth]{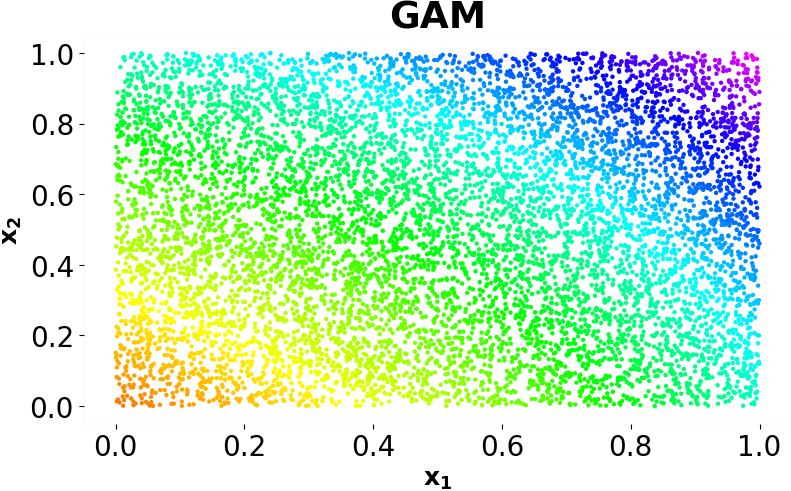}
    \end{minipage}
\end{minipage}\vspace{-1pt}

\begin{minipage}[c]{\textwidth}
    \centering
    \begin{minipage}[c]{0.05\textwidth}
        \centering \rotatebox{90}{\textbf{Sce. 4}}
    \end{minipage}
    \begin{minipage}[c]{0.3\textwidth}
        \centering \includegraphics[width=\linewidth]{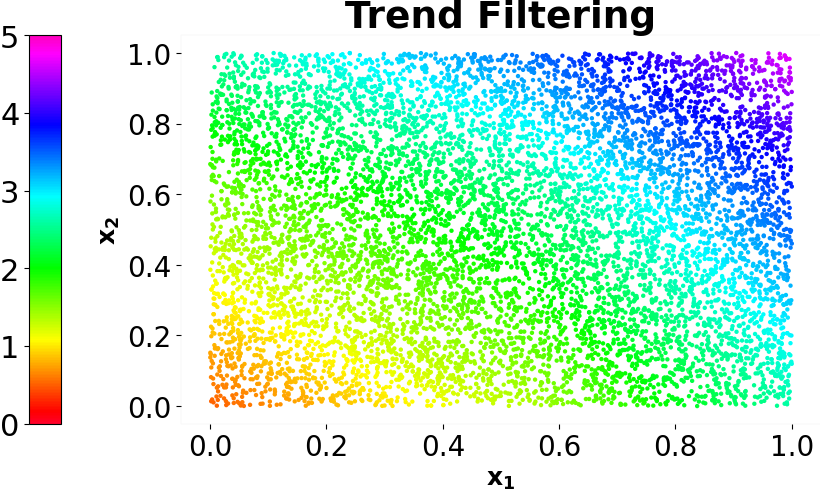}
    \end{minipage}
    \begin{minipage}[c]{0.3\textwidth}
        \centering \includegraphics[width=\linewidth]{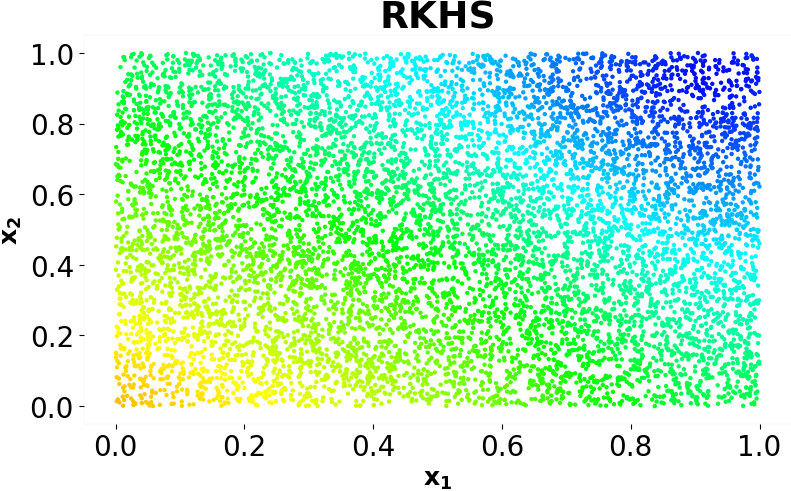}
    \end{minipage}
    \begin{minipage}[c]{0.3\textwidth}
        \centering \includegraphics[width=\linewidth]{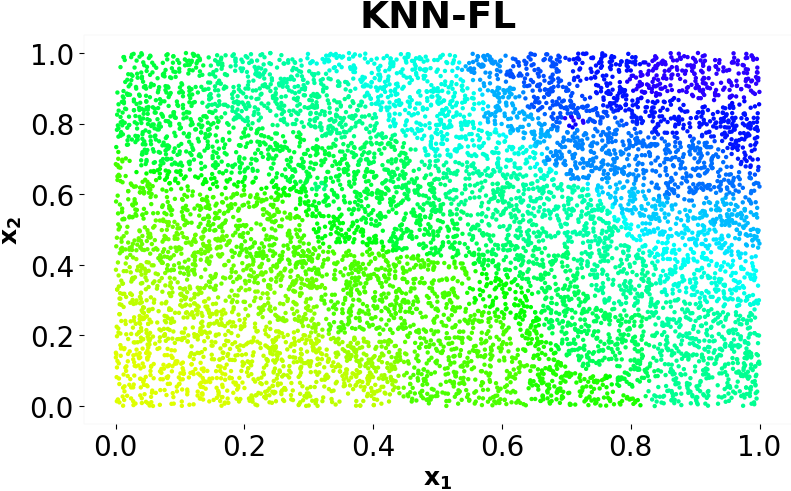}
    \end{minipage}
\end{minipage}\vspace{-1pt}

    \begin{minipage}[c]{\textwidth}
    \centering
    \begin{minipage}[c]{0.05\textwidth}
        \centering \rotatebox{90}{\textbf{Sce. 5}}
    \end{minipage}
    \begin{minipage}[c]{0.3\textwidth}
        \centering \includegraphics[width=\linewidth]{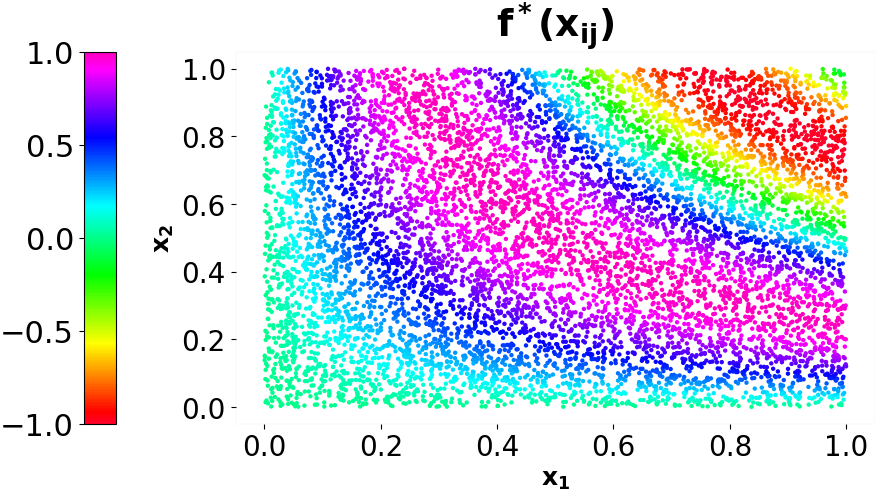}
    \end{minipage}
    \begin{minipage}[c]{0.3\textwidth}
        \centering \includegraphics[width=\linewidth]{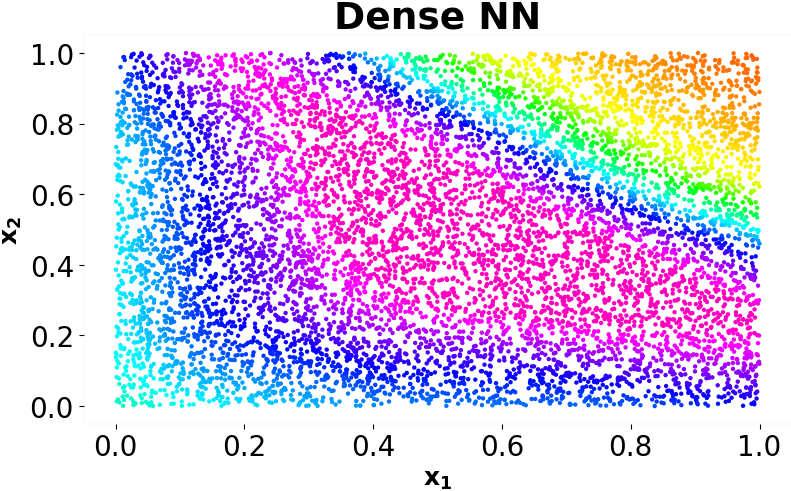}
    \end{minipage}
    \begin{minipage}[c]{0.3\textwidth}
        \centering \includegraphics[width=\linewidth]{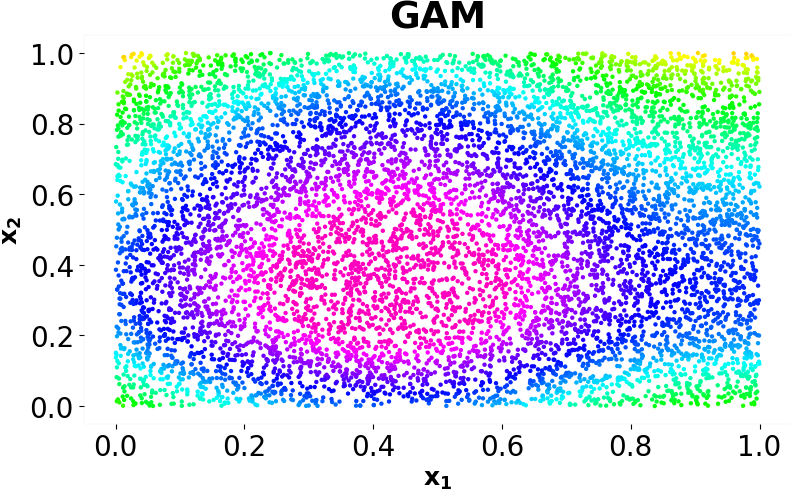}
    \end{minipage}
\end{minipage}\vspace{-1pt}

\begin{minipage}[c]{\textwidth}
    \centering
    \begin{minipage}[c]{0.05\textwidth}
        \centering \rotatebox{90}{\textbf{Sce. 5}}
    \end{minipage}
    \begin{minipage}[c]{0.3\textwidth}
        \centering \includegraphics[width=\linewidth]{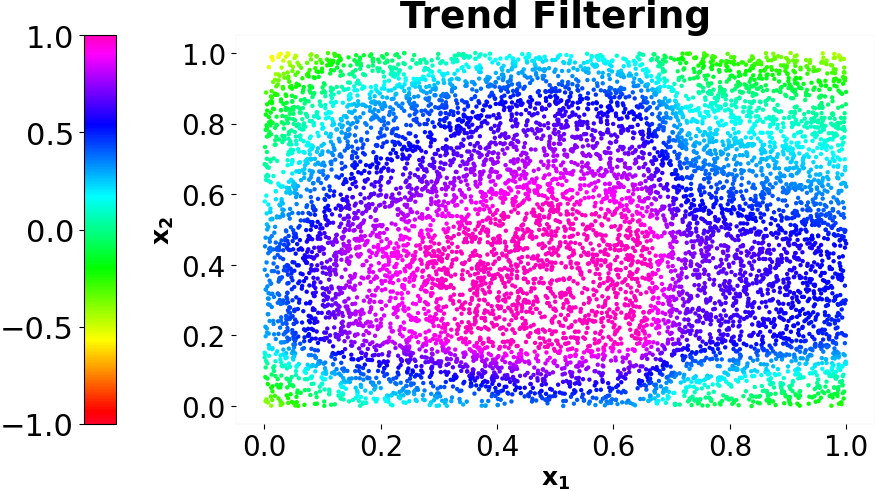}
    \end{minipage}
    \begin{minipage}[c]{0.3\textwidth}
        \centering \includegraphics[width=\linewidth]{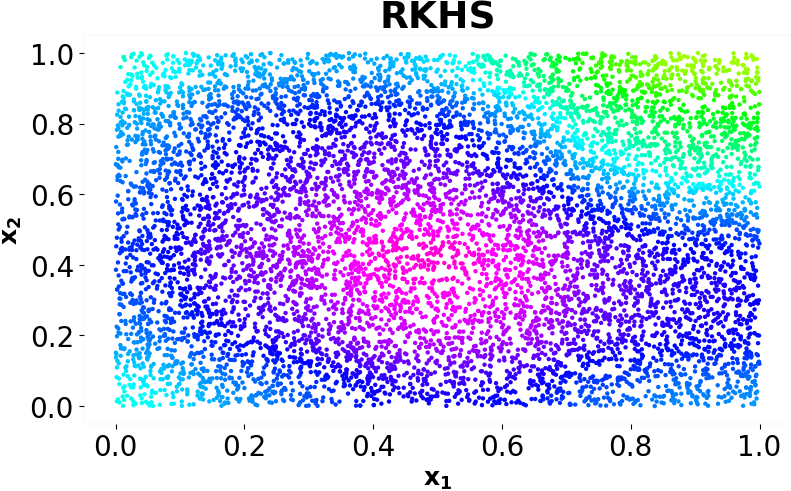}
    \end{minipage}
    \begin{minipage}[c]{0.3\textwidth}
        \centering \includegraphics[width=\linewidth]{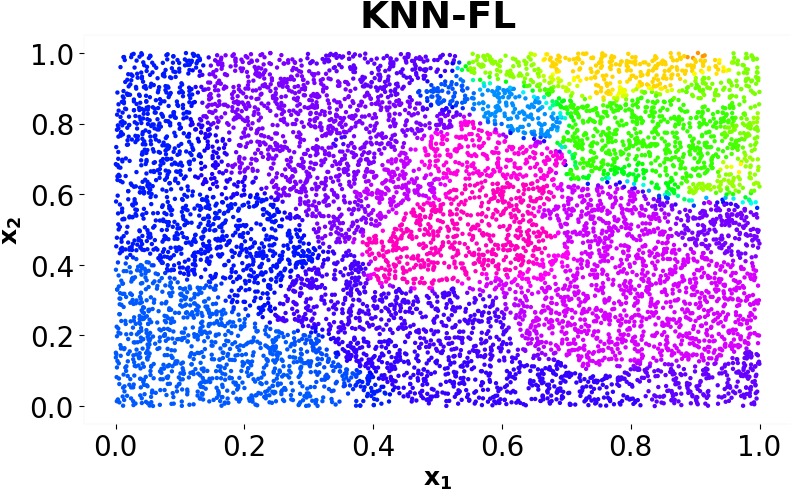}
    \end{minipage}
\end{minipage}\vspace{-1pt}

\begin{minipage}[c]{\textwidth}
    \centering
    \begin{minipage}[c]{0.05\textwidth}
        \centering \rotatebox{90}{\textbf{Sce. 6}}
    \end{minipage}
    \begin{minipage}[c]{0.3\textwidth}
        \centering \includegraphics[width=\linewidth]{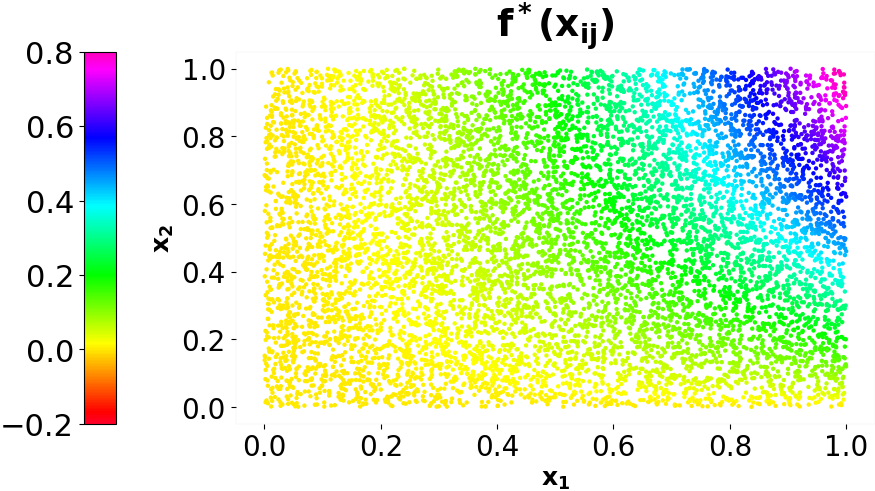}
    \end{minipage}
    \begin{minipage}[c]{0.3\textwidth}
        \centering \includegraphics[width=\linewidth]{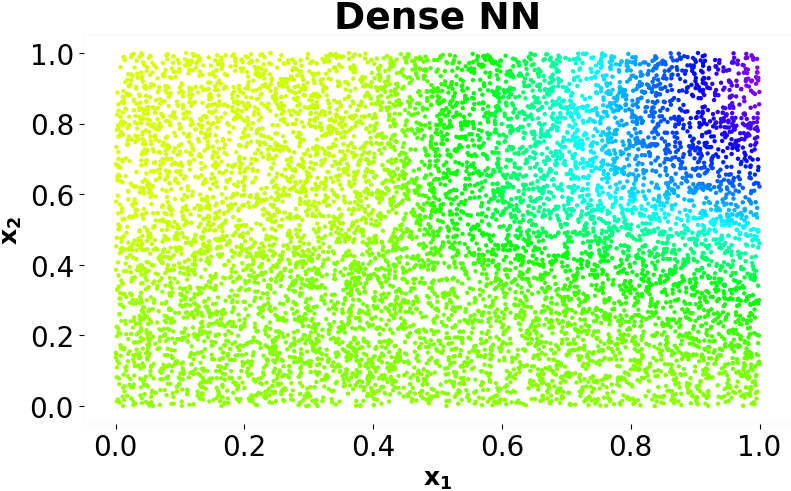}
    \end{minipage}
    \begin{minipage}[c]{0.3\textwidth}
        \centering \includegraphics[width=\linewidth]{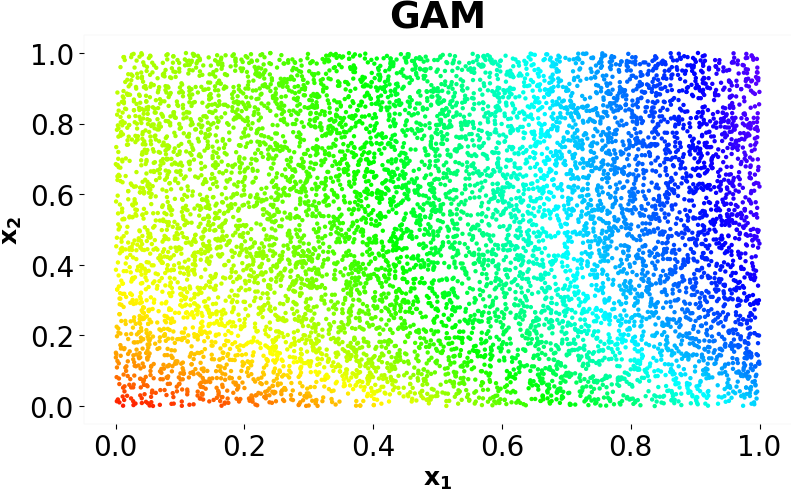}
    \end{minipage}
\end{minipage}\vspace{-1pt}

\begin{minipage}[c]{\textwidth}
    \centering
    \begin{minipage}[c]{0.05\textwidth}
        \centering \rotatebox{90}{\textbf{Sce. 6}}
    \end{minipage}
    \begin{minipage}[c]{0.3\textwidth}
        \centering \includegraphics[width=\linewidth]{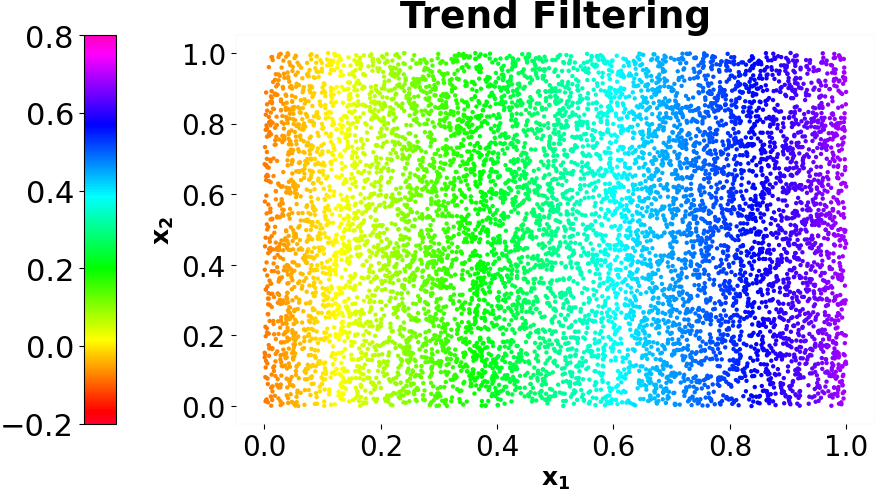}
    \end{minipage}
    \begin{minipage}[c]{0.3\textwidth}
        \centering \includegraphics[width=\linewidth]{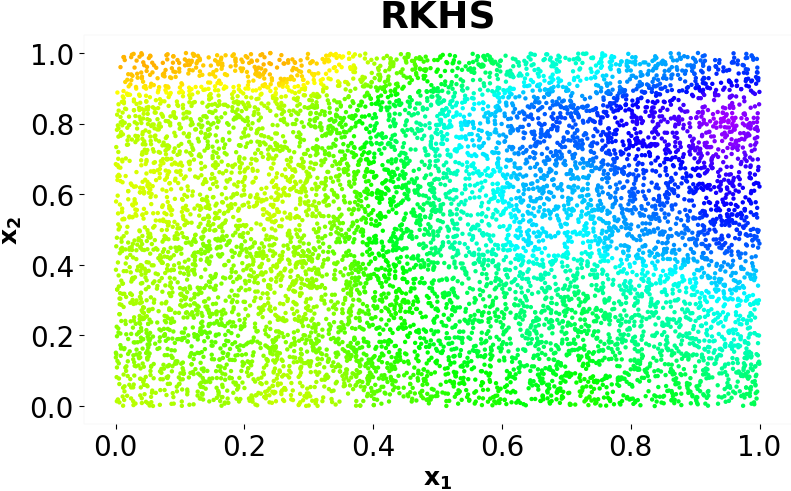}
    \end{minipage}
    \begin{minipage}[c]{0.3\textwidth}
        \centering \includegraphics[width=\linewidth]{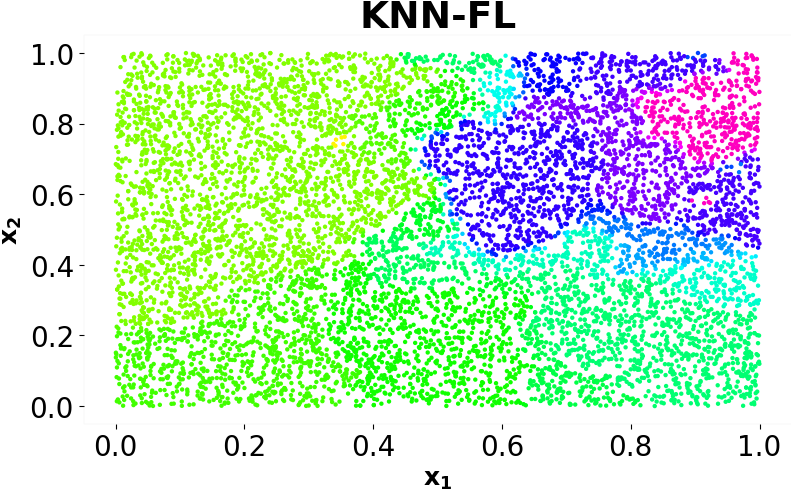}
    \end{minipage}
\end{minipage}\vspace{-1pt}


    \caption{The true mean $f^*(x_{ij})$ and the predicted $\hat{f}(x_{ij})$ for Scenarios 4, 5, 6 with $n=500$, $m_{mult}=1$, $d=2$, as estimated by Dense NN, GAM, Trend Filtering, RKHS, and KNN-FL, respectively.}
    \label{fig:comparison_scenarios_2}
\end{figure}

\begin{table}[htbp]
\centering
\caption{Results of relative error for different methods, $d$, $n$ and $m_{mult}$: Mean (Std). We highlight the best result in \textbf{bold} and
        the second best result in \textbf{\textit{bold and italic}}.}
\resizebox{\textwidth}{0.9\height}{
\begin{tabular}{@{}llccccc@{}}
\toprule
    \multirow{2}{*}{d} & \multirow{2}{*}{scenario} & \multicolumn{5}{c}{\textcolor{blue}{$n$: 500, $m_{mult}$: 2}} \\ \cmidrule(lr){3-7}
                        &                          & Dense NN         & GAM             & KNN-FL           & RKHS             & Trend Filtering  \\ \midrule 
    \multirow{6}{*}{2}  & 1        & \textbf{0.0550 (0.016)} & \textit{\textbf{0.3043 (0.032)}} & 0.1291 (0.021)                   & 1.2267 (0.060)  & 0.3075 (0.018)                   \\
                    & 2        & \textbf{0.0066 (0.000)} & \textit{\textbf{0.0178 (0.001)}} & 0.0306 (0.002)                   & 0.2019 (0.013)  & 0.0316 (0.002)                   \\
                    & 3        & \textbf{0.0049 (0.000)} & 0.3598 (0.002)                   & \textit{\textbf{0.0512 (0.003)}} & 0.4146 (0.003)  & 0.3576 (0.003)                   \\
                    & 4        & \textbf{0.0005 (0.000)} & \textit{\textbf{0.0010 (0.000)}} & 0.0045 (0.000)                   & 0.2320 (0.001)  & 0.0012 (0.000)                   \\
                    & 5        & \textbf{0.0817 (0.006)} & 0.4716 (0.145)                   & \textit{\textbf{0.2413 (0.016)}} & 0.9443 (0.027)  & 0.4912 (0.079)                   \\
                    & 6        & \textbf{0.1060 (0.011)} & \textit{\textbf{0.2307 (0.015)}} & 0.4460 (0.029)                   & 1.4758 (0.096)  & 0.4034 (0.026)                   \\
                    \midrule
\multirow{6}{*}{5}  & 1        & \textbf{0.3019 (0.004)} & \textit{\textbf{0.6902 (0.045)}} & 0.9929 (0.127)                   & 1.3277 (0.354)  & 0.7211 (0.047)                   \\
                    & 2        & \textbf{0.2840 (0.029)} & \textit{\textbf{0.3831 (0.025)}} & 0.8573 (0.056)                   & 1.1290 (0.074)  & 0.7937 (0.052)                   \\
                    & 3        & \textbf{0.0130 (0.002)} & 0.7899 (0.003)                   & \textit{\textbf{0.1895 (0.008)}} & 0.8003 (0.004)  & 0.3138 (0.037)                   \\
                    & 4        & \textbf{0.0003 (0.000)} & 0.0016 (0.000)                   & 0.0044 (0.000)                   & 0.2129 (0.001)  & \textit{\textbf{0.0015 (0.000)}} \\
                    & 5        & \textbf{0.2613 (0.005)} & \textit{\textbf{0.6197 (0.040)}} & 0.9543 (0.181)                   & 1.3243 (0.086)  & 0.6741 (0.044)                   \\
                    & 6        & \textit{\textbf{0.3101 (0.027)}} & 0.3281 (0.021)                   & 0.6005 (0.039)                   & 1.3460 (0.088)  & \textbf{0.2935 (0.019)} \\
                    \midrule
\multirow{6}{*}{7}  & 1        & \textbf{0.7837 (0.039)} & \textit{\textbf{2.2619 (0.148)}} & 2.7335 (0.178)                   & 2.9930 (0.248)  & 2.7797 (0.181)                   \\
                    & 2        & \textbf{0.7465 (0.008)} & \textit{\textbf{5.8878 (0.384)}} & 7.4242 (0.484)                   & 9.3845 (0.375)  & 7.6122 (0.496)                   \\
                    & 3        & \textbf{0.0222 (0.001)} & 0.8901 (0.004)                   & 0.4956 (0.010)                   & 0.8981 (0.003)  & \textit{\textbf{0.1964 (0.022)}} \\
                    & 4        & \textbf{0.0001 (0.000)} & 0.0011 (0.000)                   & 0.0051 (0.000)                   & 0.1793 (0.002)  & \textit{\textbf{0.0010 (0.000)}} \\
                    & 5        & \textbf{0.5901 (0.083)} & \textit{\textbf{1.5960 (0.104)}} & 1.9633 (0.128)                   & 2.3866 (0.167)  & 1.9978 (0.130)                   \\
                    & 6        & \textbf{0.4855 (0.114)} & 5.3070 (0.786)                   & \textit{\textbf{3.2943 (0.215)}} & 3.9682 (0.266)  & 3.3838 (0.221)                   \\
                    \midrule
\multirow{6}{*}{10} & 1        & \textbf{0.7751 (0.046)} & 11.7275 (0.700)                  & \textit{\textbf{5.0616 (0.237)}} & 8.7578 (1.560)  & 5.7709 (0.360)                   \\
                    & 2        & \textbf{0.3382 (0.014)} & \textit{\textbf{1.2430 (0.081)}} & 11.6736 (1.027)                  & 30.0460 (1.563) & 13.3159 (0.953)                  \\
                    & 3        & \textbf{0.0469 (0.010)} & 0.9439 (0.001)                   & 0.7622 (0.016)                   & 0.9498 (0.002)  & \textit{\textbf{0.1181 (0.028)}} \\
                    & 4        & \textbf{0.0001 (0.000)} & 0.0007 (0.000)                   & 0.0053 (0.000)                   & 0.1385 (0.002)  & \textit{\textbf{0.0006 (0.000)}} \\
                    & 5        & \textbf{0.6918 (0.053)} & 2.4092 (0.468)                   & \textit{\textbf{1.6978 (0.089)}} & 2.3645 (0.257)  & 2.0060 (0.223)                   \\
                    & 6        & \textbf{0.5265 (0.048)} & 10.8527 (1.239)                  & \textit{\textbf{5.0386 (0.308)}} & 12.4444 (0.350) & 5.7689 (0.422)                  \\
    \bottomrule
\end{tabular}}
\label{table_1_2}
\end{table}

\begin{table}[]
\centering
\caption{Results of relative error for different methods, $d$, $n$ and $m_{mult}$: Mean (Std)}
\resizebox{\textwidth}{0.9\height}{
\begin{tabular}{@{}llccccc@{}}
    \toprule
    \multirow{2}{*}{d} & \multirow{2}{*}{scenario} & \multicolumn{5}{c}{\textcolor{blue}{$n$: 1000, $m_{mult}$: 1}} \\ \cmidrule(lr){3-7}
                        &                          & Dense NN         & GAM             & KNN-FL           & RKHS             & Trend Filtering  \\ \midrule
\multirow{6}{*}{2}  & 1        & \bf 0.0350 (0.008) & \itbf 0.2867 (0.008) & 0.0981 (0.003)     & 1.2588 (0.108)     & 0.2891 (0.004)  \\
                    & 2        & \bf 0.0131 (0.001) & \itbf 0.0260 (0.006) & 0.0595 (0.004)     & 0.1517 (0.039)     & 0.0331 (0.002)  \\
                    & 3        & \bf 0.0069 (0.001) & 0.3588 (0.001) & \itbf 0.0443 (0.003)     & 0.4139 (0.001)     & 0.3599 (0.002)  \\
                    & 4        & \bf 0.0006 (0.000) & \itbf 0.0011 (0.000) & 0.0032 (0.001)     & 0.2202 (0.005)     & 0.0012 (0.000)  \\
                    & 5        & \bf 0.0516 (0.004) & 0.4033 (0.010) & \itbf 0.2538 (0.052)     & 0.7392 (0.039)     & 0.4147 (0.014)  \\
                    & 6        & \bf 0.0656 (0.005) & \itbf 0.3225 (0.066) & 0.3565 (0.023)     & 0.9116 (0.182)     & 0.3865 (0.081)  \\ \midrule
\multirow{6}{*}{5}  & 1        & \bf 0.2631 (0.032) & 0.5340 (0.122) & 0.8096 (0.025)     & 1.4061 (0.301)     & \itbf 0.4722 (0.131)  \\
                    & 2        & \bf 0.1284 (0.005) & \itbf 0.2231 (0.015) & 0.5461 (0.099)     & 0.7956 (0.103)     & 0.3762 (0.038)  \\
                    & 3        & \bf 0.0150 (0.001) & 0.7916 (0.006) & \itbf 0.2102 (0.043)     & 0.8019 (0.004)     & 0.4813 (0.031)  \\
                    & 4        & \bf 0.0003 (0.000) & \itbf 0.0015 (0.000) & 0.0047 (0.000)     & 0.1856 (0.001)     & 0.0015 (0.000)  \\
                    & 5        & \bf 0.2067 (0.003) & \itbf 0.5033 (0.033) & 0.7078 (0.038)     & 0.9709 (0.134)     & 0.5190 (0.034)  \\
                    & 6        & \bf 0.0709 (0.008) & 0.2036 (0.013) & 0.3800 (0.088)     & 0.6871 (0.045)     & \itbf 0.1541 (0.010)  \\ \midrule
\multirow{6}{*}{7}  & 1        & \bf 0.7408 (0.029) & 1.9282 (0.126) & 1.6383 (0.334)     & 2.6112 (0.170)     & \itbf 1.5761 (0.353)  \\
                    & 2        & \bf 0.7625 (0.015) & 4.2984 (0.280) & 4.3439 (0.283)     & 5.5496 (0.362)     & \itbf 4.1335 (0.270)  \\
                    & 3        & \bf 0.0211 (0.002) & 0.8849 (0.001) & 0.4988 (0.009)     & 0.8911 (0.001)     & \itbf 0.2511 (0.043)  \\
                    & 4        & \bf 0.0002 (0.000) &  0.0011 (0.000) & 0.0052 (0.000)     & 0.1532 (0.001)     & \itbf 0.0010 (0.000)  \\
                    & 5        & \bf 0.6646 (0.134) & 1.3807 (0.090) & 1.1592 (0.251)     & 1.7023 (0.111)     & \itbf 1.1182 (0.237)  \\
                    & 6        & \bf 0.3698 (0.028) & 2.1499 (0.140) & 1.8774 (0.122)     & 3.8011 (0.248)     & \itbf 1.7805 (0.116)  \\ \midrule 
\multirow{6}{*}{10} & 1        & \bf 0.8129 (0.053) & 3.6842 (0.815) & 3.4751 (0.150)     & 4.3302 (0.282)     & \itbf 1.4579 (0.095)  \\
                    & 2        & \bf 0.3389 (0.022) & 9.0955 (2.813) & 7.2269 (0.412)     & 6.3497 (0.414)     & \itbf 1.5146 (0.031)  \\
                    & 3        & \bf 0.0647 (0.005) & 0.9443 (0.003) & 0.7506 (0.005)     & 0.9485 (0.002)     & \itbf 0.1001 (0.011)  \\
                    & 4        & \bf 0.0001 (0.000) &  0.0007 (0.000) & 0.0054 (0.000)     & 0.1144 (0.001)     & \itbf 0.0006 (0.000)  \\
                    & 5        & \bf 0.3867 (0.032) & 1.3115 (0.335) & 1.1788 (0.055)     & 1.4374 (0.094)     & \itbf 0.5920 (0.039)  \\
                    & 6        & \bf 0.5579 (0.036) & 3.8700 (0.961) & 3.3235 (0.169)     & 3.5196 (0.230)     & \itbf 0.9957 (0.065)  \\        
    \bottomrule
\end{tabular}}
\label{table_2}
\end{table}

\begin{table}[htbp]
\centering
\caption{Results of relative error for different methods, $d$, $n$ and $m_{mult}$: Mean (Std)}
\resizebox{\textwidth}{0.9\height}{
\begin{tabular}{@{}llccccc@{}}
\toprule
    \multirow{2}{*}{d} & \multirow{2}{*}{scenario} & \multicolumn{5}{c}{\textcolor{blue}{$n$: 1000, $m_{mult}$: 2}} \\ \cmidrule(lr){3-7}
                        &                          & Dense NN         & GAM             & KNN-FL           & RKHS             & Trend Filtering  \\ \midrule 
    \multirow{6}{*}{2}  & 1        & \bf 0.0317 (0.002) & 0.2918 (0.009) & \itbf 0.0591 (0.004)     & 1.3259 (0.033)     & 0.2969 (0.016)  \\
                    & 2        & \bf 0.0120 (0.001) & \itbf 0.0181 (0.001) & 0.0327 (0.002)     & 0.1602 (0.026)     & 0.0181 (0.001)  \\
                    & 3        & \bf 0.0058 (0.000) & 0.3578 (0.002) & \itbf 0.0366 (0.002)     & 0.4165 (0.003)     & 0.3584 (0.002)  \\
                    & 4        & \bf 0.0004 (0.000) & \itbf 0.0010 (0.000) & 0.0027 (0.000)     & 0.2328 (0.002)     & 0.0011 (0.000)  \\
                    & 5        & \bf 0.0403 (0.003) & 0.4226 (0.035) & \itbf 0.1005 (0.007)     & 1.0360 (0.044)     & 0.4420 (0.069)  \\
                    & 6        & \bf 0.1280 (0.010) & \itbf 0.2158 (0.014) & 0.3741 (0.024)     & 1.2701 (0.304)     & 0.2278 (0.015)  \\ \midrule
\multirow{6}{*}{5}  & 1        & \bf 0.2213 (0.018) & \itbf 0.3927 (0.014) & 0.8036 (0.023)     & 1.2202 (0.086)     & 0.4043 (0.031)  \\
                    & 2        & \bf 0.0663 (0.007) & \itbf 0.1049 (0.007) & 0.5283 (0.084)     & 1.0124 (0.178)     & 0.1239 (0.008)  \\
                    & 3        & \bf 0.0116 (0.001) & 0.7938 (0.004) & \itbf 0.1976 (0.011)     & 0.8045 (0.002)     & 0.4388 (0.085)  \\
                    & 4        & \bf 0.0002 (0.000) & \itbf 0.0015 (0.000) & 0.0085 (0.001)     & 0.2166 (0.002)     & 0.0015 (0.000)  \\
                    & 5        & \bf 0.1619 (0.036) & \itbf 0.3117 (0.017) & 0.7034 (0.032)     & 1.0090 (0.098)     & 0.3400 (0.055)  \\
                    & 6        & \bf 0.0548 (0.004) & \itbf 0.0833 (0.005) & 0.3636 (0.077)     & 0.5899 (0.065)     & 0.0971 (0.006)  \\ \midrule
\multirow{6}{*}{7}  & 1        & \bf 0.7386 (0.041) & \itbf 0.9086 (0.059) & 1.2393 (0.234)     & 1.6926 (0.034)     & 1.0924 (0.071)  \\
                    & 2        & \bf 0.7909 (0.038) & \itbf 1.2737 (0.083) & 1.6765 (0.109)     & 5.3951 (0.300)     & 1.4584 (0.095)  \\
                    & 3        & \bf 0.0256 (0.002) & 0.8901 (0.004) & 0.4828 (0.013)     & 0.8960 (0.004)     & \itbf 0.4100 (0.027)  \\
                    & 4        & \bf 0.0001 (0.000) & \itbf 0.0010 (0.000) & 0.0050 (0.000)     & 0.1840 (0.003)     & 0.0010 (0.000)  \\
                    & 5        & \bf 0.3978 (0.025) & \itbf 0.6856 (0.045) & 0.9178 (0.060)     & 1.5189 (0.084)     & 0.6985 (0.046)  \\
                    & 6        & \bf 0.2362 (0.016) & 0.7384 (0.048) & \itbf 0.6435 (0.042)     & 1.0598 (0.069)     & 0.6570 (0.043)  \\ \midrule 
\multirow{6}{*}{10} & 1        & \bf 0.7185 (0.008) & \itbf 1.4206 (0.093) & 2.2545 (0.147)     & 12.3521 (0.395)    & 1.8505 (0.121)  \\
                    & 2        & \bf 0.2877 (0.019) & 1.7947 (0.480) & \itbf 1.6678 (0.109)     & 32.1587 (2.097)    & 1.7717 (0.116)  \\
                    & 3        & \bf  0.0223 (0.001) & 0.9450 (0.006) & 
 0.8380 (0.047)     & 0.9494 (0.005)     & \itbf 0.1611 (0.009)  \\
                    & 4        & \bf 0.0001 (0.000) & \itbf 0.0006 (0.000) & 0.0056 (0.000)     & 0.1450 (0.001)     & 0.0007 (0.000)  \\
                    & 5        & \bf 0.2658 (0.043) & \itbf 0.5817 (0.038) & 0.8374 (0.055)     & 3.8153 (0.140)     & 0.7095 (0.046)  \\
                    & 6        & \bf 0.1417 (0.016) & \itbf 0.9445 (0.062) & 1.0597 (0.187)     & 16.9320 (1.104)    & 0.9663 (0.054)  \\
    \bottomrule
\end{tabular}}
\label{table_2_2}
\end{table}

\begin{table}[htbp]
\centering
\caption{Results of relative error for different methods, $d$, $n$ and $m_{mult}$: Mean (Std)}
\resizebox{\textwidth}{0.9\height}{
\begin{tabular}{@{}llccccc@{}}
    \toprule
   \multirow{2}{*}{d} & \multirow{2}{*}{scenario} & \multicolumn{5}{c}{\textcolor{blue}{$n$: 2000, $m_{mult}$: 1}} \\ \cmidrule(lr){3-7}
                        &                          & Dense NN         & GAM             & KNN-FL           & RKHS             & Trend Filtering  \\ \midrule 
    \multirow{6}{*}{2}  & 1        & \textbf{0.0294 (0.005)}          & \textit{\textbf{0.2883 (0.004)}} & \textit{\textbf{0.0521 (0.004)}} & 1.2793 (0.047)  & 0.2888 (0.006)                   \\
                    & 2        & \textbf{0.0091 (0.001)}          & \textit{\textbf{0.0323 (0.002)}} & 0.0396 (0.003)                   & 0.1610 (0.020)  & \textit{\textbf{0.0231 (0.002)}} \\
                    & 3        & \textbf{0.0052 (0.000)}          & 0.3578 (0.002)                   & \textit{\textbf{0.0270 (0.003)}} & 0.4162 (0.003)  & 0.3571 (0.002)                   \\
                    & 4        & \textbf{0.0004 (0.000)}          & \textit{\textbf{0.0010 (0.000)}} & 0.0045 (0.000)                   & 0.2255 (0.001)  & 0.0011 (0.000)                   \\
                    & 5        & \textbf{0.0338 (0.008)}          & 0.4051 (0.013)                   & \textit{\textbf{0.0957 (0.013)}} & 0.9558 (0.043)  & 0.4108 (0.018)                   \\
                    & 6        & \textbf{0.1626 (0.012)}          & \textit{\textbf{0.3921 (0.026)}} & 0.4097 (0.027)                   & 1.2836 (0.170)  & \textit{\textbf{0.2851 (0.019)}} \\
                    \midrule
\multirow{6}{*}{5}  & 1        & \textbf{0.2704 (0.011)}          & \textit{\textbf{0.6361 (0.041)}} & 0.8273 (0.028)                   & 1.3507 (0.157)  & \textit{\textbf{0.5161 (0.084)}} \\
                    & 2        & \textbf{0.1286 (0.010)}          & \textit{\textbf{0.1746 (0.011)}} & 0.6055 (0.097)                   & 1.3393 (0.363)  & 0.5241 (0.034)                   \\
                    & 3        & \textbf{0.0129 (0.001)}          & 0.7955 (0.007)                   & \textit{\textbf{0.1993 (0.018)}} & 0.8052 (0.006)  & 0.4704 (0.031)                   \\
                    & 4        & \textbf{0.0002 (0.000)}          & 0.0016 (0.000)                   & 0.0051 (0.000)                   & 0.2000 (0.001)  & \textit{\textbf{0.0015 (0.000)}} \\
                    & 5        & \textbf{0.1967 (0.015)}          & \textit{\textbf{0.4542 (0.030)}} & 0.7335 (0.040)                   & 1.1670 (0.150)  & 0.4642 (0.079)                   \\
                    & 6        & \textit{\textbf{0.0939 (0.007)}} & \textbf{0.0805 (0.005)}          & 0.4274 (0.082)                   & 0.5648 (0.027)  & 0.2699 (0.018) \\
                    \midrule
\multirow{6}{*}{7}  & 1        & \textbf{0.7807 (0.051)}          & \textit{\textbf{0.8679 (0.057)}} & 1.3575 (0.089)                   & 1.9966 (0.130)  & 1.2397 (0.081)                   \\
                    & 2        & \textbf{0.4197 (0.027)}          & \textit{\textbf{1.0704 (0.070)}} & 1.6646 (0.109)                   & 3.8451 (0.251)  & 1.4231 (0.093)                   \\
                    & 3        & \textbf{0.0146 (0.002)}          & 0.8871 (0.002)                   & 0.4714 (0.002)                   & 0.8939 (0.002)  & \textit{\textbf{0.2543 (0.038)}} \\
                    & 4        & \textbf{0.0001 (0.000)}          & 0.0011 (0.000)                   & 0.0048 (0.000)                   & 0.1678 (0.002)  & \textit{\textbf{0.0010 (0.000)}} \\
                    & 5        & \textbf{0.1771 (0.015)}          & \textit{\textbf{0.6277 (0.041)}} & 0.9096 (0.059)                   & 1.5109 (0.099)  & 0.7869 (0.051)                   \\
                    & 6        & \textbf{0.3764 (0.025)}          & \textit{\textbf{0.5455 (0.036)}} & \textit{\textbf{0.6446 (0.042)}} & 1.8978 (0.124)  & 0.5641 (0.037)                   \\
                    \midrule
\multirow{6}{*}{10} & 1        & \textbf{0.7360 (0.048)}          & 1.3649 (0.089)                   & \textit{\textbf{1.1902 (0.025)}} & 8.0562 (1.239)  & 1.5662 (0.102)                   \\
                    & 2        & \textbf{0.2949 (0.019)}          & \textit{\textbf{0.7665 (0.050)}} & 0.8280 (0.054)                   & 21.0288 (1.928) & 1.0935 (0.071)                   \\
                    & 3        & \textbf{0.0330 (0.002)}          & 0.9460 (0.001)                   & 0.7523 (0.005)                   & 0.9497 (0.001)  & \textit{\textbf{0.1360 (0.023)}} \\
                    & 4        & \textbf{0.0001 (0.000)}          & 0.0007 (0.000)                   & 0.0054 (0.000)                   & 0.1290 (0.001)  & \textit{\textbf{0.0006 (0.000)}} \\
                    & 5        & \textbf{0.1383 (0.017)}          & \textit{\textbf{0.5327 (0.035)}} & \textit{\textbf{1.3057 (0.088)}} & 2.6521 (0.500)  & 0.6275 (0.041)                   \\
                    & 6        & \textbf{0.1737 (0.011)}          & \textit{\textbf{0.9733 (0.063)}} & \textit{\textbf{1.0557 (0.191)}} & 7.0305 (0.769)  & 1.1130 (0.073)                    \\ 
                       
    \bottomrule
\end{tabular}}
\label{table_3}
\end{table}

\begin{table}[htbp]
\centering
\caption{Results of relative error for different methods, $d$, $n$ and $m_{mult}$: Mean (Std)}
\resizebox{\textwidth}{0.9\height}{
\begin{tabular}{@{}llccccc@{}}
\toprule
\multirow{2}{*}{d} & \multirow{2}{*}{scenario} & \multicolumn{5}{c}{\textcolor{blue}{$n$: 2000, $m_{mult}$: 2}} \\ \cmidrule(lr){3-7}
                        &                          & Dense NN         & GAM             & KNN-FL           & RKHS             & Trend Filtering  \\ \midrule
\multirow{6}{*}{2}  & 1        & \bf 0.0153 (0.003) & 0.2794 (0.002) & 0.0328 (0.002)     & 1.3257 (0.007)     & \itbf 0.2792 (0.001)  \\
                    & 2        & \bf 0.0031 (0.000) & \itbf 0.0095 (0.002) & 0.0144 (0.001)     & 0.1489 (0.001)     & 0.0103 (0.002)  \\
                    & 3        & \bf 0.0031 (0.000) & 0.3587 (0.002) & \itbf 0.0404 (0.011)     & 0.4207 (0.002)     & 0.3588 (0.002)  \\
                    & 4        & \bf 0.0003 (0.000) & \itbf 0.0010 (0.000) & 0.0039 (0.000)     & 0.2338 (0.001)     & 0.0010 (0.000)  \\
                    & 5        & \bf 0.0163 (0.002) & 0.3773 (0.004) & \itbf 0.0606 (0.002)     & 0.9943 (0.017)     & 0.3782 (0.004)  \\
                    & 6        & \bf 0.0234 (0.006) & 0.1252 (0.024) & \itbf 0.1098 (0.028)     & 1.4021 (0.078)     & 0.1363 (0.007)  \\ \midrule
\multirow{6}{*}{5}  & 1        & \bf 0.1300 (0.008) & \itbf 0.4416 (0.025) & 0.9427 (0.224)     & 1.6799 (0.110)     & 0.4437 (0.033)  \\
                    & 2        & \bf 0.1117 (0.003) & 0.7506 (0.049) & \itbf 0.7418 (0.048)     & 1.0230 (0.067)     & 0.7491 (0.049)  \\
                    & 3        & \bf 0.0066 (0.002) & 0.7931 (0.002) & \itbf 0.2419 (0.032)     & 0.8022 (0.002)     & 0.3959 (0.057)  \\
                    & 4        & \bf 0.0001 (0.000) & \itbf 0.0015 (0.000) & 0.0063 (0.000)     & 0.2195 (0.000)     & 0.0015 (0.000)  \\
                    & 5        & \bf 0.1594 (0.010) & \itbf 0.3620 (0.020) & 0.8878 (0.295)     & 1.4242 (0.093)     & 0.5221 (0.034)  \\
                    & 6        & \bf 0.0513 (0.004) & 0.2081 (0.014) & 0.4958 (0.032)     & 1.3786 (0.090)     & \itbf 0.1587 (0.010)  \\ \midrule
\multirow{6}{*}{7}  & 1        & \bf 0.1990 (0.007) & \itbf 0.6109 (0.034) & 1.3097 (0.085)     & 1.8931 (0.238)     & 0.7672 (0.050)  \\
                    & 2        & \bf 0.2983 (0.012) & \itbf 1.2123 (0.079) & 1.4433 (0.094)     & 5.0919 (0.332)     & 1.4532 (0.095)  \\
                    & 3        & \bf 0.0106 (0.002) & 0.8871 (0.001) & 0.4996 (0.009)     & 1.0042 (0.056)     & \itbf 0.2755 (0.064)  \\
                    & 4        & \bf 0.0001 (0.000) & \itbf 0.0010 (0.000) & 0.0077 (0.000)     & 0.1886 (0.001)     & 0.0010 (0.000)  \\
                    & 5        & \bf 0.1379 (0.007) & \itbf 0.4113 (0.023) & 0.8685 (0.057)     & 1.2727 (0.379)     & 0.5612 (0.037)  \\
                    & 6        & \bf 0.0970 (0.014) & \itbf 0.2485 (0.014) & 0.5549 (0.036)     & 1.2550 (0.082)     & 0.5591 (0.036)  \\
                    \midrule 
\multirow{6}{*}{10} & 1        & \bf 0.7102 (0.004) & 1.4528 (0.095) & \itbf 0.8447 (0.001)     & 6.4141 (0.958)     & 0.9805 (0.054)  \\
                    & 2        & \bf 0.2128 (0.008) & 3.0706 (0.200) & \itbf 0.3626 (0.012)     & 7.8720 (0.513)     & 0.4042 (0.019)  \\
                    & 3        & \bf 0.0105 (0.001) & 0.9462 (0.000) & 0.8499 (0.047)     & 0.9498 (0.001)     &\itbf  0.1627 (0.028)  \\
                    & 4        & \bf 0.0001 (0.000) & \itbf 0.0006 (0.000) & 0.0089 (0.000)     & 0.1477 (0.000)     & 0.0006 (0.000)  \\
                    & 5        & \bf 0.1217 (0.010) & 0.5133 (0.191) & \itbf 0.3477 (0.001)     & 1.8376 (0.102)     & 0.4034 (0.022)  \\
                    & 6        & \bf 0.1015 (0.007) & 0.9980 (0.065) & 0.5520 (0.002)     & 7.4778 (0.488)     & \itbf 0.6564 (0.036)    \\
    \bottomrule
\end{tabular}}
\label{table_3_2}
\end{table}

     \restoregeometry


\begin{figure}[!htbp]
    \captionsetup[subfigure]{aboveskip=-1pt,belowskip=-1pt,font=footnotesize}
    \centering
    \begin{minipage}[c]{0.9\textwidth}
        \includegraphics[width=\linewidth, height=0.2\linewidth]{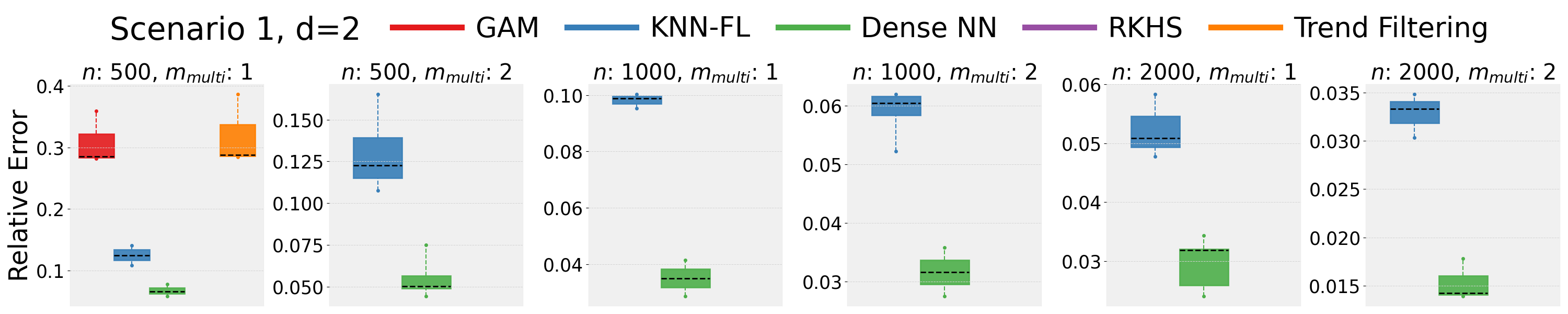}
    \end{minipage}\hspace{-4pt}

    \begin{minipage}[c]{0.9\textwidth}
        \includegraphics[width=\linewidth, height=0.2\linewidth]{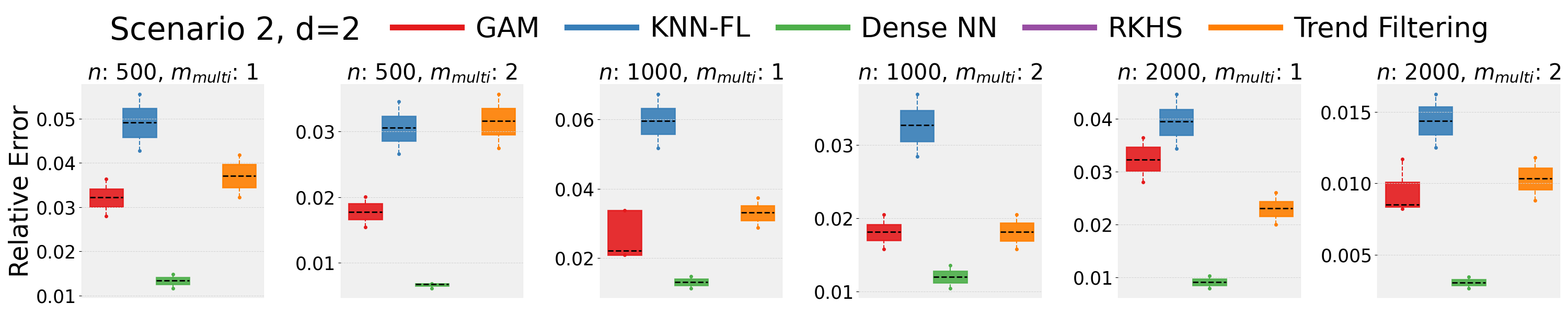}
    \end{minipage}\hspace{-4pt}

    \begin{minipage}[c]{0.9\textwidth}
        \includegraphics[width=\linewidth, height=0.2\linewidth]{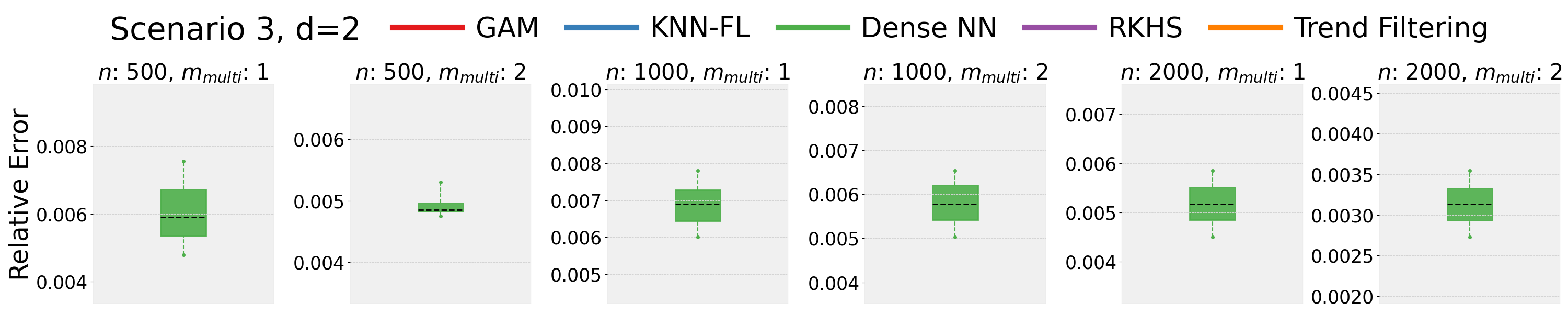}
    \end{minipage}\hspace{-4pt}

    \begin{minipage}[c]{0.9\textwidth}
        \includegraphics[width=\linewidth, height=0.2\linewidth]{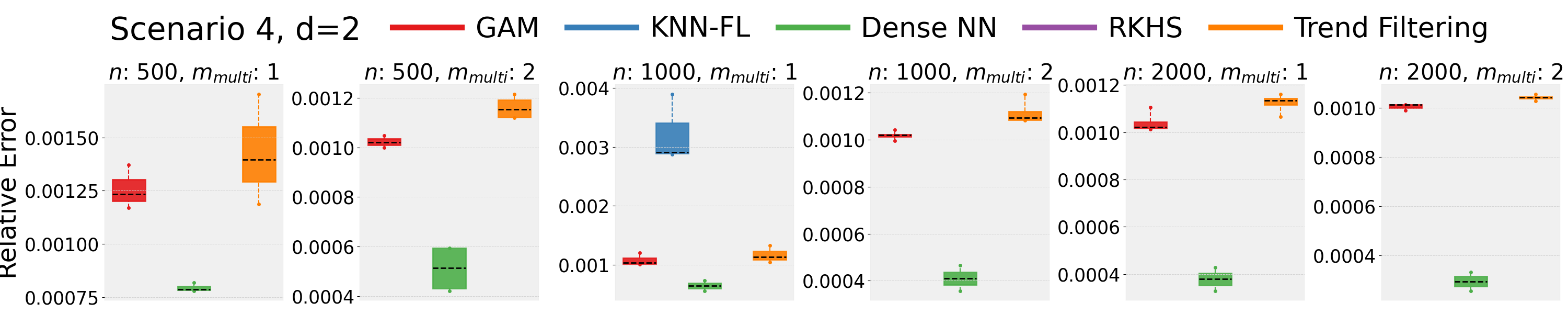}
    \end{minipage}\hspace{-4pt}

    \begin{minipage}[c]{0.9\textwidth}
        \includegraphics[width=\linewidth, height=0.2\linewidth]{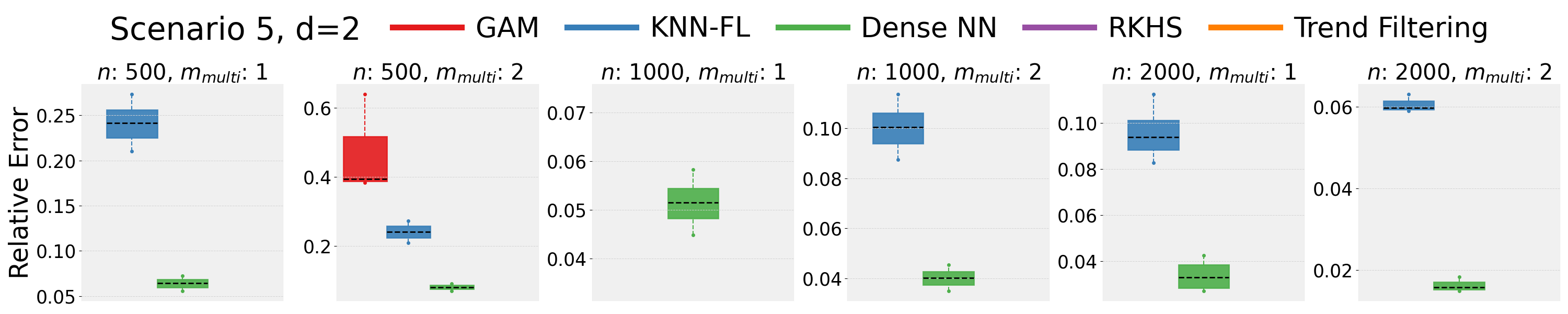}
    \end{minipage}\hspace{-4pt}

    \begin{minipage}[c]{0.9\textwidth}
        \includegraphics[width=\linewidth, height=0.2\linewidth]{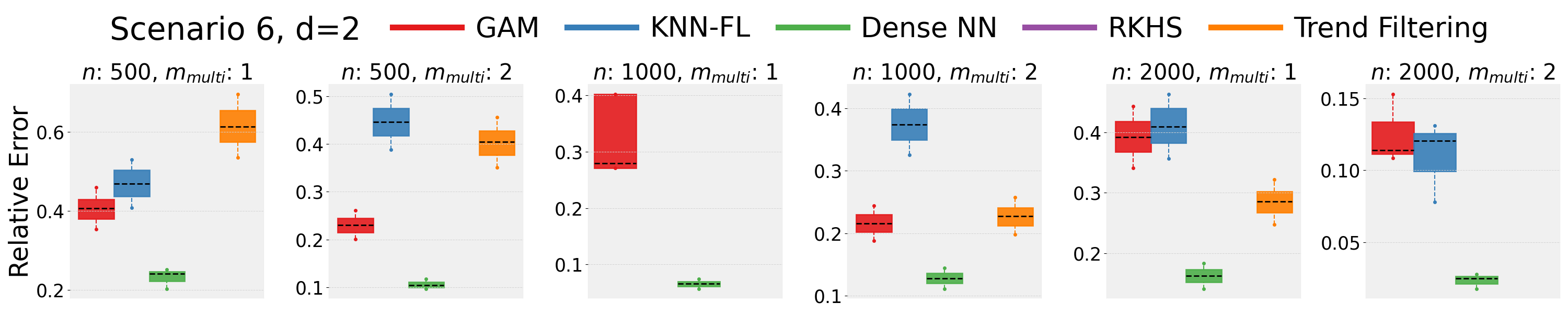}
    \end{minipage}\hspace{-4pt}

    \caption{Comparison of methods across different scenarios, $d = 2$, $n$, and $m_{mult}$ by box-plot. For each $(n,m_{mult})$ setting, we present the errors of the competitors in the range of within 5 times the smallest error.}
    \label{fig_box_d_2}
\end{figure}

\begin{figure}[!htbp]
    \captionsetup[subfigure]{aboveskip=-1pt,belowskip=-1pt,font=footnotesize}
    \centering
    \begin{minipage}[c]{0.9\textwidth}
        \includegraphics[width=\linewidth, height=0.2\linewidth]{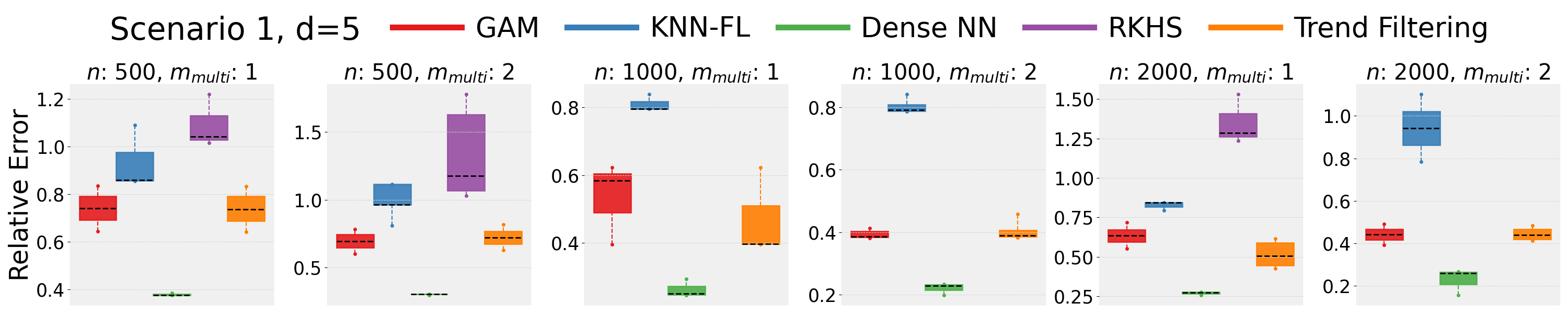}
    \end{minipage}\hspace{-4pt}

    \begin{minipage}[c]{0.9\textwidth}
        \includegraphics[width=\linewidth, height=0.2\linewidth]{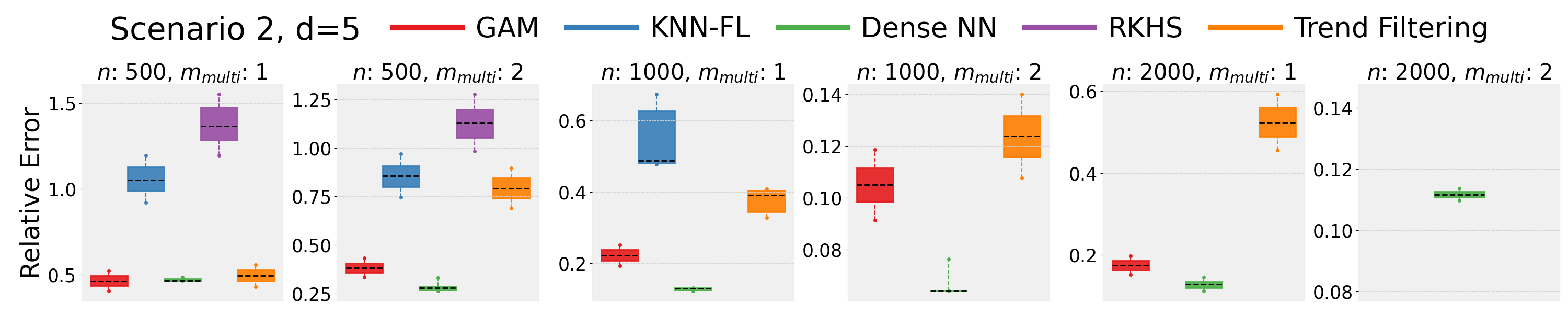}
    \end{minipage}\hspace{-4pt}

    \begin{minipage}[c]{0.9\textwidth}
        \includegraphics[width=\linewidth, height=0.2\linewidth]{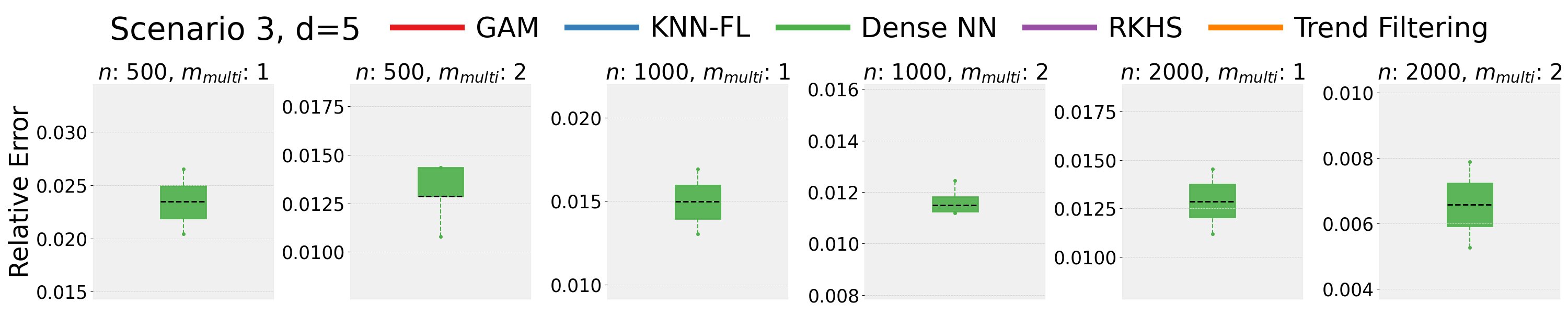}
    \end{minipage}\hspace{-4pt}

    \begin{minipage}[c]{0.9\textwidth}
        \includegraphics[width=\linewidth, height=0.2\linewidth]{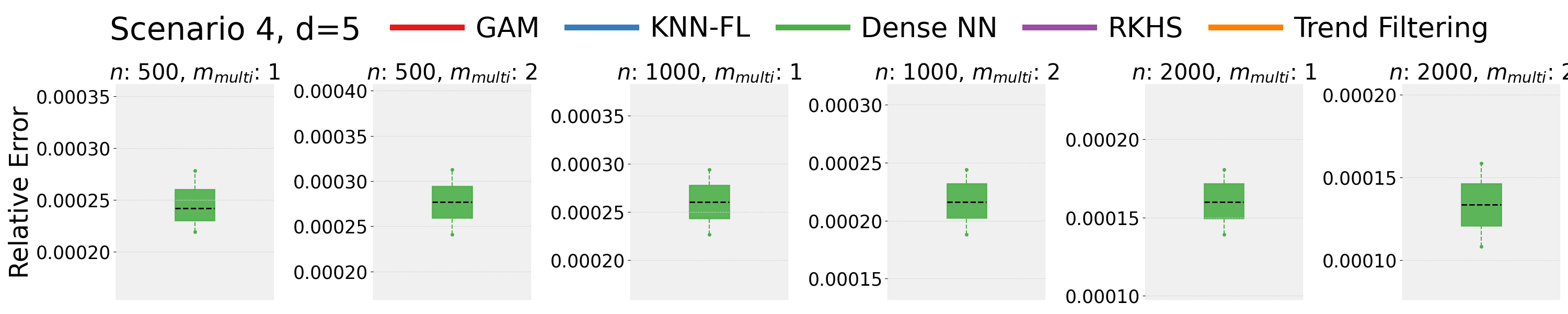}
    \end{minipage}\hspace{-4pt}

    \begin{minipage}[c]{0.9\textwidth}
        \includegraphics[width=\linewidth, height=0.2\linewidth]{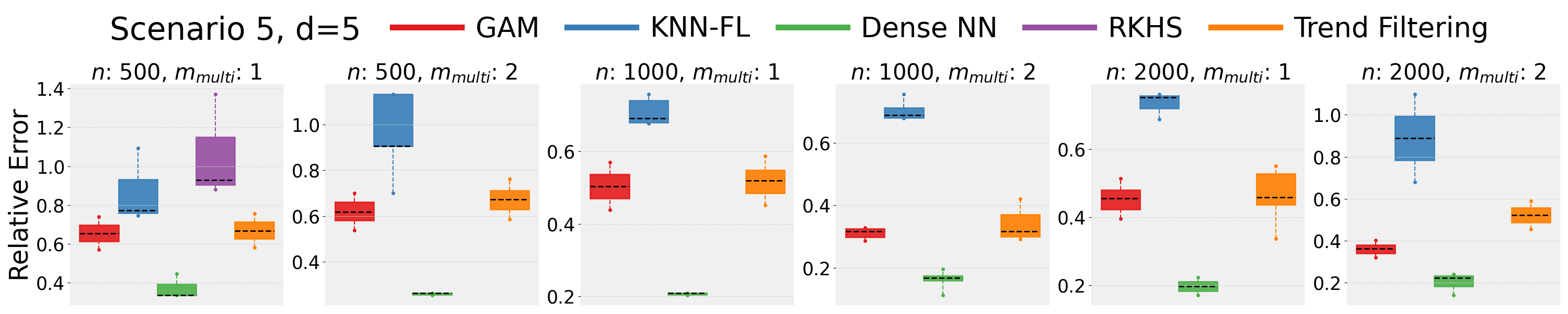}
    \end{minipage}\hspace{-4pt}

    \begin{minipage}[c]{0.9\textwidth}
        \includegraphics[width=\linewidth, height=0.2\linewidth]{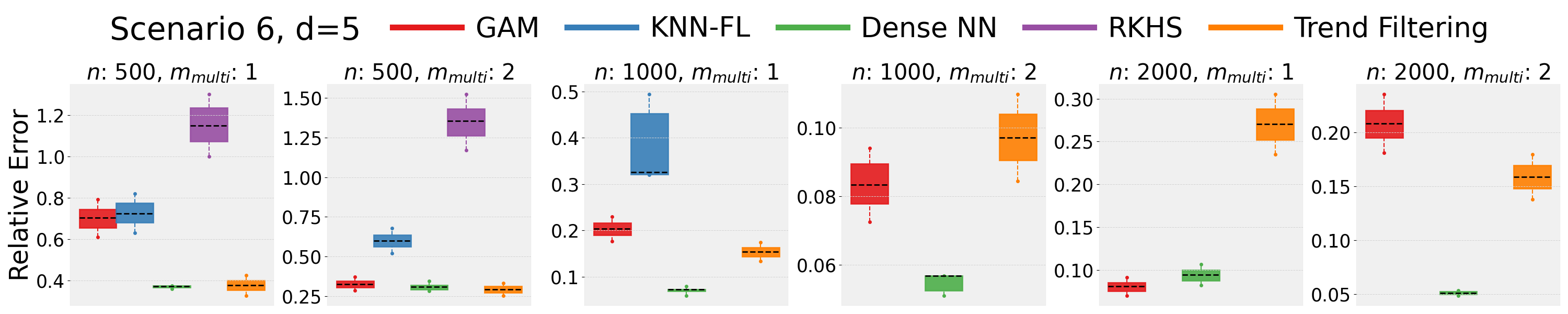}
    \end{minipage}\hspace{-4pt}

    \caption{Comparison of methods across different scenarios, $d = 5$, $n$, and $m_{mult}$ by box-plot. For each $(n,m_{mult})$ setting, we present the errors of the competitors in the range of within 5 times the smallest error.}
    \label{fig_box_d_5}
\end{figure}

  \restoregeometry

\clearpage 
\begin{table}[h!]
    \centering
    \caption{Hyperparameters for Dense NN Selected by $5$-fold Cross-validation }
    \begin{tabular}{c|cccccc}
        \toprule
        \multirow{2}{*}{\textbf{Hyperparameter}} & \multicolumn{5}{c}{\textbf{Scenario}} \\
        \cmidrule(lr){2-7}
        & \textbf{1} & \textbf{2} & \textbf{3} & \textbf{4} & \textbf{5} & \textbf{6}\\
        \midrule
        \makecell{L $\in \{1, 2, 3, 4, 5, 10\}$ \\ (Layers)} & 3 & 5 & 4 & 2 & 5  & 2 \\
        \midrule
        \makecell{N $\in \{30, 60, 100, 300\}$ \\ (Neurons)} & 60 & 30 & 100 & 60 & 30 & 30\\
        \bottomrule
    \end{tabular}
    \label{table_hyper_nn}
\end{table}

\begin{table}[h!]
    \centering
    \setlength{\tabcolsep}{2pt} 
    \renewcommand{\arraystretch}{0.8}
    \caption{Hyperparameters for GAM Selected by $5$-fold Cross-validation}
    \begin{tabular}{c|cccccc}
        \toprule
        \multirow{2}{*}{\textbf{Hyperparameter}} & \multicolumn{5}{c}{\textbf{Scenario}} \\
        \cmidrule(lr){2-7}
        & \textbf{1} & \textbf{2} & \textbf{3} & \textbf{4} & \textbf{5} & \textbf{6}\\
        \midrule
        \makecell{basis \\ $\{\text{'ps': p-spline basis, 'cp': cyclic p-spline basis} \}$ \\ (Basis Type)} & \makecell{ps} & \makecell{ps} & \makecell{ps} & \makecell{ps} & \makecell{ps} & \makecell{ps} \\
        \midrule
        \makecell{constraint \\ $\{\text{None, 'convex', 'concave', \makecell{mono \\ inc}, \makecell{mono \\ dec}} \}$ \\ (Constraint Type)} & \makecell{mono \\ dec} & \makecell{mono \\ dec} & \makecell{convex} & \makecell{None} & \makecell{mono \\ dec} & \makecell{mono \\ inc}\\
        \midrule
        \makecell{$\lambda$, with $\log_{10}(\lambda) \in [-3,3]$\\ (Regularization Parameter)} & \makecell{63.1} & \makecell{0.001} & \makecell{0.25} & \makecell{3.98} & \makecell{1000} & \makecell{0.015}\\
        \midrule
        \makecell{n\_splines $\in \{6,7,8,9,10 \}$\\ (Number of Splines)} & \makecell{7} & \makecell{8} & \makecell{9} & \makecell{6} & \makecell{6} & \makecell{7}\\
        \midrule
        \makecell{spline $\in \{1, 2, 3, 4, 5\}$ \\ (Spline Order)} & \makecell{1} & \makecell{1} & \makecell{2} & \makecell{4} & \makecell{5} & \makecell{5}\\
        \bottomrule
    \end{tabular}
    \label{table_hyper_gam}
\end{table}

\begin{table}[h!]
    \centering
    \setlength{\tabcolsep}{2pt} 
    \caption{Hyperparameters for KNN-FL Selected by $5$-fold Cross-validation}
    \begin{tabular}{c|cccccc}
        \toprule
        \multirow{2}{*}{\textbf{Hyperparameter}} & \multicolumn{5}{c}{\textbf{Scenario}} \\
        \cmidrule(lr){2-7}
        & \textbf{1} & \textbf{2} & \textbf{3} & \textbf{4} & \textbf{5} & \textbf{6}\\
        \midrule
        \makecell{lambda $\in \{0.00001, 0.0001, 0.01, 0.1, 1, 10, 100\}$\\ (Regularization Parameter)} & 0.01 & 0.01 & 0.00001 & 0.0001 & 0.01 & 0.01\\
        \midrule
        \makecell{n\_neighbors $\in \{5,10\}$\\ (Number of Neighbors)} & 10 & 10 & 5 & 5 & 5 & 5 \\
        \bottomrule
    \end{tabular}
    \label{table_hyper_knn_fl}
\end{table}

\begin{table}[h!]
    \centering
    \caption{Hyperparameters for RKHS Selected by $5$-fold Cross-validation}
    \resizebox{\textwidth}{!}{%
    \begin{tabular}{c|cccccc}
        \toprule
        \multirow{2}{*}{\textbf{Hyperparameter}} & \multicolumn{5}{c}{\textbf{Scenario}} \\
        \cmidrule(lr){2-7}
        & \textbf{1} & \textbf{2} & \textbf{3} & \textbf{4} & \textbf{5} & \textbf{6}\\
        \midrule
        \makecell{$\alpha$, with $\log_{10}(\alpha) \in [-2, 2]$ \\ (Regularization Parameter)} & 100 & 100 & 100 & 100 & 100 & 100\\
        \midrule
        \makecell{kernel$\in \{\text{polynomial, laplacian, linear} \}$ \\ (Kernel Type)} & polynomial & polynomial & laplacian & polynomial & polynomial &laplacian \\
        \bottomrule
    \end{tabular}%
    }
    \label{table_hyper_RKHS}
\end{table}

\begin{table}[h!]
    \centering
    \caption{Hyperparameters for Trend Filtering Selected by $5$-fold Cross-validation}
    \begin{tabular}{c|cccccc}
        \toprule
        \multirow{2}{*}{\textbf{Hyperparameter}} & \multicolumn{5}{c}{\textbf{Scenario}} \\
        \cmidrule(lr){2-7}
        & \textbf{1} & \textbf{2} & \textbf{3} & \textbf{4} & \textbf{5} & \textbf{6}\\
        \midrule
        \makecell{$k \in \{0,1,2\}$ \\ (Trend Order)} & 1 & 0 & 1 & 0 & 1& 1 \\
        \midrule
        \makecell{lambda,  $\log_{10}(\lambda) \in [-5, 2]$ \\ (Regularization Parameter)} & 0.0007 & 0.8133 & 0.0001 & 0.1128 & 7.8476 & 0.6158\\
        \bottomrule
    \end{tabular}
    \label{table_hyper_tf}
\end{table}

\end{document}